\pdfoutput=1 

\documentclass{article}

\usepackage[square]{natbib}
\bibliographystyle{plainnat}

\usepackage[utf8]{inputenc} % allow utf-8 input
\usepackage[T1]{fontenc}    % use 8-bit T1 fonts
\usepackage{url}            % simple URL typesetting
\usepackage{booktabs}       % professional-quality tables
\usepackage{amsmath,amsfonts,amsthm} 
\usepackage{bm}
\usepackage{url}

\usepackage{nicefrac}       % compact symbols for 1/2, etc.
\usepackage{microtype}      % microtypography
\usepackage[inline]{enumitem}
\usepackage{graphicx}
\usepackage{subfig}
\usepackage{multirow}
\usepackage[title]{appendix}

\newtheorem{theorem}{Theorem}
\newtheorem{proposition}{Proposition}
\newtheorem{definition}{Definition}
\newtheorem{lemma}{Lemma}

\usepackage{xspace}
% Add a period to the end of an abbreviation unless there's one
% already, then \xspace.
\makeatletter
\DeclareRobustCommand\onedot{\futurelet\@let@token\@onedot}
\def\@onedot{\ifx\@let@token.\else.\null\fi\xspace}

\def\eg{\emph{e.g}\onedot} 
\def\ie{\emph{i.e}\onedot}

\makeatother

\usepackage{enumitem}

\newenvironment{tight_enumerate}{
\begin{enumerate}[leftmargin=20pt]
  \setlength{\topsep}{0pt}
  \setlength{\itemsep}{0pt}
  \setlength{\parskip}{0pt}
  \setlength{\parsep}{0pt}
}{\end{enumerate}}

% macro to switch supplementary
\newif\ifsupp
\suppfalse

\newif\ifarxiv
\arxivfalse

\newif\iffinal
\finalfalse

\newcommand{\myref}[1]{\textcolor{red}{#1}}

\usepackage[preprint]{neurips_2020}
\title{Bidirectionally Self-Normalizing Neural Networks}

% The \author macro works with any number of authors. There are two commands
% used to separate the names and addresses of multiple authors: \And and \AND.
%
% Using \And between authors leaves it to LaTeX to determine where to break the
% lines. Using \AND forces a line break at that point. So, if LaTeX puts 3 of 4
% authors names on the first line, and the last on the second line, try using
% \AND instead of \And before the third author name.

\author{
Yao Lu\thanks{Corresponding author. Email: \texttt{yaolubrain@gmail.com}}
\qquad
Stephen Gould 
\qquad
Thalaiyasingam Ajanthan \\
Australian National University
}

\usepackage{color}

\begin{document}

\maketitle

\begin{abstract}
The problem of vanishing and exploding gradients has been a long-standing obstacle that hinders the effective training of neural networks.
Despite various tricks and techniques that have been employed to alleviate the problem in practice, there still lacks satisfactory theories or provable solutions. 
In this paper, we address the problem from the perspective of high-dimensional probability theory. 
We provide a rigorous result that shows, under mild conditions, how the vanishing/exploding gradients problem disappears with high probability if the neural networks have sufficient width. 
Our main idea is to constrain both forward and backward signal propagation in a nonlinear neural network through a new class of activation functions, namely Gaussian-Poincar\'e normalized functions, and orthogonal weight matrices.
Experiments on both synthetic and real-world data validate our theory and confirm its effectiveness on very deep neural networks when applied in practice.
\end{abstract}

\section{Introduction}

Neural networks have brought unprecedented performance in various artificial intelligence tasks~\citep{ciregan2012multi,krizhevsky2012imagenet,graves2013speech,silver2017mastering}. 
However, despite decades of research,
training neural networks is still mostly guided by empirical observations and successful training often requires various heuristics and extensive hyperparameter tuning. It is therefore desirable to understand the cause of the difficulty in neural network training and to propose theoretically sound solutions.

A major difficulty is the vanishing/exploding gradients problem~\citep{hochreiter1991untersuchungen,bengio1994learning,glorot2010understanding,pascanu2013difficulty,philipp2018exploding}. That is, the norm of the gradient in each layer is either growing or shrinking at an exponential rate as the gradient signal is propagated from the top layer to bottom layer. For deep neural networks, this problem might cause numerical overflow and make the optimization problem intrinsically difficult, as the gradient in each layer has vastly different magnitude and therefore the optimization landscape becomes pathological. One might attempt to solve the problem by simply normalizing the gradient in each layer. Indeed, the adaptive gradient optimization methods~\citep{duchi2011adaptive,tieleman2012lecture,kingma2014adam} implement this idea and have been widely used in practice. However, one might also wonder if there is a solution more intrinsic to deep neural networks, whose internal structure if well-exploited would lead to further advances.

To enable the trainability of deep neural networks, batch normalization~\citep{ioffe2015batch} was proposed in recent years and achieved widespread empirical success. 
Batch normalization is a differentiable operation which normalizes its inputs based on mini-batch statistics and is inserted between the linear and nonlinear layers. It is reported that batch normalization can accelerate neural network training significantly~\citep{ioffe2015batch}. However, batch normalization does not solve the vanishing/exploding gradients problem~\citep{philipp2018exploding}. Indeed it is proved that batch normalization can actually worsen the problem~\citep{yang2019mean}. Besides, batch normalization requires separate training and testing phases and can be ineffective when the mini-batch size is small~\citep{ioffe2017batch}. 

\newpage

Alternatively, self-normalizing neural networks~\citep{klambauer2017self} and dynamical isometry theory~\citep{pennington2017resurrecting} were proposed to combat the vanishing/exploding gradients problem. 
In self-normalizing neural networks, a new activation function, scaled exponential linear unit (SELU), was devised to ensure the output of each unit to have zero mean and unit variance. 
In dynamical isometry theory, all singular values of the input-output Jacobian matrix are constrained to be close to one at initialization. 
This amounts to initializing the functionality of a neural network to be close to an orthogonal matrix.
While the two theories dispense batch normalization, it is shown that neural networks with SELU still suffer from the vanishing/exploding gradients problem and dynamical isometry restricts the functionality of neural networks to be close to linear (pseudo-linearity)~\citep{philipp2018exploding}.

In this paper, we follow the above line of research to investigate neural network trainability. Our contributions are three-fold: First, we propose a new type of neural networks that consist of orthogonal weight matrices and a new class of activation functions which we call Gaussian-Poincar\'e normalized (GPN) functions. We show many common activation functions can be easily transformed into their respective GPN versions. Second, we rigorously prove that the vanishing/exploding gradients problem disappears with high probability in the neural networks if the width of each layer is sufficiently large.
Third, with experiments on synthetic and real-world data, we confirm that the vanishing/exploding gradients problem is solved to large extent in the neural networks while nonlinear functionality is maintained.

\section{Theory}\label{sec:bsnn}

In this section, we introduce bidirectionally self-normalizing neural networks (BSNNs) formally and analyze its properties. All the proofs of our results are left to Appendix.
To simplify the analysis, we define neural network in a restricted sense as the following.
\begin{definition}[\textbf{Neural Network}]
A neural network is a function from $\mathbb{R}^d$ to $\mathbb{R}^d$ that for $l=1,...,L$
\begin{align}
\mathbf{h}^{(l)} = \mathbf{W}^{(l)}\mathbf{x}^{(l)}, \quad \mathbf{x}^{(l+1)} = \phi(\mathbf{h}^{(l)}), \label{eq:nn}
\end{align}
where $\mathbf{W}^{(l)}\in \mathbb{R}^{d\times d}$, $\phi: \mathbb{R} \to \mathbb{R}$ is a differentiable function applied element-wise to a vector, $\mathbf{x}^{(1)}$ is the input and $\mathbf{x}^{(L+1)}$ is the output.
\end{definition}
Under this definition, $\phi$ is called the activation function, $\{\mathbf{W}^{(l)}\}_{l=1}^{L}$ are called the parameters, $d$ is called the width and $L$ is called the depth and superscript $(l)$ denotes the $l$-th layer of a neural network. 
The above formulation is similar to \citep{pennington2017resurrecting} but we omit the bias term in (\ref{eq:nn}) for simplicity as it plays no role in our analysis. %in the section.

Let $E$ be the objective function of $\{\mathbf{W}^{(l)}\}_{l=1}^{L}$ and $\mathbf{D}^{(l)} = \text{diag}(\phi'(h_1^{(l)}),...,\phi'(h_d^{(l)}))$, where $\phi'$ denotes the derivative of $\phi$ and $h_i^{(l)}$ denotes the $i$-th element of $\mathbf{h}^{(l)}$. 
Now, the error signal is back propagated via
\begin{align}
\mathbf{y}^{(L)} =  \mathbf{D}^{(L)}\frac{\partial E}{\partial \mathbf{x}^{(L+1)}},
\quad
\mathbf{y}^{(l)} =  \mathbf{D}^{(l)}(\mathbf{W}^{(l+1)})^T\mathbf{y}^{(l+1)},
\label{eq:grad_back}
\end{align}
and the gradient of the weight matrix for layer $l$ can be computed as 
\begin{align}
\frac{\partial E}{\partial \mathbf{W}^{(l)}} 
= \mathbf{y}^{(l)}(\mathbf{x}^{(l)})^T.
\end{align}

To solve the vanishing/exploding gradients problem, we  constrain the forward signal $\mathbf{x}^{(l)}$ and the backward signal $\mathbf{y}^{(l)}$ in order to constrain the norm of the gradient. This leads to the following.

\begin{definition}[\textbf{Bidirectional Self-Normalization}]\label{def:bsn} 
A neural network is bidirectionally self-normalizing if 
\vspace{-2ex}
\begin{align}
\|\mathbf{x}^{(1)}\|_2 &= ... = \|\mathbf{x}^{(L)}\|_2 = \sqrt{d}, \\
\|\mathbf{y}^{(1)}\|_2 &= ... = \|\mathbf{y}^{(L)}\|_2 = \Big\|\frac{\partial E}{\partial \mathbf{x}^{(L+1)}}\Big\|_2.
\end{align}
\end{definition}

\begin{proposition} \label{pro:gradnorm}
If a neural network is bidirectionally self-normalizing, then
\begin{align}
\Big\|\frac{\partial E}{\partial \mathbf{W}^{(1)}}\Big\|_F =  ... = \Big\|\frac{\partial E}{\partial \mathbf{W}^{(L)}}\Big\|_F.
\end{align}
\end{proposition}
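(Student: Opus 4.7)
The plan is to reduce the claim to a single elementary identity about Frobenius norms of rank-one matrices, then apply the two hypotheses of bidirectional self-normalization directly. Since the gradient of the loss with respect to each weight matrix is given as the outer product $\frac{\partial E}{\partial \mathbf{W}^{(l)}} = \mathbf{y}^{(l)}(\mathbf{x}^{(l)})^T$, the heart of the argument will be to express the Frobenius norm of that outer product in terms of the two constituent vector norms.

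First, I would recall (or briefly verify) the standard identity $\|\mathbf{u}\mathbf{v}^T\|_F = \|\mathbf{u}\|_2\,\|\mathbf{v}\|_2$ for any $\mathbf{u},\mathbf{v}\in\mathbb{R}^d$. This follows from
\begin{equation*}
\|\mathbf{u}\mathbf{v}^T\|_F^2 = \sum_{i,j} u_i^2 v_j^2 = \Big(\sum_i u_i^2\Big)\Big(\sum_j v_j^2\Big) = \|\mathbf{u}\|_2^2\,\|\mathbf{v}\|_2^2,
\end{equation*}
or equivalently from noting that an outer product has a single nonzero singular value equal to $\|\mathbf{u}\|_2\|\mathbf{v}\|_2$.

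Second, I would plug this into the gradient formula to obtain, for each $l\in\{1,\dots,L\}$,
\begin{equation*}
\Big\|\frac{\partial E}{\partial \mathbf{W}^{(l)}}\Big\|_F = \|\mathbf{y}^{(l)}(\mathbf{x}^{(l)})^T\|_F = \|\mathbf{y}^{(l)}\|_2\,\|\mathbf{x}^{(l)}\|_2.
\end{equation*}
Invoking the two equalities in Definition~\ref{def:bsn} then gives
\begin{equation*}
\Big\|\frac{\partial E}{\partial \mathbf{W}^{(l)}}\Big\|_F = \sqrt{d}\,\Big\|\frac{\partial E}{\partial \mathbf{x}^{(L+1)}}\Big\|_2,
\end{equation*}
where the right-hand side is independent of $l$. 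Comparing across layers yields the claimed chain of equalities.

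There is no real obstacle here: the only non-cosmetic step is the outer-product norm identity, and every other ingredient is supplied verbatim by the hypothesis. If I wanted to add value beyond the bare statement, I would remark that the common value $\sqrt{d}\,\|\partial E/\partial\mathbf{x}^{(L+1)}\|_2$ depends only on the width and on the norm of the top-level error signal, which motivates the forthcoming results that establish bidirectional self-normalization approximately via GPN activations and orthogonal weights.
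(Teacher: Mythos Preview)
Your proposal is correct and essentially identical to the paper's proof: both reduce to showing $\|\mathbf{y}^{(l)}(\mathbf{x}^{(l)})^T\|_F = \|\mathbf{y}^{(l)}\|_2\|\mathbf{x}^{(l)}\|_2$ and then invoke the definition. The only cosmetic difference is that the paper derives this identity via $\|A\|_F^2 = \trace(AA^T)$ and the cyclic property of trace, whereas you compute the double sum $\sum_{i,j} u_i^2 v_j^2$ directly; these are interchangeable one-line justifications of the same rank-one norm fact.
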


%
%\newpage
%
In the rest of this section, we derive the conditions under which bidirectional self-normalization is achievable for a neural network.

\newpage

\subsection{Constraints on Weight Matrices}

We constrain the weight matrices to be orthogonal since multiplication by an orthogonal matrix preserves the norm of a vector. For linear neural networks, this guarantees bidirectional self-normalization and its further benefits are discussed in \citep{saxe2014exact}. 
Even for nonlinear neural networks, orthogonal constraints are shown to improve the trainability with proper scaling \citep{mishkin2015all,pennington2017resurrecting}.
%In addition to the practical benefits, orthogonality constraints would prove useful later in our theory to approximate preactivations using the Gaussian distribution under the large width assumption. %This will become clear in \textsection~\ref{sec:normpres}

\subsection{Constraints on Activation Functions}\label{sec:act}

To achieve bidirectional self-normalization for a nonlinear network, 
it is not enough only to constrain the weight matrices. 
We also need to constrain the activation function in such a way that both forward and backward signals are normalized. 
To this end, we propose the following constraint.

\begin{definition}[\textbf{Gaussian-Poincar\'e Normalization}]\label{def:gpn} 
Function $\phi:\mathbb{R}\to\mathbb{R}$ is Gaussian-Poincar\'e normalized if it is differentiable and
\begin{align}
\mathbb{E}_{x\sim\mathcal{N}(0,1)}[\phi(x)^2]  = \mathbb{E}_{x\sim\mathcal{N}(0,1)}[\phi'(x)^2] = 1.
\end{align}
\end{definition}
The definition is inspired by the following theorem which shows the fundamental relationship between a function and its derivative under Gaussian measure.

\begin{theorem}[\textbf{Gaussian-Poincar\'e Inequality} \citep{bogachev1998gaussian}]
If function $\phi:\mathbb{R}\to\mathbb{R}$ is differentiable with bounded $\mathbb{E}_{x\sim \mathcal{N}(0,1)}[\phi(x)^2]$ and $\mathbb{E}_{x\sim \mathcal{N}(0,1)}[\phi'(x)^2]$, then
\begin{align}
\mathrm{Var}_{x\sim \mathcal{N}(0,1)}[\phi(x)] \leq \mathbb{E}_{x\sim \mathcal{N}(0,1)}[\phi'(x)^2].
\end{align}
\end{theorem}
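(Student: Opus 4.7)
The plan is to prove the inequality via a spectral decomposition of $\phi$ in the (probabilist's) Hermite polynomial basis. Recall that $\{H_n\}_{n\geq 0}$ forms a complete orthogonal system of $L^2(\gamma)$, where $\gamma$ denotes the standard Gaussian measure on $\mathbb{R}$, with $\mathbb{E}[H_n H_m] = n!\,\delta_{nm}$ and the differentiation rule $H_n' = n\, H_{n-1}$. Under the stated finiteness hypotheses, both $\phi$ and $\phi'$ lie in $L^2(\gamma)$ and hence admit Hermite expansions converging in $L^2(\gamma)$.

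The key computation is to express both sides of the inequality in terms of the Hermite coefficients $\{a_n\}$ of $\phi = \sum_{n\geq 0} a_n H_n$. By orthogonality, $\mathrm{Var}[\phi(x)] = \sum_{n\geq 1} a_n^2\, n!$. For the derivative, rather than differentiating the series of $\phi$ termwise, I would identify the Hermite coefficients of $\phi'$ directly through a Gaussian integration by parts: using the three-term recurrence of the $H_n$, one obtains $\mathbb{E}[\phi'(x) H_n(x)] = \mathbb{E}[\phi(x)\, H_{n+1}(x)] = (n+1)!\, a_{n+1}$, so the $n$-th Hermite coefficient of $\phi'$ is $(n+1)\, a_{n+1}$. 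Parseval's identity then yields $\mathbb{E}[\phi'(x)^2] = \sum_{n\geq 1} n \cdot n!\, a_n^2$. Comparing the two sums term by term, and noting that $n \geq 1$ on every active index, gives the inequality, with equality iff $a_n = 0$ for all $n \geq 2$, i.e.\ iff $\phi$ is affine.

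The main obstacle is legitimizing this Hermite calculus under the weak regularity assumed (merely differentiable $\phi$). Naïve termwise differentiation of the Hermite series of $\phi$ is not automatically valid, so the route above sidesteps it by treating $\phi'$ as an independent $L^2(\gamma)$ element and extracting its coefficients through the integration-by-parts identity, which only uses the minimal regularity given. The boundary term at infinity vanishes by a standard truncation argument using $H_n(x) e^{-x^2/2} \to 0$ together with the $L^2(\gamma)$ bounds on $\phi$ and $\phi'$ combined with Cauchy--Schwarz; this step is routine but must be written out carefully to keep the argument rigorous.
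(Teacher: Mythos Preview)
The paper does not prove this theorem; it is quoted as a classical result with a citation to Bogachev (1998), so there is no ``paper's own proof'' to compare against.

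That said, your Hermite-expansion argument is a standard and correct route to the inequality, and it is exactly the machinery the paper itself deploys---with the orthonormal normalization $\mathbb{E}[H_nH_m]=\delta_{nm}$, $H_k'=\sqrt{k}\,H_{k-1}$---in its proof of Proposition~2. There the identity $\sum_k k\,a_k^2-\sum_k a_k^2=\sum_{k\ge 2}(k-1)a_k^2\ge 0$ is precisely the Poincar\'e gap (specialized to $a_0=0$), so your approach is fully in the spirit of the paper even though the paper never spells out a proof of Theorem~1 itself.

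One remark on the regularity step you flag: under the bare hypothesis ``differentiable with $\phi,\phi'\in L^2(\gamma)$,'' the integration-by-parts identity $\mathbb{E}[\phi' H_n]=\mathbb{E}[\phi\,H_{n+1}]$ really needs $\phi$ to be locally absolutely continuous (pointwise differentiability alone does not license the fundamental theorem of calculus); once that is in hand, the boundary term $\phi(x)H_n(x)e^{-x^2/2}\to 0$ follows by writing $\phi$ as an integral of $\phi'$ and bounding its growth via Cauchy--Schwarz, as you indicate. This is the usual technical wrinkle in any statement of the Gaussian Poincar\'e inequality; the paper's formulation is informal on this point, and your awareness of it is appropriate.
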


Note that there is an implicit assumption that the input is approximately Gaussian for a Gaussian-Poincar\'e normalized (GPN) function.
Even though this is standard in the  literature~\citep{klambauer2017self,pennington2017resurrecting,schoenholz2016deep}, we will rigorously prove that this assumption is valid when orthogonal weight matrices are used in~\eqref{eq:nn}. 
Next, we state a property of GPN functions.

\begin{proposition}
Function $\phi:\mathbb{R}\to\mathbb{R}$ is Gaussian-Poincar\'e normalized and $\mathbb{E}_{x\sim \mathcal{N}(0,1)}[\phi(x)] = 0$ if and only if $\phi(x)=x$ or $\phi(x)=-x$.
\end{proposition}

This result indicates that any nonlinear function with zero mean under Gaussian distribution (\eg, Tanh and SELU) is not GPN. 
Now we show that a large class of activation functions can be converted into their respective GPN versions using an affine transformation.

\begin{proposition}
For any differentiable function $\phi:\mathbb{R}\to\mathbb{R}$ with non-zero and bounded  $\mathbb{E}_{x\sim \mathcal{N}(0,1)}[\phi(x)^2]$ and  $\mathbb{E}_{x\sim \mathcal{N}(0,1)}[\phi'(x)^2]$, 
there exist two constants $a$ and $b$ such that $a \phi(x)+b$ is Gaussian-Poincar\'e normalized.
\end{proposition}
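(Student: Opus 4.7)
The plan is to set $\psi(x) = a\phi(x) + b$ and match the two GPN conditions in sequence, using the Gaussian-Poincar\'e inequality to guarantee the second match is solvable.

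First, I would observe that $\psi'(x) = a\phi'(x)$, so the derivative normalization condition $\mathbb{E}_{x\sim\mathcal{N}(0,1)}[\psi'(x)^2]=1$ reads $a^2\,\mathbb{E}_{x\sim\mathcal{N}(0,1)}[\phi'(x)^2]=1$. Since $\mathbb{E}_{x\sim\mathcal{N}(0,1)}[\phi'(x)^2]$ is nonzero and bounded by assumption, this determines $a$ uniquely up to sign as
\[
a \;=\; \frac{1}{\sqrt{\mathbb{E}_{x\sim\mathcal{N}(0,1)}[\phi'(x)^2]}}.
\]

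Next I would fix this $a$ and turn to the condition $\mathbb{E}_{x\sim\mathcal{N}(0,1)}[\psi(x)^2]=1$. Expanding the square gives a quadratic in $b$:
\[
b^2 + 2ab\,\mathbb{E}[\phi(x)] + \bigl(a^2\,\mathbb{E}[\phi(x)^2] - 1\bigr) \;=\; 0.
\]
This has a real root iff its discriminant is nonnegative, which reduces (after dividing by $4$) to
\[
1 \;\geq\; a^2\bigl(\mathbb{E}[\phi(x)^2] - \mathbb{E}[\phi(x)]^2\bigr) \;=\; a^2\,\mathrm{Var}_{x\sim\mathcal{N}(0,1)}[\phi(x)].
\]
Substituting the value of $a^2$ chosen in the first step, this is exactly
\[
\mathrm{Var}_{x\sim\mathcal{N}(0,1)}[\phi(x)] \;\leq\; \mathbb{E}_{x\sim\mathcal{N}(0,1)}[\phi'(x)^2],
\]
which is precisely the Gaussian-Poincar\'e inequality stated above, applicable because $\phi$ is differentiable with the two expectations bounded. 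Hence the discriminant is nonnegative and a real $b$ exists; explicitly, $b = -a\,\mathbb{E}[\phi(x)] \pm \sqrt{1 - a^2\,\mathrm{Var}[\phi(x)]}$.

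The only conceptual step is the solvability of the quadratic, and I expect the neat observation to be that this solvability is not a separate technical hurdle but is handed to us for free by the Gaussian-Poincar\'e inequality. Everything else is a one-line algebraic matching. The proof therefore proceeds in two clean steps (normalize the derivative by choosing $a$, then shift by $b$ using the quadratic formula), and finishes by noting that the boundedness and nonvanishing hypotheses on the two expectations ensure both $a$ and $b$ are finite real numbers.
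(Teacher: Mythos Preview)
Your proposal is correct and follows essentially the same approach as the paper: fix $a=(\mathbb{E}[\phi'(x)^2])^{-1/2}$ and then solve a quadratic for the shift, with the Gaussian--Poincar\'e inequality guaranteeing solvability. The only cosmetic difference is that the paper first finds the shift $c$ via an intermediate-value argument on the quadratic $\psi(c)$ and then scales, whereas you scale first and then check the discriminant directly; both routes rest on the same inequality and yield the same $a$ and $b$.
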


To obtain $a$ and $b$, one can use numerical procedure to compute the values of $\mathbb{E}_{x\sim\mathcal{N}(0,1)}[\phi'(x)^2]$, $\mathbb{E}_{x\sim\mathcal{N}(0,1)}[\phi(x)^2]$ and $\mathbb{E}_{x\sim\mathcal{N}(0,1)}[\phi(x)]$ and then solve the quadratic equations
\begin{align}
\mathbb{E}_{x\sim\mathcal{N}(0,1)}[a^2\phi'(x)^2] = 1, \\
\mathbb{E}_{x\sim\mathcal{N}(0,1)}[(a\phi(x)+b)^2] = 1.
\end{align}
We computed $a$ and $b$ (not unique) for several common activation  functions \citep{nair2010rectified,maas2013rectifier,clevert2015fast,klambauer2017self,hendrycks2016gaussian} with their default hyperparameters%
\footnote{We use $\phi(x)=\max(0,x)+0.01\min(0,x)$ for LeakyReLU, $\phi(x)=\max(0,x)+\min(0,\exp(x)-1)$ for ELU and $\phi(x)=x/(1+\exp(-1.702x))$ for GELU.}
and the results are listed in Table 1. 
Note that ReLU, LeakyReLU and SELU are not differentiable at $x=0$ but they can be regarded as approximations of their smooth counterparts.
%for which the assumptions hold. 
We ignore such point and evaluate the integrals for $x\in (-\infty,0) \cup (0,\infty)$.

\begin{table}[h!]
    \centering
    \begin{small}
    \begin{tabular}{|c|c|c|c|c|c|c|}
        \hline
& Tanh &  ReLU & LeakyReLU  & ELU  & SELU  & GELU  \\
        \hline
$a$ & $1.4674$ &  $1.4142$ & $1.4141$ & $1.2234$ & $0.9660$ & $1.4915$ \\
$b$ & $0.3885$ &  $0.0000$ & $0.0000$ & $0.0742$ & $ 0.2585$ & $-0.9097$ \\
        \hline
    \end{tabular}
    \end{small}
    \vspace{0.25cm}
    \caption{Constants for Gaussian-Poincar\'e normalization of activation functions.}
    \label{tab:gpn_constants}
    \vspace{-3ex}
\end{table}

%\newpage

With the orthogonal constraint on the weight matrices and the Gaussian-Poincar\'e normalization on the activation function, we prove that bidirectional self-normalization is achievable with high probability under mild conditions in the next subsection.

\subsection{Norm-Preservation Theorems}\label{sec:normpres}

The bidirectional self-normalization may not be achievable precisely in general unless the neural network is a linear one. Therefore, we investigate the properties of neural networks in a probabilistic framework. The random matrix theory and the high-dimensional probability theory allow us to characterize the behaviors of a large class of neural networks by its mean behavior, which is significantly simpler to analyze. Therefore, we study neural networks of random weights whose properties may shed light on the trainability of neural networks in practice.

First, we need a probabilistic version of the vector norm constraint. 

\begin{definition}[\textbf{Thin-Shell Concentration}]
Random vector $\mathbf{x}\in\mathbb{R}^d$ is thin-shell concentrated if for any $\epsilon > 0$
\begin{align}
\mathbb{P}\Big\{ \Big| \frac{1}{d}\|\mathbf{x}\|^2_2 - 1 \Big| \geq \epsilon \Big\} \to 0
\end{align}
as $d \to \infty$.
\end{definition}

The definition is modified from the one in \citep{bobkov2003concentration}.   
Examples of thin-shell concentrated distributions include standard multivariate Gaussian and any distribution on the $d$-dimensional sphere of radius $\sqrt{d}$.

To prove the main results, \ie, the norm-preservation theorems, we require the following assumptions.

\paragraph{Assumptions.}
\begin{tight_enumerate}
    \item \emph{Random vector $\mathbf{x}\in\mathbb{R}^d$ is thin-shell concentrated.}
    \item \emph{Random orthogonal matrix $\mathbf{W}=(\mathbf{w}_1,...,\mathbf{w}_d)^T$ is uniformly distributed.}
    \item \emph{Function $\phi: \mathbb{R}\to \mathbb{R}$ is Gaussian-Poincar\'e normalized.}
    \item \emph{Function $\phi: \mathbb{R}\to \mathbb{R}$ and its derivative are Lipschitz continuous.}
\end{tight_enumerate}

The above assumptions are not restrictive.
For Assumption 1, one can always normalize the input vectors of a neural network.
For Assumption 2, orthogonal constraint or its relaxation has already been employed in neural network training~\citep{brock2016neural}.
Note, in Assumption 2, uniformly distributed means that $\mathbf{W}$ is distributed under Haar measure, which is the unique rotation invariant probability measure on orthogonal matrix group. We refer the reader to \citep{meckes2019random} for details. Furthermore, all the activation functions or their smooth counterparts listed in Table~\ref{tab:gpn_constants} satisfy Assumptions 3 and 4.

With the above assumptions, we can prove the following norm-preservation theorems. 

\begin{theorem}[\textbf{Forward Norm-Preservation}]\label{thm:forward}
Random vector 
\begin{align}
(\phi(\mathbf{w}^T_1\mathbf{x}),...,\phi(\mathbf{w}^T_d\mathbf{x}))
\end{align}
is thin-shell concentrated.
\end{theorem}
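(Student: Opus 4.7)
The plan is to reduce the problem to concentration of measure on the sphere. First I would exploit the rotation invariance of the Haar measure: conditional on $\mathbf{x}$, the vector $\mathbf{h} := \mathbf{W}\mathbf{x}$ is equal in distribution to $r\,\mathbf{u}$, where $r = \|\mathbf{x}\|_2$ and $\mathbf{u}$ is uniformly distributed on the unit sphere $S^{d-1}$ (since the first column of a Haar-distributed orthogonal matrix is uniform on $S^{d-1}$). Assumption~1 (thin-shell concentration of $\mathbf{x}$) gives $r/\sqrt{d}\to 1$ in probability, so on a high-probability event we may treat $r$ as essentially $\sqrt{d}$ and focus all randomness on $\mathbf{u}$. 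Setting $F_r(\mathbf{u}) := \frac{1}{d}\sum_{i=1}^d \phi(r u_i)^2$, the goal becomes $F_r(\mathbf{u}) \to 1$ in probability.

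I would decompose $F_r(\mathbf{u}) - 1 = \bigl(F_r(\mathbf{u}) - \mathbb{E}_{\mathbf{u}} F_r(\mathbf{u})\bigr) + \bigl(\mathbb{E}_{\mathbf{u}} F_r(\mathbf{u}) - 1\bigr)$ and control the two pieces separately. For the fluctuation term I would invoke Lévy's concentration of measure on $S^{d-1}$. The Euclidean gradient of $F_r$ satisfies
\[
\|\nabla F_r(\mathbf{u})\|_2^2 \;=\; \frac{4r^2}{d^2}\sum_{i=1}^d \phi(r u_i)^2 \phi'(r u_i)^2,
\]
and using Assumption~4 (so $|\phi'| \leq L_\phi$ and $|\phi(t)| \leq |\phi(0)|+L_\phi|t|$) together with $\sum_i u_i^2 = 1$, a direct estimate yields $\|\nabla F_r\|_2 = O(1)$ uniformly in $d$ on the relevant event. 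Lévy's lemma then gives exponential concentration of $F_r$ around its mean with Gaussian-type fluctuations of order $d^{-1/2}$.

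For the bias term, by the symmetry of the sphere $\mathbb{E}_{\mathbf{u}} F_r(\mathbf{u}) = \mathbb{E}[\phi(r u_1)^2]$. I would then appeal to the classical weak-convergence fact that $\sqrt{d}\,u_1 \Rightarrow Z \sim \mathcal{N}(0,1)$ as $d\to\infty$; combined with $r/\sqrt{d}\to 1$ this yields $r u_1 \Rightarrow Z$ in distribution. To lift this to convergence of the second moment, I would establish uniform integrability of $\phi(r u_1)^2$ via a fourth-moment bound $\mathbb{E}[\phi(r u_1)^4] \leq C\bigl(1 + r^4\,\mathbb{E}[u_1^4]\bigr)$, using the explicit value $\mathbb{E}[u_1^4] = 3/(d(d+2))$ so that $r^4\,\mathbb{E}[u_1^4] = O(1)$. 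The Gaussian-Poincaré normalization of $\phi$ (Definition~\ref{def:gpn}) then supplies $\mathbb{E}[\phi(Z)^2] = 1$, so $\mathbb{E}_{\mathbf{u}} F_r(\mathbf{u}) \to 1$.

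The main obstacle I foresee is cleanly interleaving the two sources of randomness—the random $\mathbf{x}$ and the Haar-random $\mathbf{W}$—so that a conditional-on-$\mathbf{x}$ argument yields unconditional thin-shell concentration. Concretely I would restrict to the high-probability event $\{|r/\sqrt{d} - 1| < \delta\}$ furnished by Assumption~1, and then verify that the Lévy Lipschitz constant, the uniform-integrability bound, and the rate of mean convergence are all uniform in $r$ over this event. The remaining work is essentially bookkeeping of constants.
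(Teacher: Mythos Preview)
Your proposal is correct and takes a genuinely different route from the paper. The paper does not split into mean and fluctuation, nor does it invoke L\'evy's inequality on the sphere. Instead it works first in the i.i.d.\ Gaussian model: since $\phi$ is Lipschitz, $\phi(z_i)$ is sub-gaussian by Gaussian concentration, hence $\phi(z_i)^2$ is sub-exponential, and Bernstein's inequality yields $\frac{1}{d}\sum_i\phi(z_i)^2\to 1$ in probability directly for $z_i$ i.i.d.\ $\mathcal N(0,1)$. It then transfers this to the sphere by writing $\bm\theta=\mathbf z/\|\mathbf z\|_2$ and using the Gaussian annulus theorem to argue $\frac{1}{d}\sum_i\phi(\sqrt d\,\theta_i)^2\approx\frac{1}{d}\sum_i\phi(z_i)^2$, and finally replaces $\sqrt d\,\bm\theta$ by $\mathbf W\mathbf x$ via rotation invariance and thin-shell concentration of $\mathbf x$ (essentially your first step, stated as a separate lemma). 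Your mean--fluctuation decomposition, with L\'evy for the fluctuation and weak convergence of $\sqrt d\,u_1$ plus a fourth-moment uniform-integrability bound for the bias, is arguably more direct and leans on more classical tools. What the paper's route buys is modularity: its auxiliary lemmas (sub-gaussianity of $\phi(z)$, a weighted Bernstein bound, and the Gaussian-to-sphere transfer) are phrased so that they are reused verbatim for the backward norm-preservation theorem, whereas your L\'evy argument would have to be rerun with the functional $\mathbf u\mapsto\frac{1}{d}\sum_i y_i^2\phi'(ru_i)^2$ there. The interaction between the random $\mathbf x$ and the Haar $\mathbf W$ is handled the same way in both proofs, so the bookkeeping you anticipate is no harder than what the paper does.
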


This result shows the transformation (orthogonal matrix followed by the GPN activation function) can preserve the norm of its input with high probability. Since the output is thin-shell concentrated, it serves as the input for the next layer and so on. Hence, the forward pass can preserve the norm of its input in each layer along the forward path when $d$ is sufficiently large.

\begin{theorem}[\textbf{Backward Norm-Preservation}]\label{thm:back}
Let $\mathbf{D} = \textup{diag}(\phi'(\mathbf{w}_1^T\mathbf{x}),...,\phi'(\mathbf{w}_d^T\mathbf{x}))$ and $\mathbf{y} \in \mathbb{R}^d$ be a fixed vector with bounded $\|\mathbf{y}\|_\infty$.
Then for any $\epsilon > 0$
\begin{align}
\mathbb{P}\Big\{\frac{1}{d}\Big| \|\mathbf{D}\mathbf{y}\|_2^2  - \|\mathbf{y}\|_2^2 \Big| \geq \epsilon \Big\} \to 0
\end{align}
as $d \to \infty$.
\end{theorem}

This result shows that the multiplication by the diagonal matrix $\mathbf{D}$ preserves the norm of its input with high probability. Since orthogonal matrix $\mathbf{W}$ also preserves the norm of its input, when the gradient error signal is propagated backwards as in (\ref{eq:grad_back}), the norm is preserved in each layer along the backward path when $d$ is sufficient large.

Hence, combining Theorems~\ref{thm:forward} and~\ref{thm:back}, we proved that bidirectional self-normalization is achievable with high probability if the neural network is wide enough and the conditions in the Assumptions are satisfied. Then by Proposition~\ref{pro:gradnorm}, the vanishing/exploding gradients problem disappears with high probability.

\paragraph{Sketch of the proofs.}
The proofs of Theorems~\ref{thm:forward} and~\ref{thm:back} are mainly based on a phenomenon in high-dimensional probability spaces, concentration of measure. 
We refer the reader to \citep{vershynin2018high} for an introduction to the subject.
Briefly, it can be shown that for some high-dimensional probability distributions, most mass is concentrated around certain range. For example, while most mass of a low-dimensional standard multivariate Gaussian distribution is concentrated around the center, most mass of a high-dimensional standard multivariate Gaussian distribution is concentrated around a thin-shell. 
Based on this phenomenon, it can be shown that random vector $\mathbf{Wx}$ in high dimensions is approaching a random vector uniformly distributed on a sphere. Then the random vector uniformly distributed on a high-dimensional sphere is approximately Gaussian. Then the Gaussian random variables transformed by Lipschitz and GPN functions are subgaussian with unit variance. And the random vector of the subgaussian random variables has the concentration of norm property in high dimensions. Each of these steps is rigorously proved in Appendix.

    \section{Experiments}\label{sec:exp}
    We verify our theory on both synthetic and real-world data. More experimental results can be found in Appendix.
    In short, while very deep neural networks with non-GPN activations show vanishing/exploding gradients, GPN versions show stable gradients and improved trainability in both synthetic and real data.
    Furthermore, compared to dynamical isometry theory, BSNNs do not exhibit pseudo-linearity and maintain nonlinear functionality.
    
    %as measured by the distribution of the values $f'(h_i^{(l)})$.
    %%%%%%%%%%%
    
    \subsection{Synthetic Data}\label{sec:exp_syn}
    
    We create synthetic data to test the norm-preservation properties of the neural networks.
    The input $\mathbf{x}^{(1)}$ is 500 data points of random standard Gaussian vectors of 500 dimensions. The gradient error ${\partial E}/{\partial\mathbf{x}^{(L+1)}}$ is also random standard Gaussian vector of 500 dimensions. All the neural networks have depth 200. All the weight matrices are random orthogonal matrices uniformly generated. No training is performed.
    
    In Figure~\ref{exp:synthetic}, we show the norm of inputs and gradients of the neural networks of width 500. From the results, we can see that with GPN, the vanishing/exploding gradients problem is eliminated to large extent. 
    The neural network with Tanh activation function does not show the vanishing/exploding gradients problem either. However, $\|\mathbf{x}^{(l)}\|$ is close to zero for large $l$ and each layer is close to a linear one since $\text{Tanh}(x)\approx x$ when $x\approx 0$ (pseudo-linearity), for which dynamical isometry is achieved.
    
    One might wonder if bidirectional self-normalization has the same effect as dynamical isometry in solving the vanishing/exploding gradients problem, that is, to make the neural network close to an orthogonal matrix. To answer this question, we show the histogram of $\phi'(h_i^{(l)})$ in Figure~\ref{exp:D_hist}. If the functionality of a neural network is close to an orthogonal matrix, since the weight matrices are orthogonal, then the values of $\phi'(h_i^{(l)})$ would concentrate around one (Figure~\ref{exp:D_hist}~(a)), which is not the case for BSNNs (Figure~\ref{exp:D_hist}~(b)). This shows that BSNNs do not suffer from the vanishing/exploding gradients problem while exhibiting nonlinear functionality.
    
    In Figure~\ref{exp:synthetic_width}, we show the gradient norm of BSNNs with varying width.
    Notice, as the width increases, the norm of gradient in each layer of the neural network becomes more equalized, as predicted by our theory.
    
    \subsection{Real-World Data}
    
    We run experiments on real-world image datasets MNIST and CIFAR-10. The neural networks have width 500 and depth 200 (plus one unconstrained layer at bottom and one at top to fit the dimensionality of the input and output). We use stochastic gradient descent of momentum 0.5 with mini-batch size 64 and learning rate 0.0001. The training is run for 50 epochs for MNIST and 100 epochs for CIFAR-10.
    We do not use data augmentation. 
    Since it is computationally expensive to enforce the orthogonality constraint, we simply constrain each row of the weight matrix to have $l_2$ norm one as a relaxation of orthogonality by the following parametrization $\mathbf{W} = (\mathbf{v}_1/\|\mathbf{v}_1\|_2,...,\mathbf{v}_d/\|\mathbf{v}_d\|_2)^T$ and optimize $\mathbf{V} = (\mathbf{v}_1,...,\mathbf{v}_d)^T$ as an unconstrained problem.

\newpage

We summarize the results in Table \ref{table:real}. We can see that, for activation functions ReLU, LeakyReLU and GELU, the neural networks are not trainable. But once these functions are GPN, the neural network can be trained. GPN activation functions consistently outperform their unnormalized counterparts in terms of the trainability, as the training accuracy is increased, but not necessarily generalization ability.

We show the test accuracy during training in Figure \ref{exp:test_acc_cifar10}, from which we can see the training is accelerated when SELU is GPN. ReLU, LeakyReLU and GELU, if not GPN, are completely untrainable due to the vanished gradients (see Appendix).

We observe that batch normalization leads to gradient explosion when combining with any of the activation functions. This confirms the claim of \citep{philipp2018exploding,yang2019mean} that batch normalization does not solve the vanishing/exploding gradients problem.
On the other hand, without batch normalization the neural network with any GPN activation function has stable gradient magnitude throughout training (see Appendix). This indicates that BSNNs can dispense batch normalization and therefore avoid its shortcomings.

\vspace{0.5cm}

%%%% synthetic data grad norm %%%%%%%%%%%%%%%%
\begin{figure}[h!]
\centering

\subfloat[$\|\mathbf{x}^{(l)}\|_2^2/d$, Tanh.]{\includegraphics[width=0.4\textwidth]{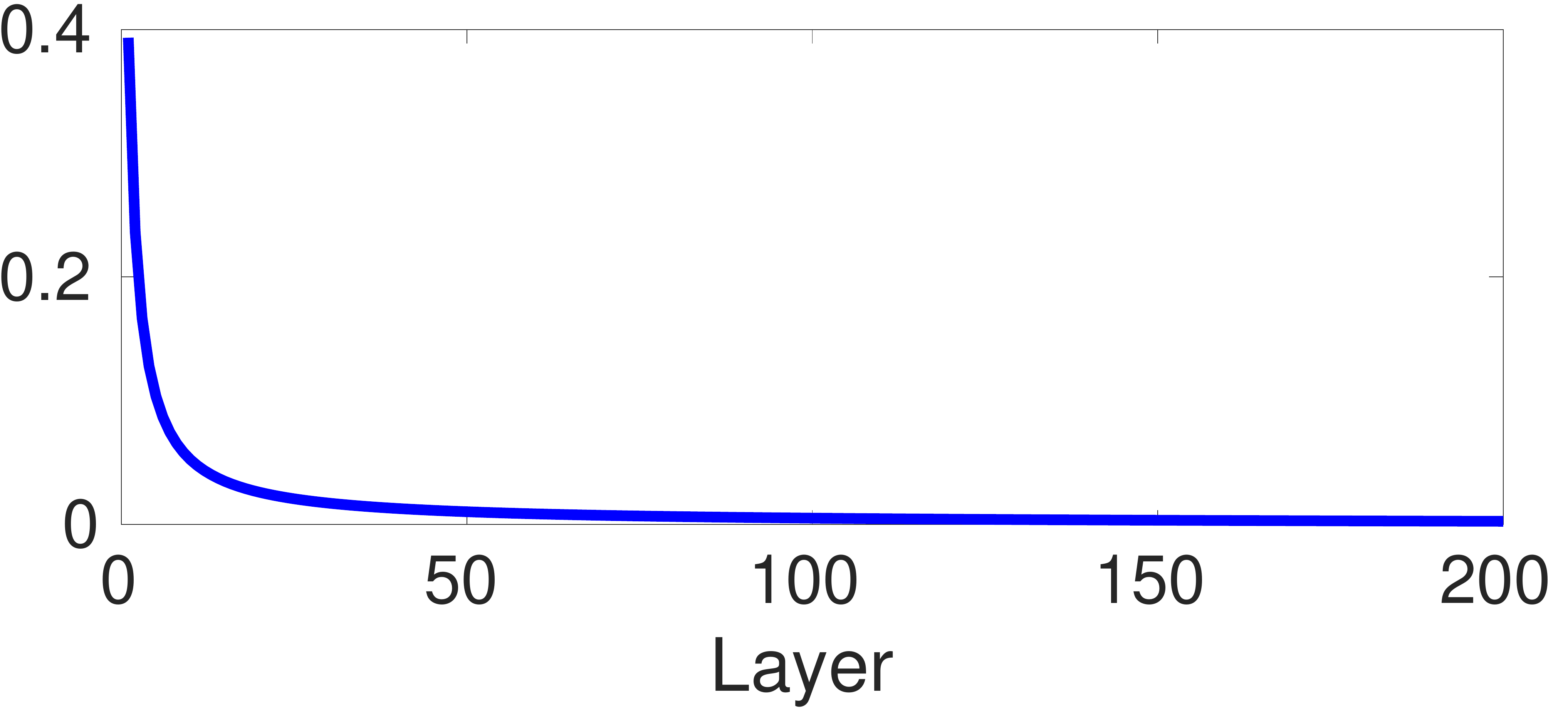}}
\hspace{0.5cm}
\subfloat[$\|\frac{\partial E}{\partial \mathbf{W}^{(l)}}\|_F$, Tanh.]{\includegraphics[width=0.4\textwidth]{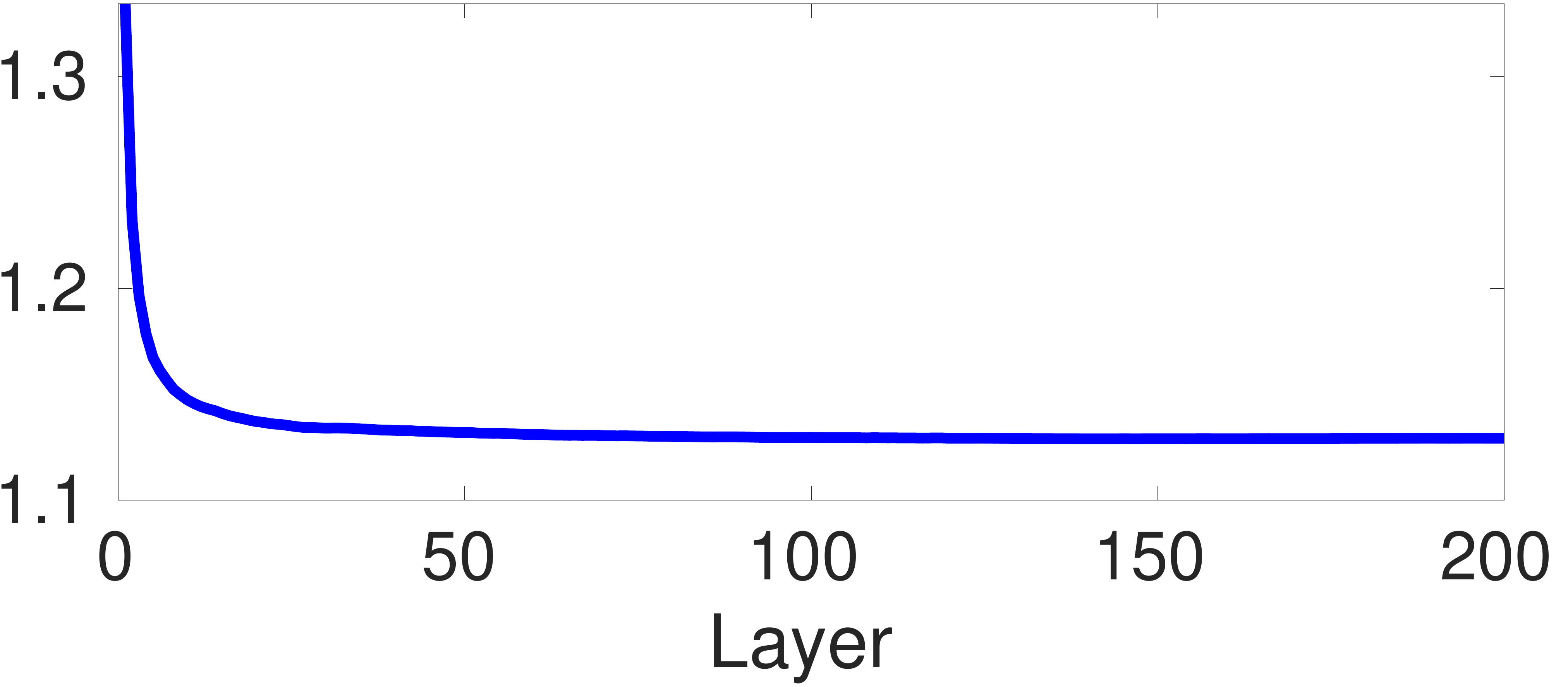}}

\subfloat[$\|\mathbf{x}^{(l)}\|_2^2/d$, Tanh-GPN.]{\includegraphics[width=0.4\textwidth]{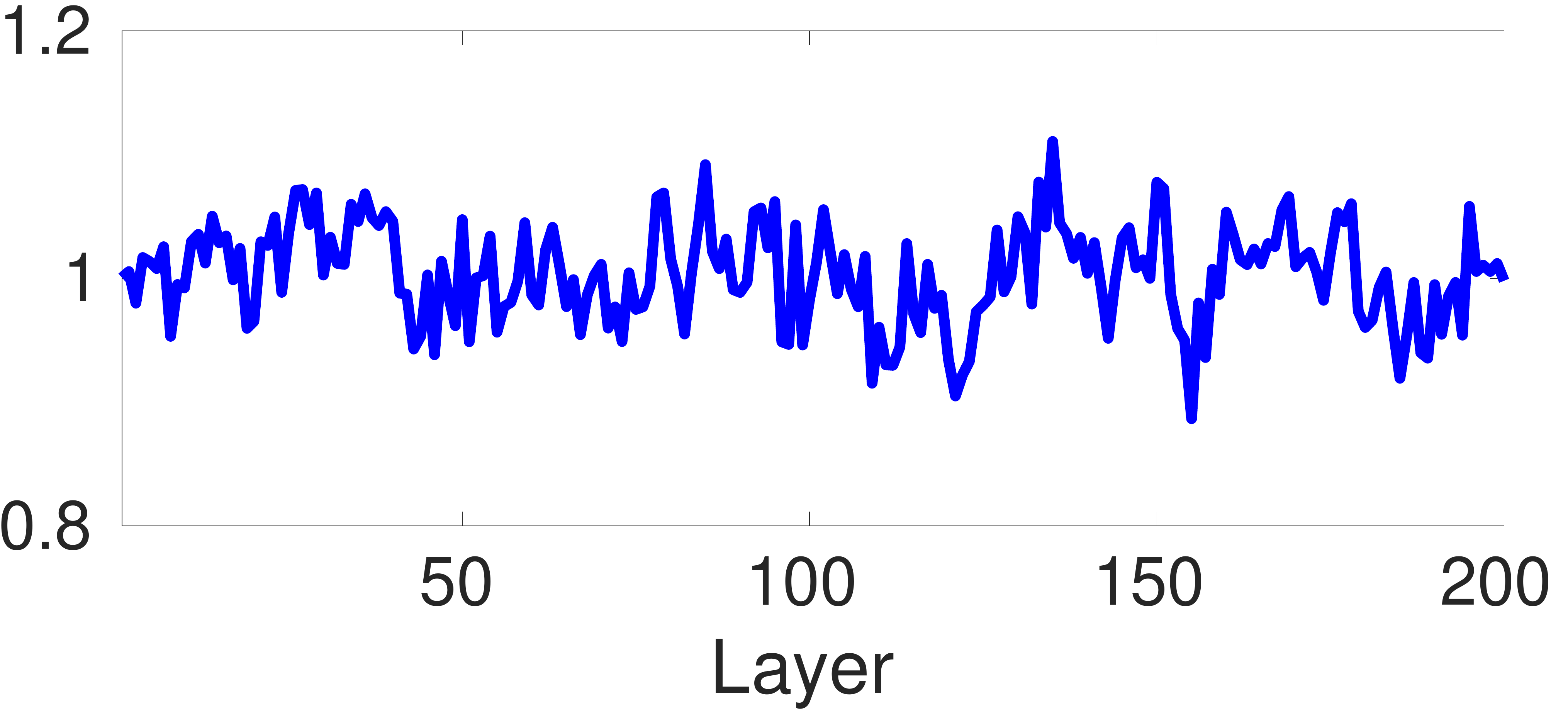}}
\hspace{0.5cm}
\subfloat[$\|\frac{\partial E}{\partial \mathbf{W}^{(l)}}\|_F$, Tanh-GPN.]{\includegraphics[width=0.4\textwidth]{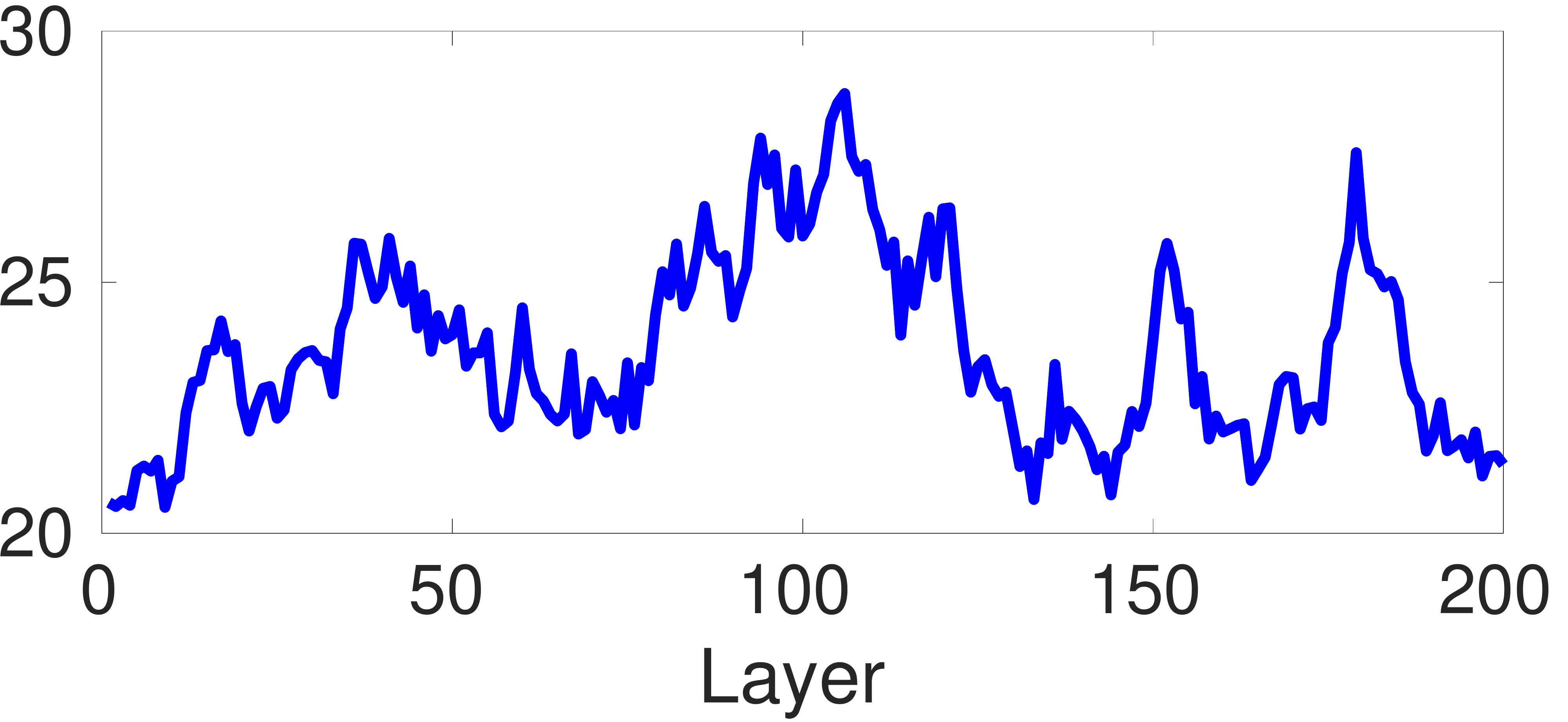}}

\subfloat[$\|\mathbf{x}^{(l)}\|_2^2/d$, SELU.]{\includegraphics[width=0.4\textwidth]{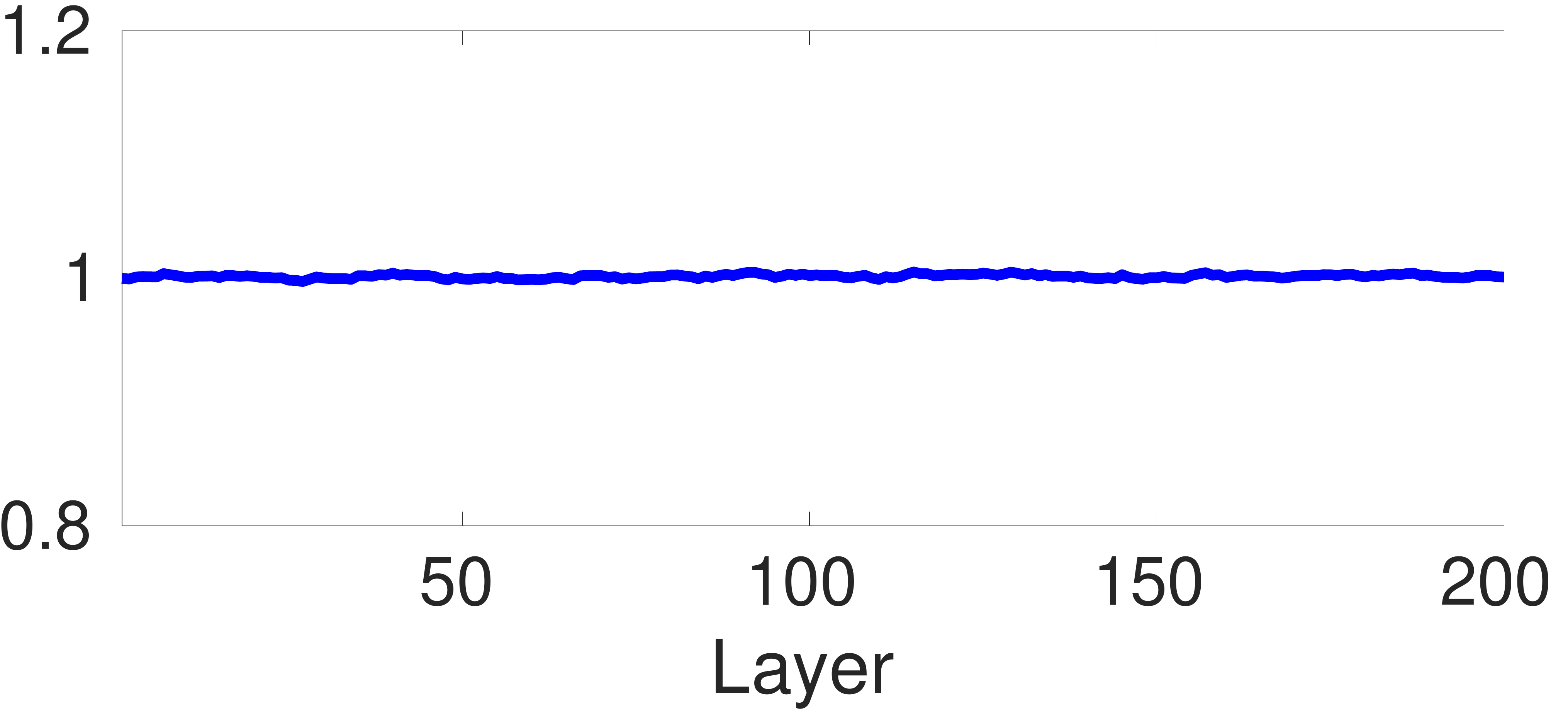}}
\hspace{0.5cm}
\subfloat[$\|\frac{\partial E}{\partial \mathbf{W}^{(l)}}\|_F$, SELU.]{\includegraphics[width=0.4\textwidth,height=0.2\textwidth,trim=0 0 0 -0.3cm]{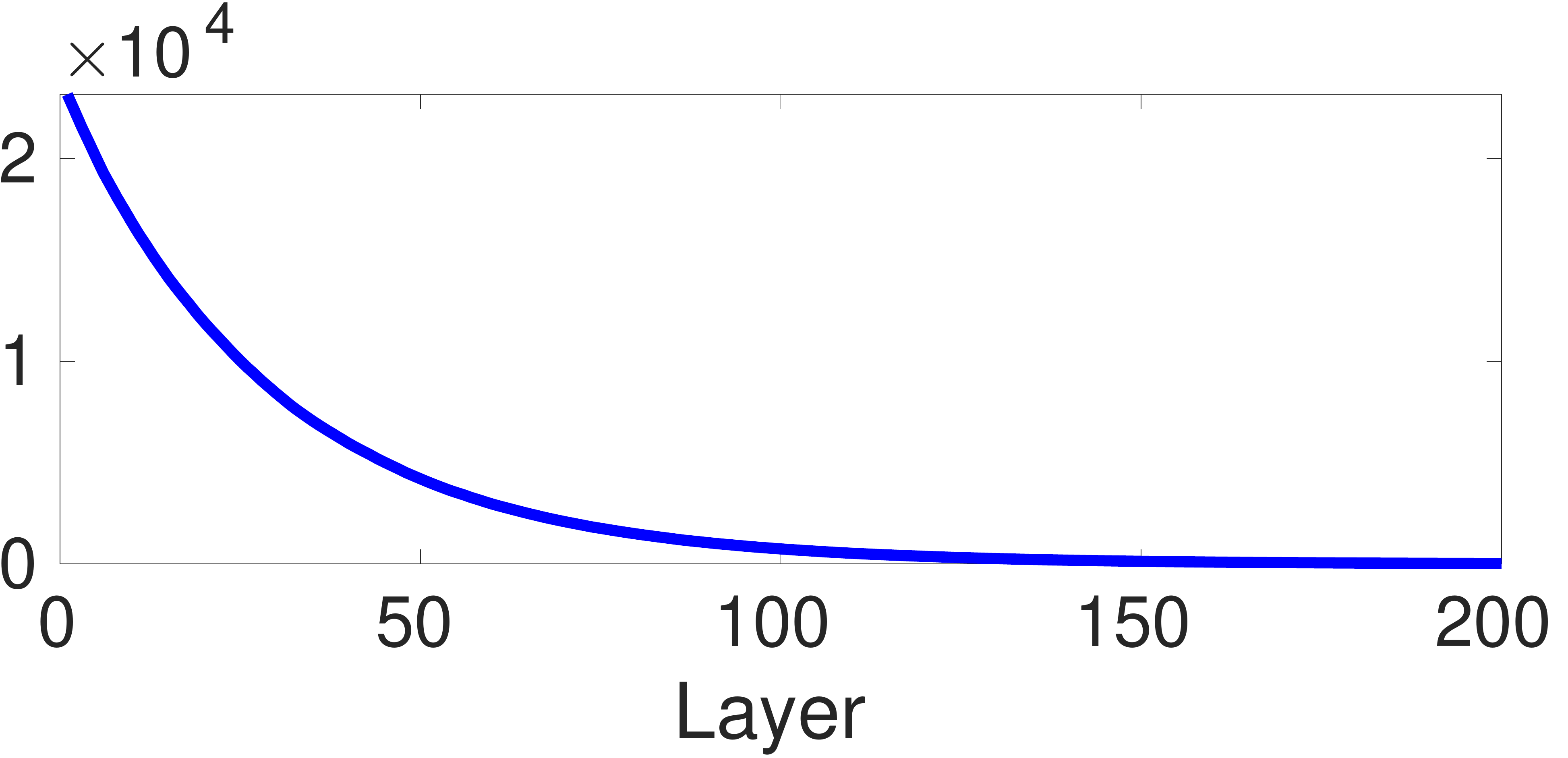}}

\subfloat[$\|\mathbf{x}^{(l)}\|_2^2/d$, SELU-GPN.]{\includegraphics[width=0.4\textwidth]{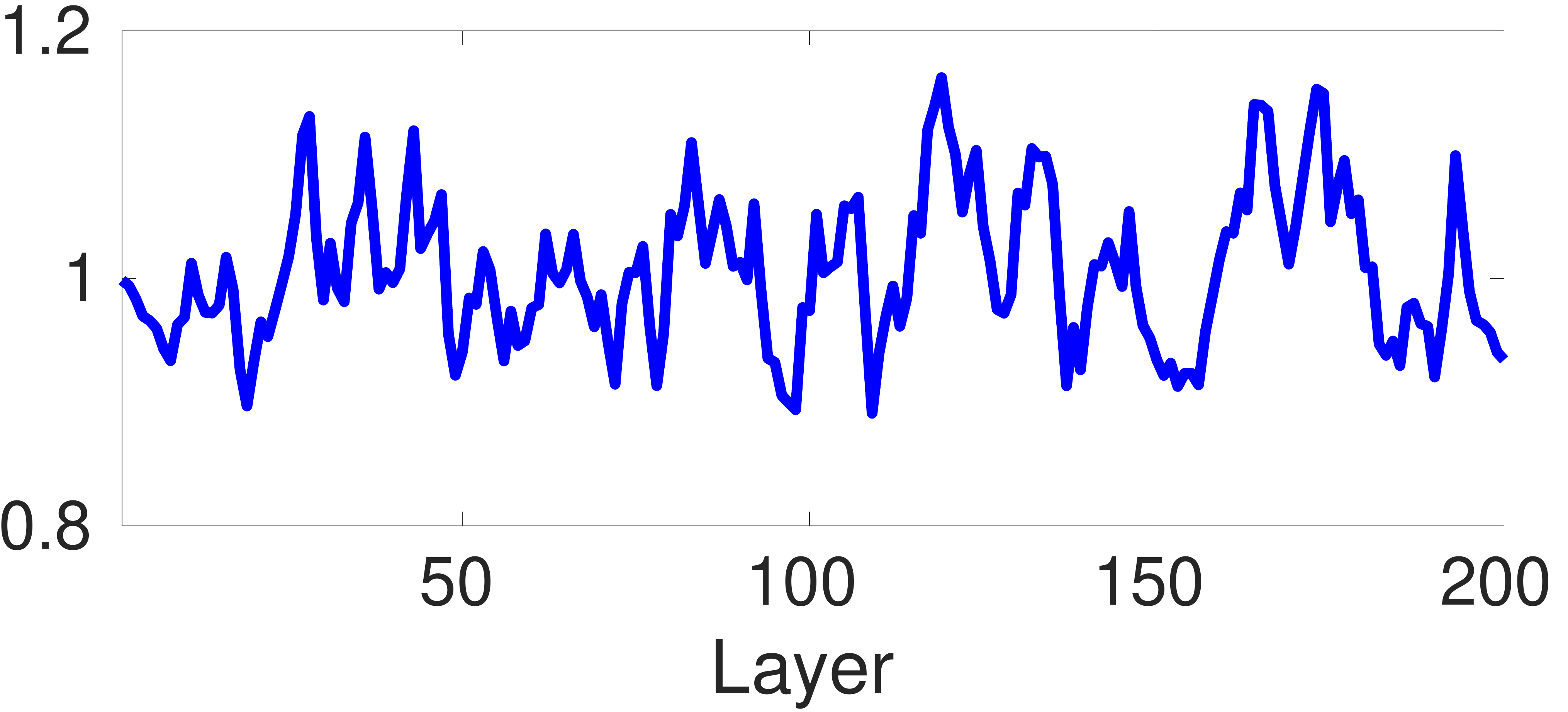}}
\hspace{0.5cm}
\subfloat[$\|\frac{\partial E}{\partial \mathbf{W}^{(l)}}\|_F$, SELU-GPN.]{\includegraphics[width=0.4\textwidth]{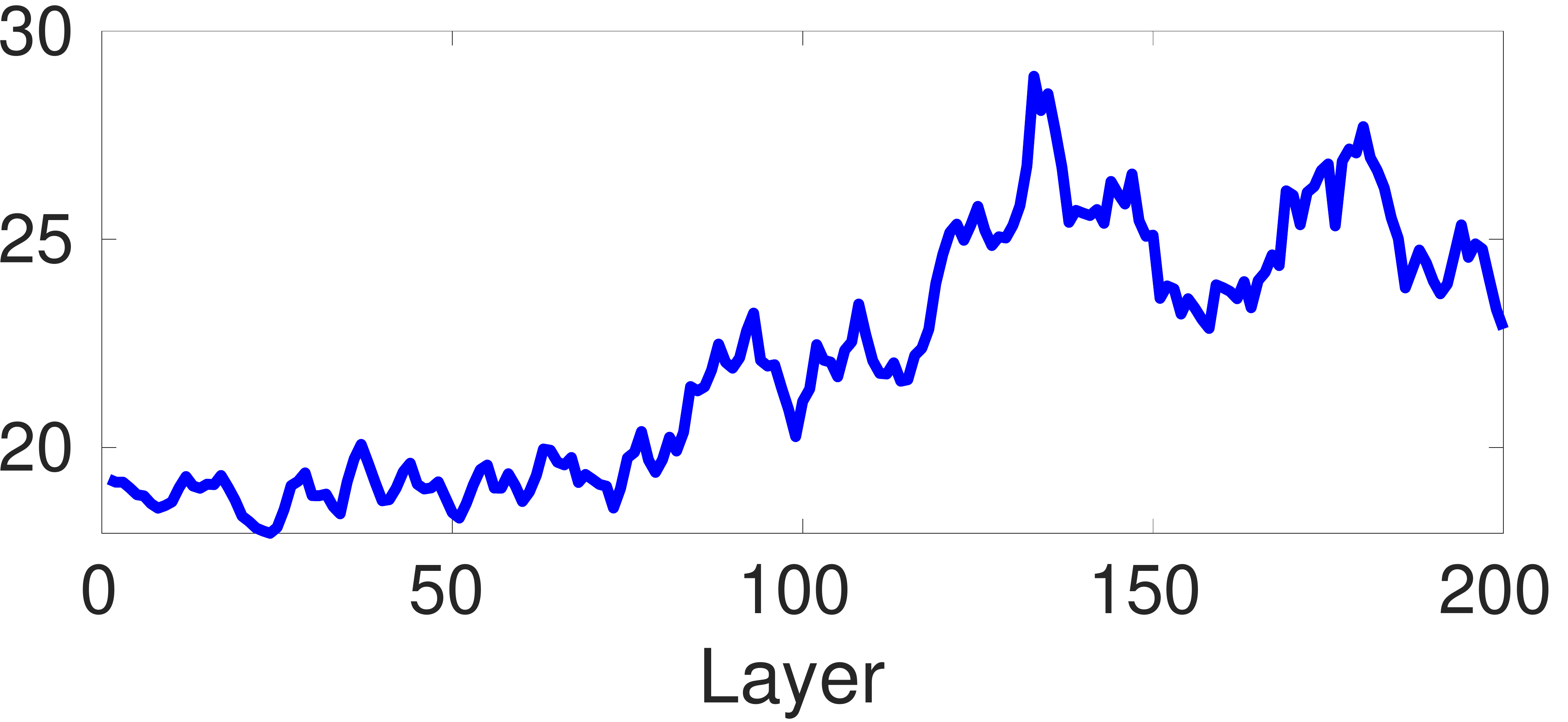}}

\vspace{0.5cm}
\caption{Results on synthetic data with different activation functions.  ``-GPN'' denotes the function is Gaussian-Poincar\'e normalized. $\|\mathbf{x}^{(l)}\|_2$ denotes the $l_2$ norm of the outputs of the $l$-th layer. $d$ denotes the width. $\|\frac{\partial E}{\partial \mathbf{W}^{(l)}}\|_F$ is the Frobenius norm of the gradient of the weight matrix in the $l$-th layer.
}
\label{exp:synthetic}
\vspace{-2ex}
\end{figure}

%% singular values on synthetic %%
\begin{figure}[t]
\centering
\subfloat[Tanh.]{\includegraphics[width=0.45\textwidth]{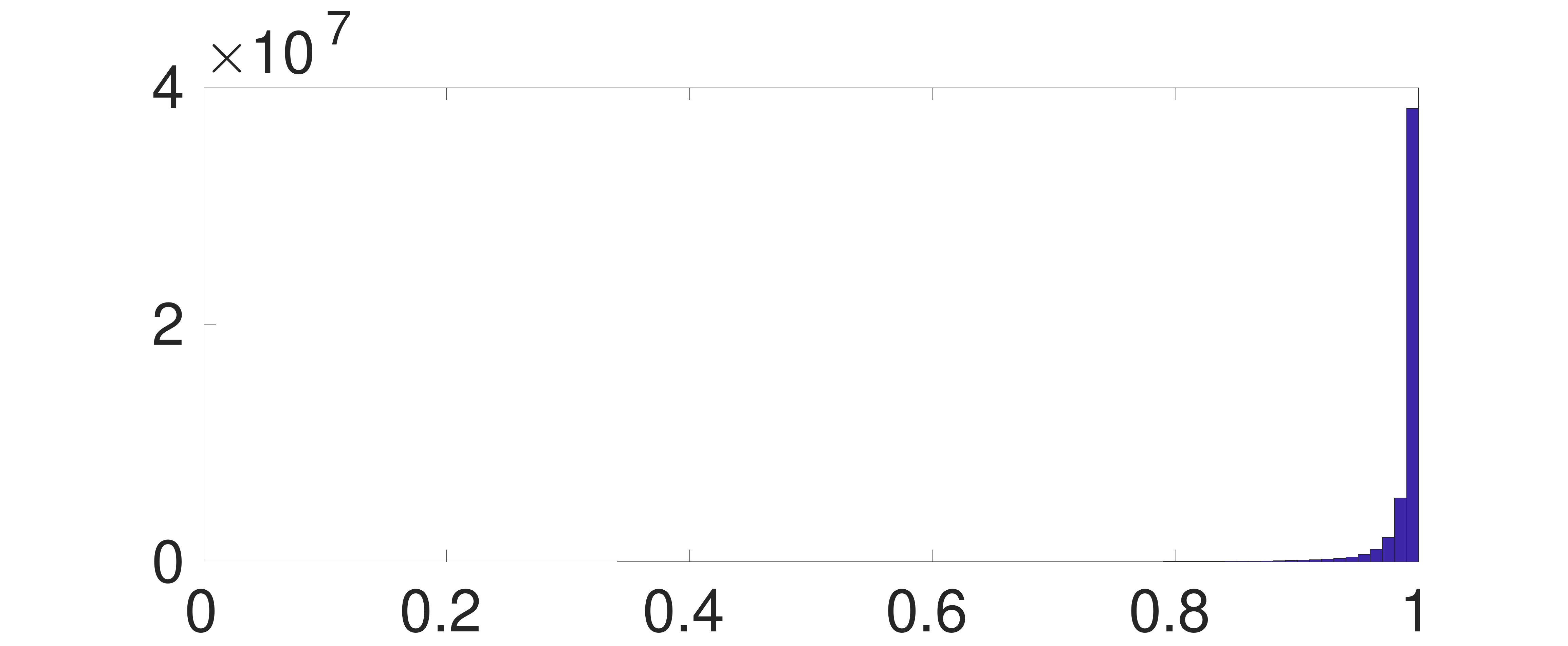}}
\subfloat[Tanh-GPN.]{\includegraphics[width=0.45\textwidth]{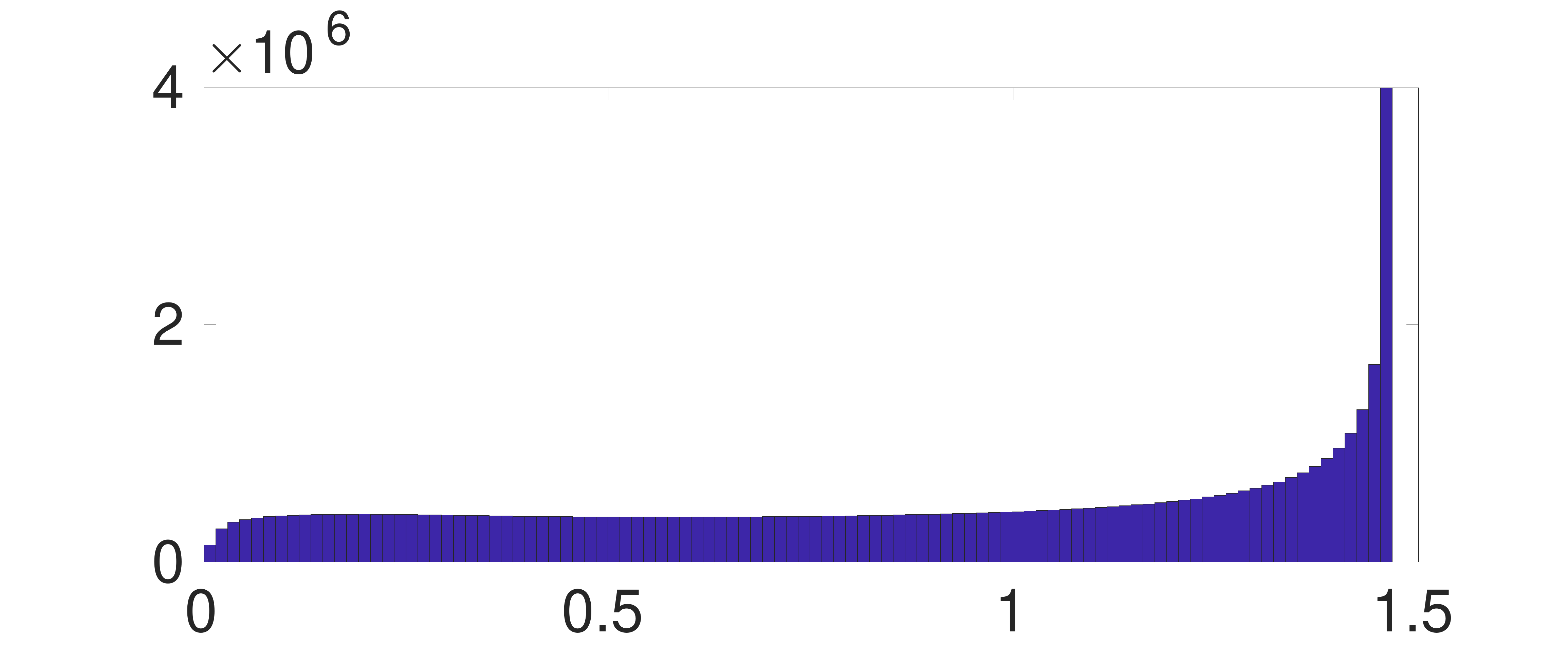}}

\caption{Histogram of $\phi'(h_i^{(l)})$. The values of $\phi'(h_i^{(l)})$ are accumulated for all units, all layers and all samples in the histogram.}
\label{exp:D_hist}
\vspace{-4ex}
\end{figure}
%%% grad norm ratio vs n, synthetic %%
\begin{figure}[t]
\centering
\subfloat[Tanh-GPN.]{\includegraphics[width=0.4\textwidth]{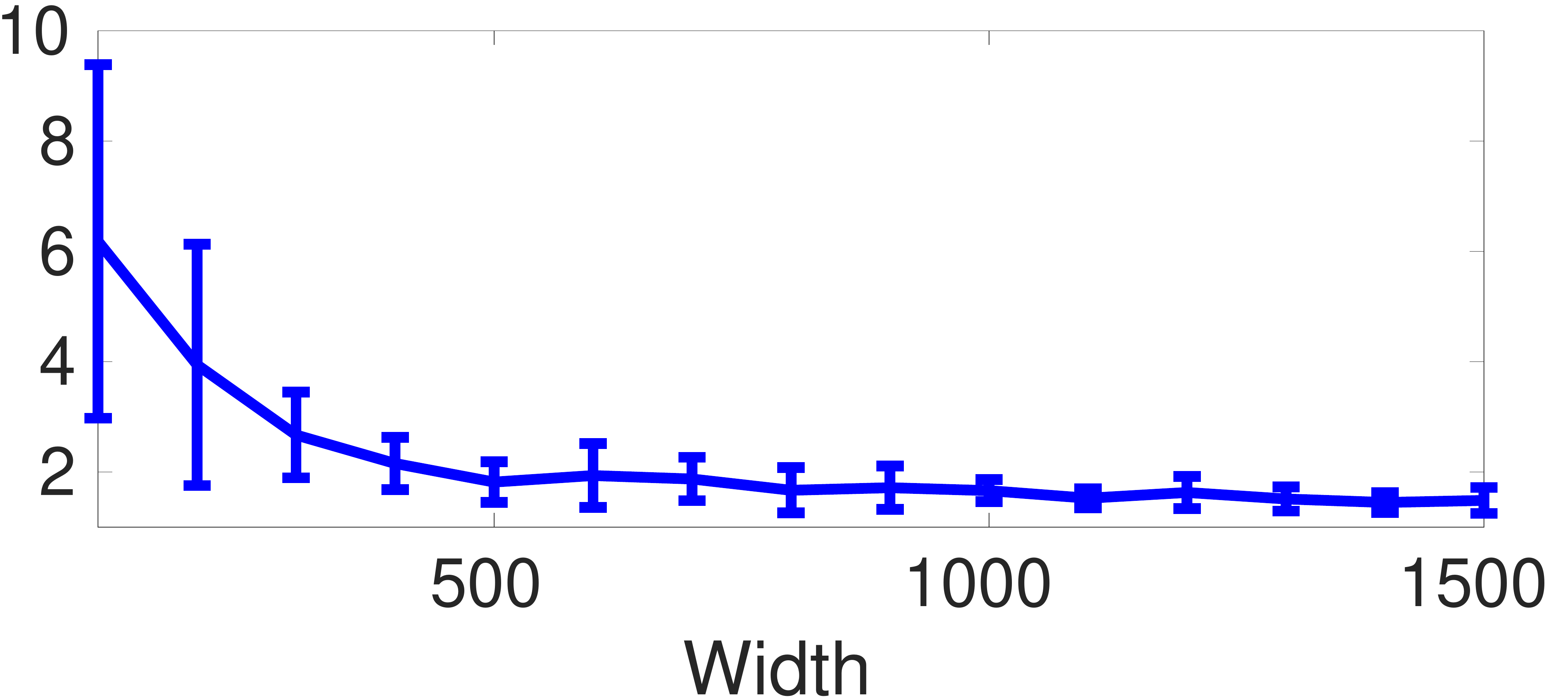}}
\hspace{0.5cm}
\subfloat[SELU-GPN.]{\includegraphics[width=0.4\textwidth]{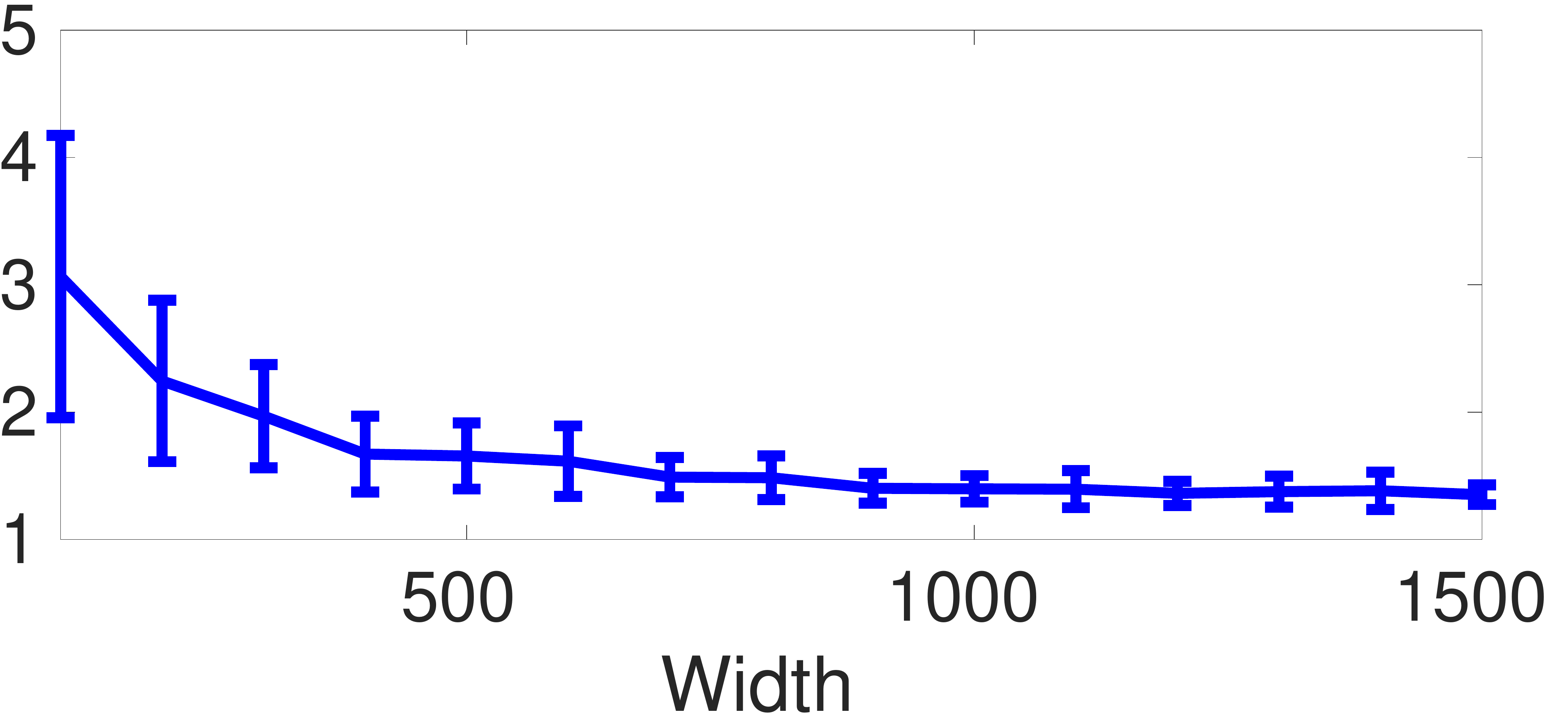}}

\caption{Gradient norm ratio for different layer width on synthetic data. The ratio is  $\max_l\|\frac{\partial E}{\partial \mathbf{W}^{(l)}}\|_F / \min_l\|\frac{\partial E}{\partial \mathbf{W}^{(l)}}\|_F$. The width ranges from 100 to 1500. The error bars show standard deviation.}
\label{exp:synthetic_width}
\vspace{-2ex}
\end{figure}

\begin{table}[t]
\begin{small}
    \centering
    \begin{tabular}{|l|c|c|c|c|}
        \hline %\multirow{2}{*}{$f$}
        & \multicolumn{2}{c|}{MNIST} & \multicolumn{2}{c|}{CIFAR-10} \\
        \cline{2-5} 
         & Train & Test & Train & Test \\
        \hline
        Tanh & 99.05 (87.39) & \textbf{96.57} (89.32) & 80.84 (27.90) & \textbf{42.71} (29.32) \\
        Tanh-GPN & \textbf{99.81} (84.93) & 95.54 (87.11) & \textbf{96.39} (25.13) & 40.95 (26.58)\\
        \hline
        ReLU & 11.24 (11.24) & 11.35 (11.42) & 10.00 (10.00) & 10.00 (10.00) \\
        ReLU-GPN & \textbf{33.28} (11.42) & \textbf{28.13} (11.34) & \textbf{46.60} (10.09) & \textbf{34.96} (9.96) \\        
        \hline
        LeakyReLU & 11.24 (11.24) & 11.35 (11.63) & 10.00 (10.21) & 10.00 (10.06)\\
        LeakyReLU-GPN & \textbf{43.17} (11.19) & \textbf{49.28} (11.66) & \textbf{51.85} (9.89) &  \textbf{39.38} (10.00) \\ 
        \hline
        ELU & 99.06 (98.24) & 95.41 (\textbf{97.48}) & 80.73 (42.39) & \textbf{45.76} (44.16) \\
        ELU-GPN & \textbf{100.00} (97.86) & 96.56 (96.69) & \textbf{99.37} (43.35) & 43.12 (44.36) \\        
        \hline
        SELU & 99.86 (97.82) & 97.33 (97.38) & 29.23 (46.47) & 29.55 (45.88) \\
        SELU-GPN & \textbf{99.92} (97.91) & 96.97 (\textbf{97.39}) & \textbf{98.24} (47.74) & \textbf{45.90} (45.52) \\        
        \hline
        GELU & 11.24 (12.70) & 11.35 (10.28) & 10.00 (10.43) & 10.00 (10.00) \\
        GELU-GPN & \textbf{97.67} (11.22) & \textbf{95.82} (9.74) & \textbf{90.51} (10.00) & \textbf{36.94} (10.00) \\
        \hline
    \end{tabular}
    \vspace{0.25cm}
    \caption{Accuracy (percentage) of neural networks of depth 200 with different activation functions on real-world data. The numbers in parenthesis denote the results when batch normalization is applied before the activation function.}
    \label{table:real}
    \vspace{-4ex}
\end{small}    
\end{table}
%% real data, test curves %%
\begin{figure}[t]
\centering
\subfloat[Tanh.]{\includegraphics[width=0.4\textwidth]{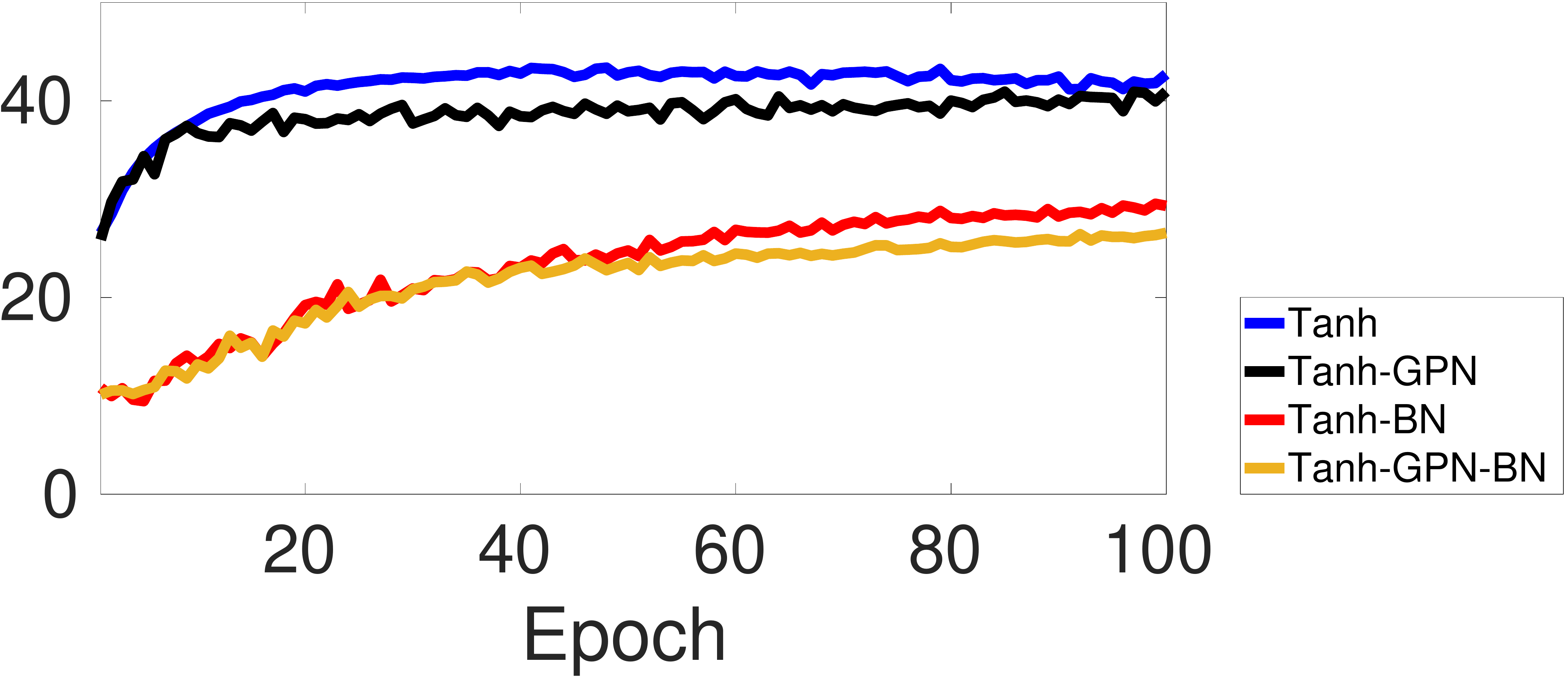}}
\hspace{0.5cm}
\subfloat[SELU.]{\includegraphics[width=0.4\textwidth]{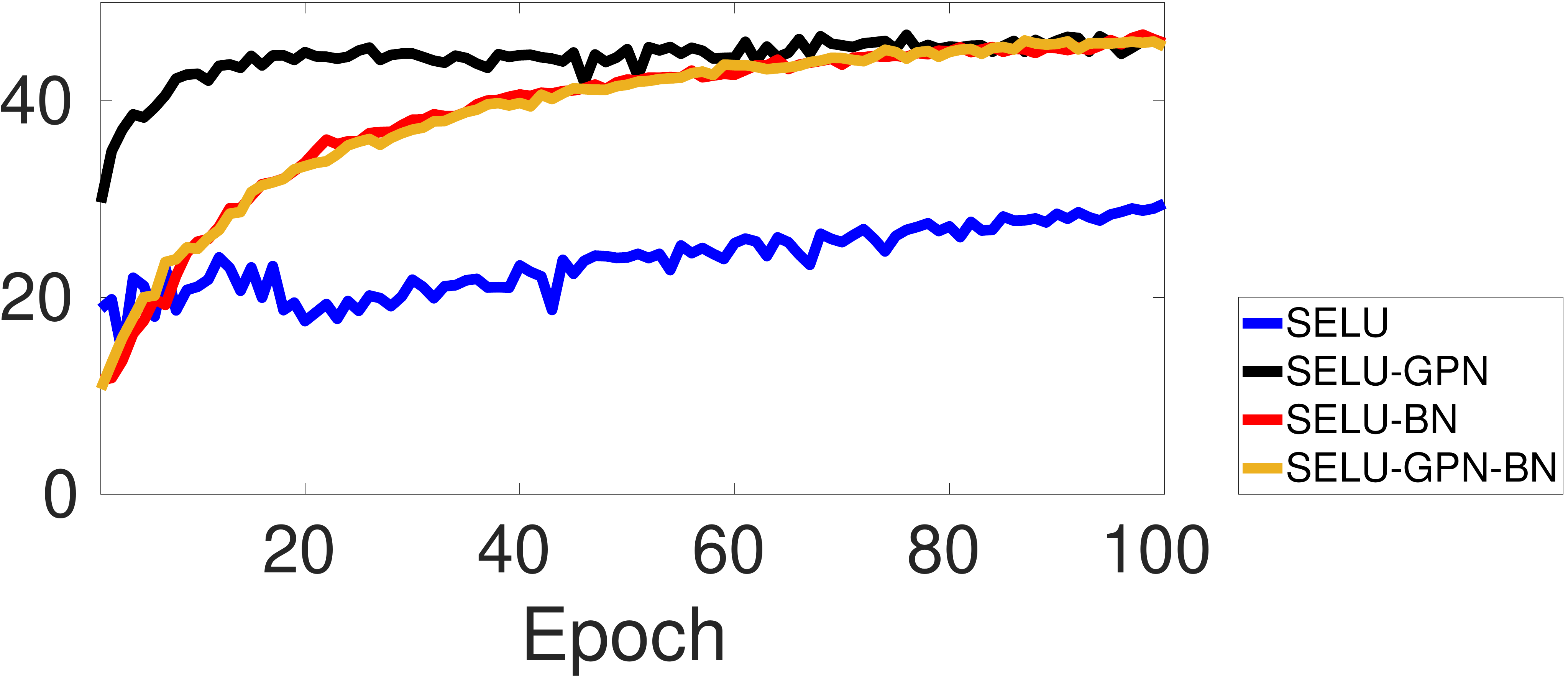}}
\caption{Test accuracy (percentage) during training on CIFAR-10. ``-BN'' denotes that batch normalization is applied before the activation function.}
\label{exp:test_acc_cifar10}

\end{figure}

\clearpage

\section{Discussion}\label{sec:relwork}

We compare our theory to several most relevant theories in literature. 
A key distinguishing feature of our theory is that we provide rigorous proofs of the conditions under which the vanishing/exploding gradients problem disappears. To the best of our knowledge, this is the first time that the problem is provably solved for nonlinear neural networks.

Self-normalizing neural networks enforce zero mean and unit variance for the output of each unit with the SELU activation function~\citep{klambauer2017self}. However, as pointed out in \citep{philipp2018exploding}, only constraining forward signal propagation does not solve the vanishing/exploding gradients problem since the norm of the backward signal can grow or shrink. In \citep{philipp2018exploding} and our experiments, SELU is indeed shown to cause gradient exploding.
To solve the problem, the signal propagation in both directions need to be constrained, as in our theory.

Our theory is largely developed from the deep signal propagation theory~\citep{poole2016exponential,schoenholz2016deep}. 
Both theories require $\mathbb{E}_{x\sim \mathcal{N}(0,1)}[\phi'(x)^2] = 1$. However, ours also requires the quantity $\mathbb{E}_{x\sim \mathcal{N}(0,1)}[\phi(x)^2]$ to be one while in \citep{poole2016exponential,schoenholz2016deep} it can be an arbitrary positive number. We emphasize that it is desirable to enforce $\mathbb{E}_{x\sim \mathcal{N}(0,1)}[\phi(x)^2]=1$ to avoid trivial solutions. For example, if $\phi(x)=\text{Tanh}(\epsilon x)$ with $\epsilon \approx 0$, then $\phi(\epsilon x)\approx \epsilon x$ and the neural network becomes essentially a linear one for which depth is unnecessary (pseudo-linearity \citep{philipp2018exploding}).
This is observed in Figure~\ref{exp:synthetic}~(a).
Moreover, in \citep{poole2016exponential,schoenholz2016deep} the signal propagation analysis is done based on random weights under i.i.d. Gaussian distribution whereas 
we proved how one can solve the vanishing/exploding gradients problem assuming the weight matrices are orthogonal and uniformly distributed under Haar measure.

Dynamical isometry theory~\citep{pennington2017resurrecting} enforces the Jacobian matrix of the input-output function of a neural network to have all singular values close to one. Since the weight matrices are constrained to be orthogonal, it is equivalent to enforce each $\mathbf{D}^{(l)}$ in (\ref{eq:grad_back}) to be close to the identity matrix, which implies the functionality of neural network at initialization is close to an orthogonal matrix (pseudo-linearity). This indeed enables trainability since linear neural networks with orthogonal weight matrices do not suffer from the vanishing/exploding gradients problem. As neural networks need to learn a nonlinear input-output functionality to solve certain tasks, during training the weights of a neural network are unconstrained so that the neural network would move to a nonlinear region where the vanishing/exploding gradients problem might return. 
In our theory, although the orthogonality of weight matrices is also required, we approach the problem from a different perspective. We do not encourage the linearity at initialization.
The neural network can be initialized to be nonlinear and stay nonlinear during the training even when the weights are constrained. This is shown in \textsection\ref{sec:exp_syn}.

\section{Conclusion}

In this paper, we have introduced bidirectionally self-normalizing neural networks (BSNNs) which constrain both forward and backward signal propagation using Gaussian-Poincar\'e normalized activation functions and orthogonal weight matrices.
BSNNs are not restrictive since many commonly used activation functions can be Gaussian-Poincar\'e normalized. We have rigorously proved that the vanishing/exploding gradients problem disappears in BSNNs with high probability under mild conditions.
Experiments on synthetic and real-world data confirm the validity of our theory and demonstrate that BSNNs have excellent trainability without batch normalization.
Currently, the theoretical analysis is limited to same width, fully-connected neural networks. Future work includes extending our theory to more sophisticated networks such as convolutional architectures as well as investigating the generalization capabilities of BSNNs.

\section*{Acknowledgements}
This work was supported by the Australian Research Council Centre of Excellence for Robotic Vision (project number CE140100016). Yao Lu is supported by a Data61/CSIRO scholarship. We thank Elizabeth Meckes for a helpful discussion on random matrix theory.

\clearpage

\bibliographystyle{plainnat}
\bibliography{bsnn}

\clearpage

\setcounter{definition}{0}
\setcounter{proposition}{0}
\setcounter{theorem}{1}
\setcounter{equation}{13}
\setcounter{figure}{4}
\setcounter{table}{2}

%\appendix
\begin{appendices}

\setcounter{definition}{0}
\setcounter{proposition}{0}
\setcounter{theorem}{1}

\section{Proofs}

\begin{proposition} If a neural network is bidirectionally self-normalizing, then
\begin{align}
\Big\|\frac{\partial E}{\partial \mathbf{W}^{(1)}}\Big\|_F = ... = \Big\|\frac{\partial E}{\partial \mathbf{W}^{(L)}}\Big\|_F.
\end{align}
\end{proposition}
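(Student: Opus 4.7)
The plan is a one-line computation once we recall two basic facts. First, from the backpropagation formula stated in the paper, the gradient of the $l$-th weight matrix is the outer product
\begin{equation*}
\frac{\partial E}{\partial \mathbf{W}^{(l)}} = \mathbf{y}^{(l)} (\mathbf{x}^{(l)})^T.
\end{equation*}
Second, the Frobenius norm of a rank-one matrix factors: for any vectors $\mathbf{a},\mathbf{b}\in\mathbb{R}^d$ we have $\|\mathbf{a}\mathbf{b}^T\|_F = \|\mathbf{a}\|_2\,\|\mathbf{b}\|_2$, which follows immediately from $\|\mathbf{M}\|_F^2 = \trace(\mathbf{M}^T\mathbf{M})$ applied to $\mathbf{M} = \mathbf{a}\mathbf{b}^T$.

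Combining these,
\begin{equation*}
\Big\|\frac{\partial E}{\partial \mathbf{W}^{(l)}}\Big\|_F = \|\mathbf{y}^{(l)}\|_2 \, \|\mathbf{x}^{(l)}\|_2.
\end{equation*}
By the bidirectional self-normalization assumption (Definition 2), $\|\mathbf{x}^{(l)}\|_2 = \sqrt{d}$ and $\|\mathbf{y}^{(l)}\|_2 = \|\partial E/\partial \mathbf{x}^{(L+1)}\|_2$ for every $l=1,\dots,L$. Plugging these in gives the same value $\sqrt{d}\,\|\partial E/\partial \mathbf{x}^{(L+1)}\|_2$ for every layer, which proves the claim.

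There is essentially no obstacle here: the proposition is an immediate consequence of the outer-product structure of the gradient together with the defining equalities of bidirectional self-normalization. The only step worth stating carefully is the rank-one Frobenius identity, which can be verified in one line via the trace formula, so the whole argument is a couple of sentences.
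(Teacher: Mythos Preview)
Your proposal is correct and essentially identical to the paper's proof: both use the outer-product formula $\partial E/\partial \mathbf{W}^{(l)} = \mathbf{y}^{(l)}(\mathbf{x}^{(l)})^T$, reduce the Frobenius norm to $\|\mathbf{x}^{(l)}\|_2\|\mathbf{y}^{(l)}\|_2$ via the trace identity, and then invoke Definition~2. The paper writes out the trace computation explicitly in a short chain of equalities, whereas you package it as the rank-one Frobenius identity, but the content is the same.
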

\begin{proof}
For each $l$, we have
\begin{align}
\Big\|\frac{\partial E}{\partial \mathbf{W}^{(l)}}\Big\|_F 
&=\sqrt{\text{trace}\Big(\frac{\partial E}{\partial \mathbf{W}^{(l)}}\Big(\frac{\partial E}{\partial \mathbf{W}^{(l)}}\Big)^T\Big)} \\
&=\sqrt{\text{trace}(\mathbf{y}^{(l)}(\mathbf{x}^{(l)})^T\mathbf{x}^{(l)}
(\mathbf{y}^{(l)})^T)} \\
&=\sqrt{\text{trace}((\mathbf{x}^{(l)})^T\mathbf{x}^{(l)}
(\mathbf{y}^{(l)})^T\mathbf{y}^{(l)})} \\
&=\sqrt{(\mathbf{x}^{(l)})^T\mathbf{x}^{(l)}}\sqrt{(\mathbf{y}^{(l)})^T\mathbf{y}^{(l)}} \\
&=\|\mathbf{x}^{(l)}\|_2\|\mathbf{y}^{(l)}\|_2.
\end{align}
By the definition of bidirectional self-normalization, we have $\|\frac{\partial E}{\partial \mathbf{W}^{(1)}}\|_F=...=\|\frac{\partial E}{\partial \mathbf{W}^{(L)}}\|_F$.
\end{proof}

\begin{proposition}
Function $\phi:\mathbb{R}\to\mathbb{R}$ is Gaussian-Poincar\'e normalized and $\mathbb{E}_{x\sim \mathcal{N}(0,1)}[\phi(x)] = 0$ if and only if $\phi(x)=x$ or $\phi(x)=-x$.
\end{proposition}

\begin{proof}
Since $\mathbb{E}_{x\sim\mathcal{N}(0,1)}[\phi(x)^2] < \infty$  and $\mathbb{E}_{x\sim\mathcal{N}(0,1)}[\phi'(x)^2] < \infty$, $\phi(x)$ and $\phi'(x)$ can be expanded in terms of Hermite polynomials. Let the  Hermite polynomial of degree $k$ be
\begin{align}
H_k(x) = \frac{(-1)^k}{\sqrt{k!}}\exp(\frac{x^2}{2})\frac{d^k}{dx^k}\exp(-\frac{x^2}{2})
\end{align}
and due to $H_k'(x) = \sqrt{k} H_{k-1}(x)$, we have
\begin{align}
\phi(x) &= \sum_{k=0}^\infty a_k H_k(x), \\
\phi'(x) &= \sum_{k=1}^\infty \sqrt{k}a_k H_{k-1}(x).
\end{align}

Since $\mathbb{E}_{x\sim \mathcal{N}(0,1)}[\phi(x)] = 0$, we have 
\begin{align}
a_0 &= \mathbb{E}_{x\sim \mathcal{N}(0,1)}[H_0(x)\phi(x)] \\
&= \mathbb{E}_{x\sim \mathcal{N}(0,1)}[\phi(x)] \\
&= 0.
\end{align}

Since
\begin{align}
\mathbb{E}_{x\sim\mathcal{N}(0,1)}[\phi(x)^2]  = 
\mathbb{E}_{x\sim\mathcal{N}(0,1)}[\phi'(x)^2]  = 1
\end{align}
and Hermite polynomials are orthonormal, we have
\begin{align}
\mathbb{E}_{x\sim\mathcal{N}(0,1)}[\phi(x)^2] = \sum_{k=1}^\infty a_k^2 = \mathbb{E}_{x\sim\mathcal{N}(0,1)}[\phi'(x)^2] =  \sum_{k=1}^\infty k a_k^2 = 1.
\end{align}
Therefore, we have 
\begin{align}
\sum_{k=1}^\infty k a_k^2 - \sum_{k=1}^\infty a_k^2  = 0
\end{align}
that is
\begin{align}
\sum_{k=2}^\infty (k - 1)a_k^2 = 0.
\end{align}
Since each term in $\sum_{k=2}^\infty (k - 1)a_k^2$ is nonnegative, the only solution is $a_k = 0$ for $k \geq 2$. And since $\mathbb{E}_{x\sim\mathcal{N}(0,1)}[\phi(x)^2]  = a_1^2 = 1$, we have $a_1 = \pm 1$. Hence, $\phi(x) = \pm H_1(x) = \pm x$.
\end{proof}

\begin{proposition}
For any differentiable function $\phi:\mathbb{R}\to\mathbb{R}$ with non-zero and bounded $\mathbb{E}_{x\sim \mathcal{N}(0,1)}[\phi(x)^2]$ and  $\mathbb{E}_{x\sim \mathcal{N}(0,1)}[\phi'(x)^2]$, 
there exist two constants $a$ and $b$ such that $a\phi(x)+b$ is Gaussian-Poincar\'e normalized.
\end{proposition}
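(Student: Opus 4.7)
The plan is to pick $a$ and $b$ sequentially so that each of the two Gaussian-Poincar\'e moment conditions is satisfied in turn, and then to invoke the Gaussian-Poincar\'e inequality (already stated in the excerpt) to guarantee that the second step has a real solution.

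First I would handle the derivative condition. Since $(a\phi)'(x) = a\phi'(x)$, the requirement $\mathbb{E}_{x\sim\mathcal{N}(0,1)}[(a\phi'(x))^2] = 1$ reduces to $a^2\,\mathbb{E}_{x\sim\mathcal{N}(0,1)}[\phi'(x)^2] = 1$. Because $\mathbb{E}_{x\sim\mathcal{N}(0,1)}[\phi'(x)^2]$ is assumed nonzero and bounded, I can simply set
\begin{equation*}
a = \frac{1}{\sqrt{\mathbb{E}_{x\sim\mathcal{N}(0,1)}[\phi'(x)^2]}}.
\end{equation*}
Note that adding a constant $b$ does not change the derivative, so this choice of $a$ fixes the derivative condition irrespective of $b$.

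Next I would solve for $b$ from the second-moment condition. Expanding,
\begin{equation*}
\mathbb{E}[(a\phi(x)+b)^2] = a^2\,\mathbb{E}[\phi(x)^2] + 2ab\,\mathbb{E}[\phi(x)] + b^2 = 1,
\end{equation*}
which is a quadratic in $b$ with discriminant
\begin{equation*}
\Delta = 4a^2\,\mathbb{E}[\phi(x)]^2 - 4\bigl(a^2\,\mathbb{E}[\phi(x)^2] - 1\bigr) = 4 - 4a^2\,\mathrm{Var}_{x\sim\mathcal{N}(0,1)}[\phi(x)].
\end{equation*}
The main obstacle, and the only nontrivial step, is showing $\Delta \geq 0$ so that a real $b$ exists. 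This is exactly where the Gaussian-Poincar\'e inequality enters: by the theorem stated in the excerpt,
\begin{equation*}
\mathrm{Var}_{x\sim\mathcal{N}(0,1)}[\phi(x)] \leq \mathbb{E}_{x\sim\mathcal{N}(0,1)}[\phi'(x)^2],
\end{equation*}
and by our choice of $a$ the right-hand side equals $1/a^2$, so $a^2\,\mathrm{Var}[\phi(x)] \leq 1$ and hence $\Delta \geq 0$.

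Finally I would close the argument by writing the explicit roots
\begin{equation*}
b = -a\,\mathbb{E}_{x\sim\mathcal{N}(0,1)}[\phi(x)] \pm \sqrt{1 - a^2\,\mathrm{Var}_{x\sim\mathcal{N}(0,1)}[\phi(x)]},
\end{equation*}
either of which gives a valid constant, and verifying directly that with this $(a,b)$ pair both defining equalities of Gaussian-Poincar\'e normalization hold for $a\phi(x)+b$. This also makes clear why $(a,b)$ is not unique: the sign of $a$ is free, and the quadratic in $b$ generically has two roots.
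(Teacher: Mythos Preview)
Your proof is correct and follows essentially the same approach as the paper: fix $a=(\mathbb{E}[\phi'(x)^2])^{-1/2}$ from the derivative condition, then solve a quadratic for the additive constant, with solvability guaranteed by the Gaussian--Poincar\'e inequality. The only cosmetic difference is that the paper first finds $c$ with $\mathbb{E}[(\phi(x)+c)^2]=\mathbb{E}[\phi'(x)^2]$ via a sign-change argument on the quadratic and then sets $b=ac$, whereas you compute the discriminant directly; the resulting $(a,b)$ are identical.
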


\begin{proof}
Let $\varphi(x) = \phi(x) + c$. Then let
\begin{align}
\psi(c) &= \mathbb{E}_{x\sim \mathcal{N}(0,1)}[\varphi(x)^2]
- \mathbb{E}_{x\sim \mathcal{N}(0,1)}[(\phi'(x))^2] \\
&= \mathrm{Var}_{x\sim \mathcal{N}(0,1)}[\varphi(x)] + (\mathbb{E}_{x\sim \mathcal{N}(0,1)}[\varphi(x)])^2 - \mathbb{E}_{x\sim \mathcal{N}(0,1)}[(\phi'(x))^2] \\
&= \mathrm{Var}_{x\sim \mathcal{N}(0,1)}[\phi(x)] + (\mathbb{E}_{x\sim \mathcal{N}(0,1)}[\phi(x)]+c)^2 - \mathbb{E}_{x\sim \mathcal{N}(0,1)}[(\phi'(x))^2].
\end{align}
Therefore, $\psi(c)$ is a quadratic function of $c$. We also have $\psi(c) > 0$ as $c\to\infty$ and $\psi(-\mathbb{E}_{x\sim \mathcal{N}(0,1)}[\phi(x)]) \leq 0$ due to Gaussian-Poincar\'e inequality. Hence, there exists $c$ for which $\psi(c) = 0$ such that $\mathbb{E}_{x\sim \mathcal{N}(0,1)}[(\phi(x)+c)^2] = \mathbb{E}_{x\sim \mathcal{N}(0,1)}[\phi'(x)^2]$. Let $a = (\mathbb{E}_{x\sim \mathcal{N}(0,1)}[\phi'(x)^2])^{-1/2}$ and $b=ac$, we have  $\mathbb{E}_{x\sim \mathcal{N}(0,1)}[(a\phi(x)+b)^2] = \mathbb{E}_{x\sim \mathcal{N}(0,1)}[(a\phi'(x))^2]$ = 1.
\end{proof}

The proof is largely due to \citep{eldredge2020} with minor modification in here.

\paragraph{Assumptions.}
\begin{tight_enumerate}
    \item \emph{Random vector $\mathbf{x}\in\mathbb{R}^d$ is thin-shell concentrated.}
    \item \emph{Random orthogonal matrix $\mathbf{W}=(\mathbf{w}_1,...,\mathbf{w}_d)^T$ is uniformly distributed.}
    \item \emph{Function $\phi: \mathbb{R}\to \mathbb{R}$ is Gaussian-Poincar\'e normalized.}
    \item \emph{Function $\phi$ and its derivative are Lipschitz continuous.}
\end{tight_enumerate}

\begin{theorem}[\textbf{Forward Norm-Preservation}]
Random vector 
\begin{align}
(\phi(\mathbf{w}^T_1\mathbf{x}),...,\phi(\mathbf{w}^T_d\mathbf{x}))
\end{align}
is thin-shell concentrated.
\end{theorem}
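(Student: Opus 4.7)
The plan is to condition on $\mathbf{x}$ and exploit the fact that, by the Haar invariance of $\mathbf{W}$, the vector $\mathbf{z} := \mathbf{W}\mathbf{x}$ is uniformly distributed on the sphere of radius $\|\mathbf{x}\|_2$ in $\mathbb{R}^d$. Since Assumption~1 ensures $\|\mathbf{x}\|_2^2/d \to 1$ in probability, it suffices to prove the conditional statement that $\tfrac{1}{d}\|\phi(\mathbf{z})\|_2^2 \to 1$ in probability whenever $\mathbf{z}$ is uniform on a sphere of radius $r$ with $r/\sqrt{d} \to 1$; the unconditional result then follows by the tower property together with a standard $\epsilon/\delta$ split on the thin-shell event for $\mathbf{x}$.

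Conditional on $\|\mathbf{x}\|_2 = r$, I would split the argument into two ingredients. First, because $\phi$ is $L_\phi$-Lipschitz (Assumption~4), the map $\mathbf{z} \mapsto \|\phi(\mathbf{z})\|_2$ is also $L_\phi$-Lipschitz in the Euclidean metric, since coordinatewise Lipschitz functions inherit a global Lipschitz constant in the $\ell_2$ norm. Lévy's concentration of measure on the sphere then yields, for $\mathbf{z}$ uniform on $r\,S^{d-1}$ and some absolute constant $c>0$,
\begin{equation*}
\mathbb{P}\bigl[\,\bigl|\,\|\phi(\mathbf{z})\|_2 - M_d(r)\bigr| > t\,\bigr] \le 2\exp\bigl(-c\,t^2/L_\phi^2\bigr),
\end{equation*}
where $M_d(r) := \mathbb{E}\|\phi(\mathbf{z})\|_2$. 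The key feature is that the subgaussian parameter is independent of $d$, so the deviation $\|\phi(\mathbf{z})\|_2 - M_d(r)$ is $O_p(1)$.

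Second, I would identify the asymptotic value of $M_d(r)$. By rotational symmetry of the uniform measure on the sphere, $\mathbb{E}\|\phi(\mathbf{z})\|_2^2 = d\,\mathbb{E}[\phi(z_1)^2]$, and $z_1$ has the same distribution as $r\,u_1$ for $\mathbf{u}$ uniform on $S^{d-1}$. The classical Poincaré observation that $\sqrt{d}\,u_1$ converges in distribution to $\mathcal{N}(0,1)$ as $d\to\infty$, combined with $r/\sqrt{d}\to 1$, implies $z_1$ converges in distribution to $\mathcal{N}(0,1)$. Since the Lipschitz bound dominates $\phi(z_1)^2$ by $O(z_1^2+1)$, uniform integrability holds, and the Gaussian–Poincaré normalization gives $\mathbb{E}[\phi(z_1)^2]\to 1$, hence $M_d(r)^2/d\to 1$ (using also $M_d(r)^2 = \mathbb{E}\|\phi(\mathbf{z})\|_2^2 - \mathrm{Var}(\|\phi(\mathbf{z})\|_2)$, with the variance term $O(1)$ by the subgaussian tail).

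Combining these, I would expand
\begin{equation*}
\tfrac{1}{d}\|\phi(\mathbf{z})\|_2^2 = \tfrac{M_d(r)^2}{d} + \tfrac{2 M_d(r)}{d}\bigl(\|\phi(\mathbf{z})\|_2 - M_d(r)\bigr) + \tfrac{1}{d}\bigl(\|\phi(\mathbf{z})\|_2 - M_d(r)\bigr)^2.
\end{equation*}
Since $M_d(r)=\Theta(\sqrt{d})$ and the deviation is $O_p(1)$, the cross term is $O_p(1/\sqrt{d})$ and the squared deviation is $O_p(1/d)$, so the whole expression converges to $1$ in probability. The main obstacle I anticipate is step three: making the convergence $\mathbb{E}[\phi(z_1)^2]\to 1$ quantitative and uniform over the range of admissible $r$ in the thin shell. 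This requires a careful estimate of the density of a single sphere-marginal (it is a rescaled Beta density whose total-variation distance to $\mathcal{N}(0,1)$ is $O(1/d)$) together with a Lipschitz-based dominating function that lifts weak convergence to $L^2$-convergence of $\phi(z_1)^2$; once this is done, the sphere concentration and the symmetry argument slot in routinely.
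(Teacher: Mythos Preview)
Your proposal is correct and takes a genuinely different route from the paper. The paper first proves concentration for i.i.d.\ standard Gaussian coordinates: Lipschitz continuity of $\phi$ makes each $\phi(z_i)$ sub-Gaussian, hence $\phi(z_i)^2$ sub-exponential, and Bernstein's inequality for independent sub-exponential summands yields $\tfrac{1}{d}\sum_i\phi(z_i)^2\to 1$ directly, with the centering constant equal to $1$ \emph{exactly} at every finite $d$ by the GPN condition. It then transfers this Gaussian statement to the sphere via the representation $\bm{\theta}=\mathbf{z}/\|\mathbf{z}\|_2$ together with the Gaussian annulus bound, and finally from $\sqrt{d}\,\bm{\theta}$ to $\mathbf{W}\mathbf{x}$ using thin-shell concentration of $\mathbf{x}$. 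You invert the order: you work on the sphere from the outset, obtain concentration of $\|\phi(\mathbf{z})\|_2$ about its mean in one shot from L\'evy's inequality (using only that $\mathbf{z}\mapsto\|\phi(\mathbf{z})\|_2$ is Lipschitz, no independence), and then identify the mean \emph{asymptotically} through the Poincar\'e--Borel marginal limit plus uniform integrability. The paper's route buys an exact centering constant and thereby sidesteps the uniformity-in-$r$ bookkeeping you rightly flag as the main obstacle; your route buys a cleaner concentration step (a single L\'evy bound in place of Bernstein followed by a Gaussian-to-sphere transfer lemma) and isolates more transparently where each assumption is used. Both arguments are sound.
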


\begin{theorem}[\textbf{Backward Norm-Preservation}]\label{thm:back}
Let $\mathbf{D} = \textup{diag}(\phi'(\mathbf{w}_1^T\mathbf{x}),...,\phi'(\mathbf{w}_d^T\mathbf{x}))$ and $\mathbf{y} \in \mathbb{R}^d$ be a fixed vector with bounded $\|\mathbf{y}\|_\infty$.
Then for any $\epsilon > 0$
\begin{align}
\mathbb{P}\Big\{\frac{1}{d}\Big| \|\mathbf{D}\mathbf{y}\|_2^2  - \|\mathbf{y}\|_2^2 \Big| \geq \epsilon \Big\} \to 0
\end{align}
as $d \to \infty$.
\end{theorem}

\paragraph{Notations.}
$\mathbb{S}^{d-1} = \{\mathbf{x}\in\mathbb{R}^d: \|\mathbf{x}\|_2=1\}$. $\mathbb{O}(d)$ is the orthogonal matrix group of size $d$. $\mathbf{1}_{\{\cdot\}}$ denotes the indicator function. $\mathbf{0}_d$ denotes the vector of dimension $d$ and all elements equal to zero. $\mathbf{I}_d$ denotes the identity matrix of size $d\times d$.

\begin{lemma}
If random variable $x\sim \mathcal{N}(0,1)$ and function $f:\mathbb{R}\to\mathbb{R}$ is Lipschitz continuous, then random variable $f(x)$ is sub-gaussian.
\end{lemma}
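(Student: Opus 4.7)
The plan is to recognize this as an immediate consequence of Gaussian concentration for Lipschitz functions (the Borell--Tsirelson--Ibragimov--Sudakov inequality, or the classical Gaussian log-Sobolev / Herbst argument). Specifically, I will show that $f(x) - \mathbb{E}[f(x)]$ has tails of the form $2\exp(-ct^2)$, which is one of the standard equivalent definitions of sub-gaussian, and then convert to the moment generating function formulation if the ambient proofs in the appendix use that convention.

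First, I would fix the Lipschitz constant of $f$, call it $L$, so that $|f(u) - f(v)| \leq L|u-v|$ for all $u,v \in \mathbb{R}$. I would then invoke the Gaussian Lipschitz concentration inequality: if $g : \mathbb{R}^n \to \mathbb{R}$ is $L$-Lipschitz and $X \sim \mathcal{N}(\mathbf{0}_n, \mathbf{I}_n)$, then
\begin{align}
\mathbb{P}\{|g(X) - \mathbb{E}[g(X)]| \geq t\} \leq 2\exp\!\left(-\frac{t^2}{2L^2}\right)
\end{align}
for every $t > 0$. Applied in dimension one with $g = f$ and $X = x \sim \mathcal{N}(0,1)$, this yields the desired sub-gaussian tail bound directly. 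From this tail bound, standard equivalences (integration of the tail, Markov's inequality on $\exp(\lambda(f(x) - \mathbb{E}[f(x)]))$) give the moment generating function estimate $\mathbb{E}[\exp(\lambda(f(x) - \mathbb{E}[f(x)]))] \leq \exp(C L^2 \lambda^2)$ for an absolute constant $C$, which is the other customary definition of sub-gaussianity and hence covers whichever convention the rest of the appendix uses.

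The genuinely nontrivial ingredient here is the Gaussian concentration inequality itself, which I would not reprove from scratch; instead I would cite it from the reference \citep{vershynin2018high} already used by the paper, or alternatively derive it via the Herbst argument from the Gaussian log-Sobolev inequality. The remaining steps (passing from a tail bound to the sub-gaussian MGF estimate, and absorbing $\mathbb{E}[f(x)]$ which is finite because $f$ is Lipschitz and hence has at most linear growth) are routine and amount to a few lines of calculation. Since the whole statement is essentially a one-line corollary of a textbook concentration result, the main drafting obstacle is simply picking the cleanest statement of the cited inequality so that the later uses in the appendix (e.g.\ applying this lemma coordinate-wise inside the proofs of Theorems~\ref{thm:forward} and~\ref{thm:back} where $\mathbf{w}_i^T \mathbf{x}$ is approximated by a Gaussian) plug in without friction.
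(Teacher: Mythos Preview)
Your proposal is correct and follows essentially the same route as the paper: invoke Gaussian Lipschitz concentration (cited from \citep{vershynin2018high}) to show $f(x)-\mathbb{E}[f(x)]$ is sub-gaussian, then absorb the finite mean to conclude $f(x)$ is sub-gaussian. The paper's version is just more terse, citing the $\psi_2$-norm formulation and the specific propositions in Vershynin directly rather than writing out the tail bound.
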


\begin{proof}
Due to the Gaussian concentration theorem (Theorem 5.2.2 in \citep{vershynin2018high}), we have 
\begin{align}
\|f(x) - \mathbb{E}[f(x)]\|_{\psi_2} \leq CK 
\end{align}
where $\|\cdot\|_{\psi_2}$ denotes sub-gaussian norm, $C$ is a constant and $K$ is the Lipschitz constant of $f$. This implies $f(x) - \mathbb{E}[f(x)]$ is sub-gaussian (Proposition 2.5.2 in \citep{vershynin2018high}). Therefore $f(x)$ is sub-gaussian (Lemma 2.6.8 in \citep{vershynin2018high}).
\end{proof}

\begin{lemma}
Let $\mathbf{x}=(x_1,...,x_d)\in\mathbb{R}^{d}$ be a random vector that each coordinate $x_i$ is independent and sub-gaussian and $\mathbb{E}[x_i^2] = 1$. Let $\mathbf{y}=(y_1,...,y_d)\in\mathbb{R}^d$ be a fixed vector with bounded $\|\mathbf{y}\|_{\infty}$. Then
\begin{align}
\mathbb{P}\Big\{\frac{1}{d}\Big| \sum_ix_i^2y_i^2 - \sum_i y_i^2 \Big| \geq \epsilon \Big\} \to 0
\end{align}
as $d\to\infty$.
\end{lemma}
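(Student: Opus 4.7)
The plan is to rewrite the quantity inside the probability as a sum of independent mean-zero random variables and then apply a Bernstein-type concentration inequality for weighted sums of sub-exponential random variables.

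First, I would observe that
\begin{align}
\frac{1}{d}\sum_i x_i^2 y_i^2 - \frac{1}{d}\sum_i y_i^2 = \frac{1}{d}\sum_i (x_i^2 - 1)\, y_i^2,
\end{align}
so the claim reduces to controlling the deviation of a weighted sum of the independent random variables $Z_i := x_i^2 - 1$, with weights $a_i := y_i^2$. Because $x_i$ is sub-gaussian and $\mathbb{E}[x_i^2] = 1$, the centered square $Z_i$ is mean-zero and sub-exponential, with $\|Z_i\|_{\psi_1} \le C\|x_i\|_{\psi_2}^2$ (standard product-of-sub-gaussians bound, e.g.\ Lemma 2.7.6 in \citep{vershynin2018high}). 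Let $K$ be a uniform upper bound on $\|Z_i\|_{\psi_1}$, which exists because the $x_i$'s are sub-gaussian with uniformly bounded parameter (they all satisfy $\mathbb{E}[x_i^2]=1$ and come from a fixed construction in the main theorem).

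Next, I would invoke Bernstein's inequality for weighted sums of independent mean-zero sub-exponential variables (Theorem 2.8.1 in \citep{vershynin2018high}): for every $t > 0$,
\begin{align}
\mathbb{P}\Big\{\Big|\sum_i a_i Z_i\Big| \geq t\Big\} \leq 2\exp\Big(-c\min\Big(\frac{t^2}{K^2\sum_i a_i^2},\, \frac{t}{K\max_i |a_i|}\Big)\Big).
\end{align}
Setting $t = \epsilon d$, $a_i = y_i^2$, and using $\sum_i y_i^4 \le d\,\|\mathbf{y}\|_\infty^4$ and $\max_i y_i^2 \le \|\mathbf{y}\|_\infty^2$, the bound becomes
\begin{align}
\mathbb{P}\Big\{\Big|\sum_i (x_i^2 - 1)y_i^2\Big| \geq \epsilon d\Big\} \leq 2\exp\Big(-c\min\Big(\frac{\epsilon^2 d}{K^2\|\mathbf{y}\|_\infty^4},\, \frac{\epsilon d}{K\|\mathbf{y}\|_\infty^2}\Big)\Big).
\end{align}
Since $\|\mathbf{y}\|_\infty$ is bounded by hypothesis and $\epsilon$ is fixed, both exponents are of order $d$, so the right-hand side tends to $0$ as $d\to\infty$, which is exactly the claim.

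The only genuinely delicate point is justifying the uniform sub-exponential bound on the $Z_i$'s; once that is in hand, the rest is a direct application of Bernstein. In the context in which this lemma is used, the $x_i$'s arise as $\phi(\mathbf{w}_i^T\mathbf{x})$ (or $\phi'(\mathbf{w}_i^T\mathbf{x})$) after the earlier steps that approximate $\mathbf{w}_i^T\mathbf{x}$ by a standard Gaussian, so the preceding Lemma 1 (Lipschitz image of a Gaussian is sub-gaussian) gives the required uniform $\psi_2$ bound and hence the uniform $\psi_1$ bound on $Z_i$; I would state this explicitly to close the argument.
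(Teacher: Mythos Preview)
Your proposal is correct and follows essentially the same route as the paper: show the summands are independent, mean-zero, sub-exponential, and then invoke Bernstein's inequality from \citep{vershynin2018high}. The only cosmetic difference is that the paper absorbs the weights into the random variables (treating $y_i^2 x_i^2 - y_i^2$ as the sub-exponential summand and citing Corollary~2.8.3), whereas you keep the weights $a_i = y_i^2$ separate and cite the weighted form, Theorem~2.8.1; the resulting bounds are identical.
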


\begin{proof}
Since $y_ix_i$ is sub-gaussian, then $y_i^2x_i^2$ is sub-exponential (Lemma 2.7.6 in \citep{vershynin2018high}). Since $\mathbb{E}[y_i^2x_i^2] = y_i^2\mathbb{E}[x_i^2] = y_i^2$, $y_i^2x_i^2 - y_i^2$ is sub-exponential with zero mean (Exercise 2.7.10 in \citep{vershynin2018high}). 
Applying Bernstein’s inequality (Corollary 2.8.3 in \citep{vershynin2018high}), we proved the lemma.
\end{proof}

\begin{lemma}
Let $\mathbf{z}\sim\mathcal{N}(\mathbf{0}_d,\mathbf{I}_d)$. Then for  any $0 < \delta < 1$
\begin{align}
\mathbb{P}\{\mathbf{z}\in \mathbb{R}^d : (1-\delta)\sqrt{d}\leq\|\mathbf{z}\|_2\leq (1+\delta)\sqrt{d}\} \geq 1 - 2\exp(-d\delta^2).
\end{align}
\end{lemma}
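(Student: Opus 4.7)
This is a classical concentration-of-the-norm inequality for a standard Gaussian vector, and I would prove it by a direct Chernoff bound on $\|\mathbf{z}\|_2^2=\sum_{i=1}^d z_i^2$, viewed as a sum of $d$ i.i.d.\ $\chi^2_1$ random variables. The structural step is to rewrite the target event in terms of the squared norm,
\[
\bigl\{(1-\delta)\sqrt{d}\le\|\mathbf{z}\|_2\le(1+\delta)\sqrt{d}\bigr\}=\bigl\{(1-\delta)^2 d\le\|\mathbf{z}\|_2^2\le(1+\delta)^2 d\bigr\},
\]
so that it suffices to control the two tails of $\|\mathbf{z}\|_2^2$ around its mean $d$ and finish with a union bound.

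For each tail I would use the moment generating function $\mathbb{E}[e^{\lambda z_i^2}]=(1-2\lambda)^{-1/2}$ for $\lambda<1/2$, which by independence gives $\mathbb{E}[e^{\lambda\|\mathbf{z}\|_2^2}]=(1-2\lambda)^{-d/2}$. Optimizing the Chernoff bound in $\lambda$ produces the standard rate-function estimate
\[
\mathbb{P}\bigl\{\|\mathbf{z}\|_2^2\ge d(1+u)\bigr\}\le\exp\!\bigl(-\tfrac{d}{2}[u-\log(1+u)]\bigr),
\]
and a symmetric statement for the lower tail, after which I would substitute $u=2\delta+\delta^2$ (upper) and $u=2\delta-\delta^2$ (lower) and apply the union bound. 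As a backup, I could instead invoke the Borell--Sudakov--Tsirelson Gaussian concentration inequality (the same tool used in Lemma 1 above) applied to $f(\mathbf{z})=\|\mathbf{z}\|_2$, which is $1$-Lipschitz by the reverse triangle inequality, combined with the elementary bounds $\sqrt{d-1}\le\mathbb{E}\|\mathbf{z}\|_2\le\sqrt{d}$ (Jensen on one side, the variance estimate implied by sub-gaussianity on the other). A third variant, most in line with the style of the appendix, is to apply Bernstein's inequality to the centered sub-exponential variables $z_i^2-1$, mirroring exactly the technique of Lemma 2.

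The main obstacle is matching the precise constant $1$ in the exponent $-d\delta^2$. The Chernoff/rate-function route gives $\exp(-\tfrac{d}{2}[u-\log(1+u)])$, whose Taylor expansion $u-\log(1+u)=u^2/2-u^3/3+\cdots$ reproduces the stated $d\delta^2$ only to leading order in $\delta$, and the Lipschitz route loses a factor of two outright. Since this lemma is invoked only to drive a thin-shell concentration argument in the $d\to\infty$ limit of the preceding theorems, any tail of the form $\exp(-cd\delta^2)$ with $c>0$ is sufficient downstream, so I would either accept a slightly weaker absolute constant or perform a direct Taylor analysis for the relevant range $\delta\in(0,1)$ to recover the stated constant. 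Beyond that constant-tracking issue, every remaining step is a routine MGF manipulation and a single union bound.
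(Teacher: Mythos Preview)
Your proposal is sound, and in fact goes well beyond what the paper does: the paper offers no proof of this lemma at all and simply cites an external source (Theorem~1.2 of \citep{alberts2018}). So any of your three routes---the Chernoff bound on the $\chi^2$ sum, the Borell--Sudakov--Tsirelson inequality applied to the $1$-Lipschitz map $\mathbf{z}\mapsto\|\mathbf{z}\|_2$, or Bernstein's inequality on the sub-exponential variables $z_i^2-1$---constitutes a genuinely more self-contained argument than what appears in the appendix.

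Your diagnosis of the constant-matching issue is accurate: the Chernoff route yields $\exp\bigl(-\tfrac{d}{2}[u-\log(1+u)]\bigr)$, which after substituting $u=2\delta\pm\delta^2$ gives $d\delta^2$ only to leading order (and in fact falls slightly short for the upper tail once the cubic term is retained), while the Lipschitz route gives $\exp(-d\delta^2/2)$ outright. You are also right that this does not matter downstream: the lemma is invoked only in Lemma~5 with the specific choice $\delta=d^{-1/4}$, where any bound of the form $2\exp(-cd\delta^2)=2\exp(-c\sqrt{d})$ with $c>0$ suffices to drive the probability to zero. So either accept the weaker constant or, if you want to match the statement exactly, defer to the cited reference as the paper does.
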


\begin{proof}
See \citep{alberts2018} (Theorem 1.2).
\end{proof}

\begin{lemma}
Let $\mathbf{z}\sim\mathcal{N}(\mathbf{0}_d,\mathbf{I}_d)$. Then $\mathbf{z}/\|\mathbf{z}\|_2$ is uniformly distributed on $\mathbb{S}^{d-1}$.

\end{lemma}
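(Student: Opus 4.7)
The plan is to use the rotational invariance of the standard multivariate Gaussian together with the uniqueness of the rotation-invariant probability measure on the sphere. The key observation is that the density of $\mathbf{z} \sim \mathcal{N}(\mathbf{0}_d, \mathbf{I}_d)$, namely $(2\pi)^{-d/2}\exp(-\|\mathbf{z}\|_2^2/2)$, depends on $\mathbf{z}$ only through $\|\mathbf{z}\|_2$, which immediately implies that $\mathbf{U}\mathbf{z} \stackrel{d}{=} \mathbf{z}$ for every $\mathbf{U} \in \mathbb{O}(d)$.

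First I would note that $\mathbb{P}\{\mathbf{z} = \mathbf{0}_d\} = 0$, so $\mathbf{v} := \mathbf{z}/\|\mathbf{z}\|_2$ is well-defined almost surely and takes values in $\mathbb{S}^{d-1}$. Next, for any $\mathbf{U} \in \mathbb{O}(d)$ I would compute
\begin{align}
\mathbf{U}\mathbf{v} = \frac{\mathbf{U}\mathbf{z}}{\|\mathbf{z}\|_2} = \frac{\mathbf{U}\mathbf{z}}{\|\mathbf{U}\mathbf{z}\|_2},
\end{align}
where the second equality uses that orthogonal matrices preserve the Euclidean norm. Since $\mathbf{U}\mathbf{z} \stackrel{d}{=} \mathbf{z}$, applying the same continuous mapping $\mathbf{u} \mapsto \mathbf{u}/\|\mathbf{u}\|_2$ to both sides gives $\mathbf{U}\mathbf{v} \stackrel{d}{=} \mathbf{v}$. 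Hence the law of $\mathbf{v}$ is a Borel probability measure on $\mathbb{S}^{d-1}$ that is invariant under the action of the orthogonal group.

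To conclude, I would invoke the standard fact (see, e.g., \citep{meckes2019random}) that the uniform (normalized surface) measure on $\mathbb{S}^{d-1}$ is the unique rotation-invariant Borel probability measure on the sphere. Therefore the law of $\mathbf{v}$ coincides with the uniform distribution on $\mathbb{S}^{d-1}$.

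I do not expect any real obstacle here: the argument is entirely classical and relies only on the spherical symmetry of the Gaussian density and the uniqueness of the Haar-induced uniform measure on $\mathbb{S}^{d-1}$. The only mildly subtle point is justifying the passage from equality in distribution of $\mathbf{z}$ and $\mathbf{U}\mathbf{z}$ to that of their normalizations, which is immediate by continuity of the normalization map on $\mathbb{R}^d \setminus \{\mathbf{0}_d\}$.
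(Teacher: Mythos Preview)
Your argument is correct and is the standard proof of this classical fact: rotational invariance of the isotropic Gaussian density, norm-preservation by orthogonal maps, and uniqueness of the $\mathbb{O}(d)$-invariant probability measure on $\mathbb{S}^{d-1}$. The paper itself does not give a proof at all --- it simply writes ``See \citep{dawkins}'' --- so your self-contained argument is more informative than what appears in the paper and is exactly the reasoning one would find in the cited reference.
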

\begin{proof}
See \citep{dawkins}.
\end{proof}

\begin{lemma}
Let $\mathbf{z}=(z_1,...,z_d)\sim\mathcal{N}(\mathbf{0}_d,\mathbf{I}_d)$, $\mathbf{a}=(a_1,...,a_d)$ be a fixed vector with bounded $\|\mathbf{a}\|_\infty$ and $f:\mathbb{R}\to\mathbb{R}$ be a continuous function. Then for any $\epsilon>0$
\begin{align}
\mathbb{P}\Big\{ \frac{1}{d}  \Big| \sum_i a_if(\sqrt{d}/\|\mathbf{z}\|_2z_i) - \sum_i a_i f(z_i) \Big| \geq \epsilon \Big\} \to 0
\end{align}
as $d\to\infty$.
\end{lemma}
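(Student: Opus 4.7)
The plan is to exploit the Gaussian norm concentration (Lemma 3) to show that the rescaling factor $r_d := \sqrt{d}/\|\mathbf{z}\|_2$ is close to $1$ with overwhelming probability, and then use continuity of $f$ to propagate this closeness into the weighted averages.

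First I would fix $\delta \in (0, 1/2)$ and apply Lemma 3: on the event $E_\delta := \{(1-\delta)\sqrt{d} \leq \|\mathbf{z}\|_2 \leq (1+\delta)\sqrt{d}\}$, which has probability at least $1 - 2\exp(-d\delta^2)$, one obtains $|r_d - 1| \leq \delta/(1-\delta) \leq 2\delta$. Since $\mathbb{P}(E_\delta^c)$ decays exponentially in $d$, it suffices to control the quantity inside the probability on $E_\delta$.

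Next I would truncate at a level $M > 0$. Setting $T_M := \{i : |z_i| \leq M\}$, one has the split
\begin{equation*}
\frac{1}{d}\Big|\sum_i a_i(f(r_d z_i) - f(z_i))\Big| \leq \frac{\|\mathbf{a}\|_\infty}{d}\sum_{i \in T_M}\bigl|f(r_d z_i) - f(z_i)\bigr| + \frac{\|\mathbf{a}\|_\infty}{d}\sum_{i \notin T_M}\bigl(|f(r_d z_i)| + |f(z_i)|\bigr).
\end{equation*}
For $i \in T_M$, both $z_i$ and $r_d z_i$ lie in $[-2M, 2M]$ on $E_\delta$, so uniform continuity of $f$ on this compact interval yields $|f(r_d z_i) - f(z_i)| \leq \omega_{2M}(2\delta M)$, where $\omega_L$ denotes the modulus of continuity of $f$ on $[-L, L]$; the good-index contribution is therefore at most $\|\mathbf{a}\|_\infty \omega_{2M}(2\delta M)$ and vanishes as $\delta \to 0$ with $M$ fixed. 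For $i \notin T_M$, the bound $|r_d z_i| \leq 2|z_i|$ on $E_\delta$ reduces the tail contribution to $\tfrac{1}{d}\sum_i \mathbf{1}\{|z_i|>M\}(|f(2z_i)| + |f(z_i)|)$ up to constants, and the law of large numbers gives in-probability convergence to $\mathbb{E}\bigl[\mathbf{1}\{|z_1|>M\}(|f(2z_1)|+|f(z_1)|)\bigr]$, which tends to $0$ as $M \to \infty$ by dominated convergence (using that $f$ is integrable against a Gaussian, which holds whenever $f$ is Lipschitz, as in the paper's applications via Lemma 1).

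The main obstacle is the interplay between $M$ and $\delta$: the uniform continuity modulus $\omega_{2M}$ depends on $M$, so after choosing $M$ large to kill the tail one must then choose $\delta$ small enough, depending on $M$, to control $\omega_{2M}(2\delta M)$. Given $\epsilon > 0$, I would fix the order of limits as follows: first pick $M$ so that the tail expectation is below $\epsilon/3$; then pick $\delta$ so that $\|\mathbf{a}\|_\infty \omega_{2M}(2\delta M) < \epsilon/3$; finally let $d \to \infty$ so that the LLN error for the tail sum and $\mathbb{P}(E_\delta^c)$ each drop below $\epsilon/3$, yielding the desired conclusion.
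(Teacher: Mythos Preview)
Your overall strategy coincides with the paper's: both restrict to the shell via Lemma~3 and then argue that the weighted difference is small there. The paper handles the on-shell step more tersely than you do---it takes $\delta=d^{-1/4}$ and simply asserts that for $\mathbf{z}\in\mathcal{U}_\delta$ one has $\sqrt{d}/\|\mathbf{z}\|_2\,z_i\to z_i$, hence $f(\sqrt{d}/\|\mathbf{z}\|_2 z_i)\to f(z_i)$ by continuity, and concludes that the average and the indicator integral vanish. Your truncation at level $M$, with uniform continuity on $[-2M,2M]$ for the bulk and the law of large numbers for the tail, makes this step considerably more explicit and is honest about the integrability hypothesis that is implicitly needed.

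One slip to fix: from $|r_d z_i|\le 2|z_i|$ you cannot deduce $|f(r_d z_i)|\le |f(2z_i)|$ without monotonicity of $|f|$. What the tail step actually requires is an envelope $|f(r_d z_i)|\le g(z_i)$ with $\mathbb{E}[g(z_1)\mathbf{1}_{\{|z_1|>M\}}]\to 0$; under a Lipschitz hypothesis on $f$ one gets $|f(r_d z_i)|\le |f(0)|+2L|z_i|$ and the argument goes through with that envelope. Note, however, that in the paper's applications $f$ is $\phi^2$ or $(\phi')^2$ with $\phi,\phi'$ Lipschitz, so $f$ itself has at most quadratic growth rather than being Lipschitz; the same envelope idea still works (take $g(z)=C(1+z^2)$), so your order-of-limits scheme ($M$, then $\delta$, then $d$) is sound once the envelope is corrected.
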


\begin{proof}

Since
\begin{align}
\frac{1}{d}  \Big| \sum_i a_i f(\sqrt{d}/\|\mathbf{z}\|_2z_i) - \sum_i a_i f(z_i) \Big| \leq \frac{1}{d} \sum_i |a_i| \cdot |  f(\sqrt{d}/\|\mathbf{z}\|_2z_i) - f(z_i) |,
\end{align}
if, as $d\to\infty$,
\begin{align}
\mathbb{P}\Big\{ \frac{1}{d} \sum_i |a_i| \cdot |  f(\sqrt{d}/\|\mathbf{z}\|_2z_i) - f(z_i) | \geq \epsilon \Big\} \to 0,
\end{align}
then
\begin{align}
\mathbb{P}\Big\{ \frac{1}{d}  \Big| \sum_i a_if(\sqrt{d}/\|\mathbf{z}\|_2z_i) - \sum_i a_i f(z_i) \Big| \geq \epsilon \Big\} \to 0.
\end{align}

For $0 < \delta < 1$, let
\begin{align}
A &= \Big\{\mathbf{z}\in\mathbb{R}^d: \frac{1}{d} \sum_i |a_i| \cdot |  f(\sqrt{d}/\|\mathbf{z}\|_2z_i) - f(z_i) | \geq \epsilon \Big\}, \\
\mathcal{U}_\delta &= \Big\{\mathbf{z}\in \mathbb{R}^d : (1-\delta)\sqrt{d}\leq\|\mathbf{z}\|_2\leq (1+\delta)\sqrt{d}\Big\}.
\end{align}
Then
\begin{align}
\mathbb{P}\Big\{ \frac{1}{d} \sum_i |a_i| \cdot|  f(\sqrt{d}/\|\mathbf{z}\|_2z_i) - f(z_i) | \geq \epsilon \Big\}  &= \int_{\mathbb{R}^d}\mathbf{1}_{\{\mathbf{z}\in A\}} \gamma(\mathbf{z}) d\mathbf{z} \\
&= \int_{\mathbb{R}^d\setminus \mathcal{U}_\delta}\mathbf{1}_{\{\mathbf{z}\in A\}} \gamma(\mathbf{z})d\mathbf{z}  + \int_{\mathcal{U}_\delta}\mathbf{1}_{\{\mathbf{z}\in A\}} \gamma(\mathbf{z})d\mathbf{z}
\end{align}
where $\gamma(\mathbf{z})$ denotes the density function of $\mathbf{z}$.

Let $\delta=d^{-1/4}$. From Lemma 3, we have, as $d\to\infty$,
\begin{align}
\int_{\mathbb{R}^d\setminus \mathcal{U}_\delta}\mathbf{1}_{\{\mathbf{z}\in A\}} \gamma(\mathbf{z})d\mathbf{z}  \leq \int_{\mathbb{R}^d\setminus \mathcal{U}_\delta} \gamma(\mathbf{z})d\mathbf{z} = 1 -\mathbb{P}\{ \mathbf{z}\in \mathcal{U}_\delta  \} \leq 2\exp(-d\delta^2) \to 0.
\end{align}

For $\mathbf{z}\in \mathcal{U}_\delta$ and $\delta=d^{-1/4}$, we have $\|\mathbf{z}\|_2\to\sqrt{d}$, $\sqrt{d}/\|\mathbf{z}\|_2z_i\to z_i$ and therefore $f(\sqrt{d}/\|\mathbf{z}\|_2z_i)\to f(z_i)$ as $d\to\infty$. Since $|a_i|$ is bounded, we have $\frac{1}{d} \sum_i |a_i| \cdot|  f(\sqrt{d}/\|\mathbf{z}\|_2z_i) - f(z_i) |\to 0$ and therefore
$\int_{\mathcal{U}_\delta}\mathbf{1}_{\{\mathbf{z}\in A\}} d\mathbf{z} \to 0$, as $d\to\infty$.

\end{proof}

\begin{lemma}
Let random matrix $\mathbf{W}$ be uniformly distributed on $\mathbb{O}(d)$ random vector $\bm{\theta}$ be uniformly distributed on $\mathbb{S}^{d-1}$ and random vector $\mathbf{x}\in\mathbb{R}^d$ be thin-shell concentrated. Then $\mathbf{Wx}\to\sqrt{d}\bm{\theta}$ as $d\to\infty$.
\end{lemma}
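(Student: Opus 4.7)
The plan is to construct an explicit coupling between $\mathbf{W}\mathbf{x}$ and $\sqrt{d}\,\bm{\theta}$ under which the two differ by a vanishing amount. The key observation is that Haar measure on $\mathbb{O}(d)$ is left-invariant: for any $\mathbf{U}\in\mathbb{O}(d)$, $\mathbf{U}\mathbf{W}\stackrel{d}{=}\mathbf{W}$, so for any deterministic unit vector $\mathbf{u}$, $\mathbf{W}\mathbf{u}$ has a distribution on $\mathbb{S}^{d-1}$ invariant under every orthogonal transformation, and is therefore uniform on $\mathbb{S}^{d-1}$.

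First I would work on the event $\{\mathbf{x}\neq\mathbf{0}\}$, which has probability tending to $1$ as $d\to\infty$ by thin-shell concentration, and set
\begin{align}
\bm{\theta} \;:=\; \mathbf{W}\,\frac{\mathbf{x}}{\|\mathbf{x}\|_2}.
\end{align}
Since $\mathbf{W}$ and $\mathbf{x}$ are independent, conditioning on $\mathbf{x}$ and applying the left-invariance remark to the (now deterministic) unit vector $\mathbf{x}/\|\mathbf{x}\|_2$ shows that $\bm{\theta}\mid\mathbf{x}$ is uniform on $\mathbb{S}^{d-1}$. In particular the conditional law does not depend on $\mathbf{x}$, so $\bm{\theta}$ is independent of $\mathbf{x}$ with the required uniform marginal. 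This provides the coupling.

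Second, with this coupling we have the exact identity $\mathbf{W}\mathbf{x} = \|\mathbf{x}\|_2\,\bm{\theta}$, hence
\begin{align}
\mathbf{W}\mathbf{x} - \sqrt{d}\,\bm{\theta} = (\|\mathbf{x}\|_2 - \sqrt{d})\,\bm{\theta}, \qquad \|\mathbf{W}\mathbf{x} - \sqrt{d}\,\bm{\theta}\|_2 = \bigl|\|\mathbf{x}\|_2 - \sqrt{d}\bigr|.
\end{align}
Dividing by $\sqrt{d}$ yields
\begin{align}
\frac{\|\mathbf{W}\mathbf{x}-\sqrt{d}\,\bm{\theta}\|_2}{\sqrt{d}} = \left|\frac{\|\mathbf{x}\|_2}{\sqrt{d}}-1\right|.
\end{align}
By Assumption~1 (thin-shell concentration), $\|\mathbf{x}\|_2^2/d\to 1$ in probability; the continuous mapping theorem applied to $t\mapsto\sqrt{t}$ then gives $\|\mathbf{x}\|_2/\sqrt{d}\to 1$ in probability, so the right-hand side tends to $0$ in probability. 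This is the precise sense in which $\mathbf{W}\mathbf{x}\to\sqrt{d}\,\bm{\theta}$ as $d\to\infty$, and it is exactly the form needed to feed into the subsequent steps (together with Lemma~4, which identifies $\sqrt{d}\,\bm{\theta}$ as $\sqrt{d}\,\mathbf{z}/\|\mathbf{z}\|_2$ for a standard Gaussian $\mathbf{z}$).

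\paragraph{Main obstacle.} The only delicate point is the first step: establishing that $\mathbf{W}\mathbf{u}$ is uniform on $\mathbb{S}^{d-1}$ when $\mathbf{u}$ is itself random but independent of $\mathbf{W}$. This requires a standard disintegration/Fubini argument (condition on $\mathbf{u}$, invoke left-rotation invariance of Haar measure, then marginalize), and it is the conceptual crux that lets the argument decouple the direction of $\mathbf{W}\mathbf{x}$ from its length. Once this is in place, the remainder is the direct algebraic identity $\mathbf{W}\mathbf{x}=\|\mathbf{x}\|_2\bm{\theta}$ combined with the thin-shell hypothesis, and there are no further technical difficulties.
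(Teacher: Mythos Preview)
Your proposal is correct and follows essentially the same approach as the paper: both use rotation invariance of Haar measure to identify $\mathbf{W}$ applied to the unit vector $\mathbf{x}/\|\mathbf{x}\|_2$ with a uniform $\bm{\theta}$ on $\mathbb{S}^{d-1}$, and then invoke thin-shell concentration to control $\bigl|\|\mathbf{x}\|_2-\sqrt{d}\bigr|$. Your write-up is more explicit about the coupling and the precise mode of convergence than the paper's brief argument, but the underlying idea is identical.
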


\begin{proof}
Let $\mathbf{y}\in\mathbb{R}^d$ be any vector with $\|\mathbf{y}\|_2=\sqrt{d}$ and $\mathbf{a}=(\sqrt{d},0,...,0)\in\mathbb{R}^d$. 
Since $\mathbf{W}$ is uniformly distributed, $\mathbf{Wy}$ has the same distribution as $\mathbf{Wa}$. $\mathbf{Wa}$ is the first column of $\sqrt{d}\mathbf{W}$, which is equivalent to random vector $\sqrt{d}\bm{\theta}$ \citep{meckes2019random}. Since $\mathbf{x}$ is thin-shell concentrated, $\mathbf{x} \to \sqrt{d}/\|\mathbf{x}\|_2\mathbf{x} = \mathbf{y}$ and therefore $\mathbf{Wx}\to\sqrt{d}\bm{\theta}$ as $d\to\infty$.

\end{proof}

\begin{proof}[Proof of Theorem 2]
Let $\mathbf{z}=(z_1,...,z_d)\sim \mathcal{N}(\mathbf{0}_d,\mathbf{I}_d)$. Due to Lemma 1, random variable $\phi(z_i)$ is sub-gaussian. 
Since $\phi$ is Gaussian-Poincar\'e normalized, $\mathbb{E}_{z_i\sim\mathcal{N}(0,1)}[\phi(z_i)^2]=1$. 
Applying Lemma 2 with each $y_i=1$, we have for $\epsilon > 0$
\begin{align}
\mathbb{P}\Big\{\Big| \frac{1}{d}\sum_i\phi(z_i)^2 - 1 \Big| \geq \epsilon \Big\} \to 0
\end{align}
as $d\to\infty$.

Due to Lemma 4 and 5 (with each $a_i = 1$), for random vector $\bm{\theta}=(\theta_1,...,\theta_d)$ uniformly distributed on $\mathbb{S}^{d-1}$,  we have
\begin{align}
\mathbb{P}\Big\{\Big|\frac{1}{d}\sum_i\phi(\sqrt{d}\theta_i)^2 - \frac{1}{d}\sum_i\phi(z_i)^2\Big| \geq \epsilon\Big\} \to 0
\end{align}
and therefore
\begin{align}
\mathbb{P}\Big\{\Big|\frac{1}{d}\sum_i\phi(\sqrt{d}\theta_i)^2 - 1\Big| \geq \epsilon\Big\} \to 0
\end{align}
as $d\to\infty$.

Then from Lemma 6, we have $\mathbf{Wx}\to\sqrt{d}\bm{\theta}$ and therefore
\begin{align}
\mathbb{P}\Big\{\Big|\frac{1}{d}\sum_i\phi(\mathbf{w}_i^T\mathbf{x})^2 - 1\Big| \geq \epsilon\Big\} \to 0
\end{align}
as $d\to\infty$.
\end{proof}

\begin{proof}[Proof of Theorem 3]
Let $\mathbf{z}=(z_1,...,z_d)\sim \mathcal{N}(\mathbf{0}_d,\mathbf{I}_d)$. Due to Lemma 1, random variable $\phi'(z_i)$ is sub-gaussian. 
Since $\phi$ is Gaussian-Poincar\'e normalized, $\mathbb{E}_{z_i\sim\mathcal{N}(0,1)}[\phi'(z_i)^2]=1$. Applying Lemma 2, we have for $\epsilon > 0$
\begin{align}
\mathbb{P}\Big\{\frac{1}{d}\Big| \sum_i y_i^2\phi'(z_i)^2 - y_i^2 \Big| \geq \epsilon \Big\} \to 0
\end{align}
as $d\to\infty$.

Due to Lemma 4 and 5 (with each $a_i = y_i^2$), for random vector $\bm{\theta}=(\theta_1,...,\theta_d)$ uniformly distributed on $\mathbb{S}^{d-1}$, we have
\begin{align}
\mathbb{P}\Big\{\Big|\frac{1}{d}\sum_iy_i^2\phi'(\sqrt{d}\theta_i)^2 - \frac{1}{d}\sum_iy_i^2\phi'(z_i)^2\Big| \geq \epsilon\Big\} \to 0
\end{align}
and therefore
\begin{align}
\mathbb{P}\Big\{\Big|\frac{1}{d}\sum_iy_i^2\phi'(\sqrt{d}\theta_i)^2 - y_i^2\Big| \geq \epsilon\Big\} \to 0
\end{align}
as $n\to\infty$.

Then from Lemma 6, we have $\mathbf{Wx}\to\sqrt{d}\bm{\theta}$ and therefore
\begin{align}
\mathbb{P}\Big\{\Big|\frac{1}{d}\sum_iy_i^2\phi'(\mathbf{w}_i^T\mathbf{x})^2 - y_i^2\Big| \geq \epsilon\Big\} \to 0
\end{align}
as $d\to\infty$.

\end{proof}

\newpage

\section{Additional Experiments}

Due to the space limitation, we only showed the experiments with Tanh and SELU activation functions in the main text. In this section, we show the experiments with ReLU, LeakyReLU, ELU and SELU. Additionally, we also measure the magnitude of the vanishing/exploding gradients during training on the real-world data.

\subsection{Synthetic Data}

In Figure \ref{exp:synthetic_2} and \ref{exp:synthetic_3}, we show the experiments in addition to Figure \myref{1}. In Figure \ref{exp:D_hist_2}, we show the experiments in addition to Figure \myref{2}. In Figure \ref{exp:synthetic_width_2}, we show the experiments in addition to Figure \myref{3}. ELU shows similar behaviors as Tanh since $\text{ELU}(x)\approx x$ for $x\approx 0$.

\begin{figure}[h!]
\centering

\subfloat[$\|\mathbf{x}^{(l)}\|_2^2/d$, ReLU.]{\includegraphics[width=0.4\textwidth]{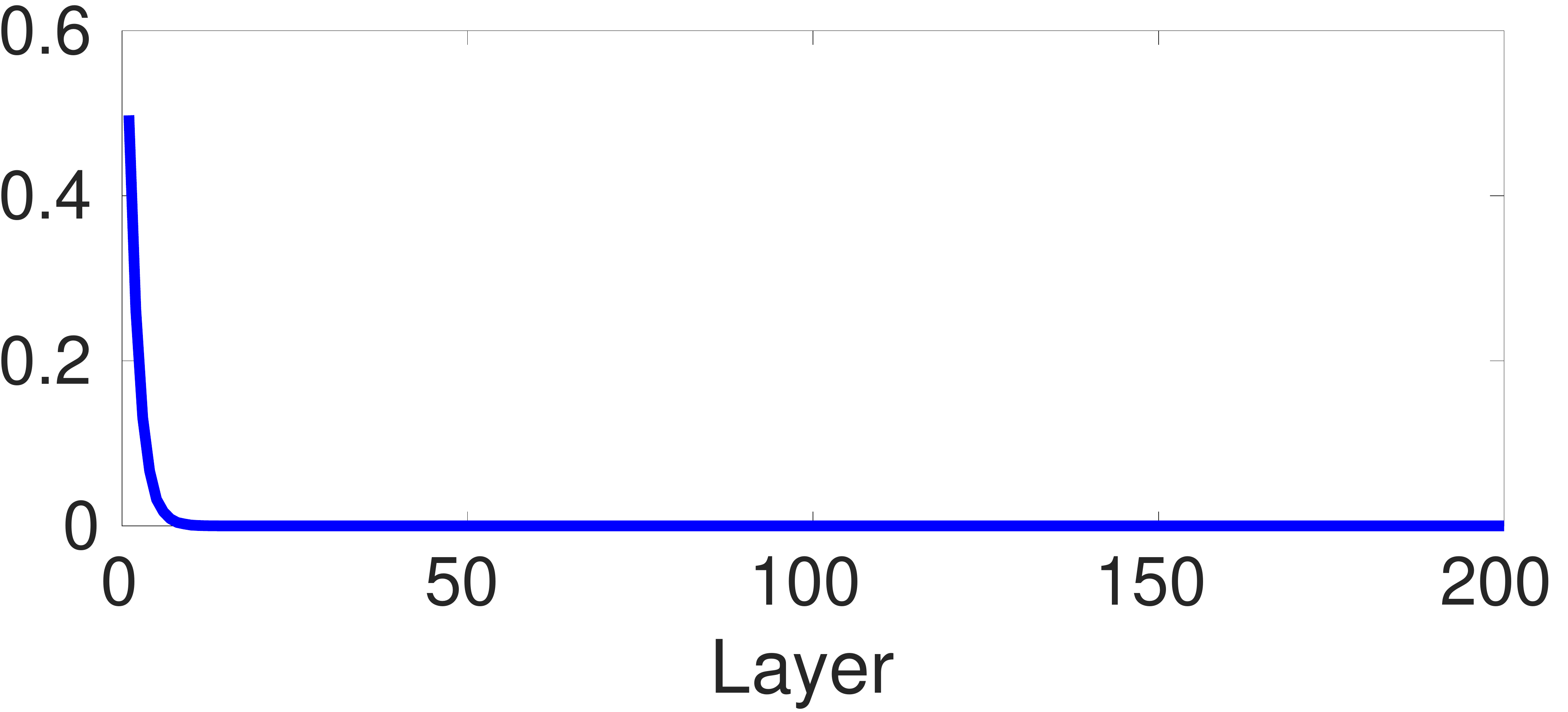}}
\hspace{0.5cm}
\subfloat[$\|\frac{\partial E}{\partial \mathbf{W}^{(l)}}\|_F$, ReLU.]{\includegraphics[width=0.4\textwidth,height=0.205\textwidth,trim=0 0 0 -0.3cm]{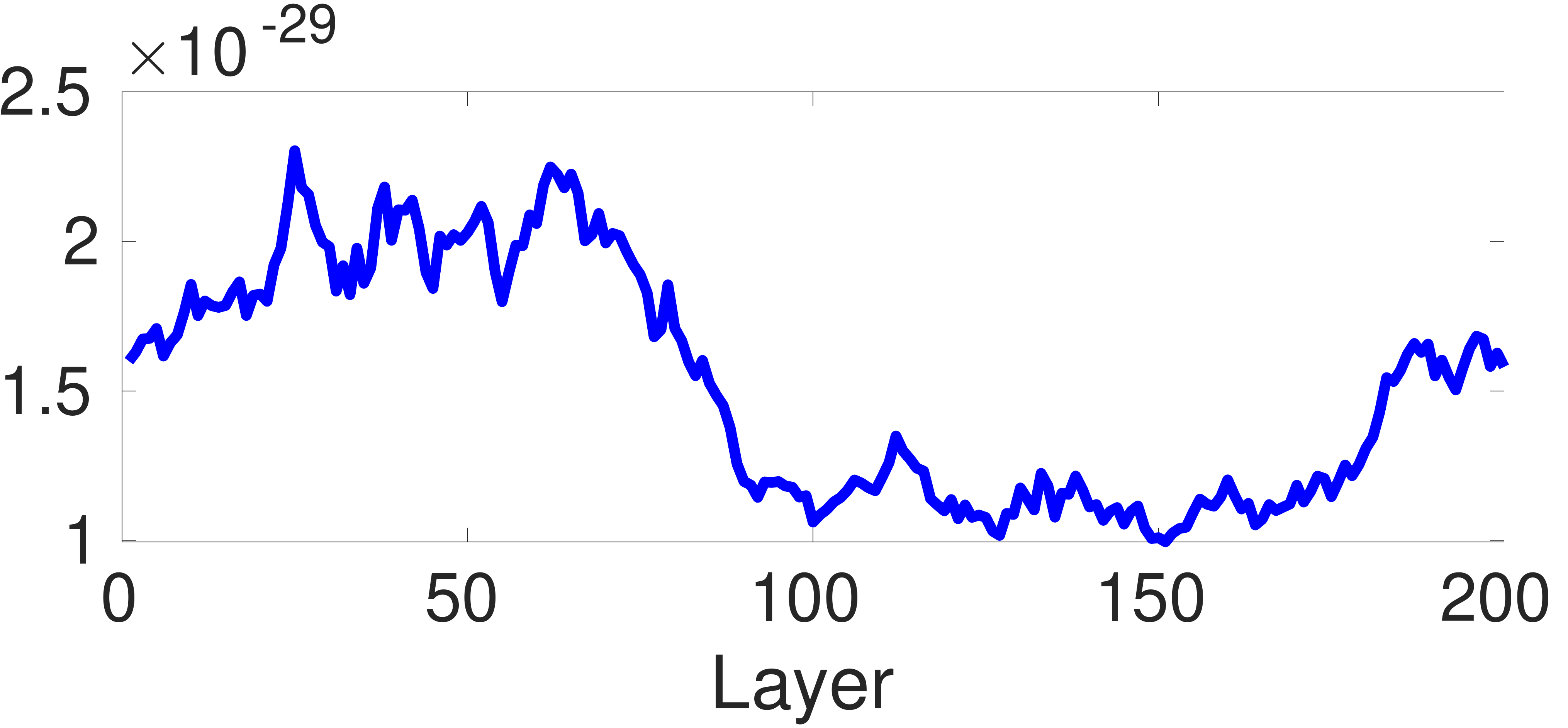}}

\subfloat[$\|\mathbf{x}^{(l)}\|_2^2/d$, ReLU-GPN.]{\includegraphics[width=0.4\textwidth]{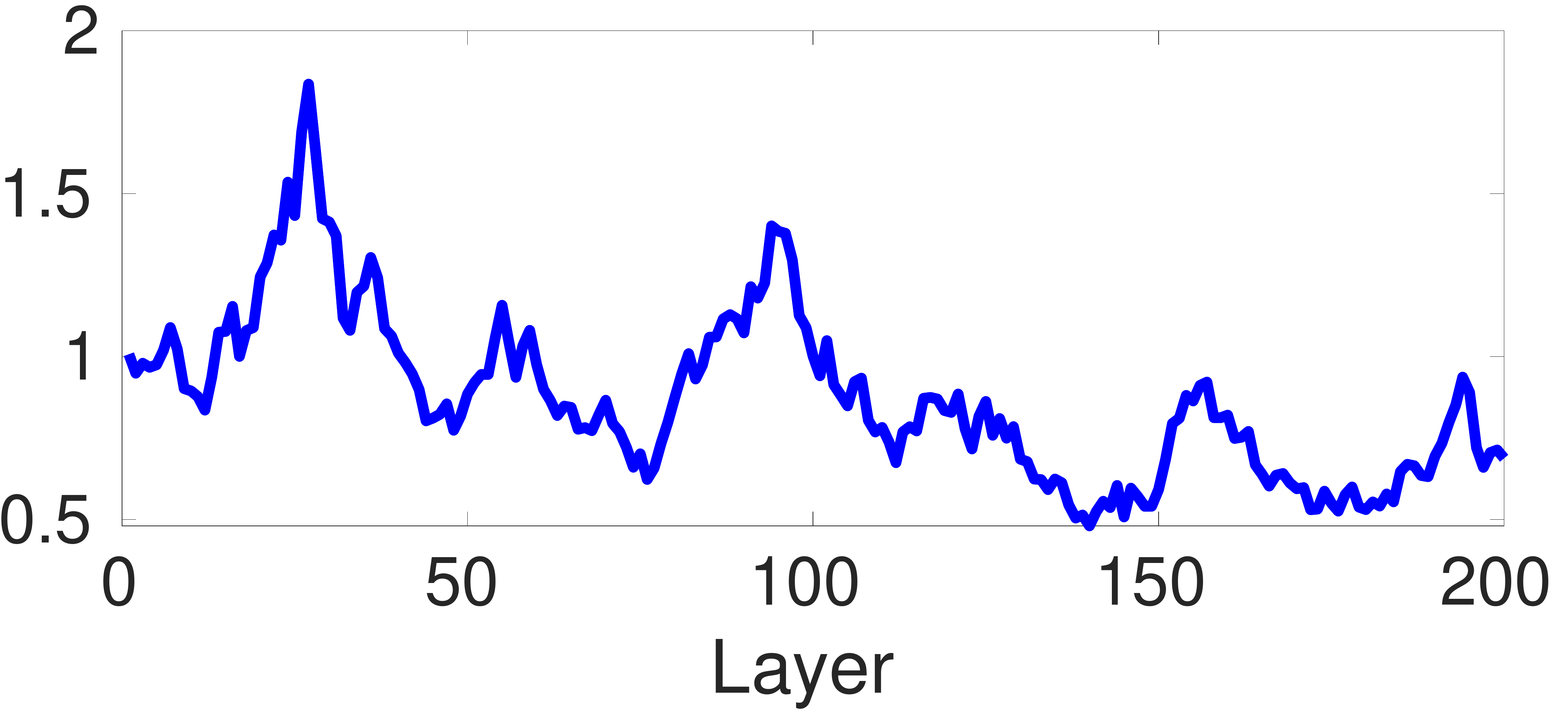}}
\hspace{0.5cm}
\subfloat[$\|\frac{\partial E}{\partial \mathbf{W}^{(l)}}\|_F$, ReLU-GPN.]{\includegraphics[width=0.4\textwidth]{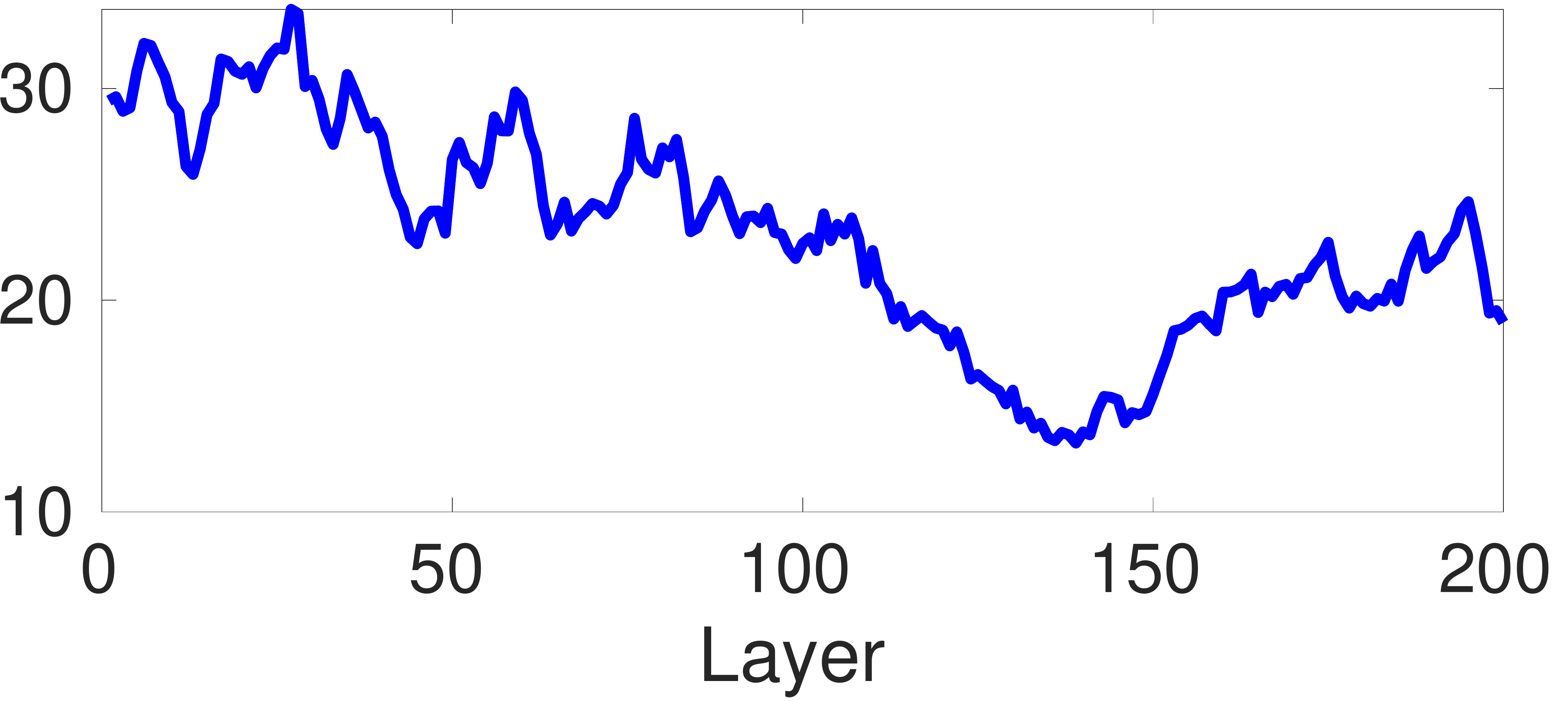}}

\subfloat[$\|\mathbf{x}^{(l)}\|_2^2/d$,
LeakyReLU.]{\includegraphics[width=0.4\textwidth]{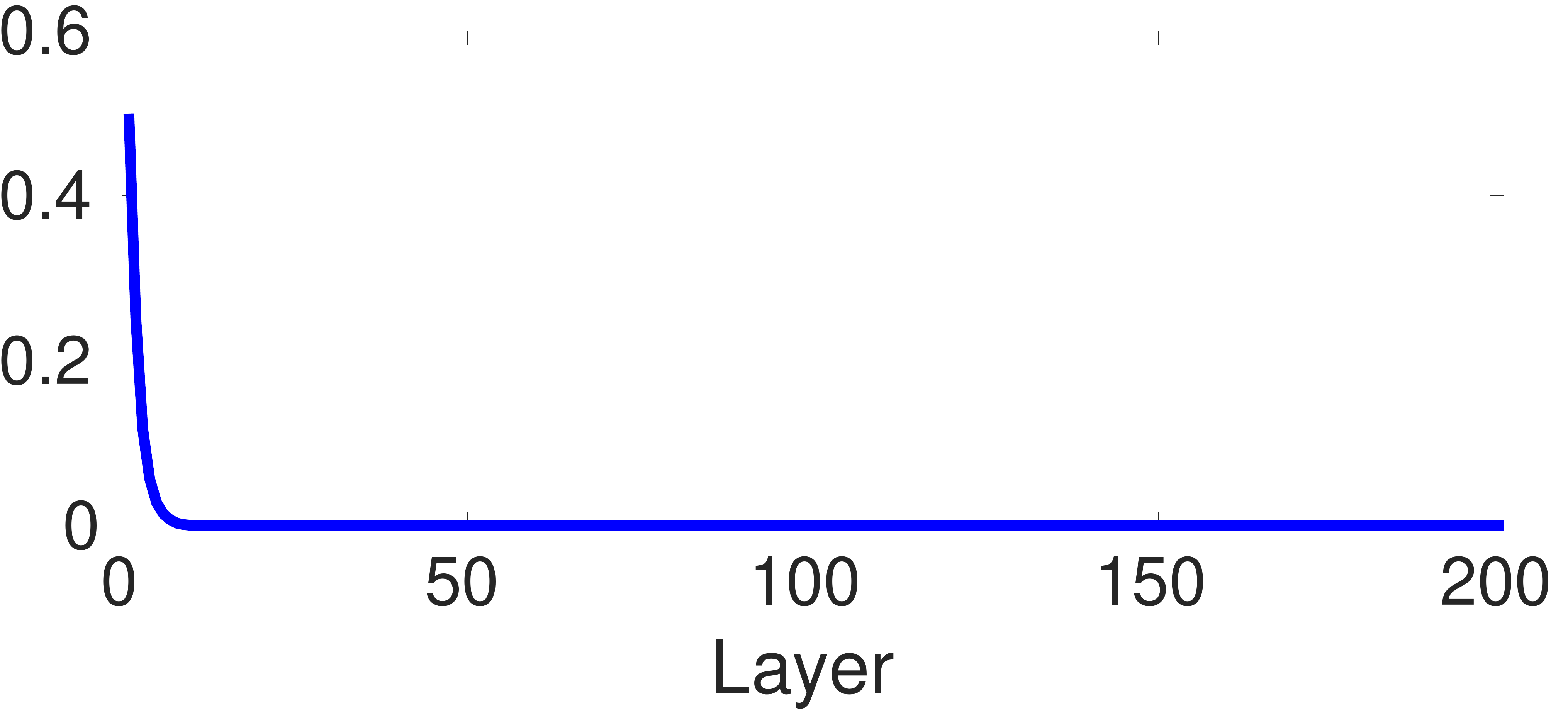}}
\hspace{0.5cm}
\subfloat[$\|\frac{\partial E}{\partial \mathbf{W}^{(l)}}\|_F$, LeakyReLU.]{\includegraphics[width=0.4\textwidth,height=0.205\textwidth,trim=0 0 0 -0.3cm]{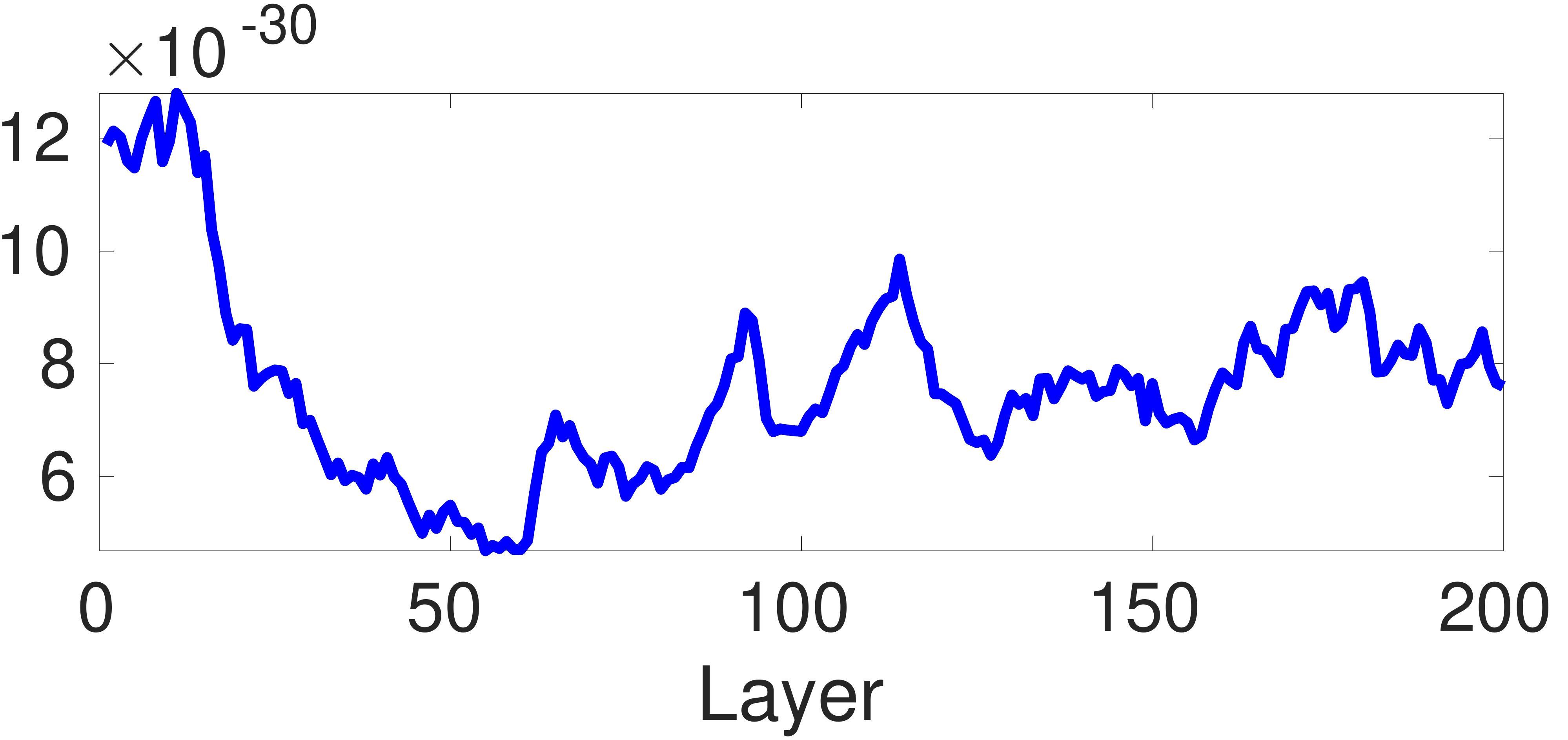}}

\subfloat[$\|\mathbf{x}^{(l)}\|_2^2/d$, LeakyReLU-GPN.]{\includegraphics[width=0.4\textwidth]{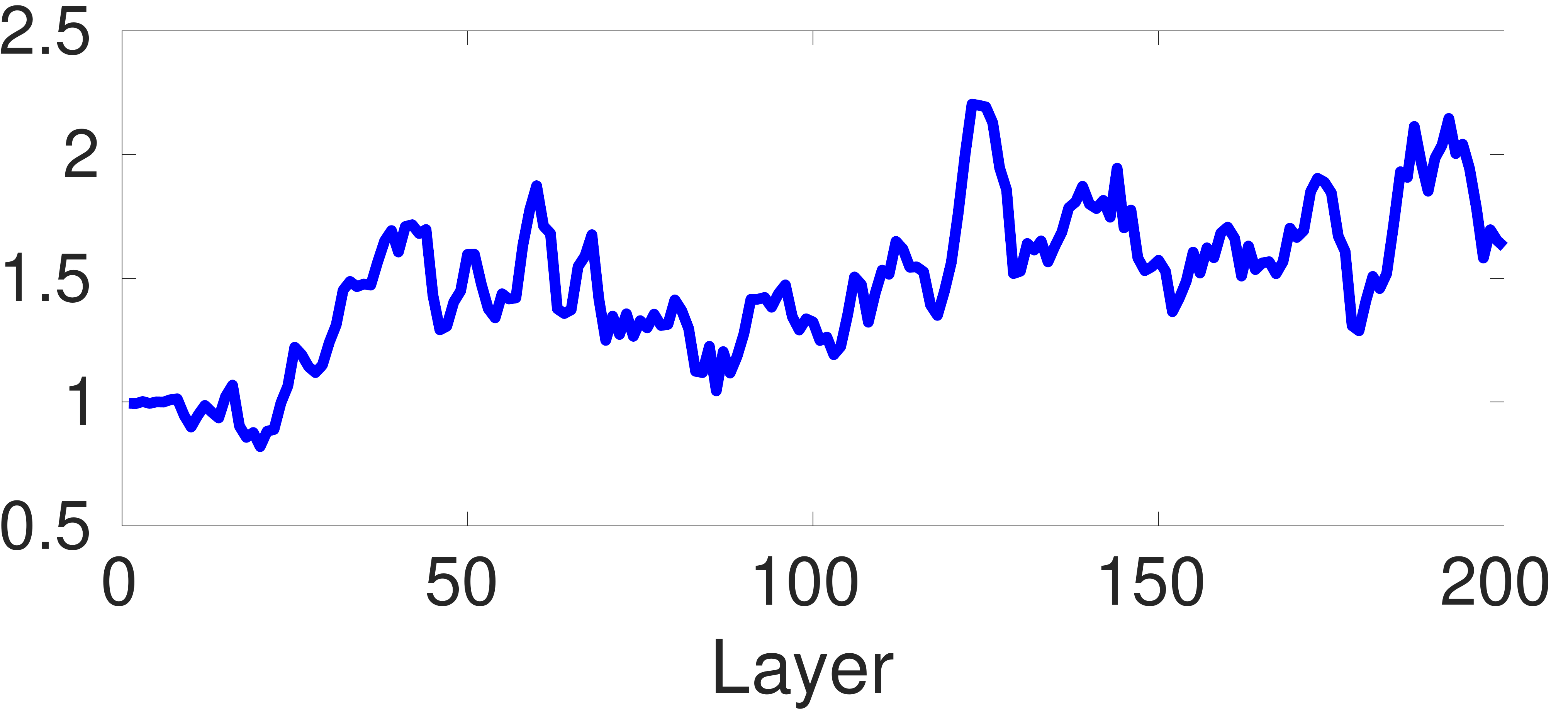}}
\hspace{0.5cm}
\subfloat[$\|\frac{\partial E}{\partial \mathbf{W}^{(l)}}\|_F$, LeakyReLU-GPN.]{\includegraphics[width=0.4\textwidth]{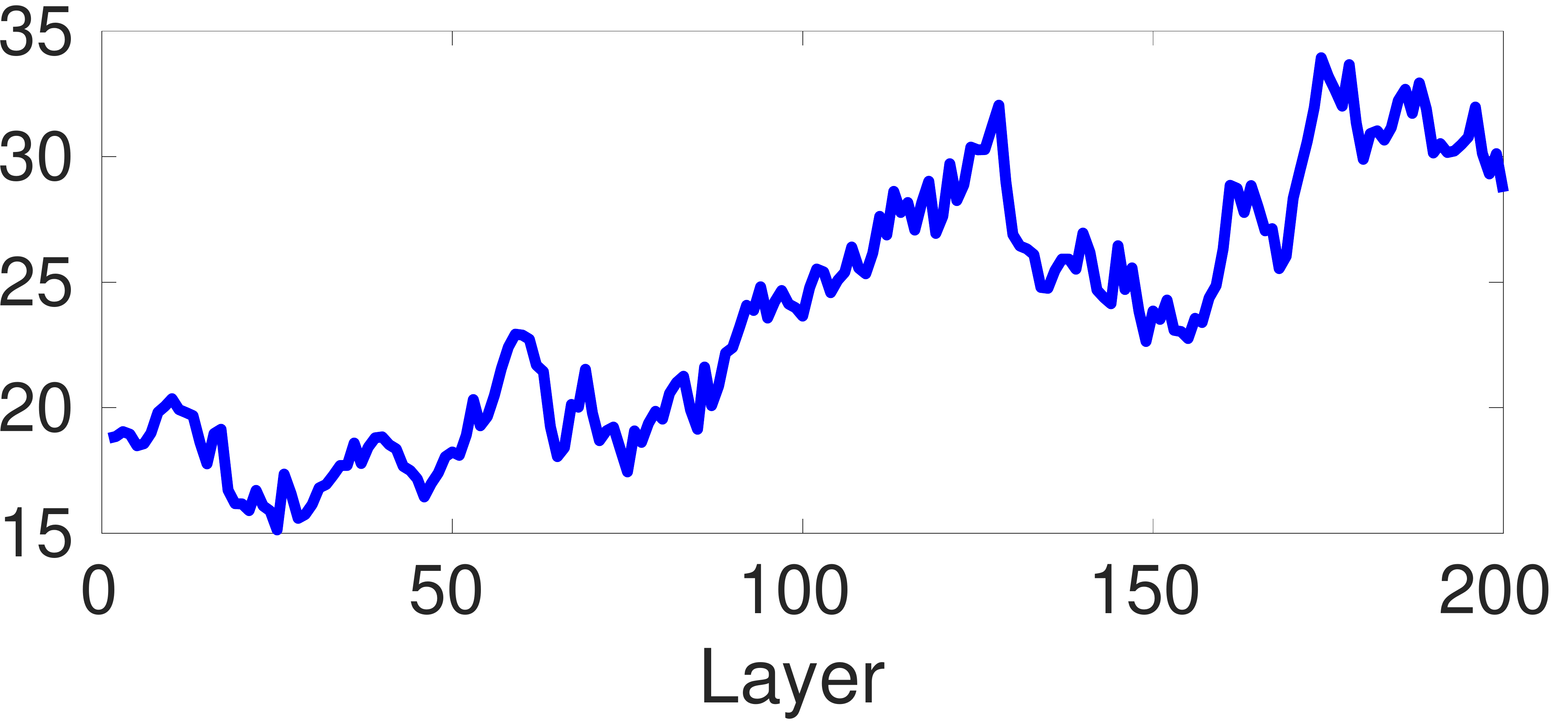}}

\vspace{0.5cm}
\caption{Results on synthetic data with different activation functions.  ``-GPN'' denotes the function is Gaussian-Poincar\'e normalized. $\|\mathbf{x}^{(l)}\|_2$ denotes the $l_2$ norm of the outputs of the $l$-th layer. $d$ denotes the width. $\|\frac{\partial E}{\partial \mathbf{W}^{(l)}}\|_F$ is the Frobenius norm of the gradient of the weight matrix in the $l$-th layer.}
\label{exp:synthetic_2}
\end{figure}

\begin{figure}[h!]
\centering

\subfloat[$\|\mathbf{x}^{(l)}\|_2^2/d$, ELU.]{\includegraphics[width=0.4\textwidth]{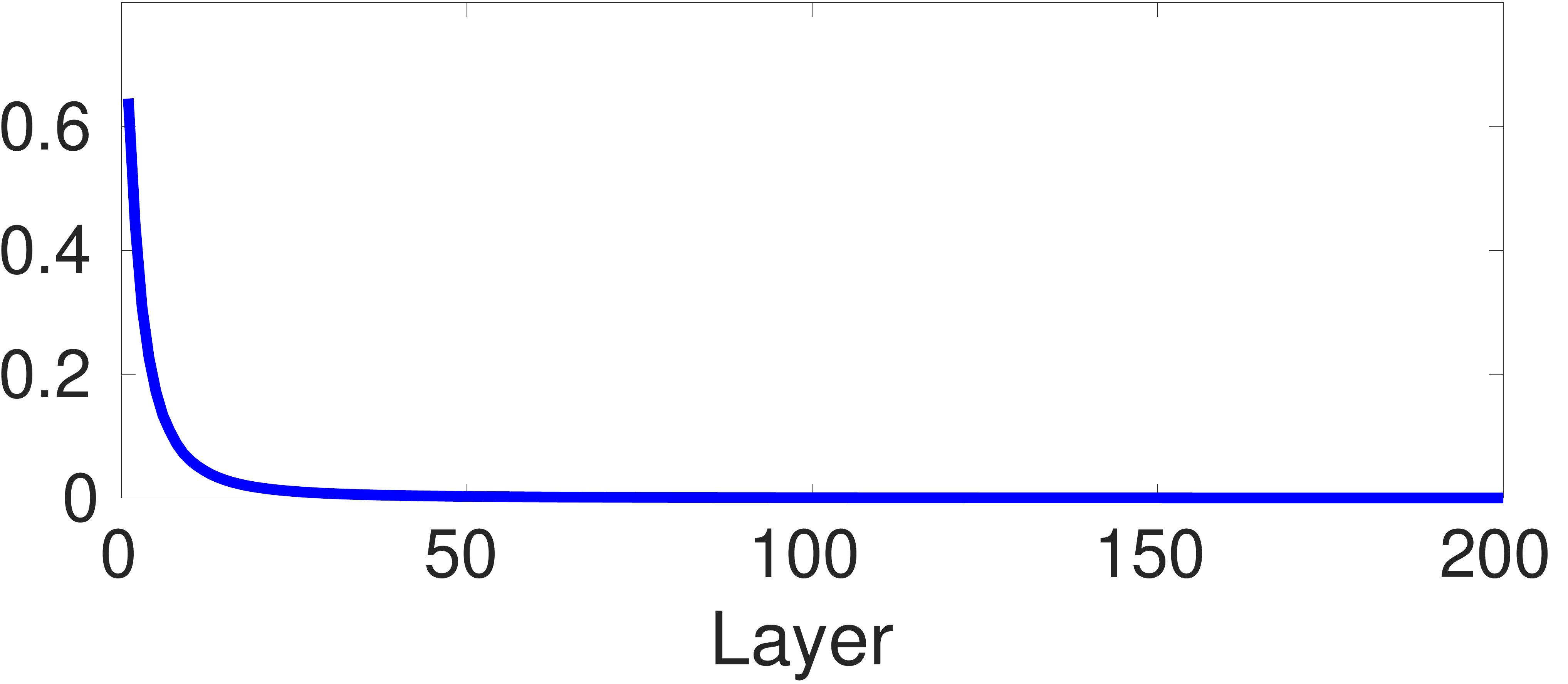}}
\hspace{0.5cm}
\subfloat[$\|\frac{\partial E}{\partial \mathbf{W}^{(l)}}\|_F$, ELU.]{\includegraphics[width=0.4\textwidth]{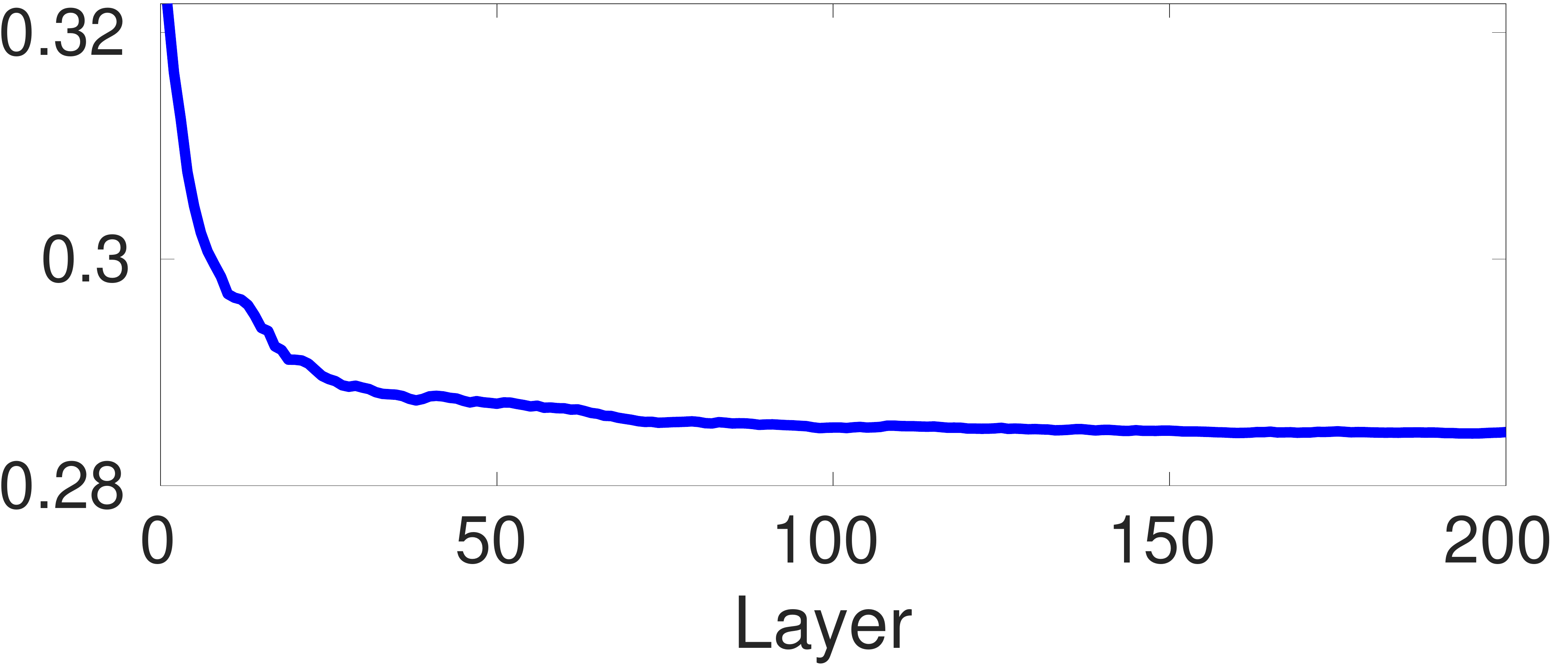}}

\subfloat[$\|\mathbf{x}^{(l)}\|_2^2/d$, ELU-GPN.]{\includegraphics[width=0.4\textwidth]{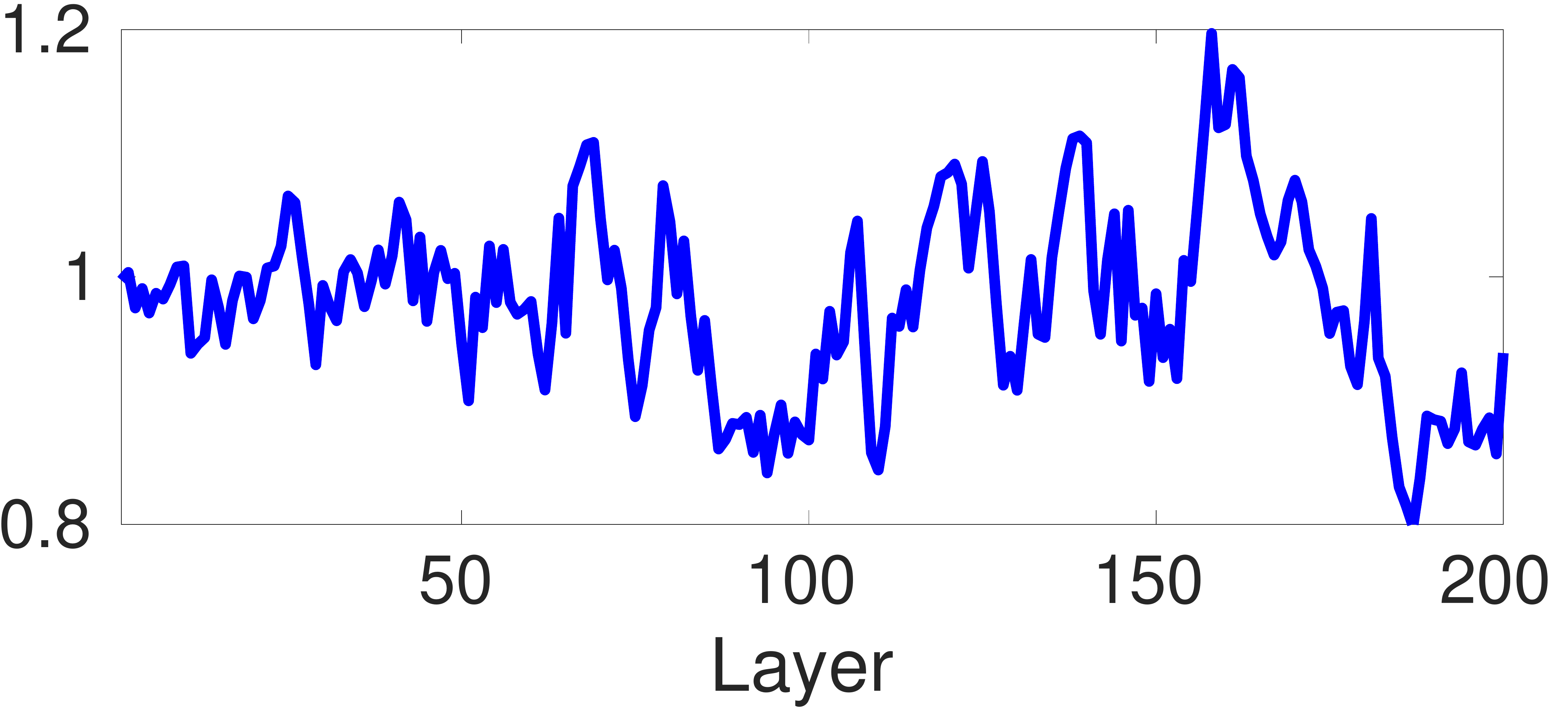}}
\hspace{0.5cm}
\subfloat[$\|\frac{\partial E}{\partial \mathbf{W}^{(l)}}\|_F$, ELU-GPN.]{\includegraphics[width=0.4\textwidth]{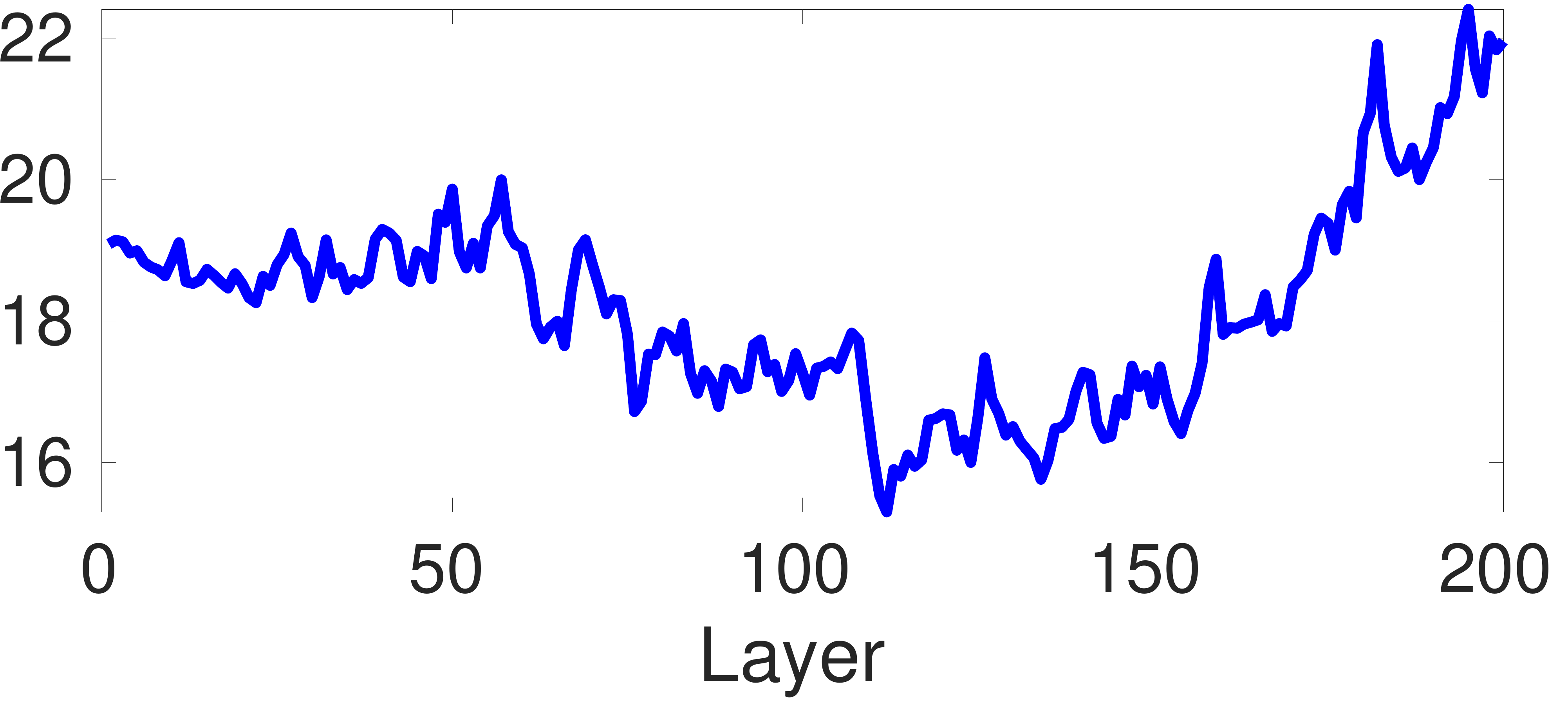}}

\subfloat[$\|\mathbf{x}^{(l)}\|_2^2/d$, GELU.]{\includegraphics[width=0.4\textwidth]{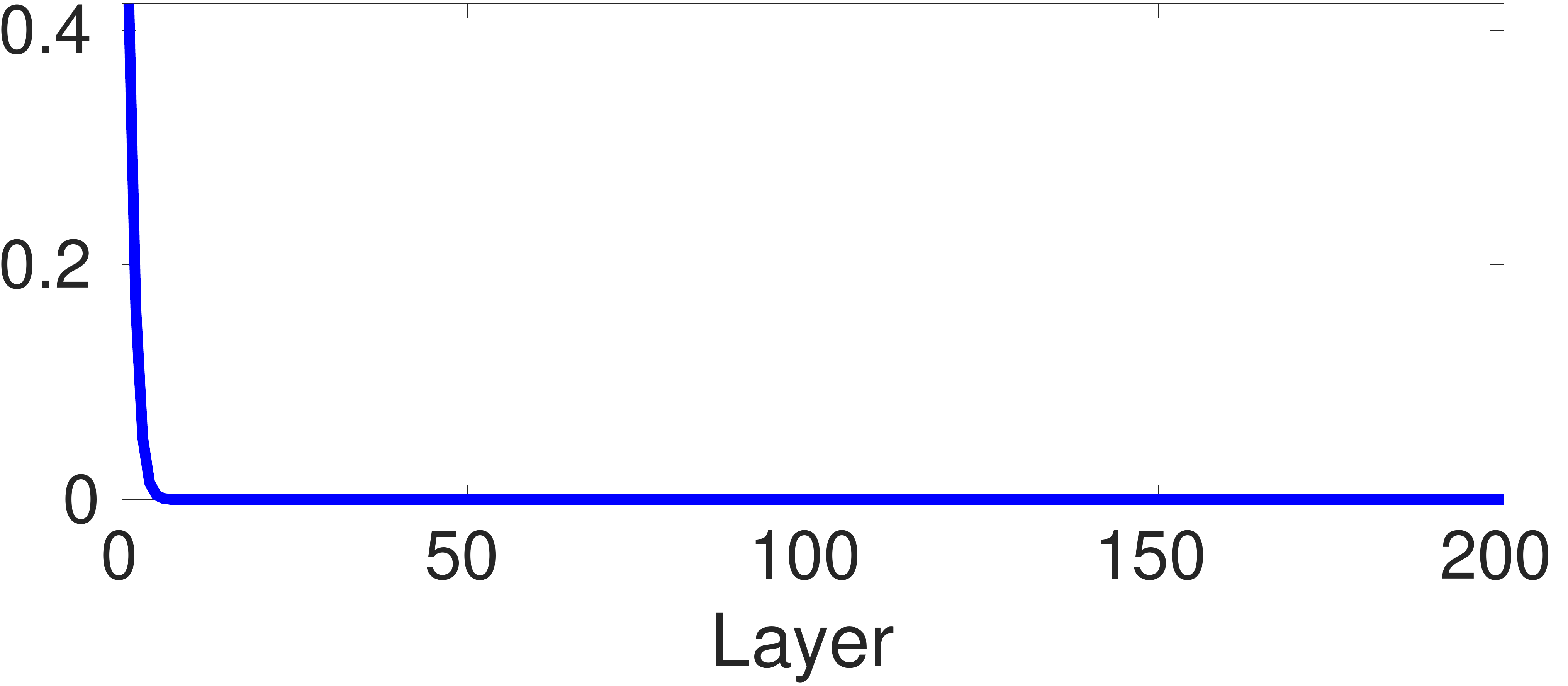}}
\hspace{0.5cm}
\subfloat[$\|\frac{\partial E}{\partial \mathbf{W}^{(l)}}\|_F$, GELU.]{\includegraphics[width=0.4\textwidth,height=0.205\textwidth,trim=0 0 0 -0.3cm]{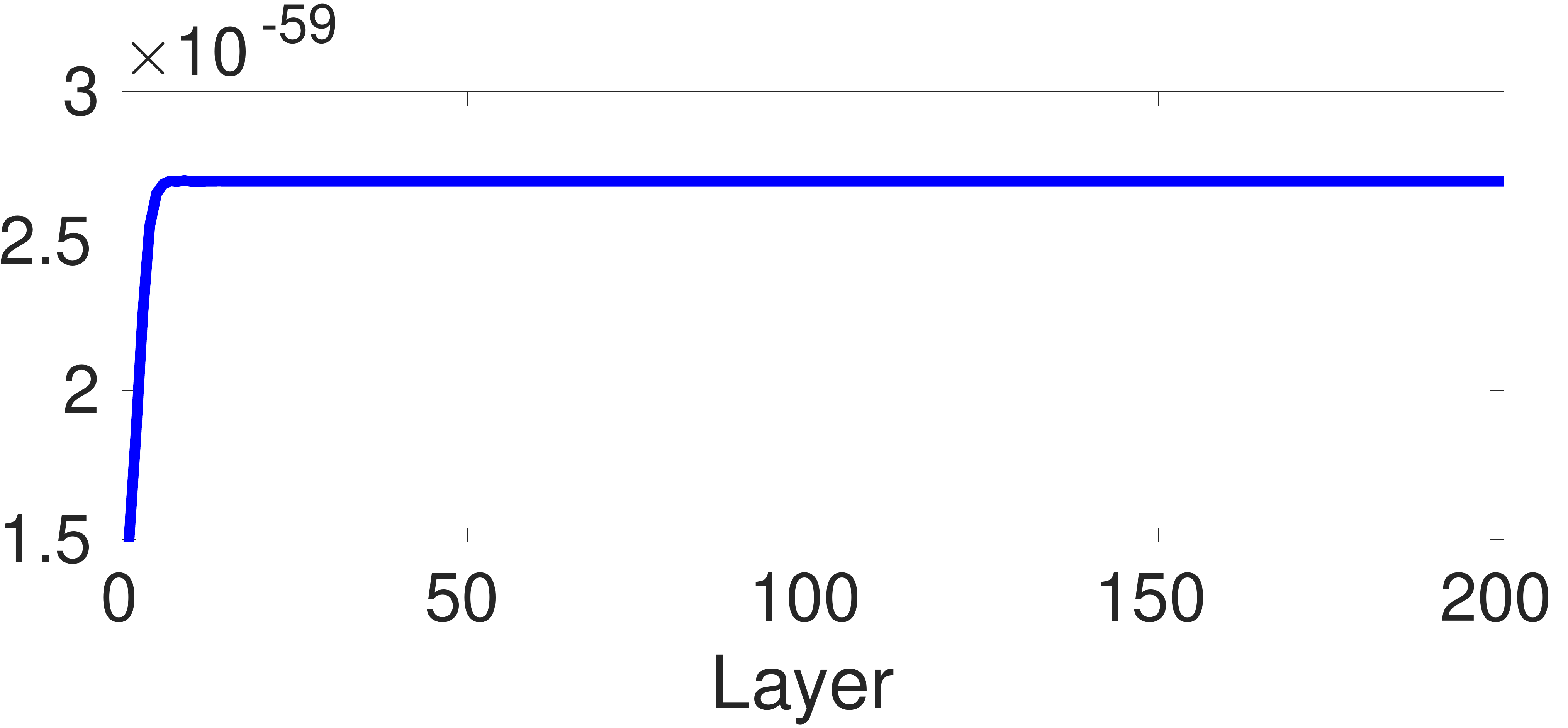}}

\subfloat[$\|\mathbf{x}^{(l)}\|_2^2/d$, GELU-GPN.]{\includegraphics[width=0.4\textwidth]{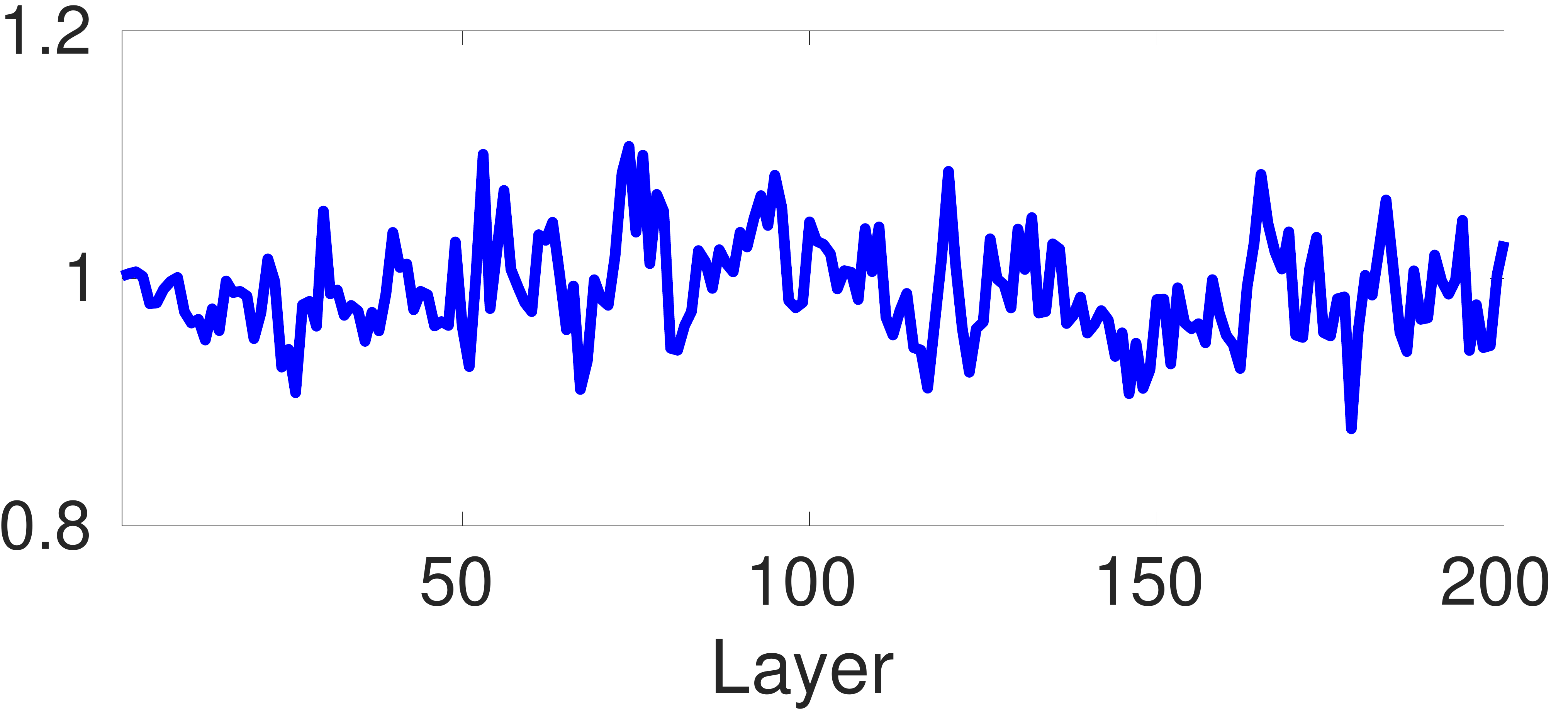}}
\hspace{0.5cm}
\subfloat[$\|\frac{\partial E}{\partial \mathbf{W}^{(l)}}\|_F$, GELU-GPN.]{\includegraphics[width=0.4\textwidth,,height=0.205\textwidth,trim=0 0 0 -0.3cm]{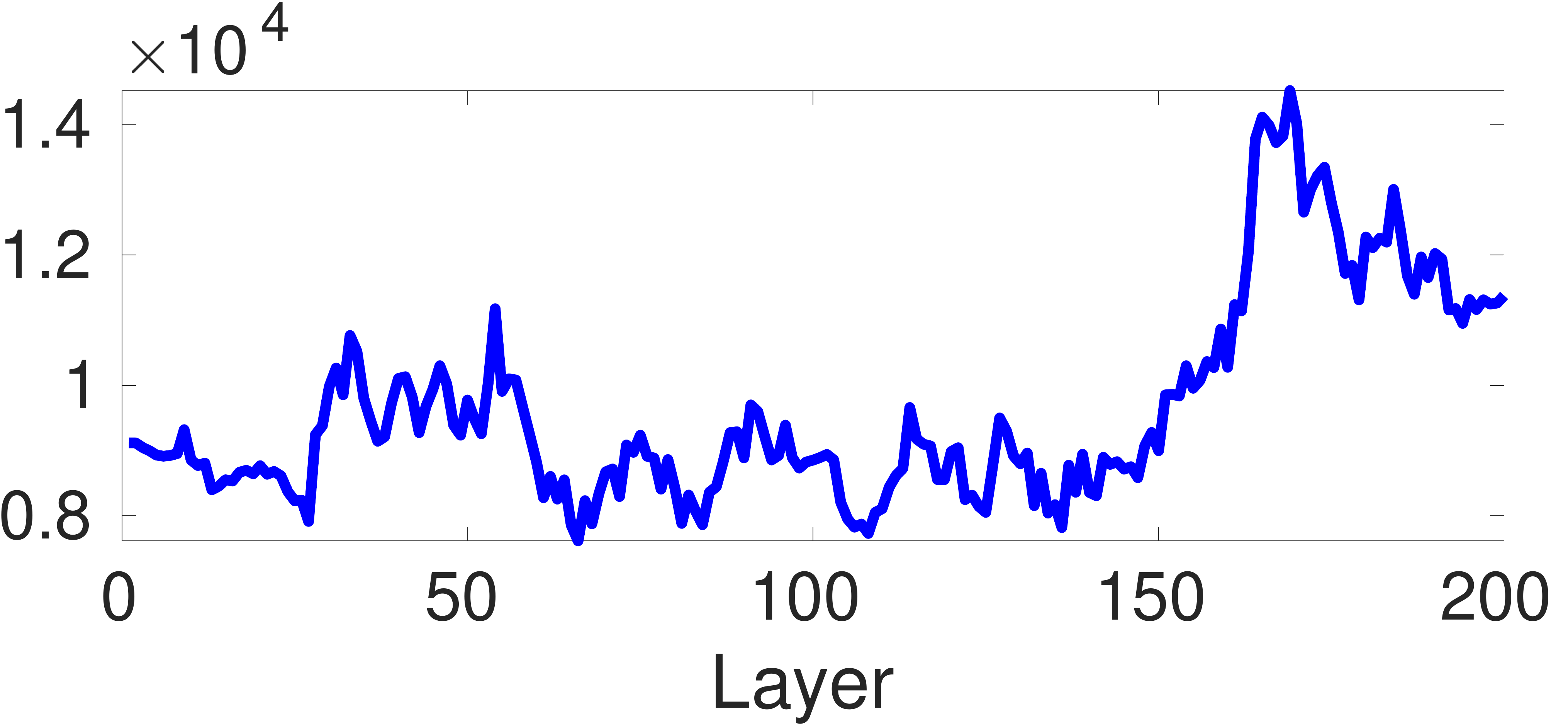}}

\vspace{0.5cm}
\caption{Results on synthetic data with different activation functions.  ``-GPN'' denotes the function is Gaussian-Poincar\'e normalized. $\|\mathbf{x}^{(l)}\|_2$ denotes the $l_2$ norm of the outputs of the $l$-th layer. $d$ denotes the width. $\|\frac{\partial E}{\partial \mathbf{W}^{(l)}}\|_F$ is the Frobenius norm of the gradient of the weight matrix in the $l$-th layer.}
\label{exp:synthetic_3}
\end{figure}

\clearpage

\begin{figure}[h!]
\centering
\subfloat[ReLU.]{\includegraphics[width=0.5\textwidth]{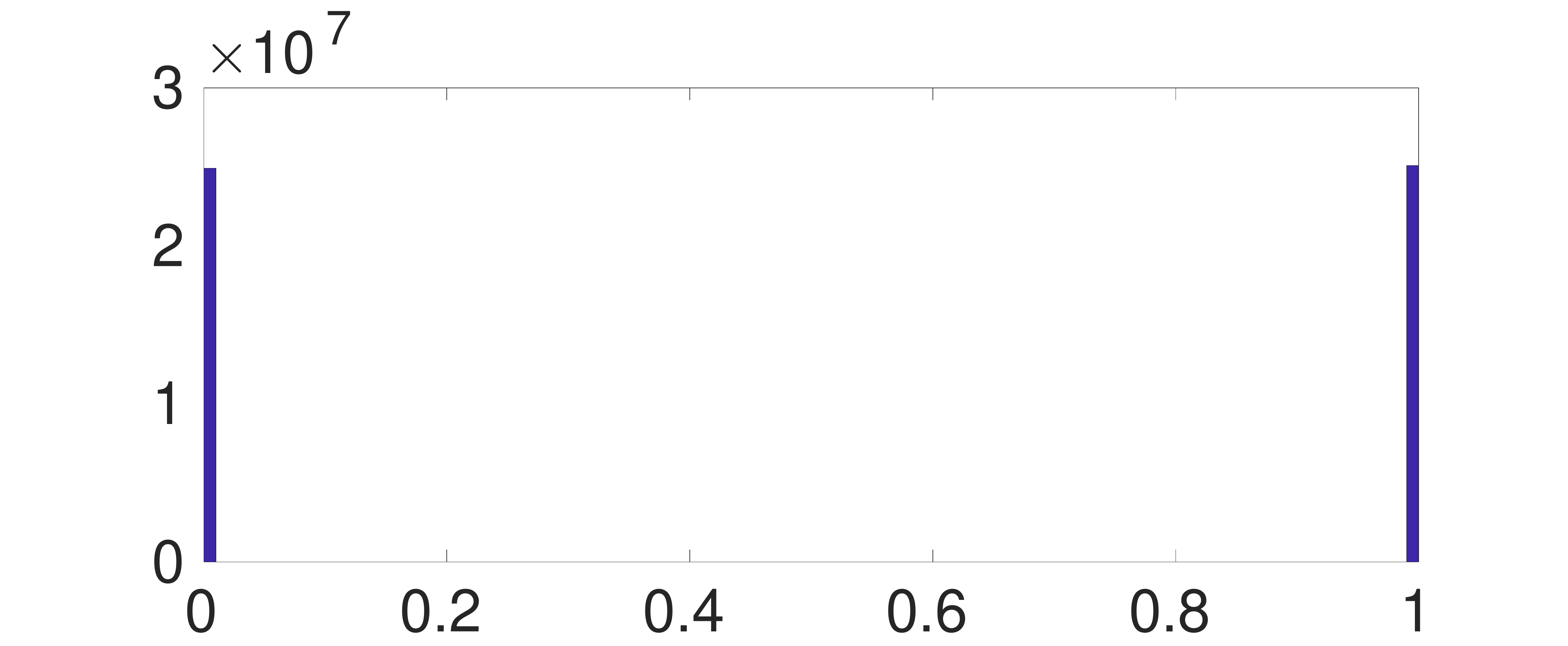}}
\subfloat[ReLU-GPN.]{\includegraphics[width=0.5\textwidth]{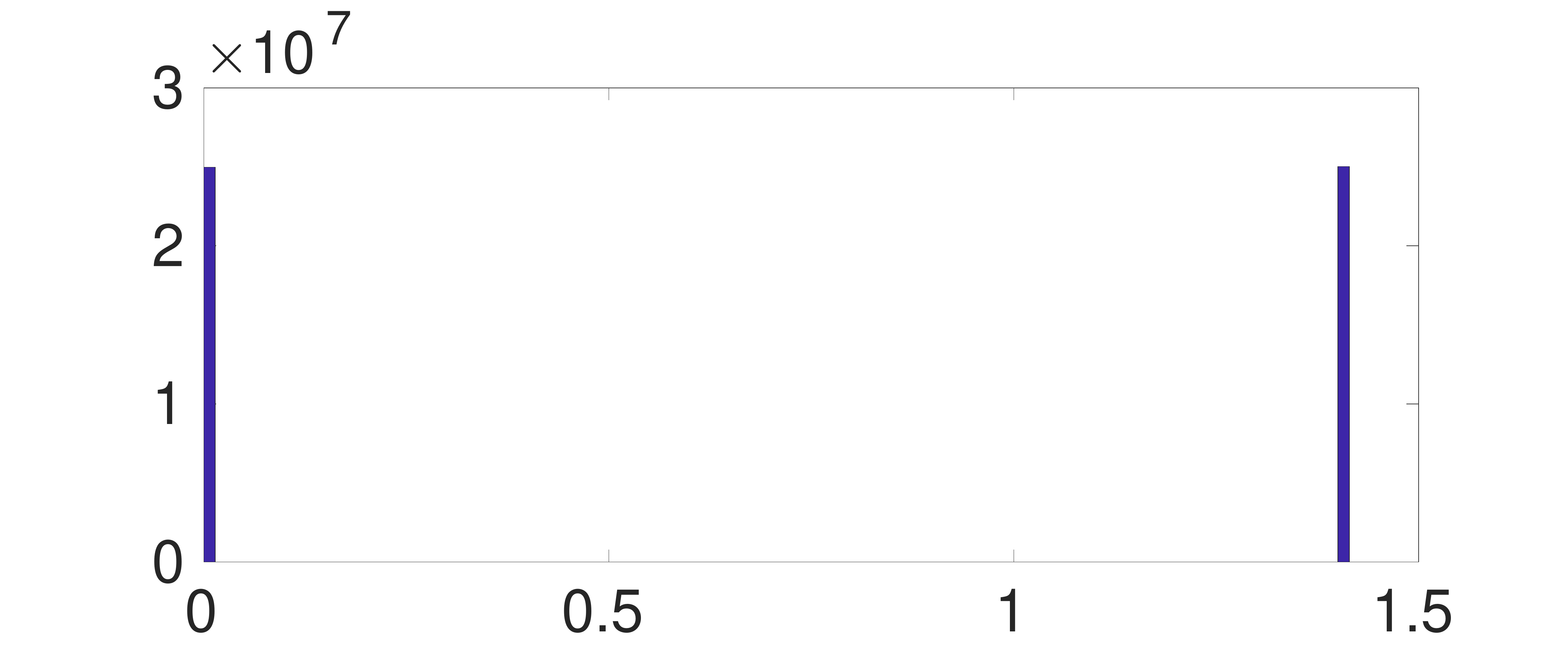}}

\subfloat[LeakyReLU.]{\includegraphics[width=0.5\textwidth]{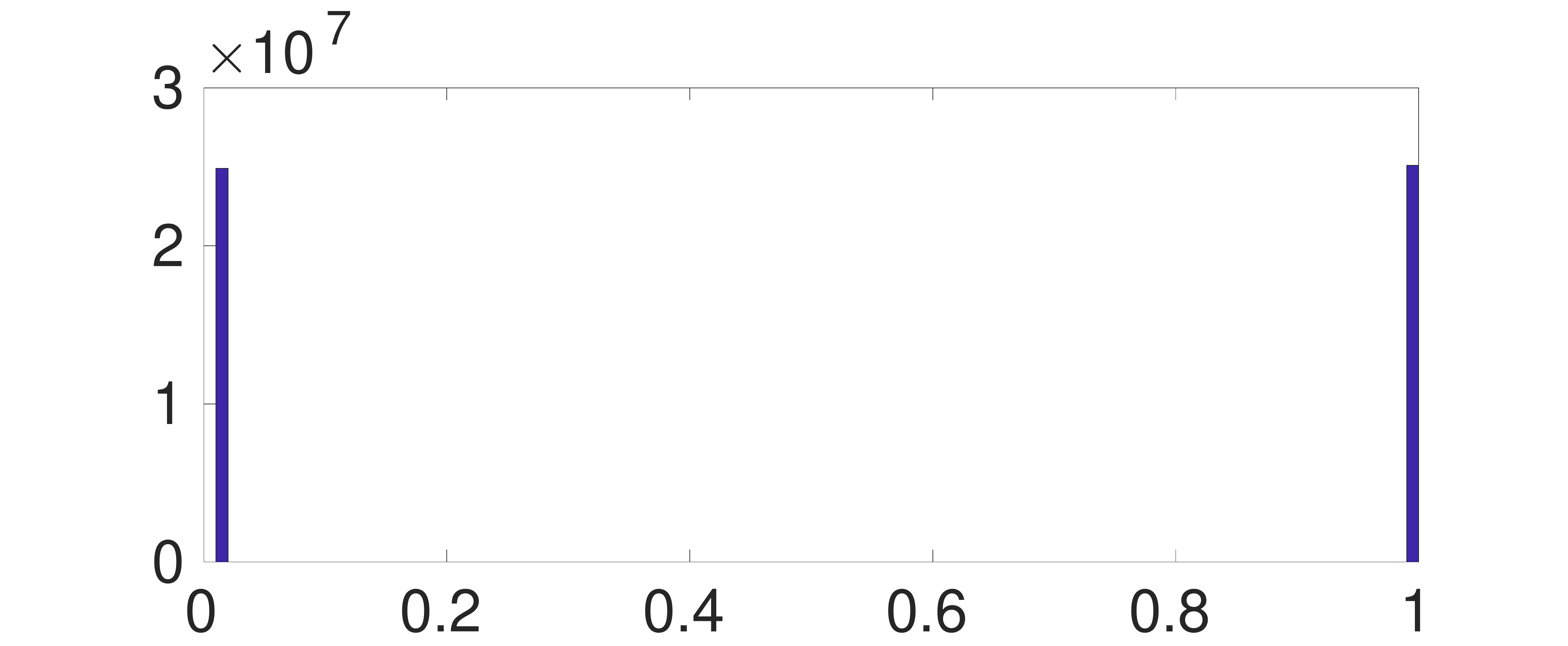}}
\subfloat[LeakyReLU-GPN.]{\includegraphics[width=0.5\textwidth]{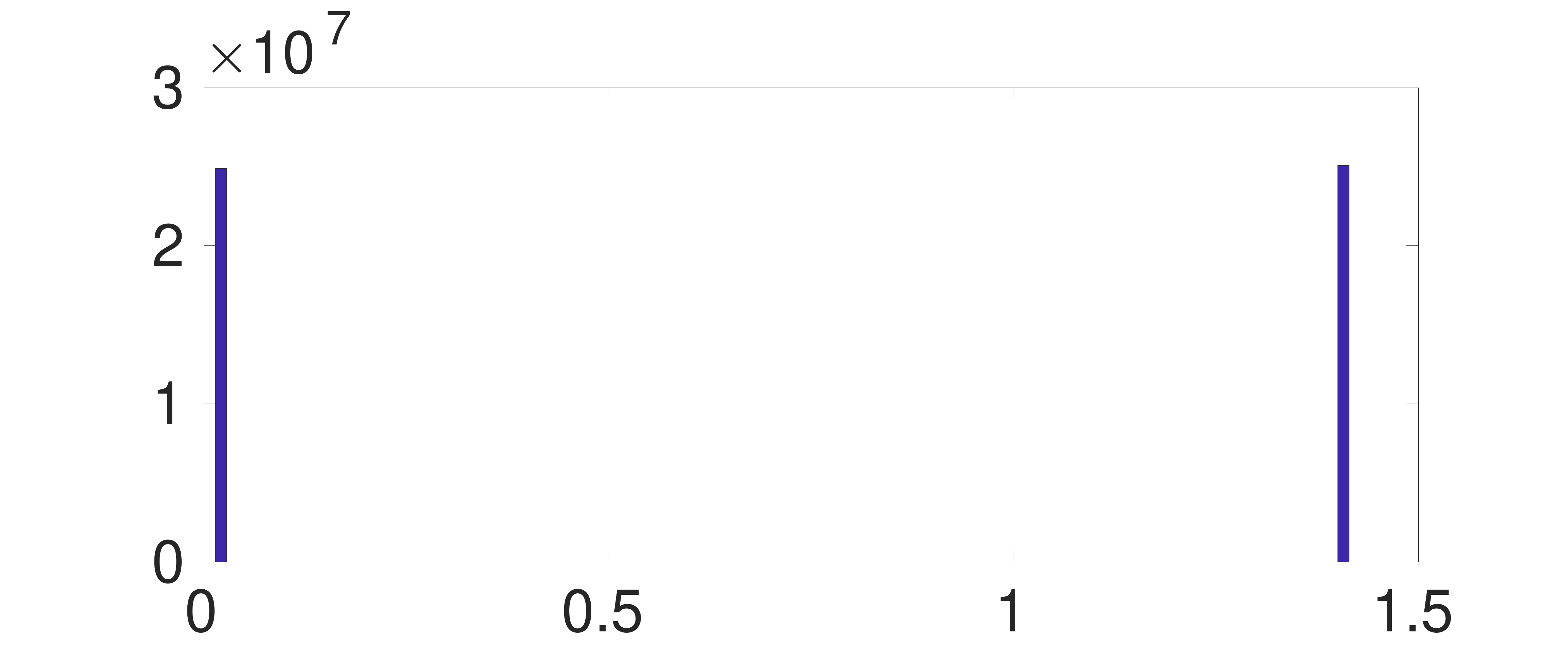}}

\subfloat[ELU.]{\includegraphics[width=0.5\textwidth]{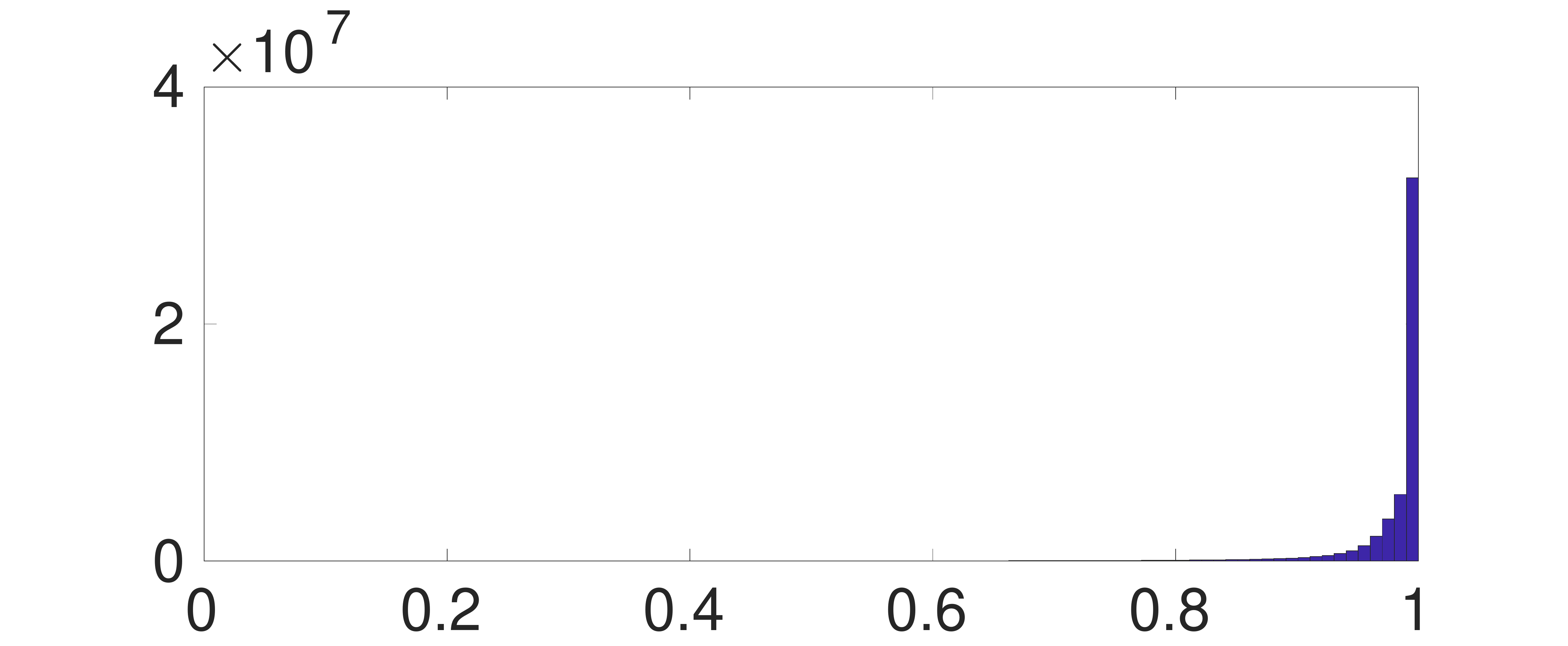}}
\subfloat[ELU-GPN.]{\includegraphics[width=0.5\textwidth]{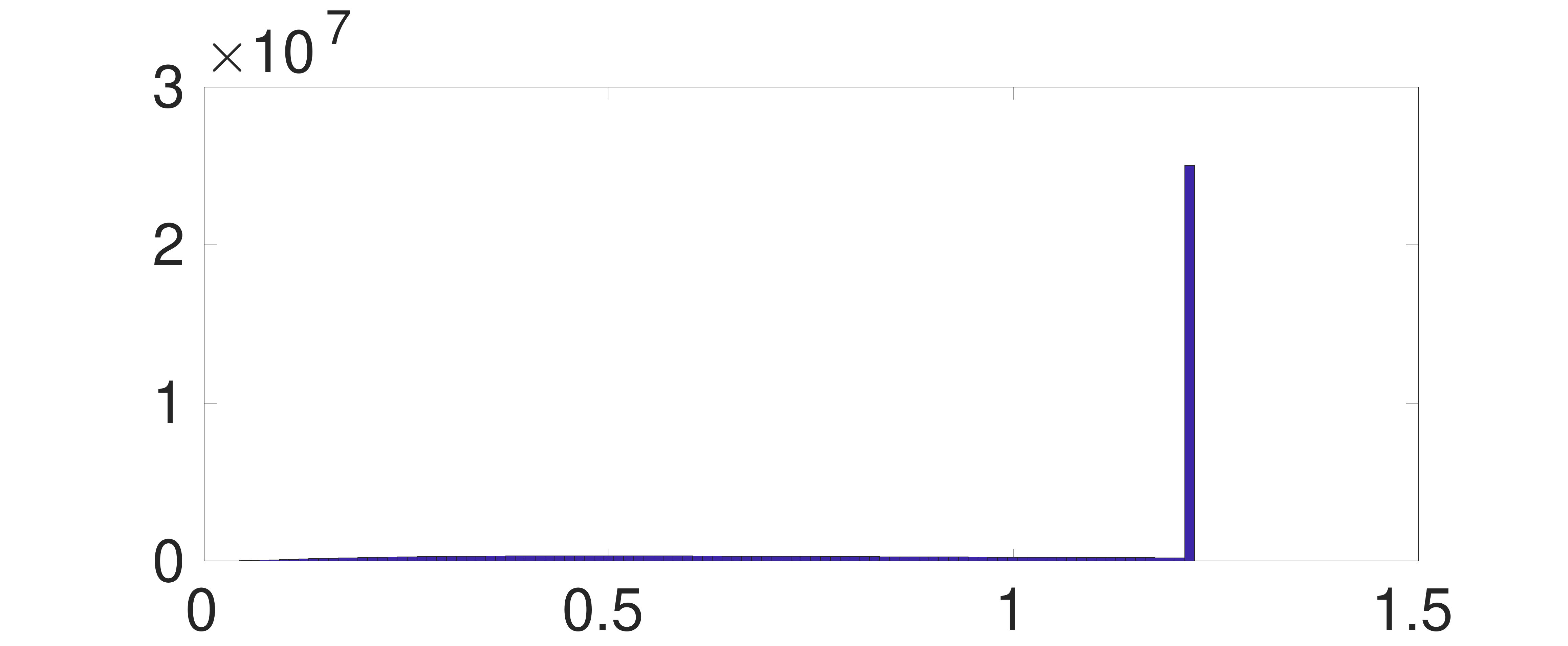}}

\subfloat[SELU.]{\includegraphics[width=0.5\textwidth]{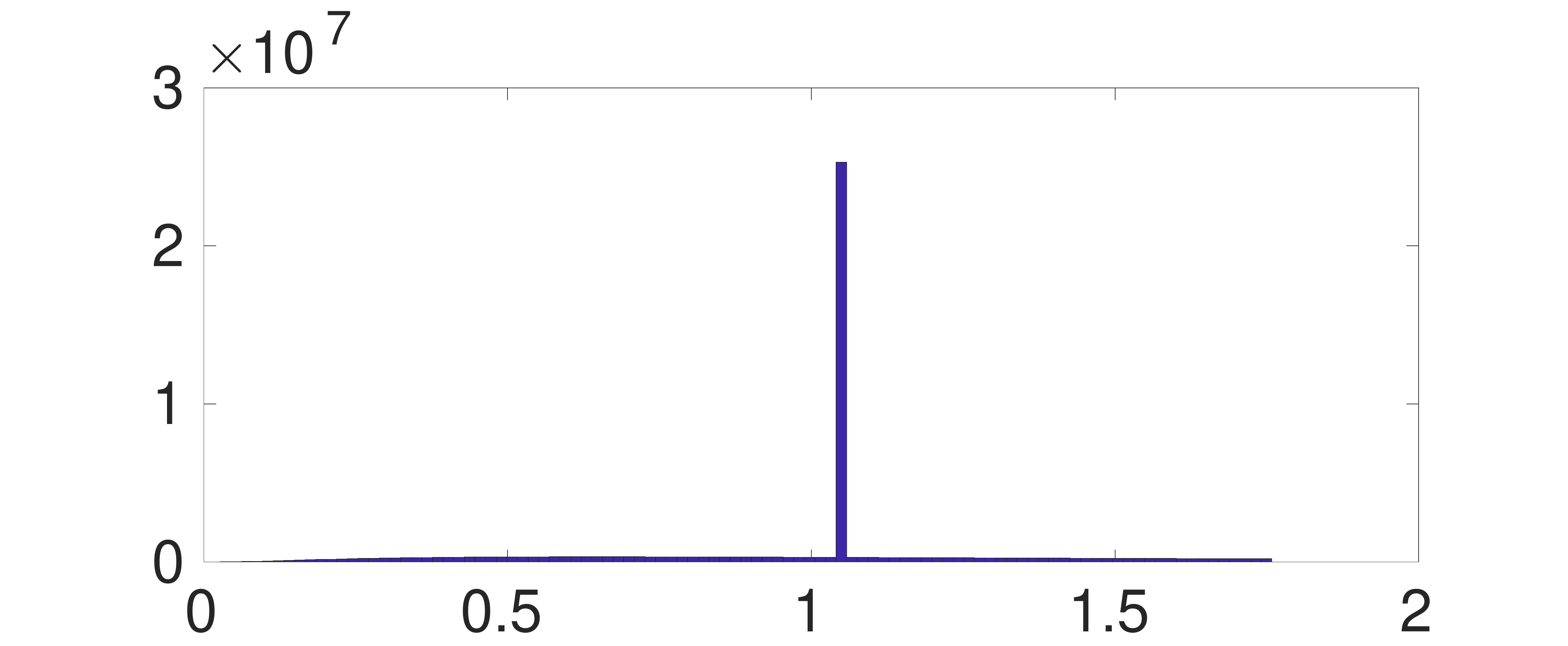}}
\subfloat[SELU-GPN.]{\includegraphics[width=0.5\textwidth]{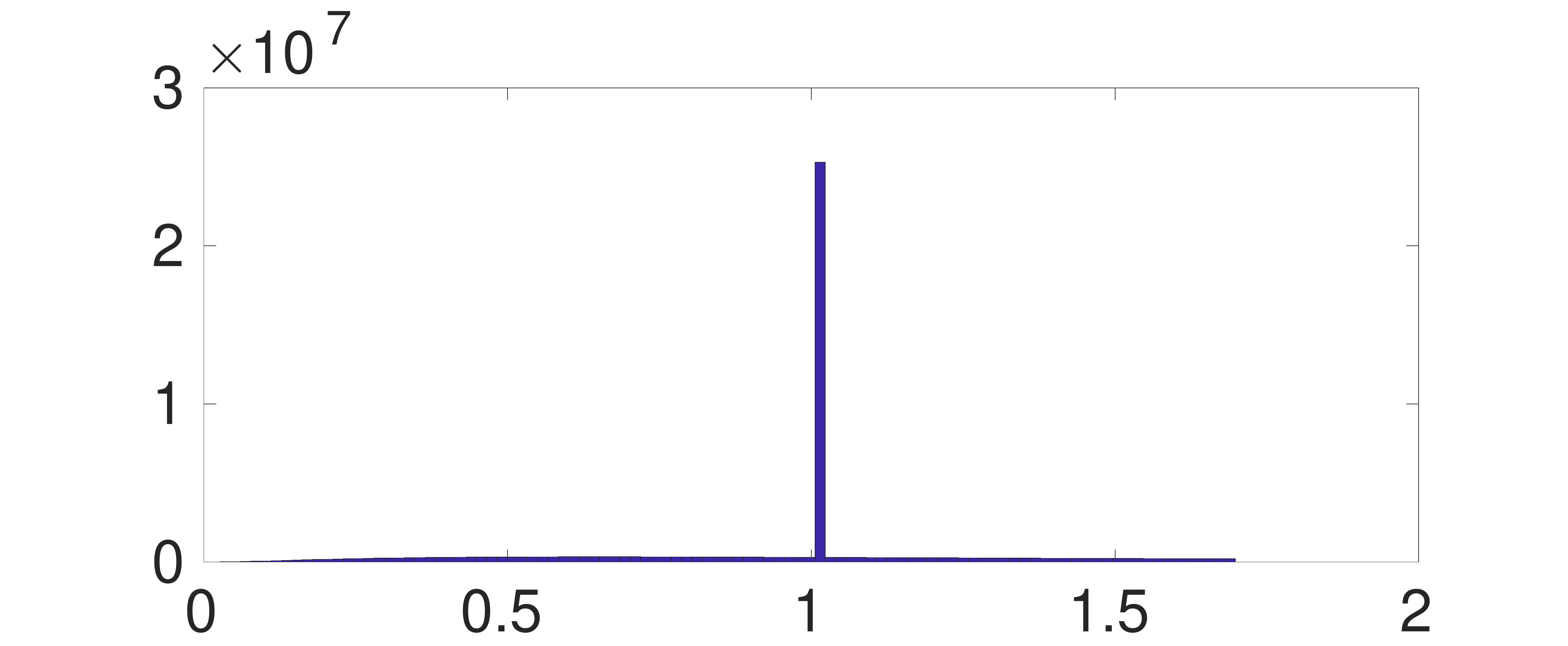}}

\subfloat[GELU.]{\includegraphics[width=0.5\textwidth]{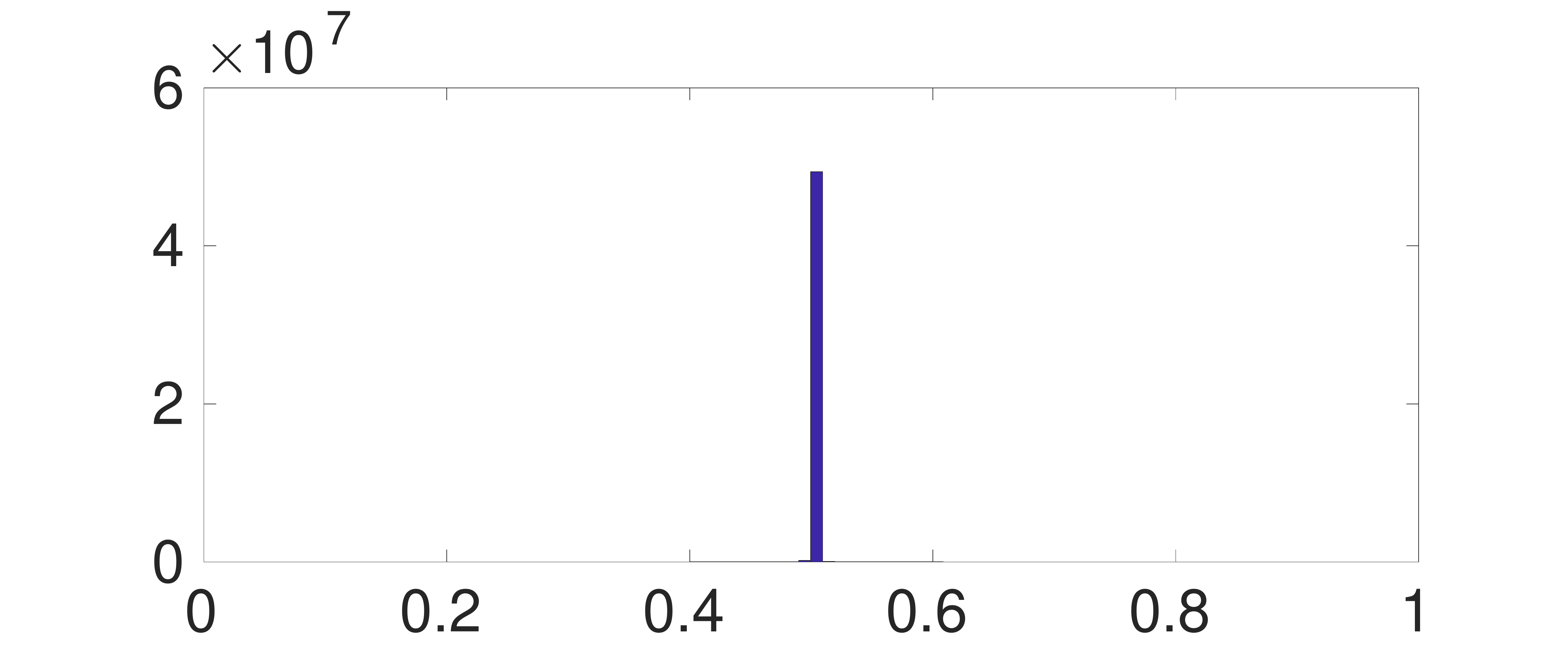}}
\subfloat[GELU-GPN.]{\includegraphics[width=0.5\textwidth]{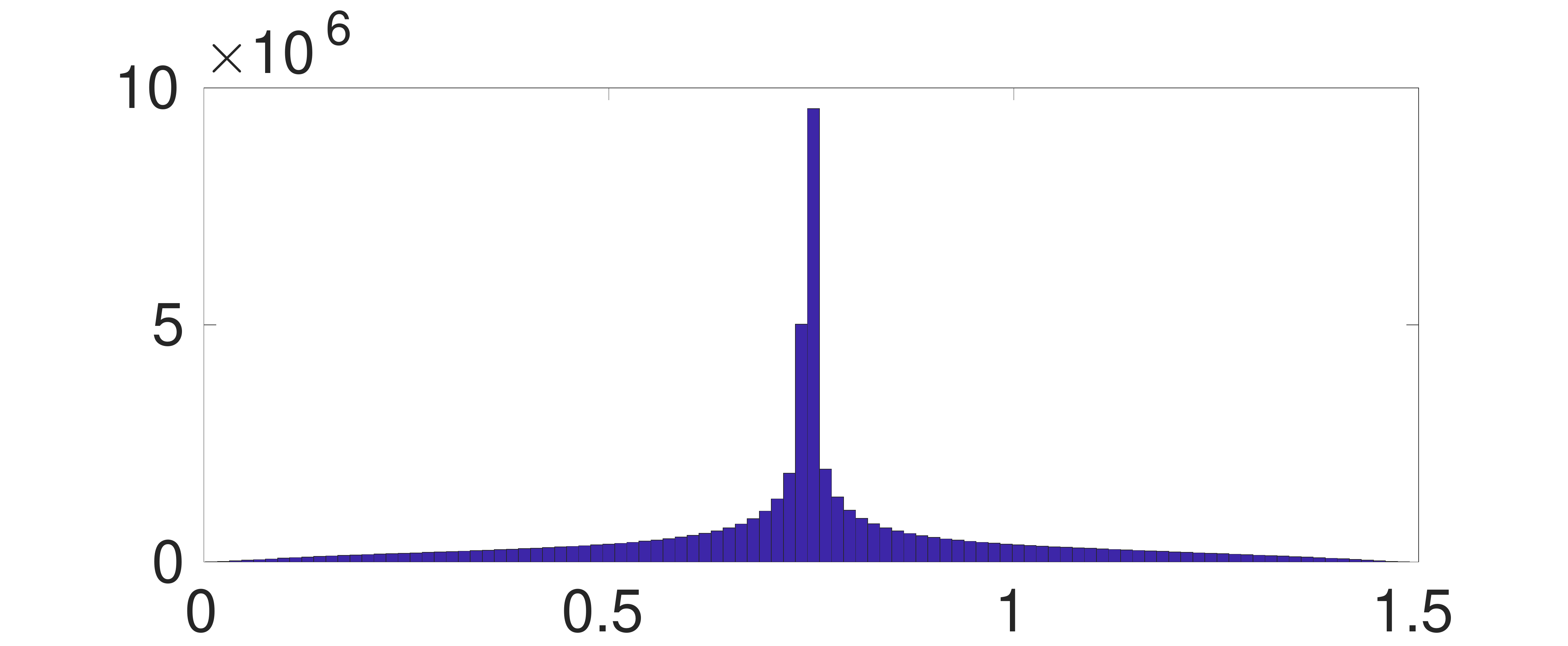}}

\vspace{0.5cm}
\caption{Histogram of $\phi'(h_i^{(l)})$. The values of $\phi'(h_i^{(l)})$ are accumulated for all units, all layers and all samples in the histogram. Except for ELU, none of them has values concentrated around one.}
\label{exp:D_hist_2}
\end{figure}

\clearpage

\begin{figure}[h!]
\centering
\subfloat[ReLU-GPN.]{\includegraphics[width=0.4\textwidth]{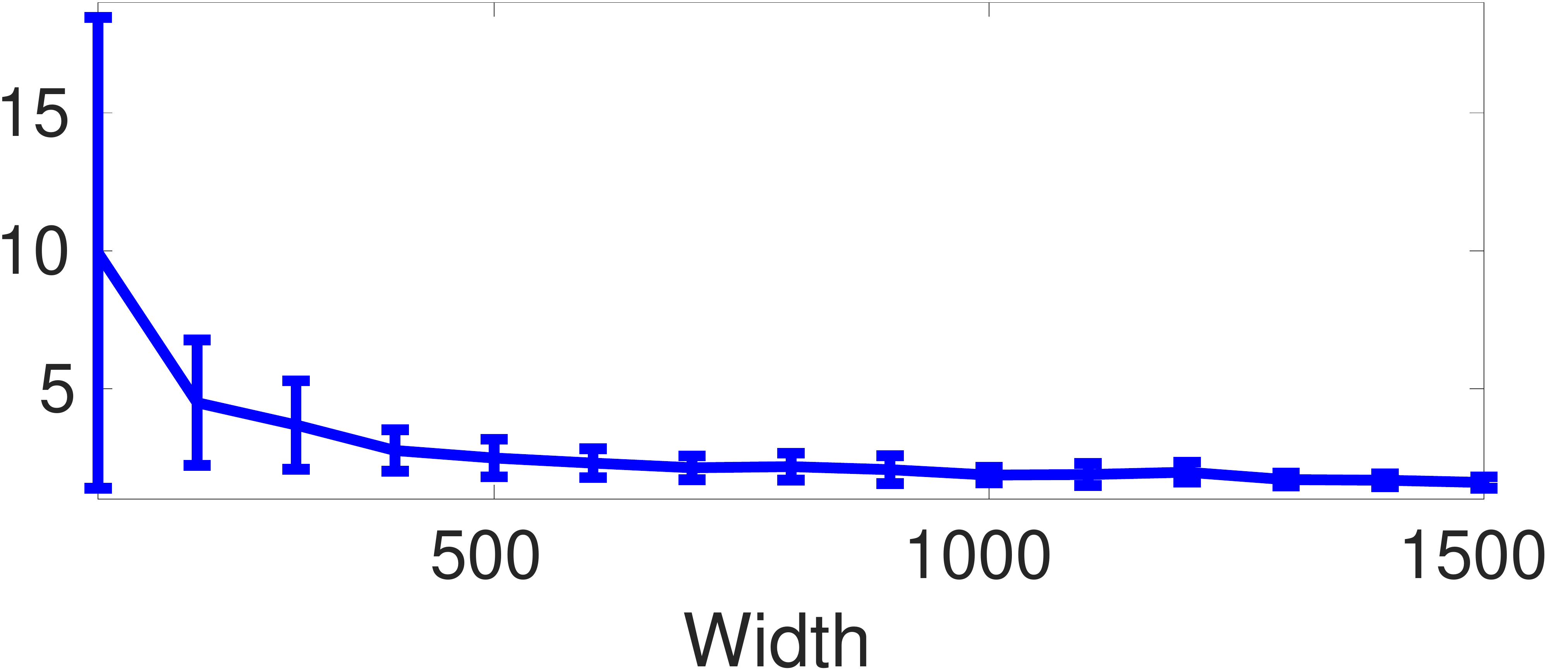}}
\hspace{0.5cm}\
\subfloat[LeakyReLU-GPN.]{\includegraphics[width=0.4\textwidth]{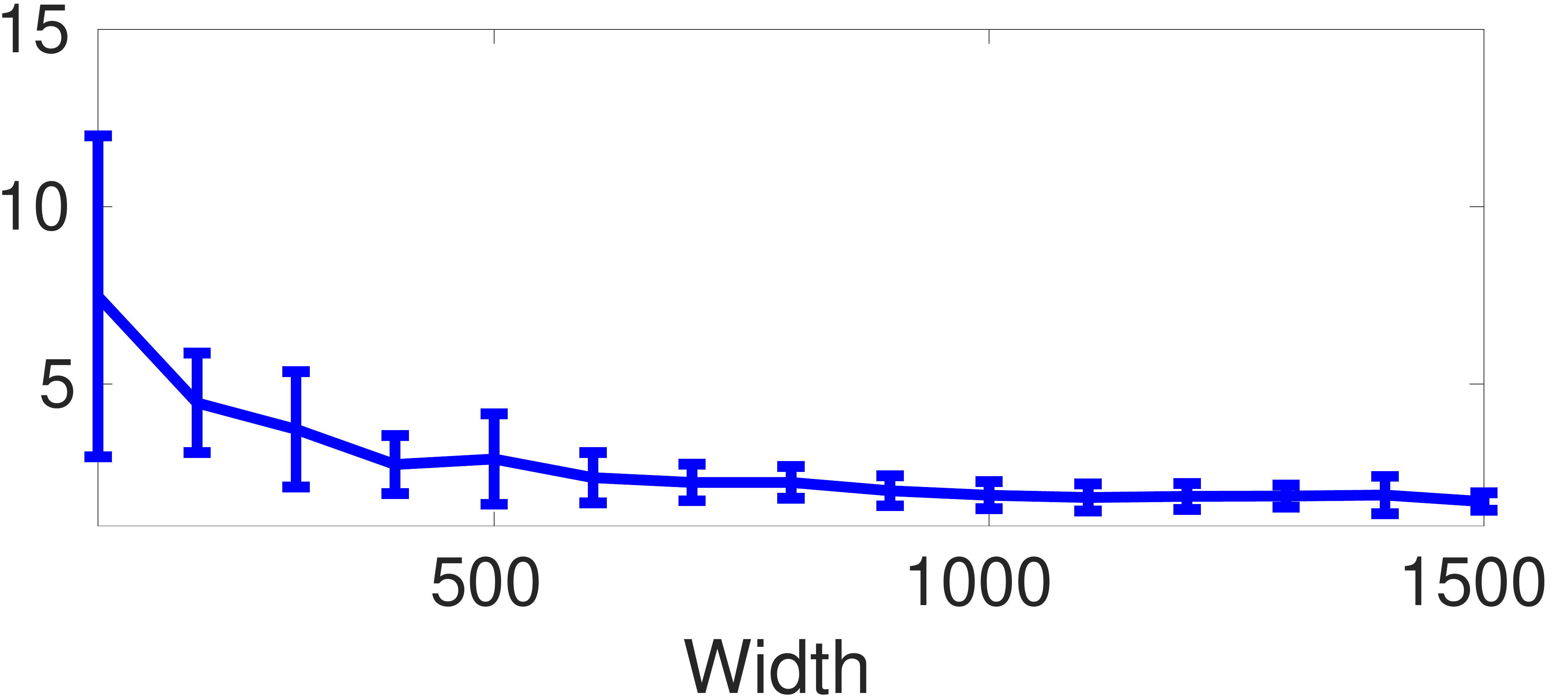}}

\subfloat[ELU-GPN.]{\includegraphics[width=0.4\textwidth]{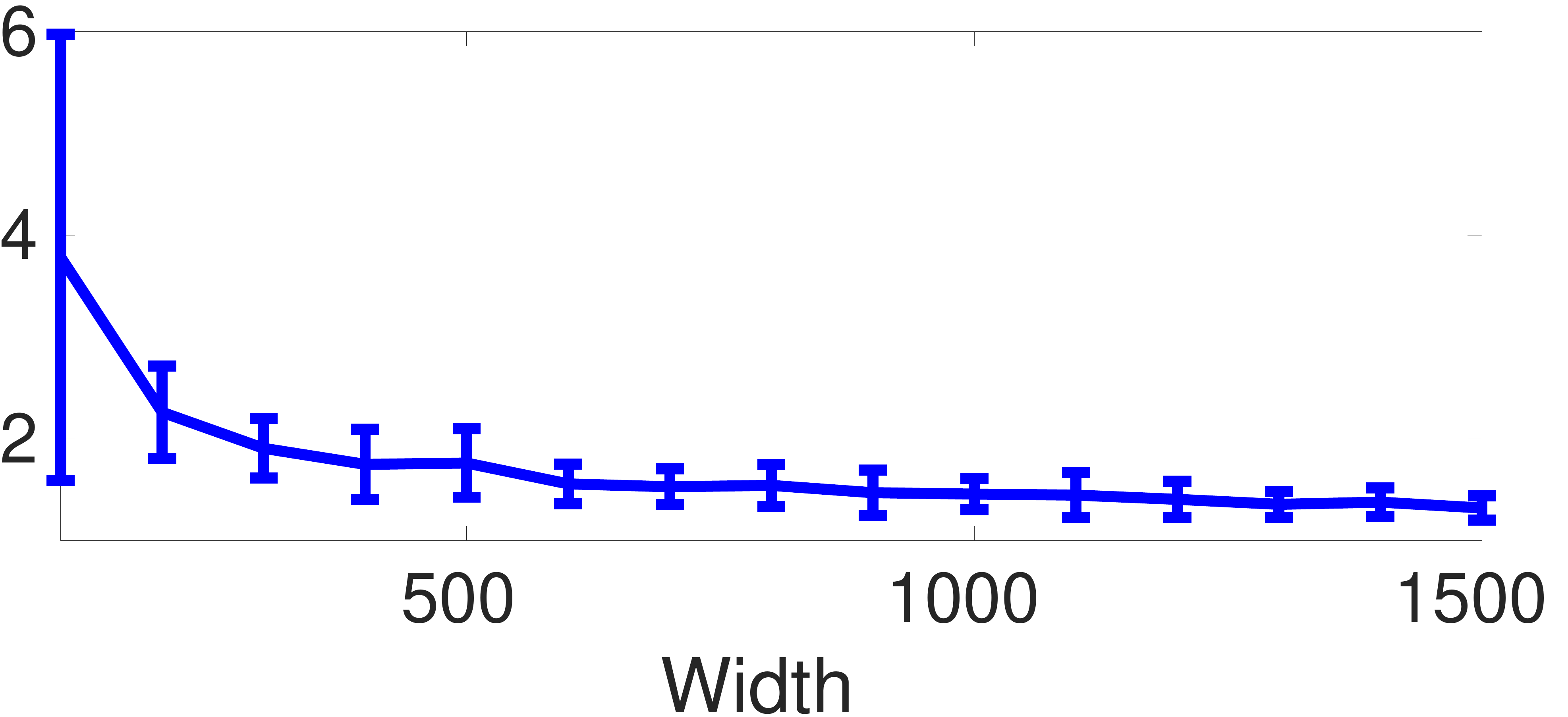}}
\hspace{0.5cm}\
\subfloat[GELU-GPN.]{\includegraphics[width=0.4\textwidth]{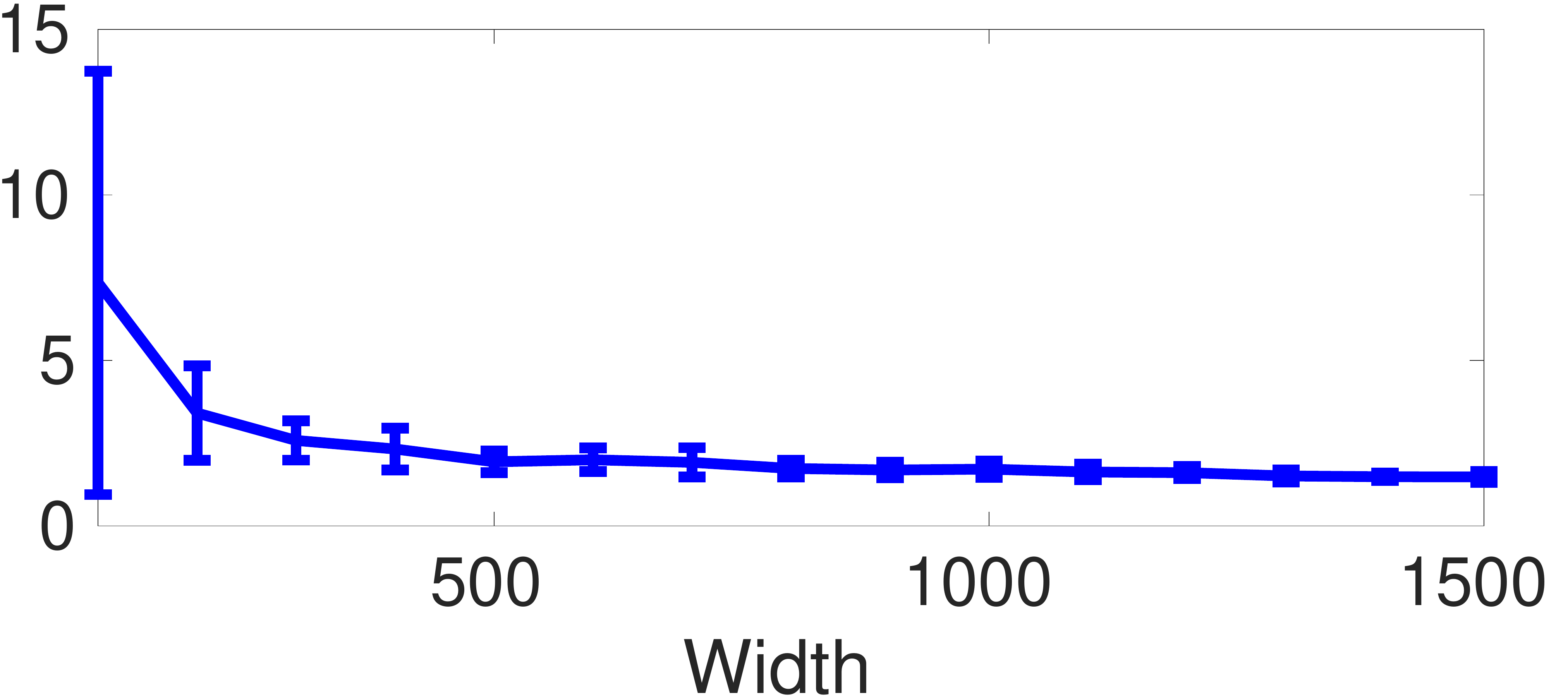}}

\vspace{0.5cm}
\caption{Gradient norm ratio for different layer width on synthetic data. The ratio is defined as $\max_l\|\frac{\partial E}{\partial \mathbf{W}^{(l)}}\|_F / \min_l\|\frac{\partial E}{\partial \mathbf{W}^{(l)}}\|_F$. The width ranges from 100 to 1500. The error bars show standard deviation.}
\label{exp:synthetic_width_2}
\end{figure}

\subsection{Real-World Data}

In Figure \ref{exp:test_acc_cifar10_2}, we show the test accuracy during training on CIFAR-10 in addition to Figure \myref{4} in the main text. In Figure \ref{exp:test_acc_mnist}, we show the experiments on MNIST.

In Figure \ref{fig:grad_1}, \ref{fig:grad_2},  \ref{fig:grad_3} and \ref{fig:grad_4}, we show a measure of the magnitude of the  vanishing/exploding gradients during training for different activation functions. The measure is defined as the ratio of the maximum gradient norm and the minimum gradient norm across layers.
Since we use the parametrization 
\begin{align}
\mathbf{W} = \Big(\frac{\mathbf{v}_1}{\|\mathbf{v}_1\|_2},...,\frac{\mathbf{v}_d}{\|\mathbf{v}_d\|_2}\Big)^T    
\end{align}
with $\mathbf{V} = (\mathbf{v}_1,...,\mathbf{v}_d)^T$, the gradient norm ratio is defined on the unconstrained weights $\mathbf{V}$, that is,
\begin{align}
\frac{\max_l\|\frac{\partial E}{\partial \mathbf{V}^{(l)}}\|_F}{ \min_l\|\frac{\partial E}{\partial \mathbf{V}^{(l)}}\|_F}.
\end{align}
Note that for ReLU, LeakyReLU and GELU, the gradient vanishes during training in some experiments and therefore the plots are empty. From the figures, we can see that batch normalization leads to gradient explosion especially at the early stage of training. On the other hand, without batch normalization, the gradient is stable throughout training for GPN activation functions.

\clearpage

\begin{figure}[h!]
\centering
\subfloat[ReLU.]{\includegraphics[width=0.4\textwidth]{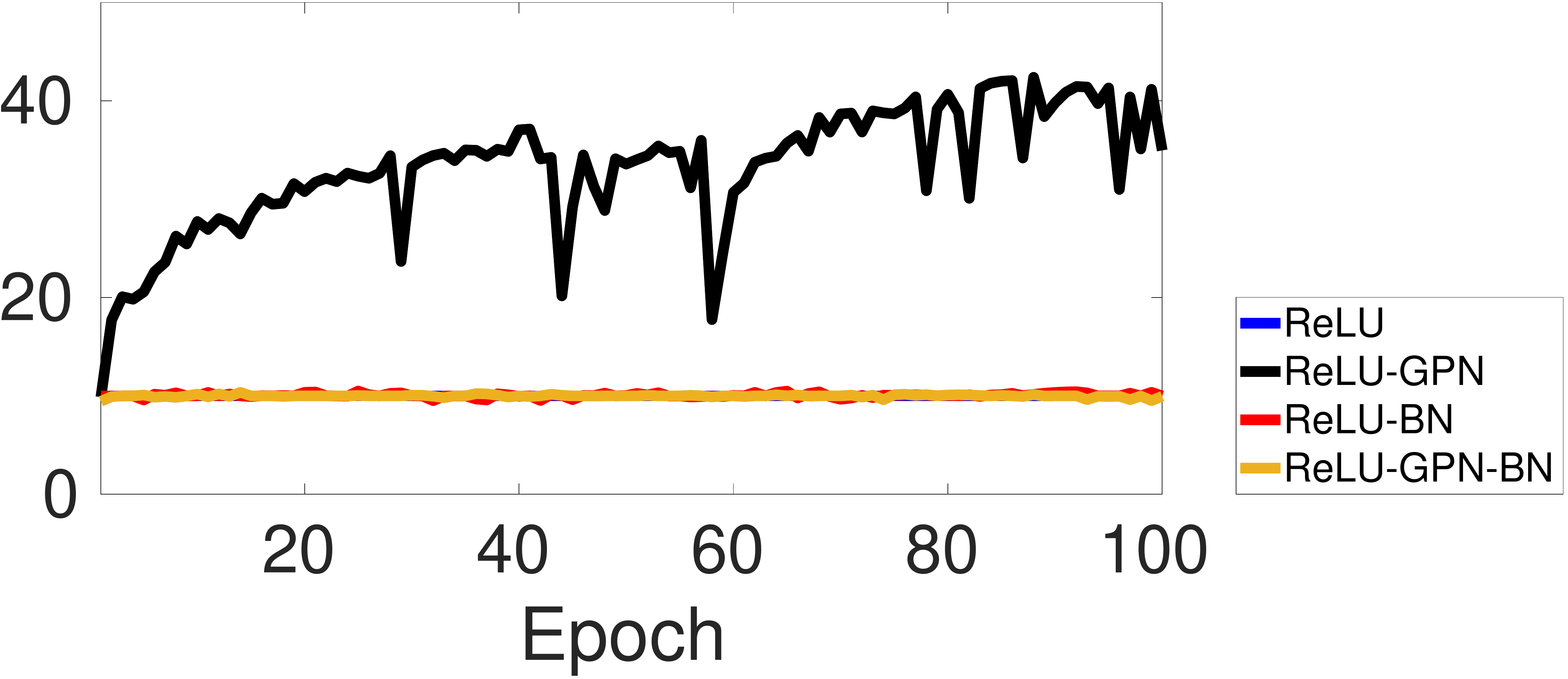}}
\hspace{0.5cm}
\subfloat[LeakyReLU.]{\includegraphics[width=0.4\textwidth]{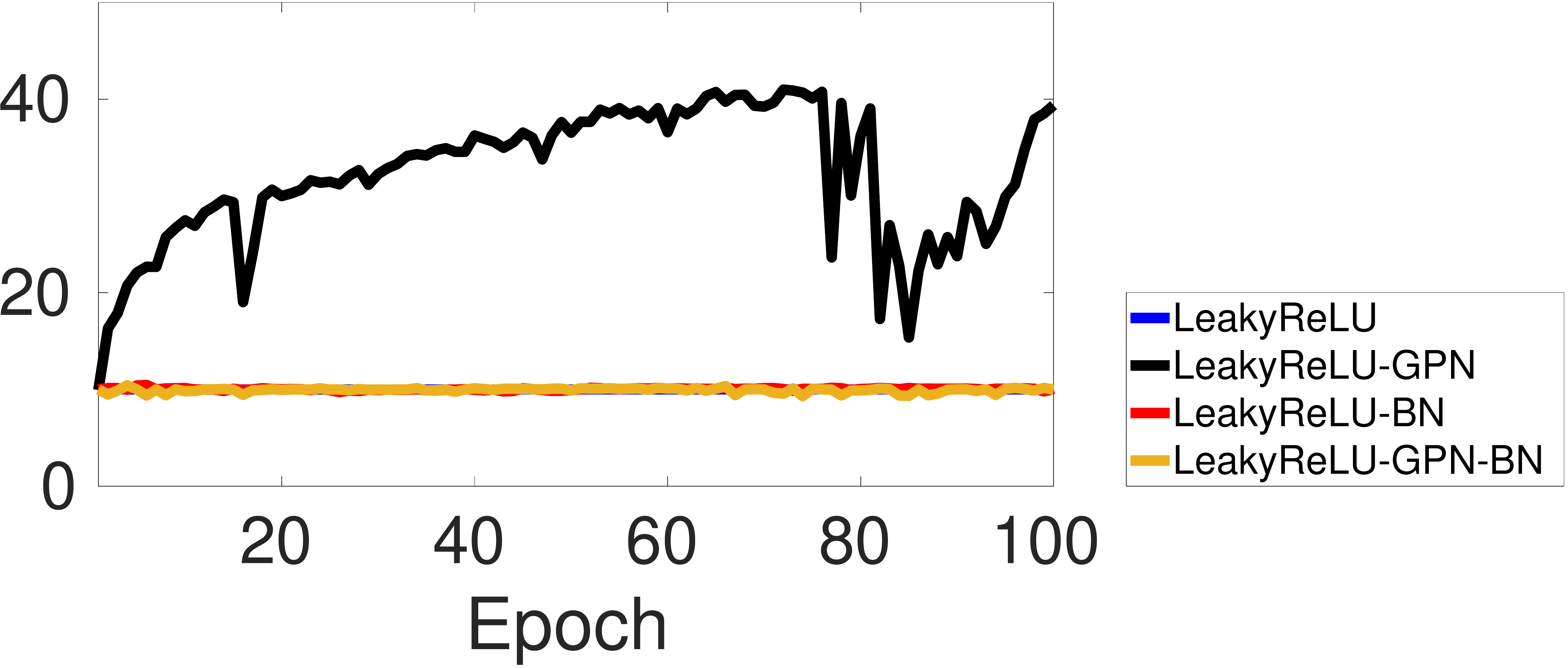}}

\subfloat[ELU.]{\includegraphics[width=0.4\textwidth]{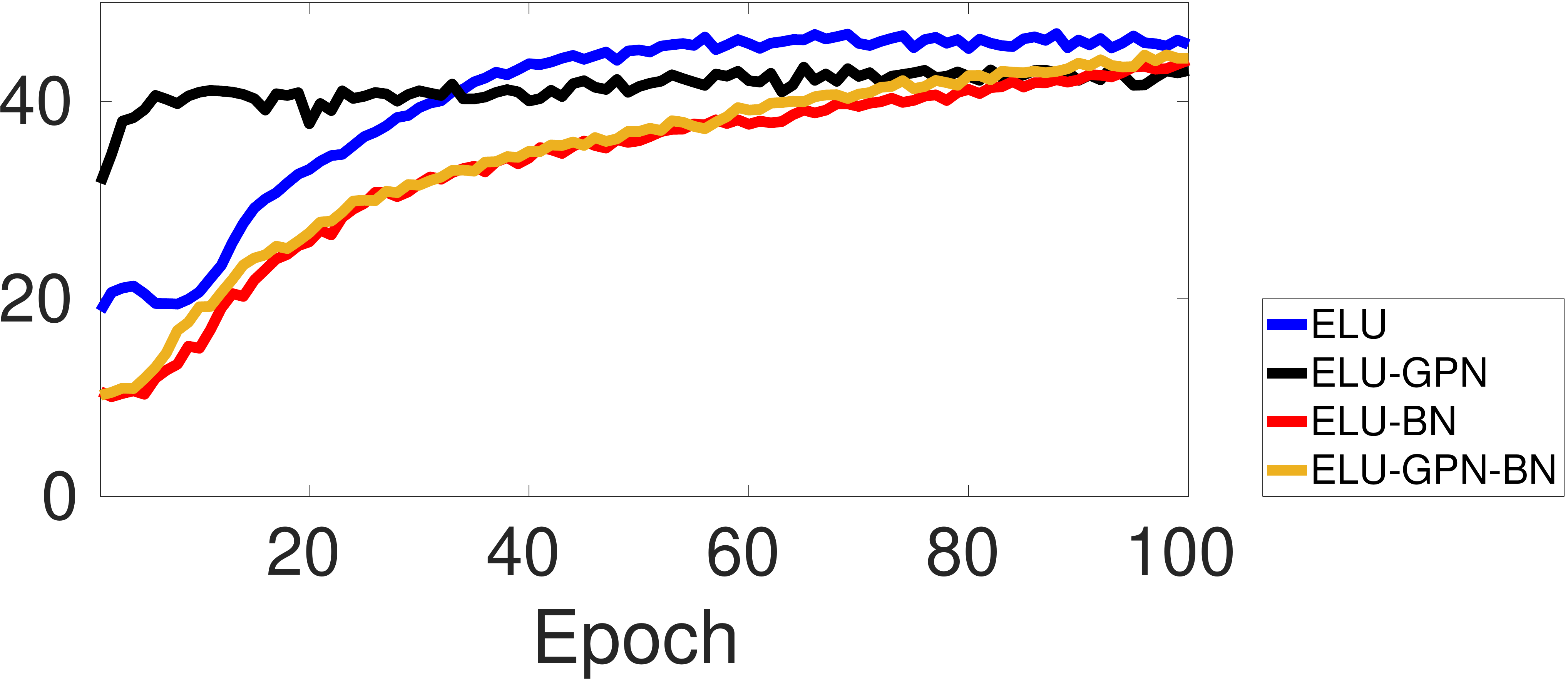}}
\hspace{0.5cm}
\subfloat[GELU.]{\includegraphics[width=0.4\textwidth]{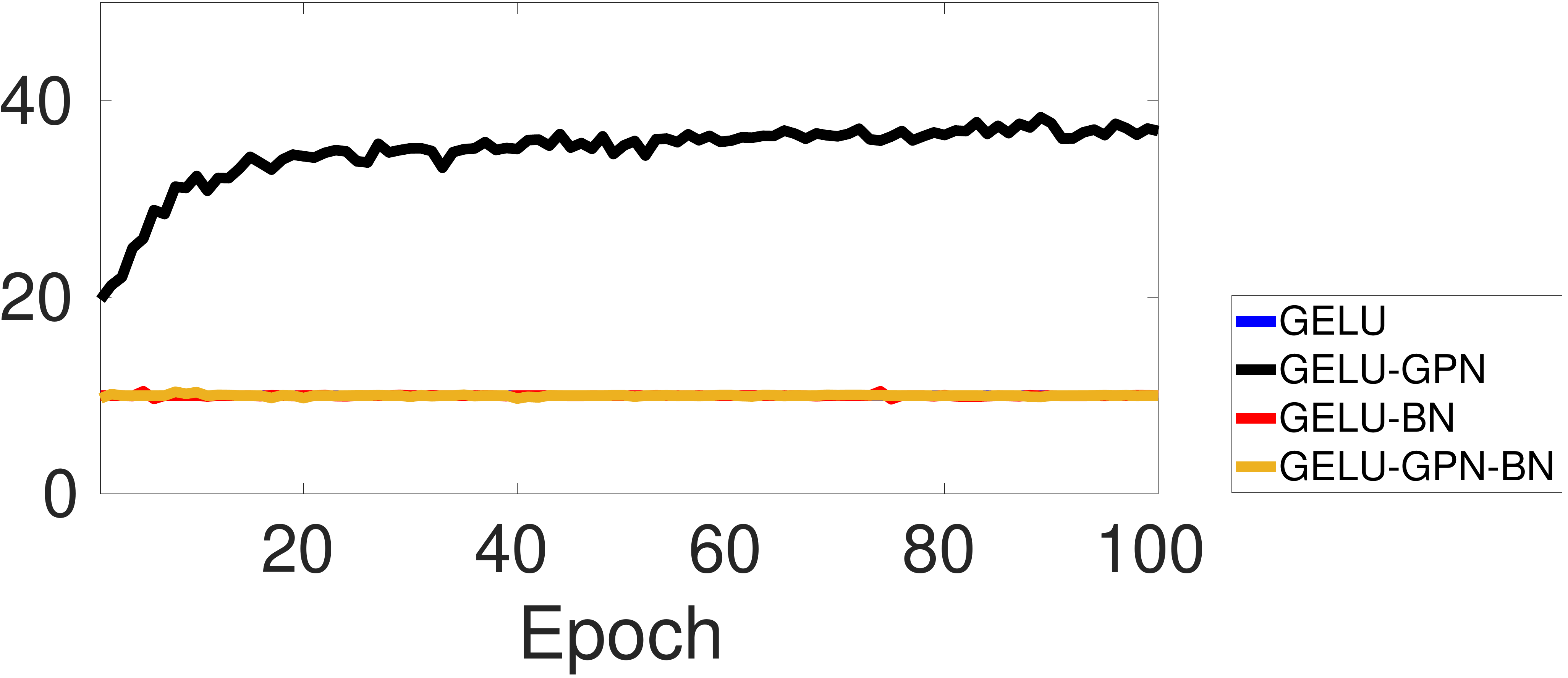}}

\vspace{0.5cm}
\caption{Test accuracy (percentage) during training on CIFAR-10.}
\label{exp:test_acc_cifar10_2}
\end{figure}

\hspace{2.5cm}

\begin{figure}[h!]
\centering
\subfloat[Tanh.]{\includegraphics[width=0.4\textwidth]{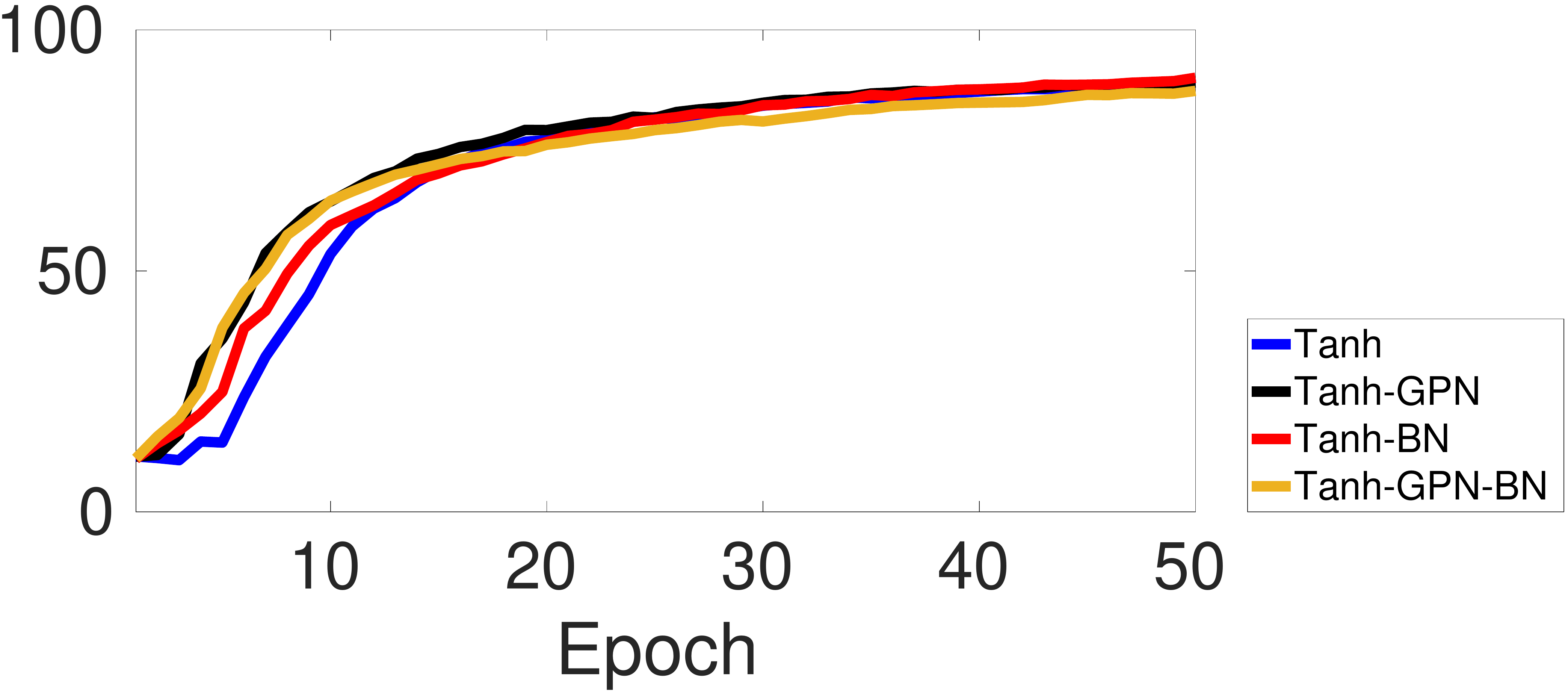}}
\hspace{0.5cm}
\subfloat[ReLU.]{\includegraphics[width=0.4\textwidth]{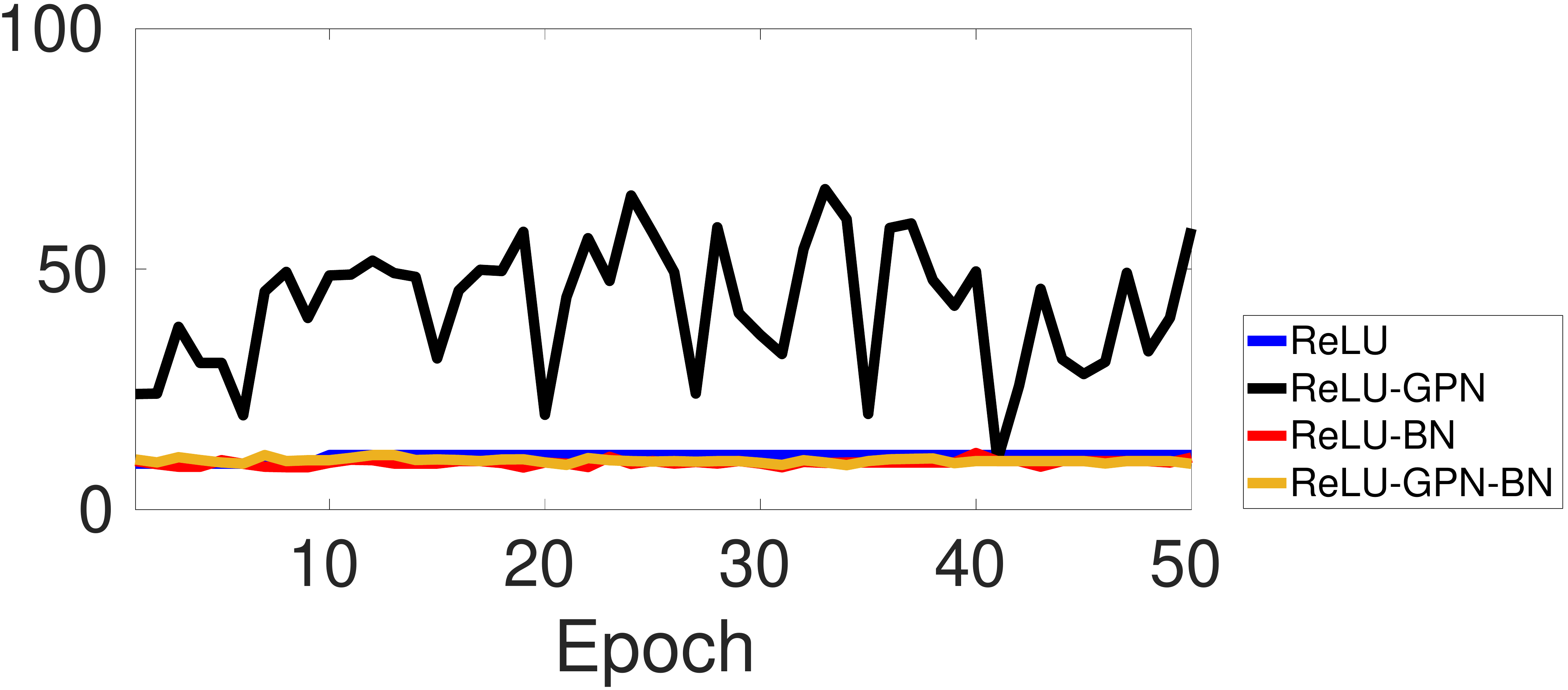}}

\subfloat[LeakyReLU.]{\includegraphics[width=0.4\textwidth]{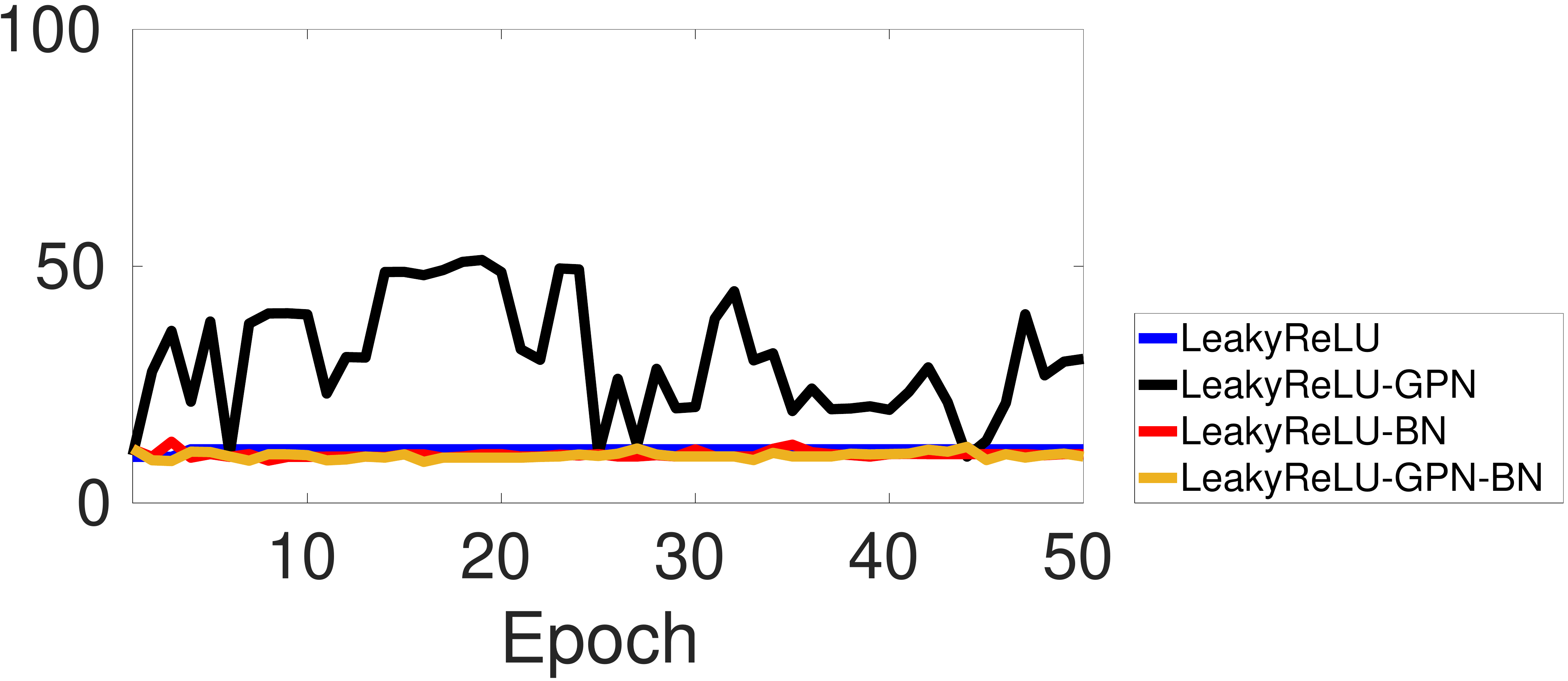}}
\hspace{0.5cm}
\subfloat[ELU.]{\includegraphics[width=0.4\textwidth]{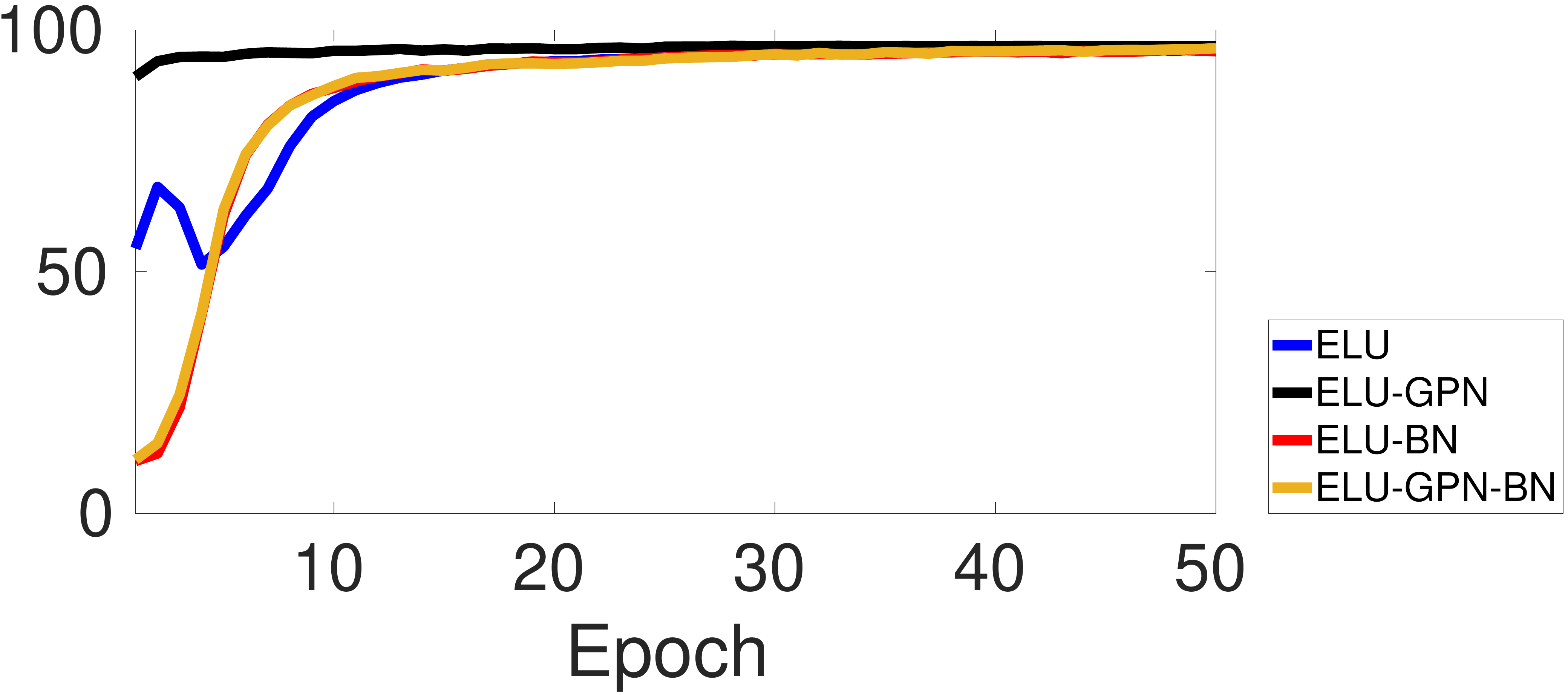}}

\subfloat[SELU.]{\includegraphics[width=0.4\textwidth]{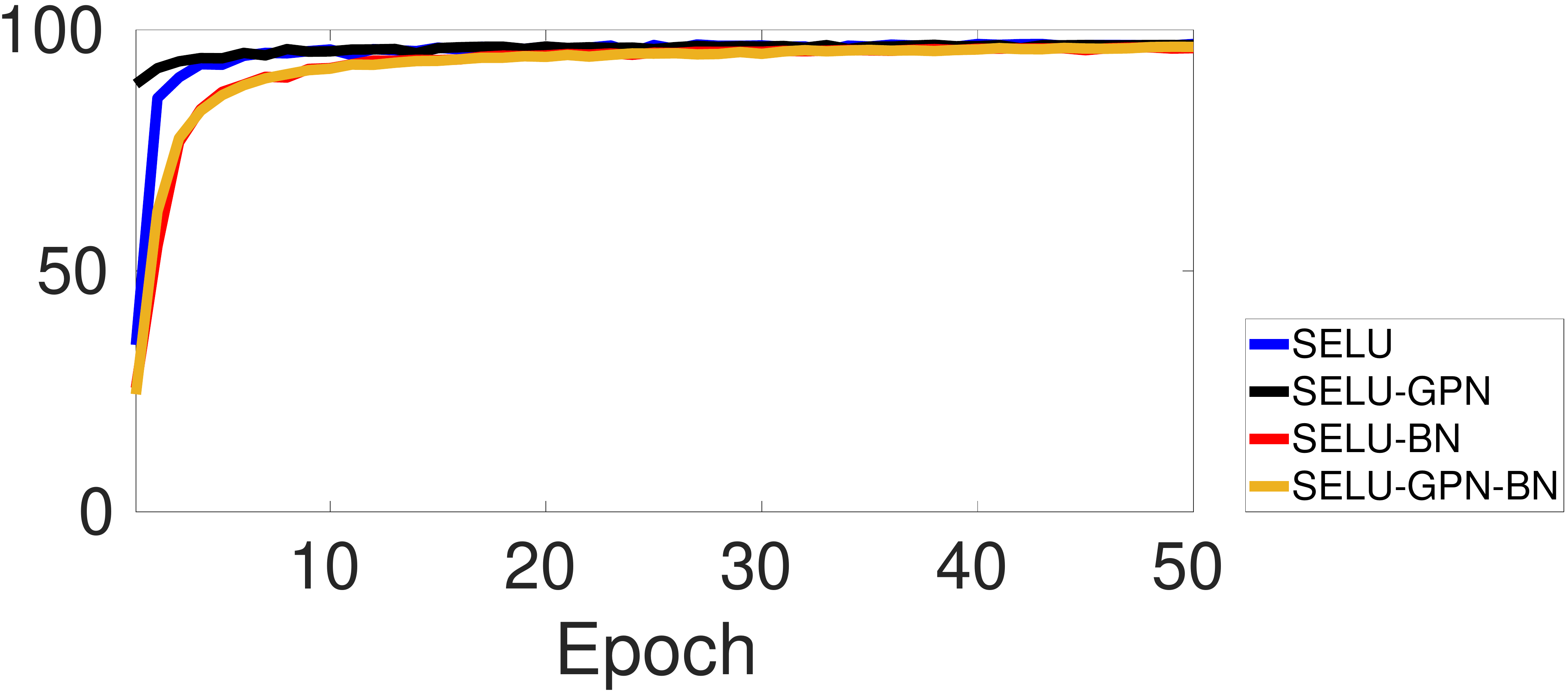}}
\hspace{0.5cm}
\subfloat[GELU.]{\includegraphics[width=0.4\textwidth]{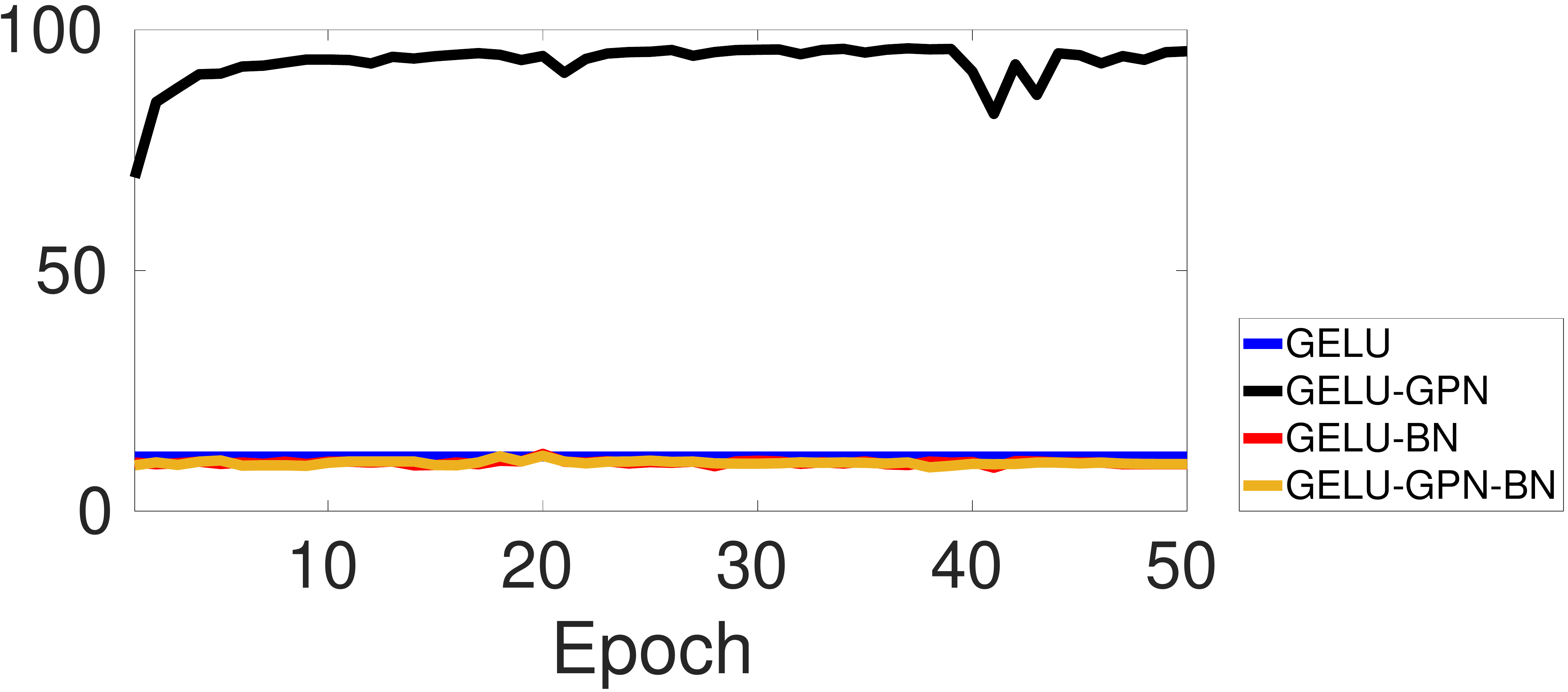}}

\vspace{0.5cm}
\caption{Test accuracy (percentage) during training on MNIST.}
\label{exp:test_acc_mnist}
\end{figure}

\clearpage

\begin{figure}[h!]
\centering
\subfloat[Tanh.]{\includegraphics[width=0.4\textwidth]{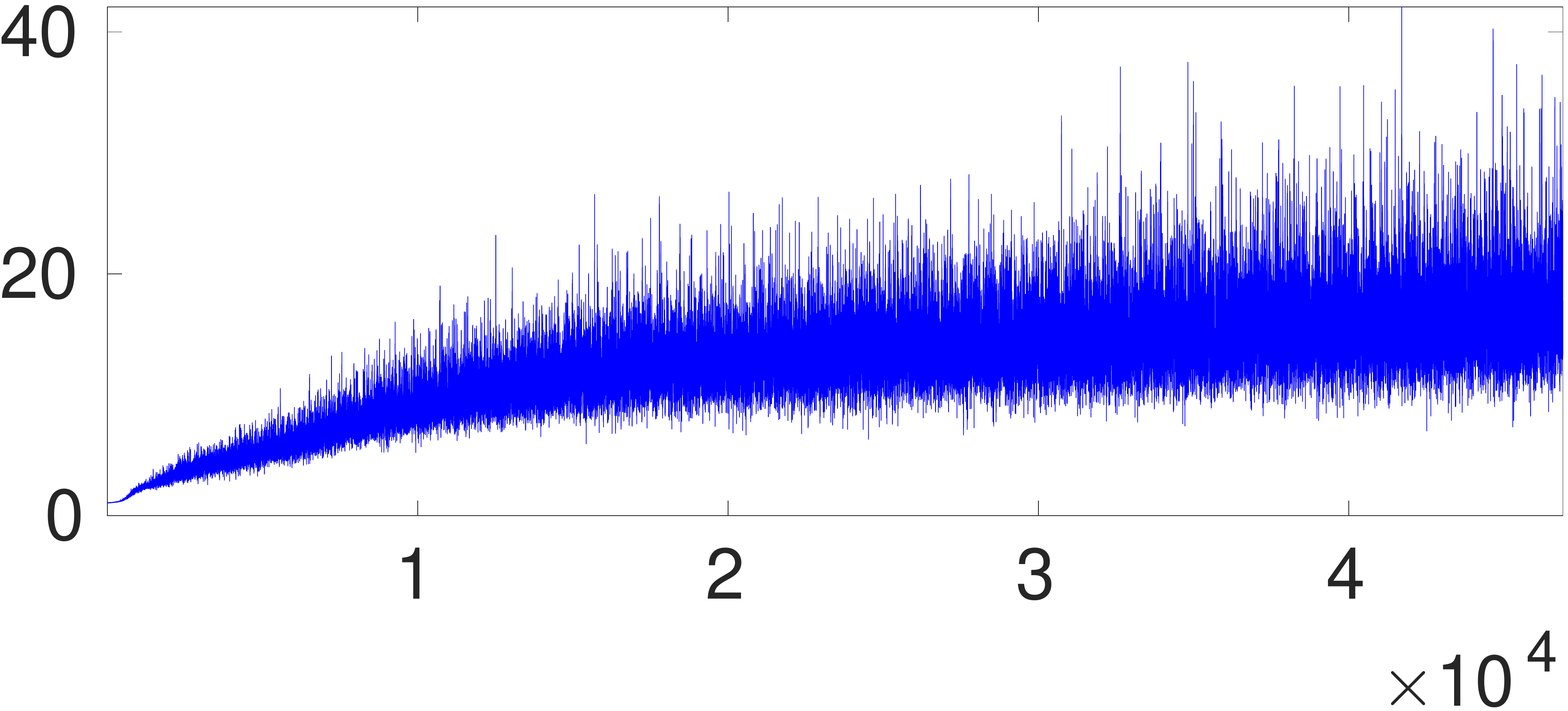}}
\hspace{0.5cm}
\subfloat[Tanh-GPN.]{\includegraphics[width=0.4\textwidth]{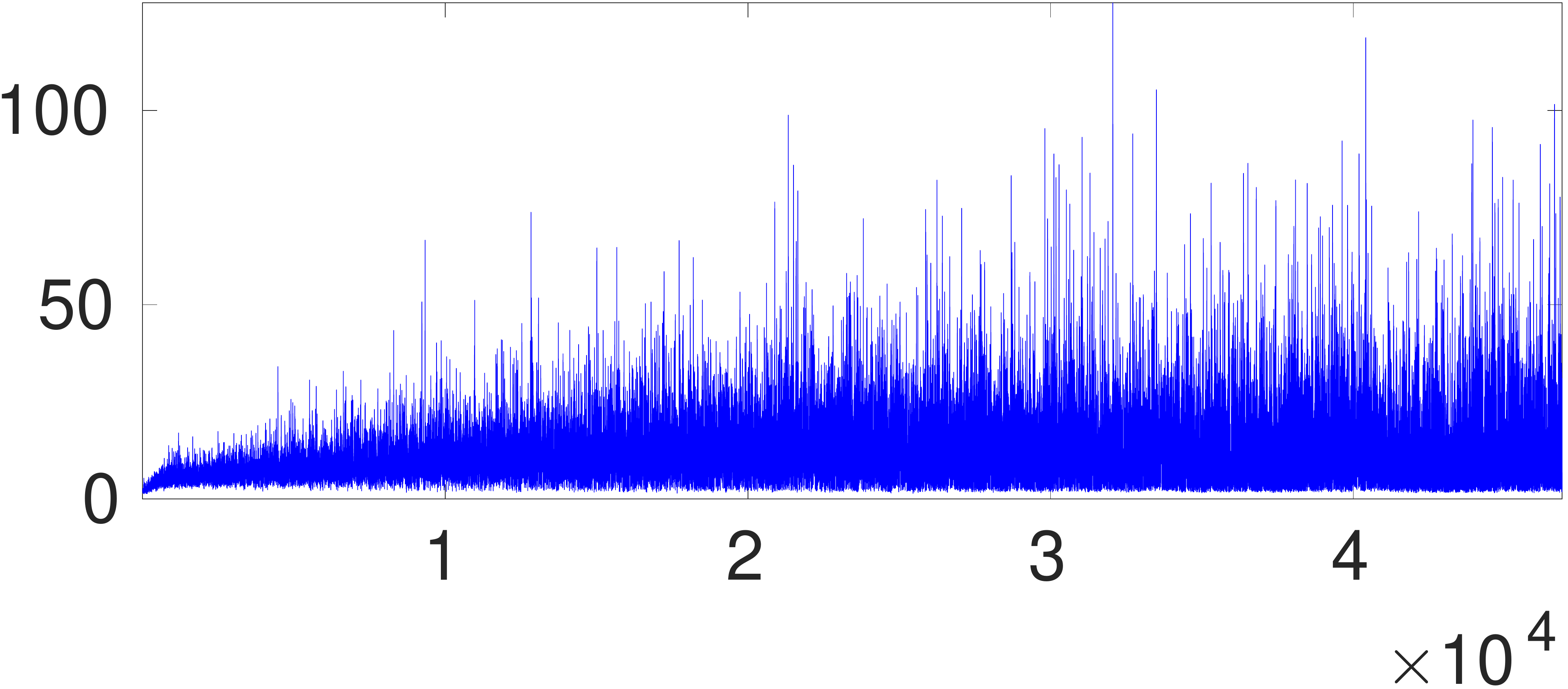}}

\subfloat[Tanh-BN.]{\includegraphics[width=0.4\textwidth]{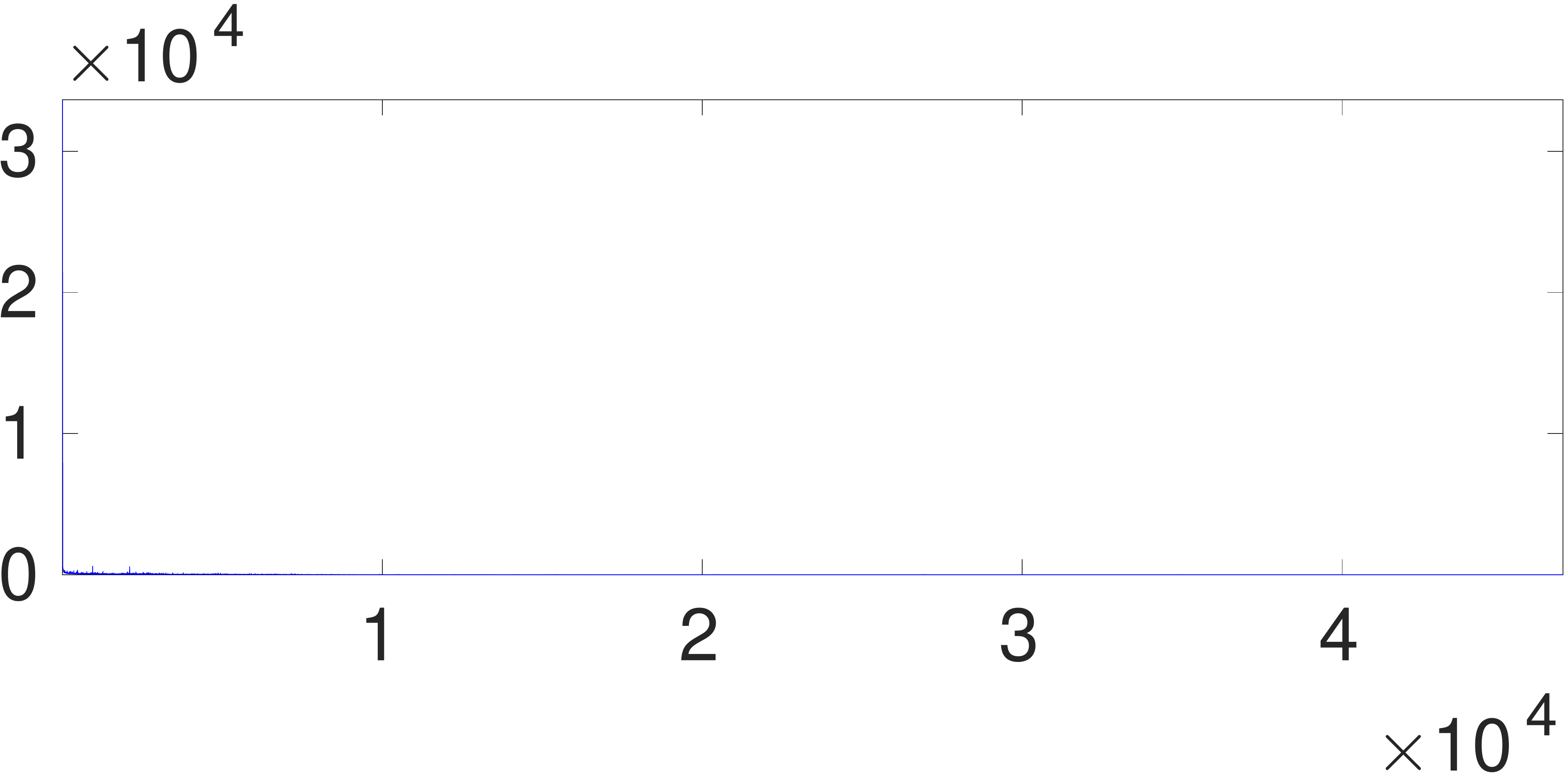}}
\hspace{0.5cm}
\subfloat[Tanh-GPN-BN.]{\includegraphics[width=0.4\textwidth]{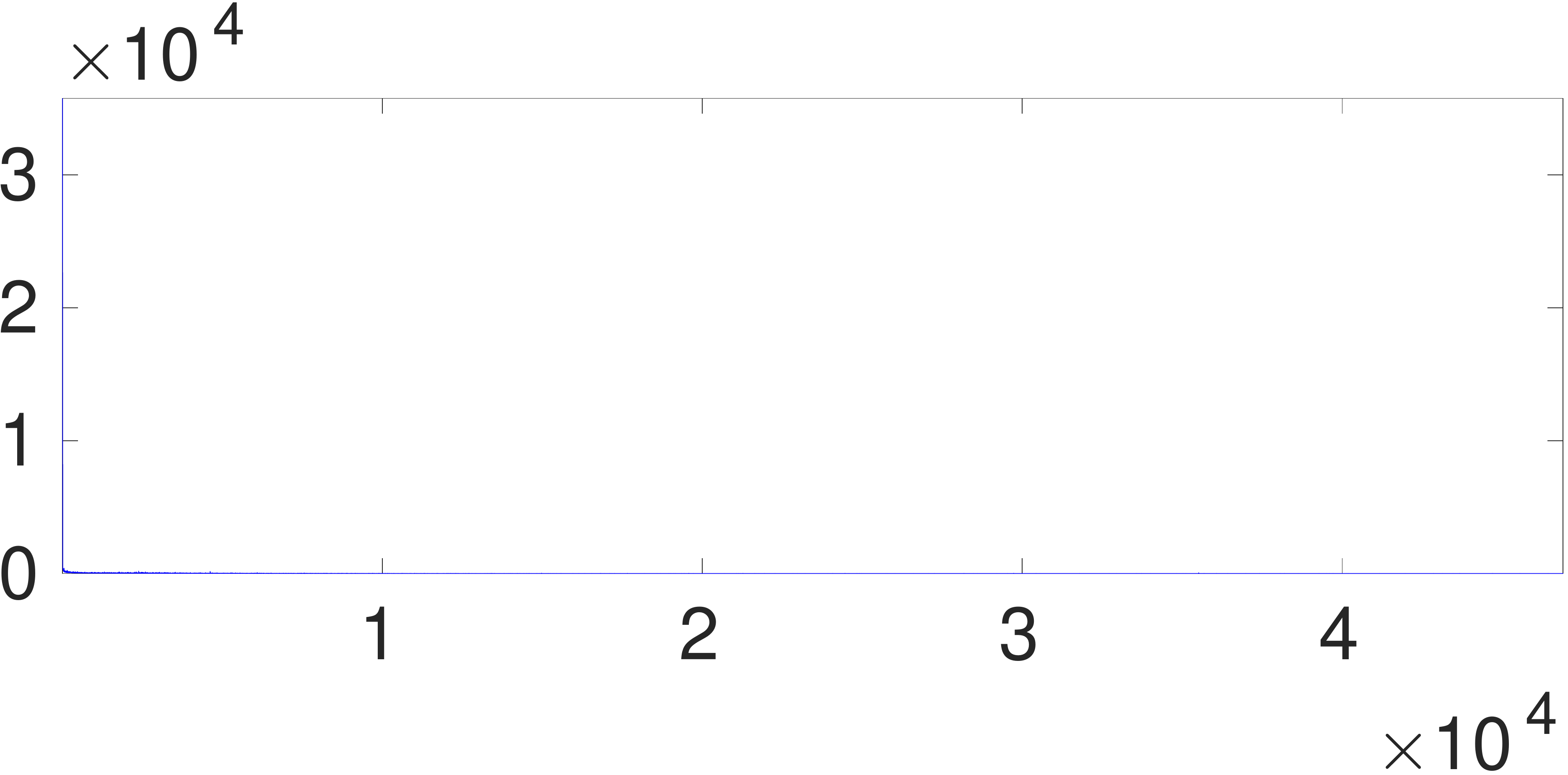}}

\subfloat[ReLU.]{\includegraphics[width=0.4\textwidth]{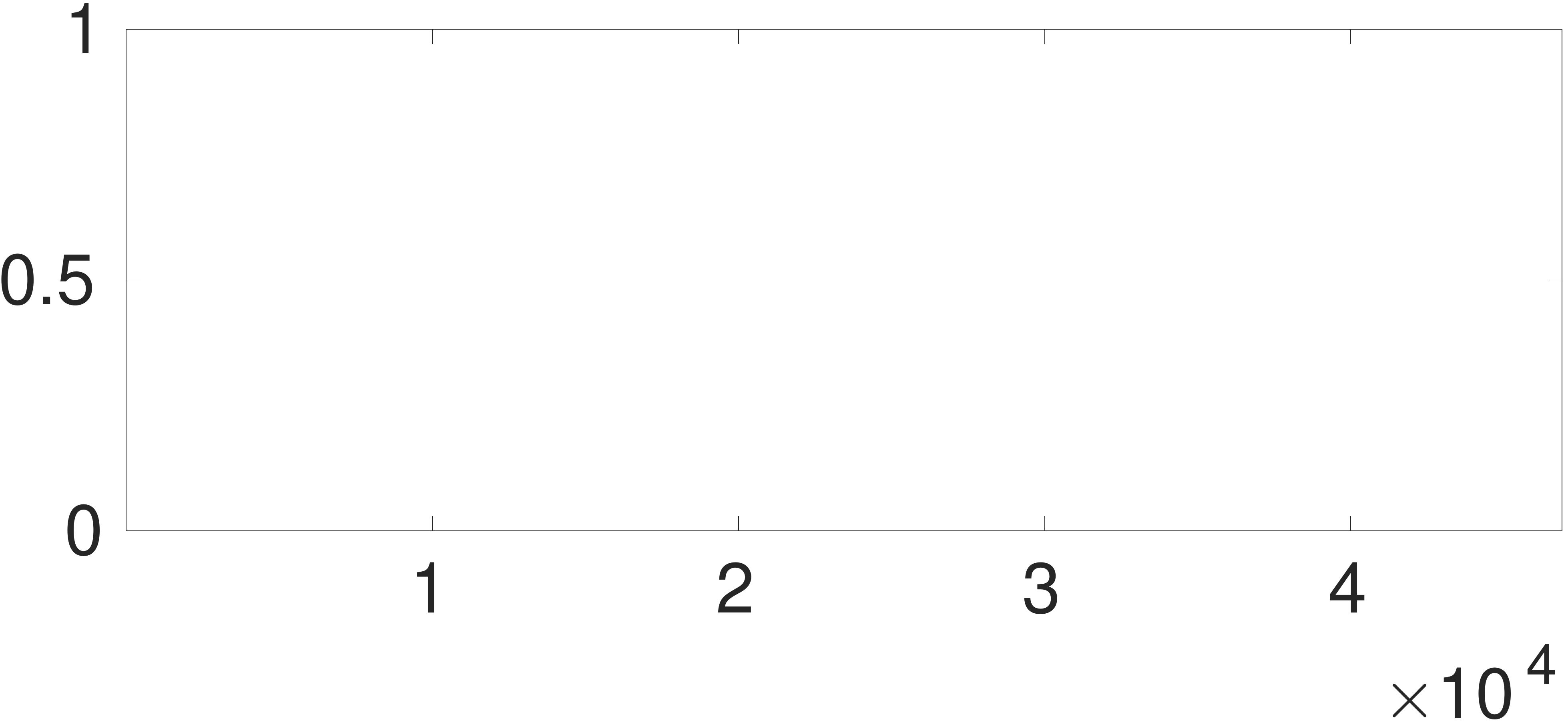}}
\hspace{0.5cm}
\subfloat[ReLU-GPN.]{\includegraphics[width=0.4\textwidth]{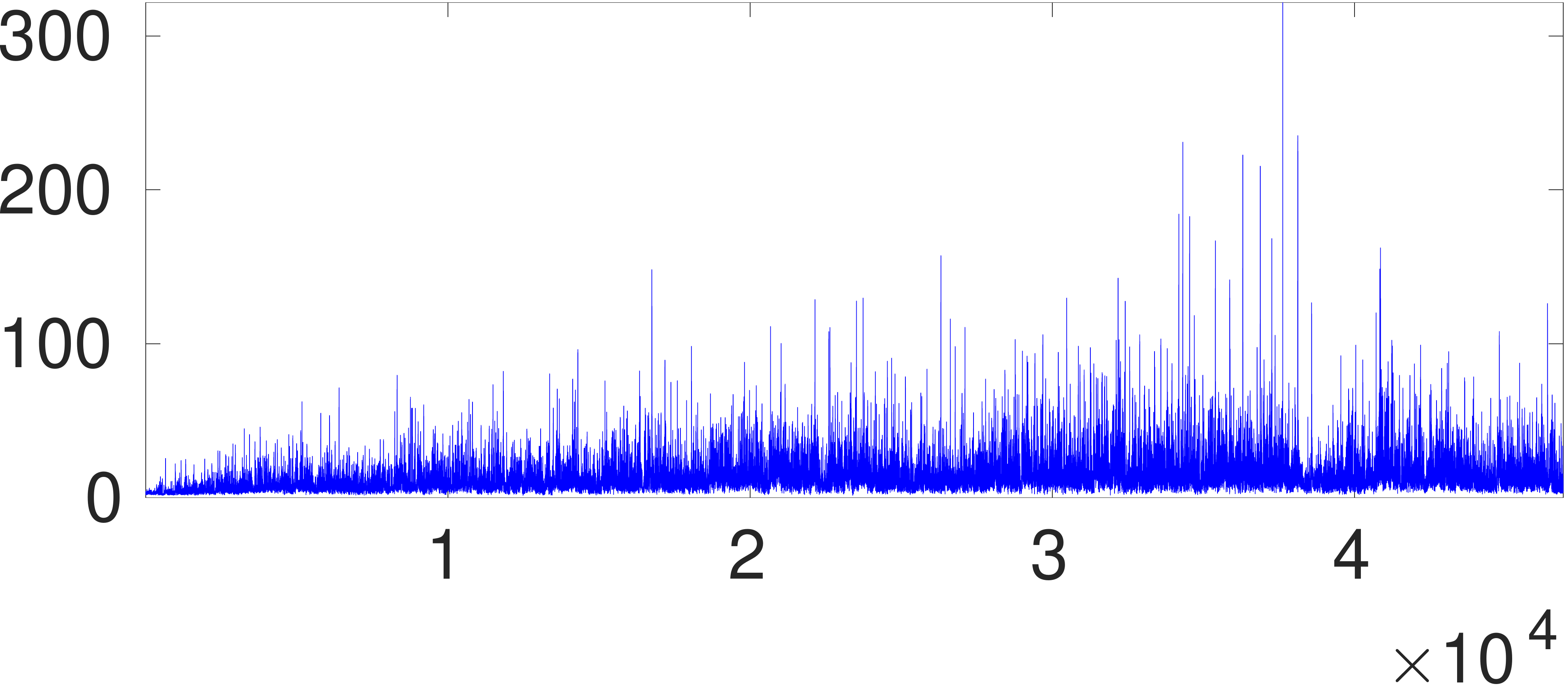}}

\subfloat[ReLU-BN.]{\includegraphics[width=0.4\textwidth]{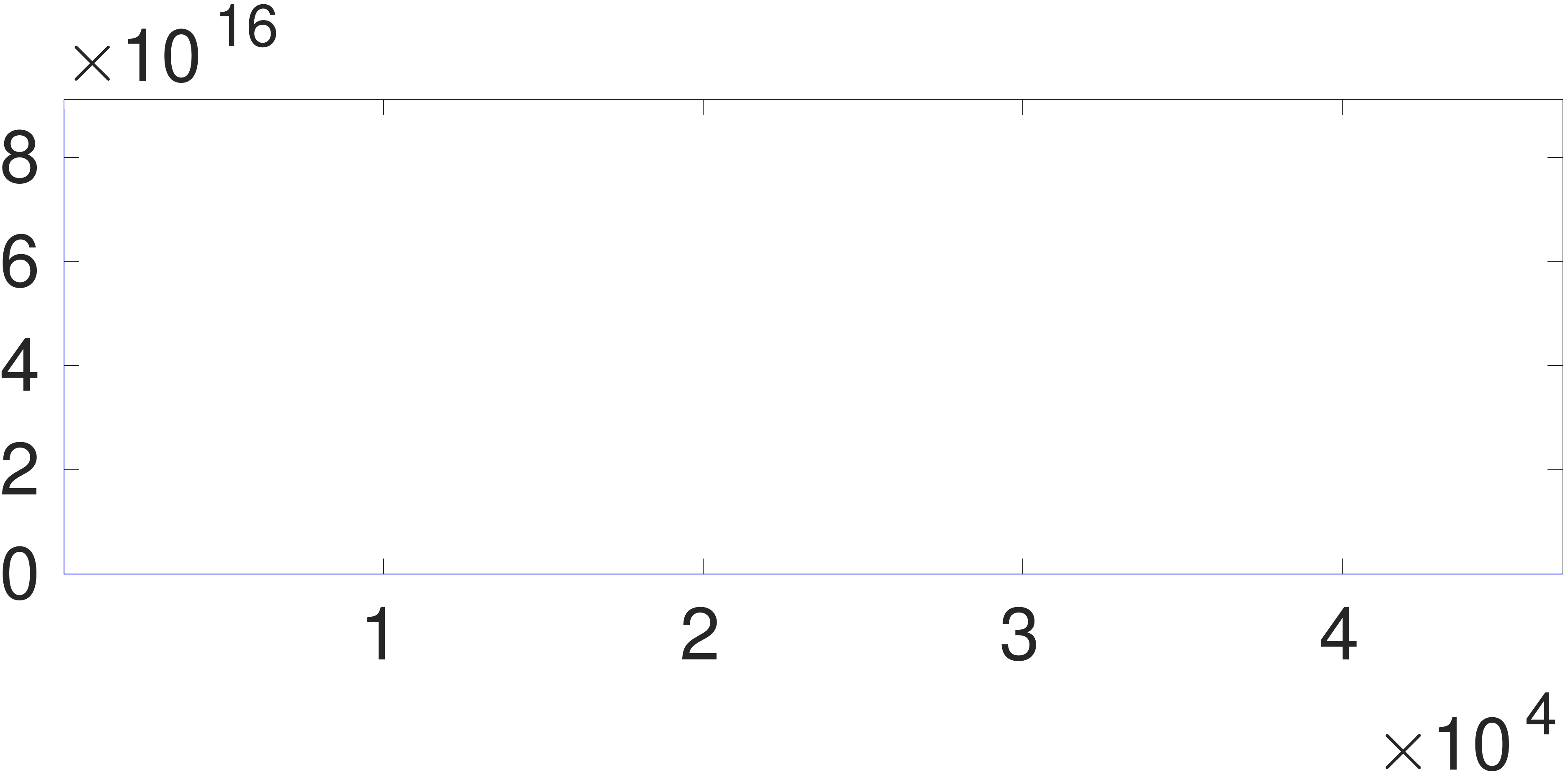}}
\hspace{0.5cm}
\subfloat[ReLU-GPN-BN.]{\includegraphics[width=0.4\textwidth]{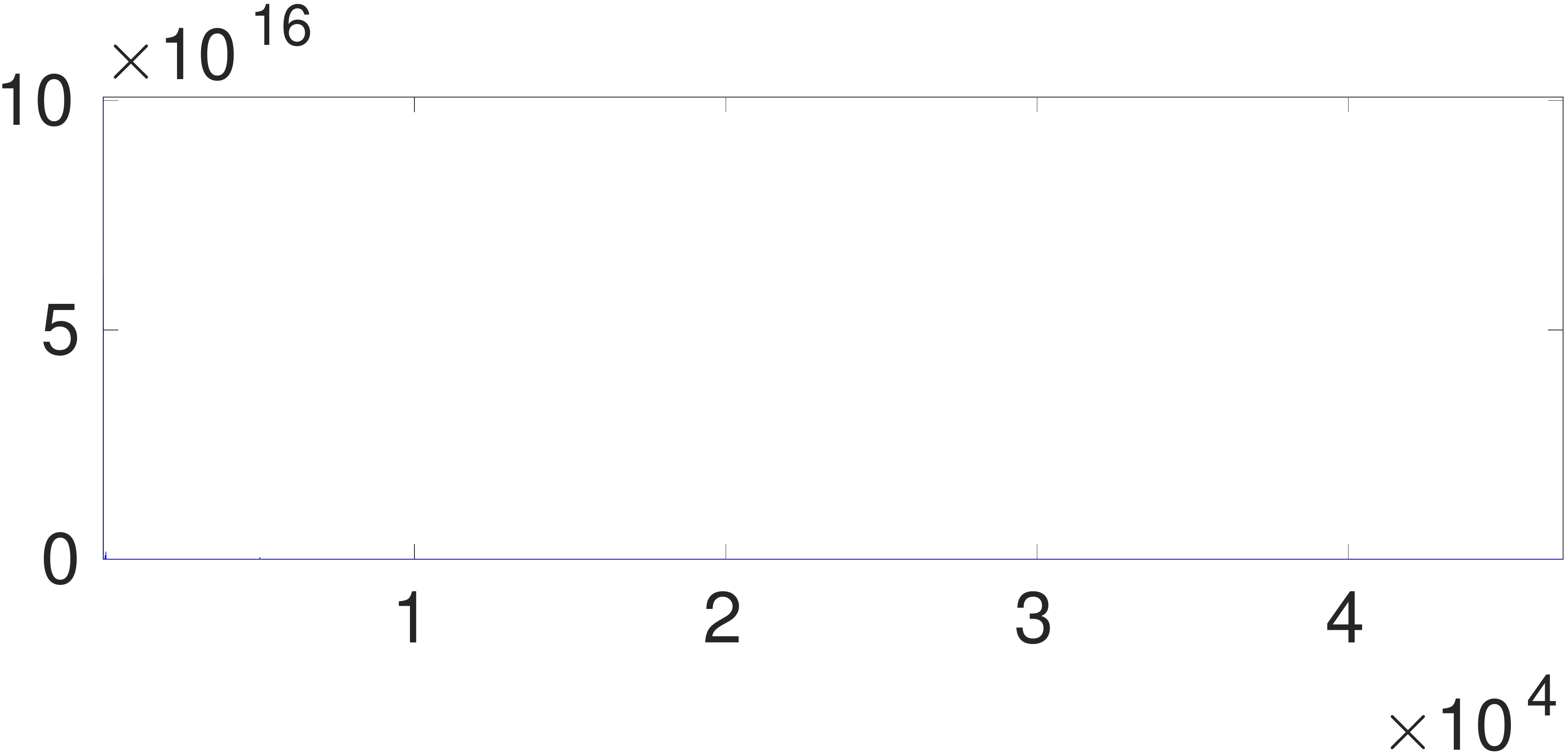}}

\subfloat[LeakyReLU.]{\includegraphics[width=0.4\textwidth]{img/empty_mnist_grad.pdf}}
\hspace{0.5cm}
\subfloat[LeakyReLU-GPN.]{\includegraphics[width=0.4\textwidth]{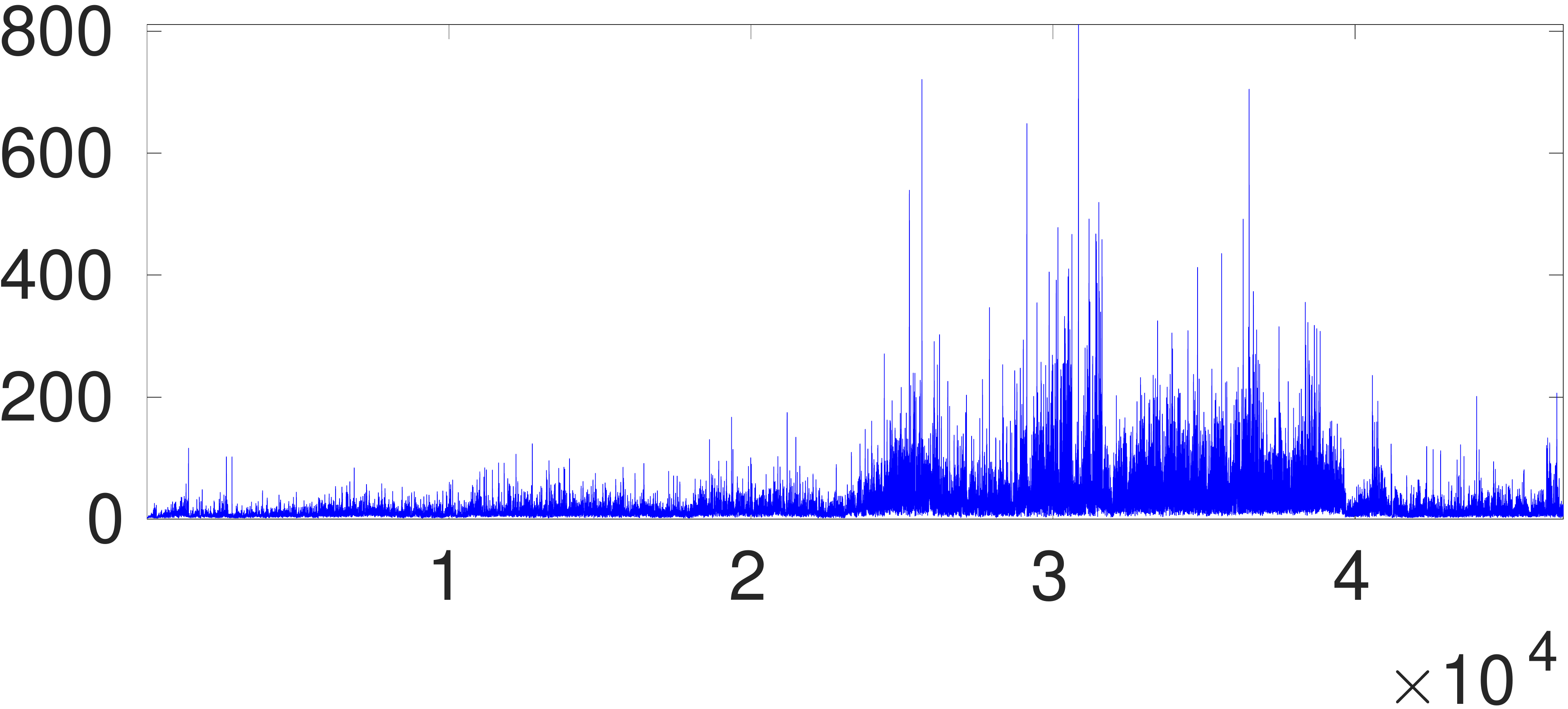}}

\subfloat[LeakyReLU-BN.]{\includegraphics[width=0.4\textwidth]{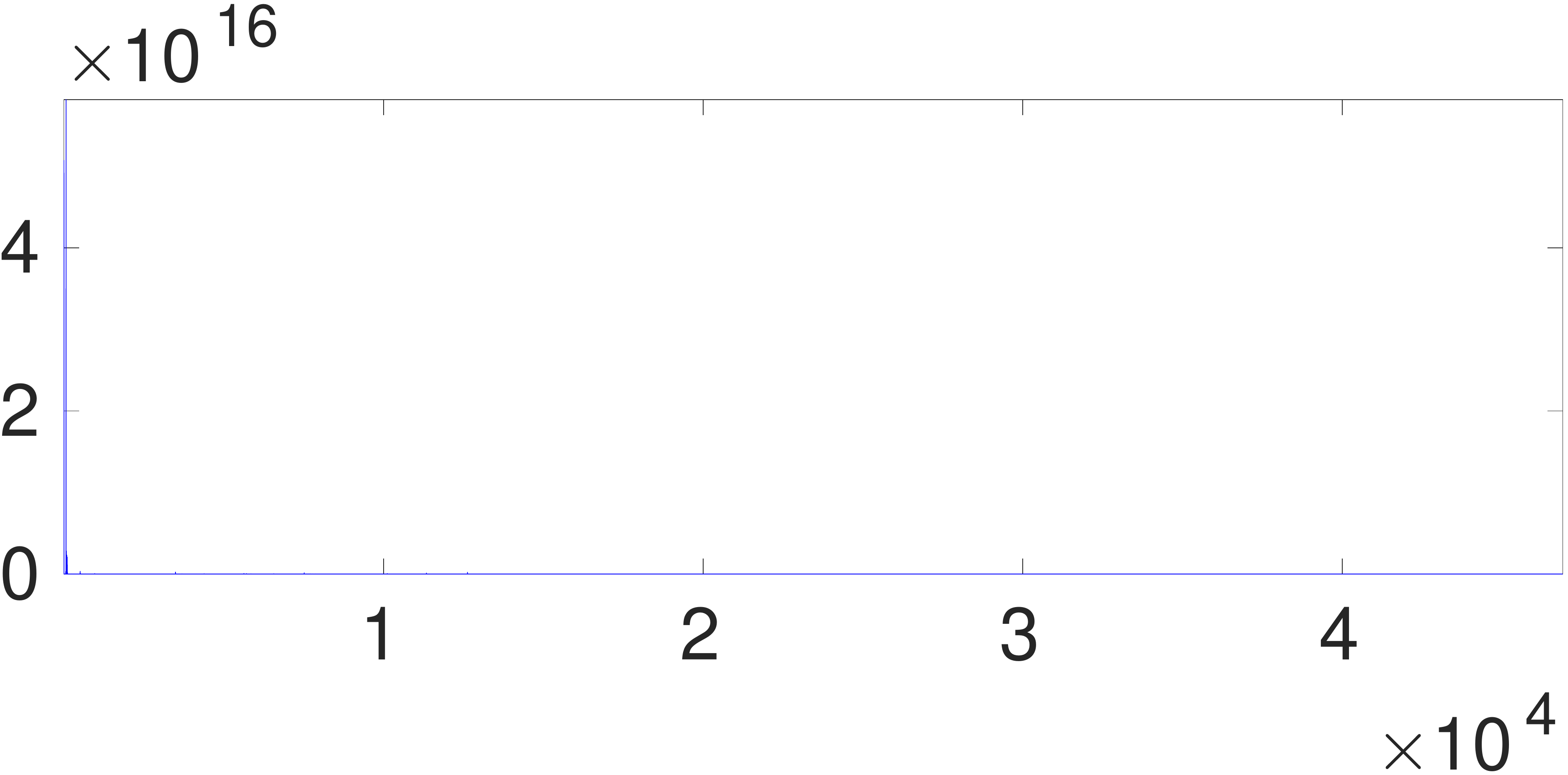}}
\hspace{0.5cm}
\subfloat[LeakyReLU-GPN-BN.]{\includegraphics[width=0.4\textwidth]{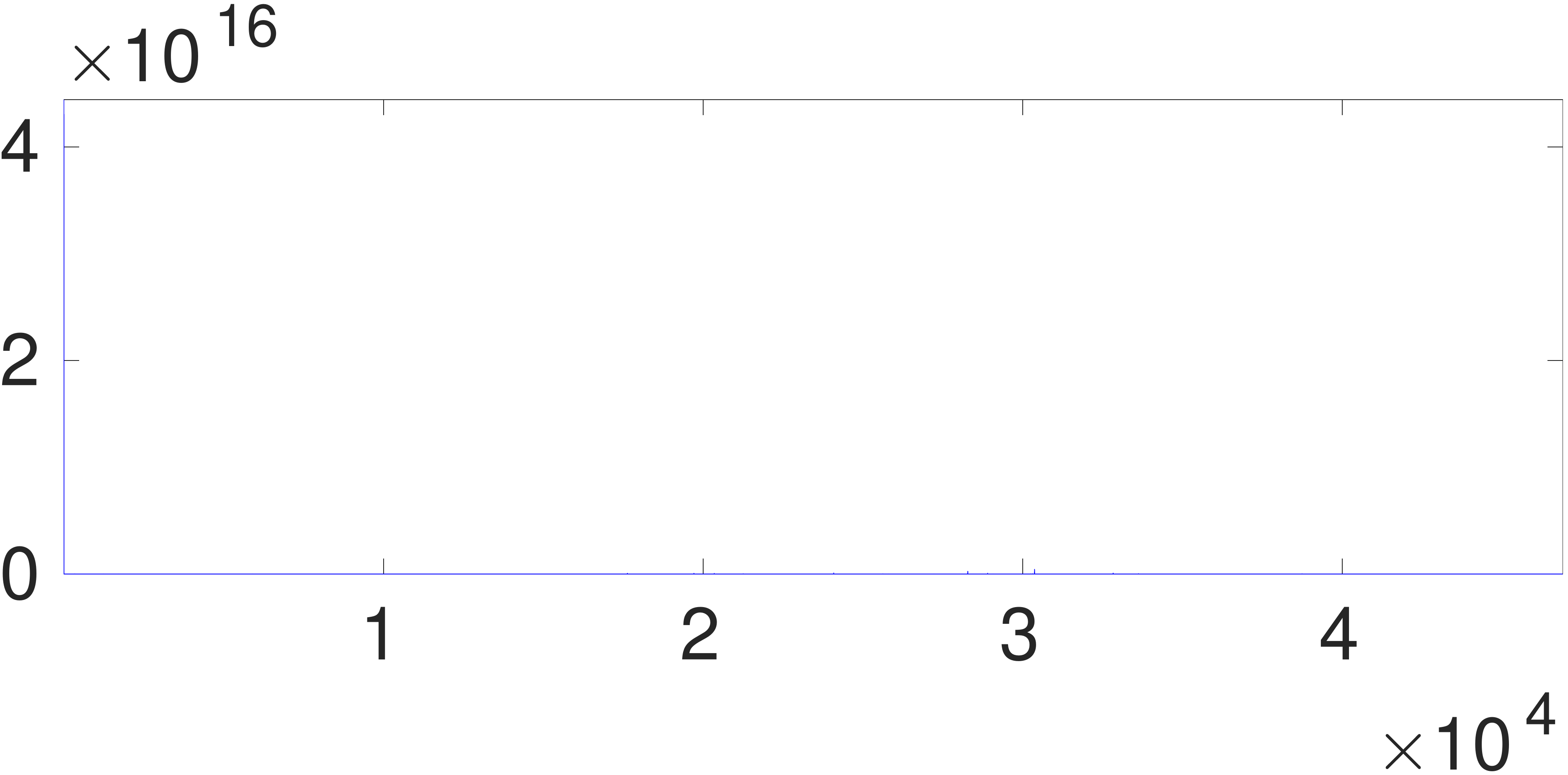}}

\vspace{0.5cm}
\caption{Gradient norm ratio during training on MNIST. Horizontal axis denotes the mini-batch updates. Vertical axis denotes the gradient norm ratio $\max_l\|\frac{\partial E}{\partial \mathbf{V}^{(l)}}\|_F / \min_l\|\frac{\partial E}{\partial \mathbf{V}^{(l)}}\|_F$. The gradient vanishes ($\|\frac{\partial E}{\partial \mathbf{V}^{(l)}}\|_F\approx 0$) for ReLU and LeakyReLU during training and hence the plots are empty.}

\label{fig:grad_1}
\end{figure}

\begin{figure}[h!]
\centering

\subfloat[ELU.]{\includegraphics[width=0.4\textwidth]{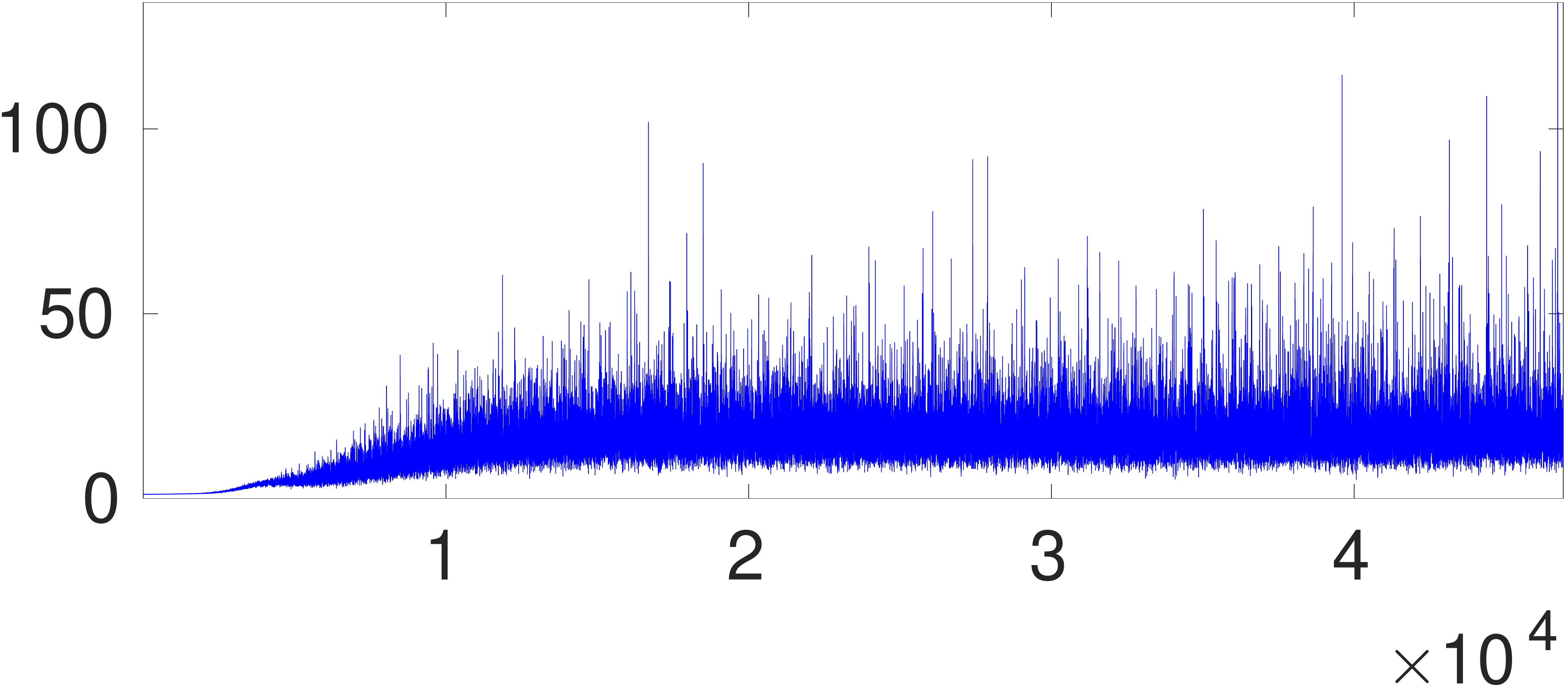}}
\hspace{0.5cm}
\subfloat[ELU-GPN.]{\includegraphics[width=0.4\textwidth]{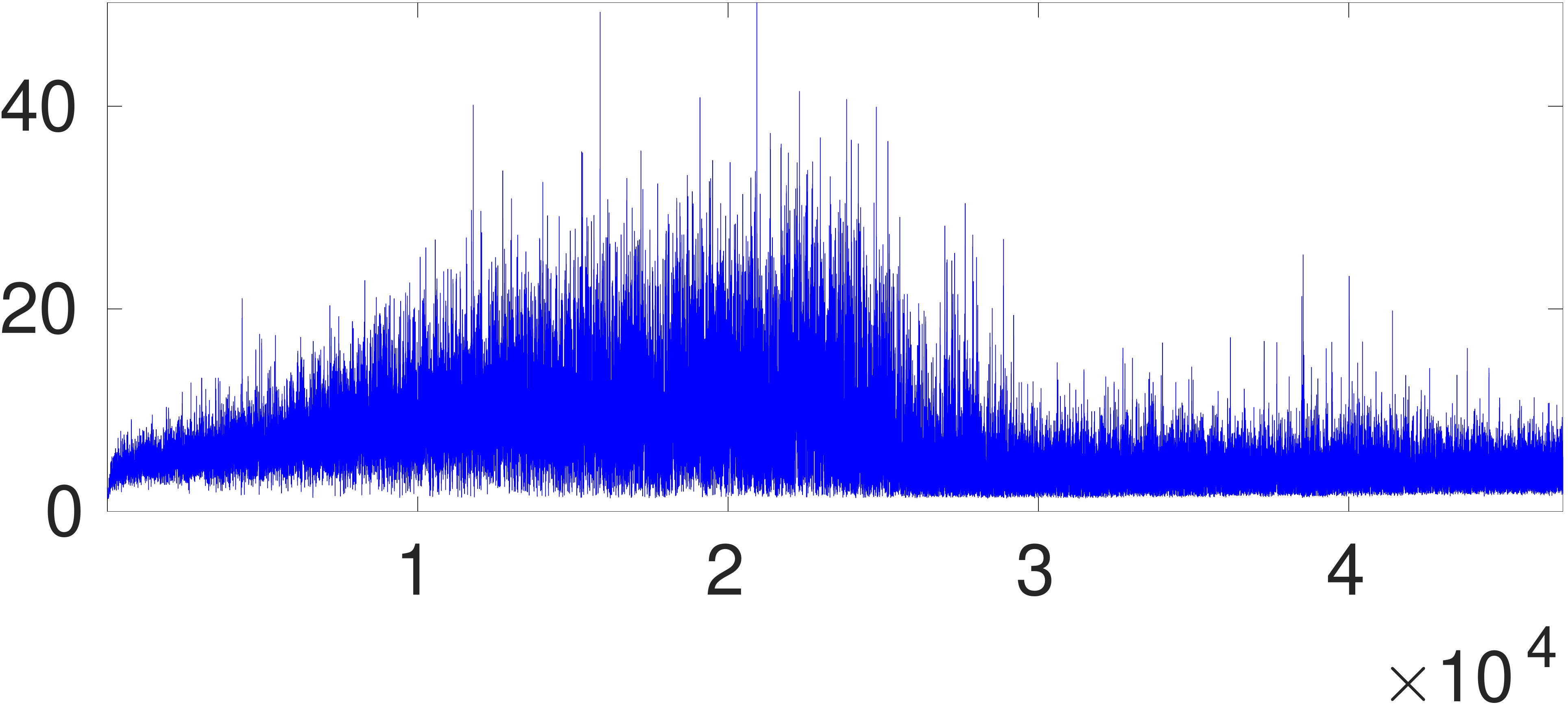}}

\subfloat[ELU-BN.]{\includegraphics[width=0.4\textwidth]{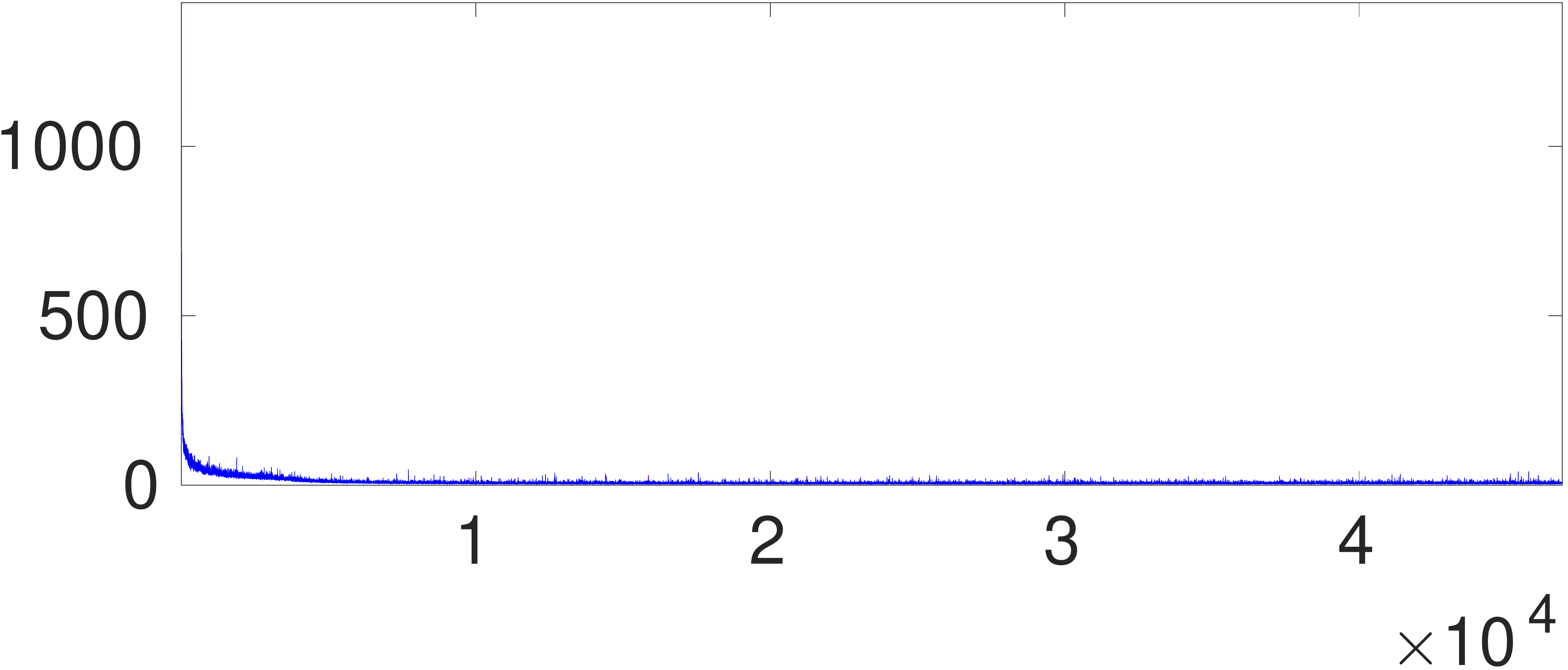}}
\hspace{0.5cm}
\subfloat[ELU-GPN-BN.]{\includegraphics[width=0.4\textwidth]{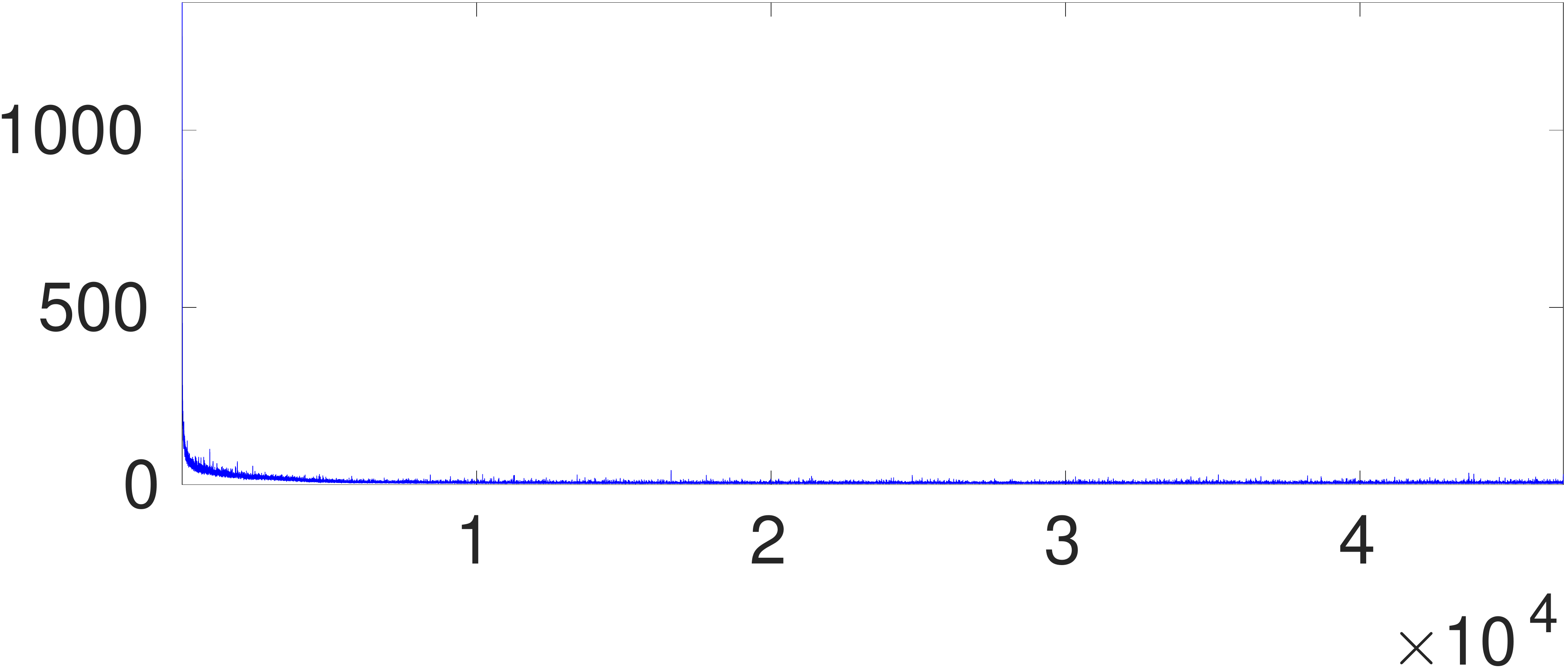}}

\subfloat[SELU.]{\includegraphics[width=0.4\textwidth]{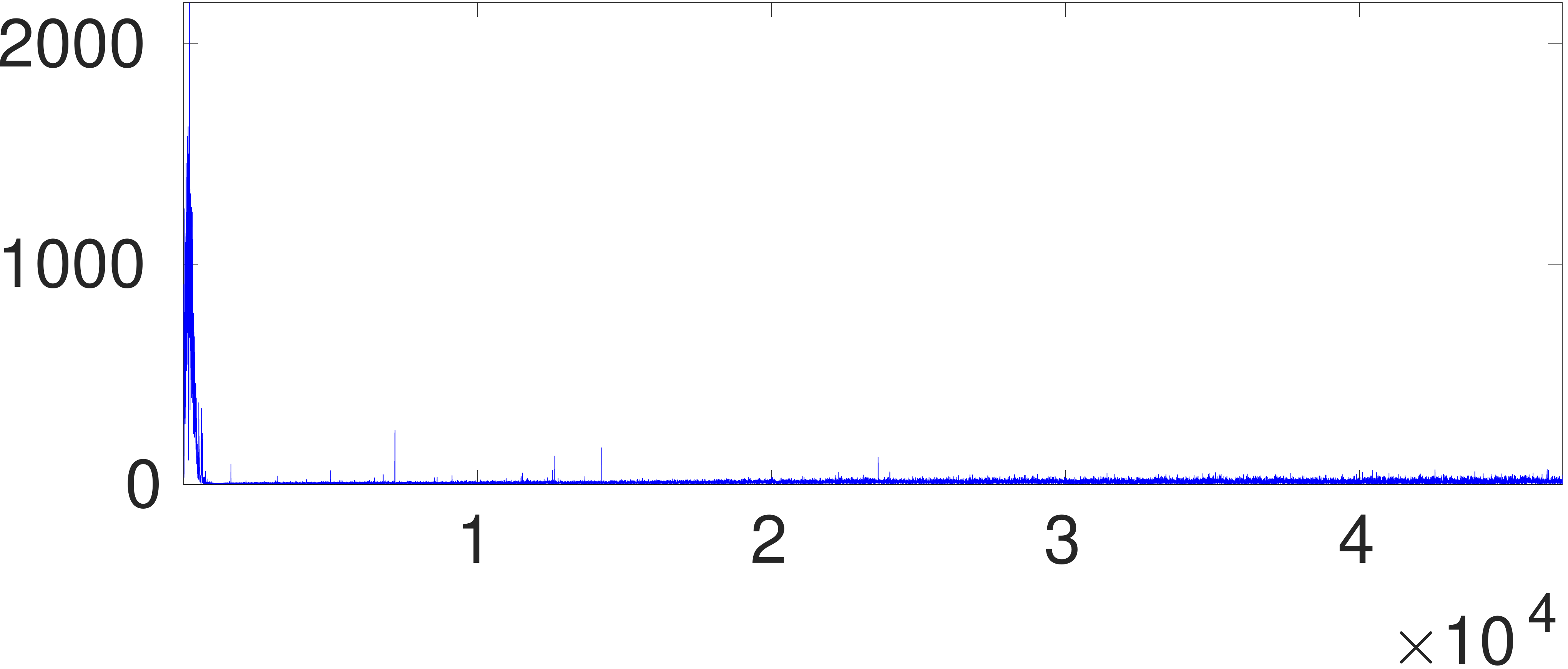}}
\hspace{0.5cm}
\subfloat[SELU-GPN.]{\includegraphics[width=0.4\textwidth]{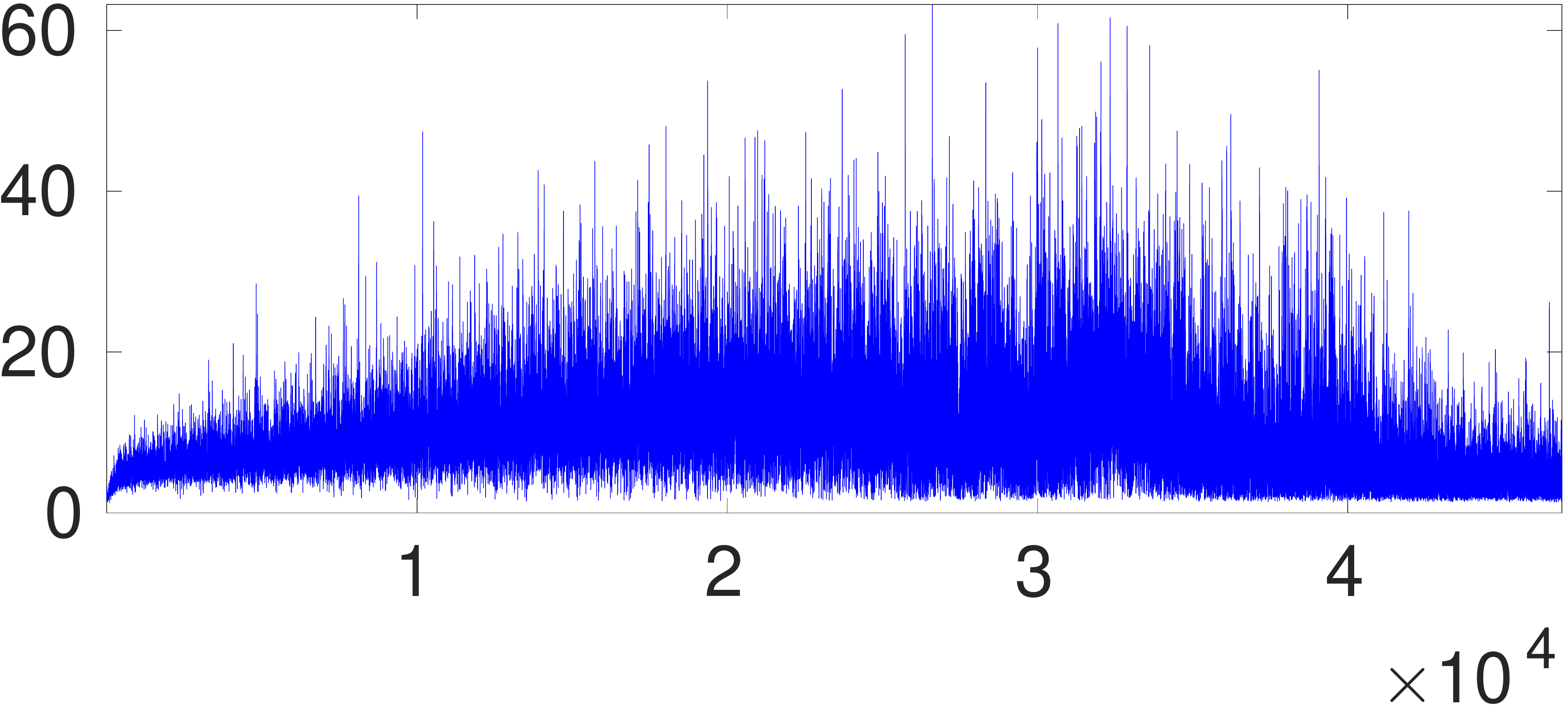}}

\subfloat[SELU-BN.]{\includegraphics[width=0.4\textwidth]{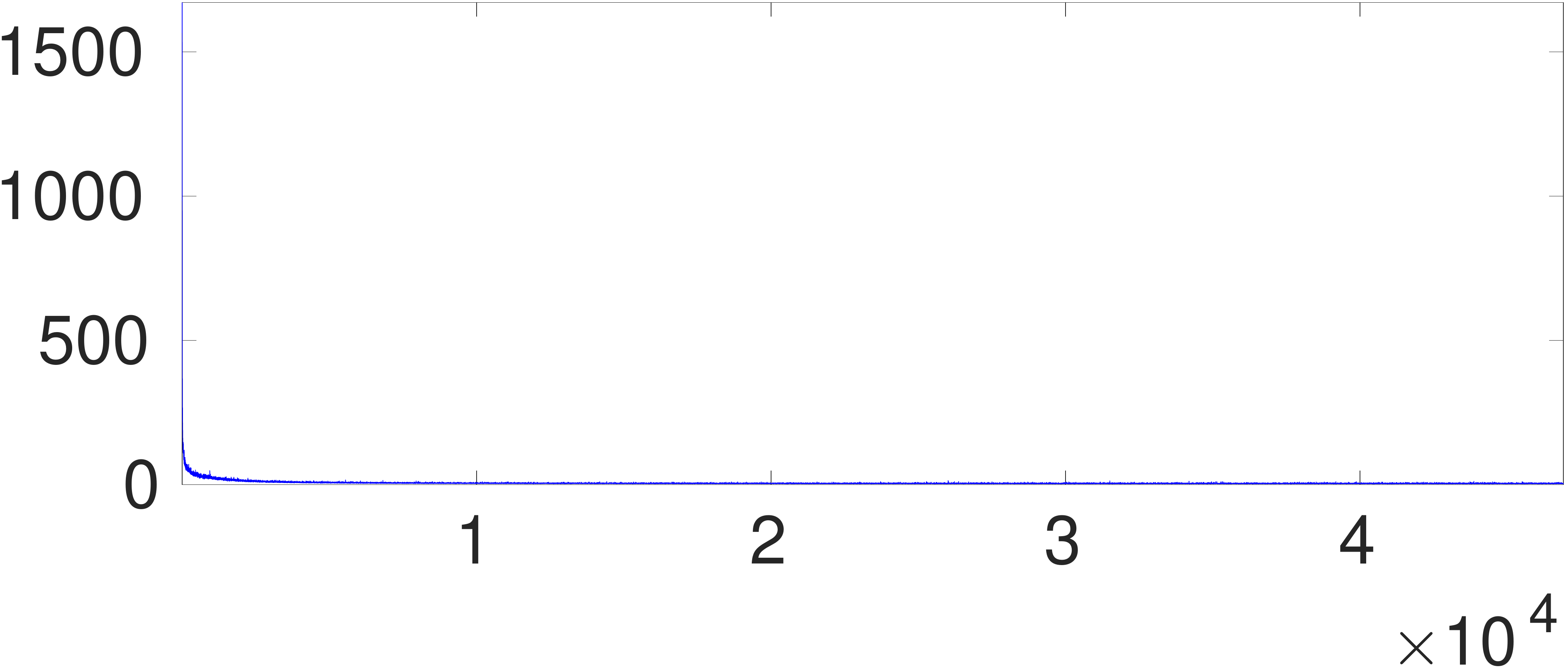}}
\hspace{0.5cm}
\subfloat[SELU-GPN-BN.]{\includegraphics[width=0.4\textwidth]{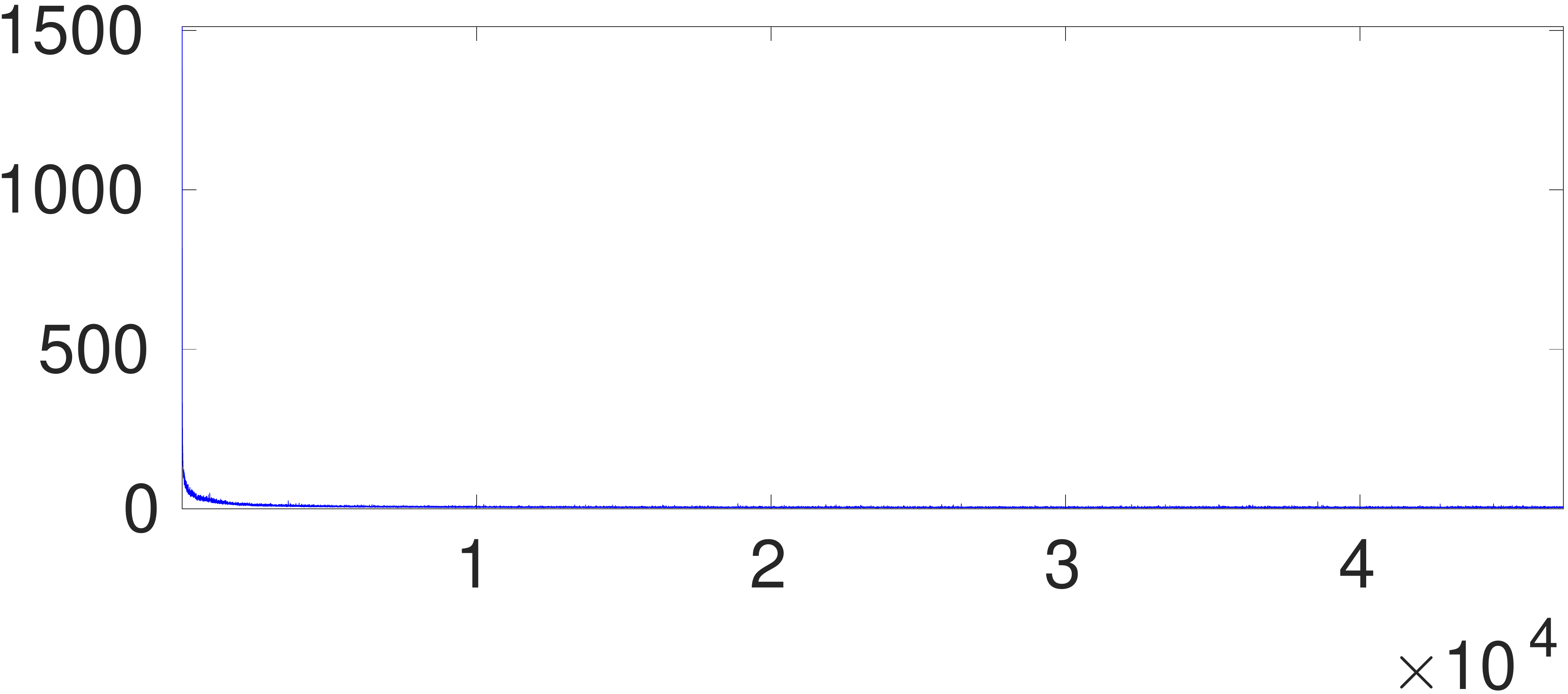}}

\subfloat[GELU.]{\includegraphics[width=0.4\textwidth]{img/empty_mnist_grad.pdf}}
\hspace{0.5cm}
\subfloat[GELU-GPN.]{\includegraphics[width=0.4\textwidth]{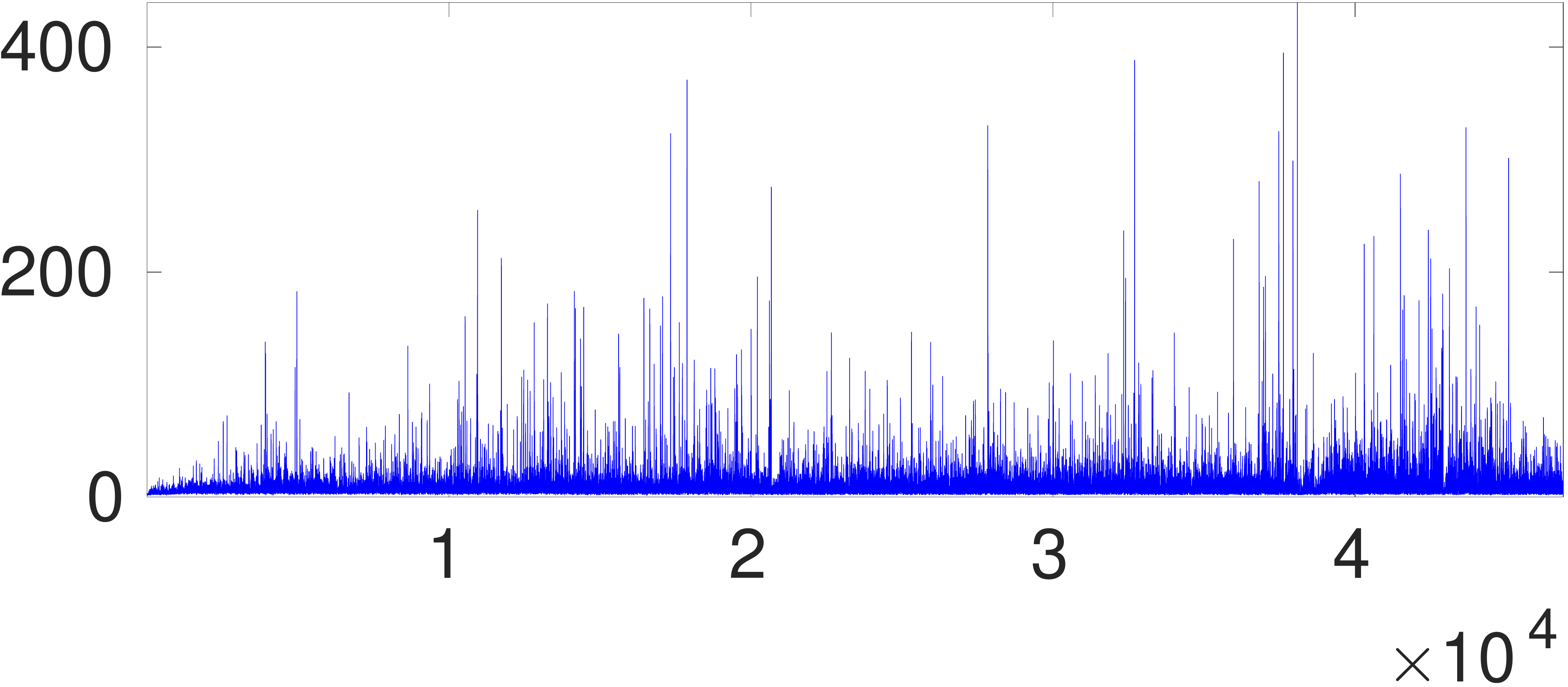}}

\subfloat[GELU-BN.]{\includegraphics[width=0.4\textwidth]{img/empty_mnist_grad.pdf}}
\hspace{0.5cm}
\subfloat[GELU-GPN-BN.]{\includegraphics[width=0.4\textwidth]{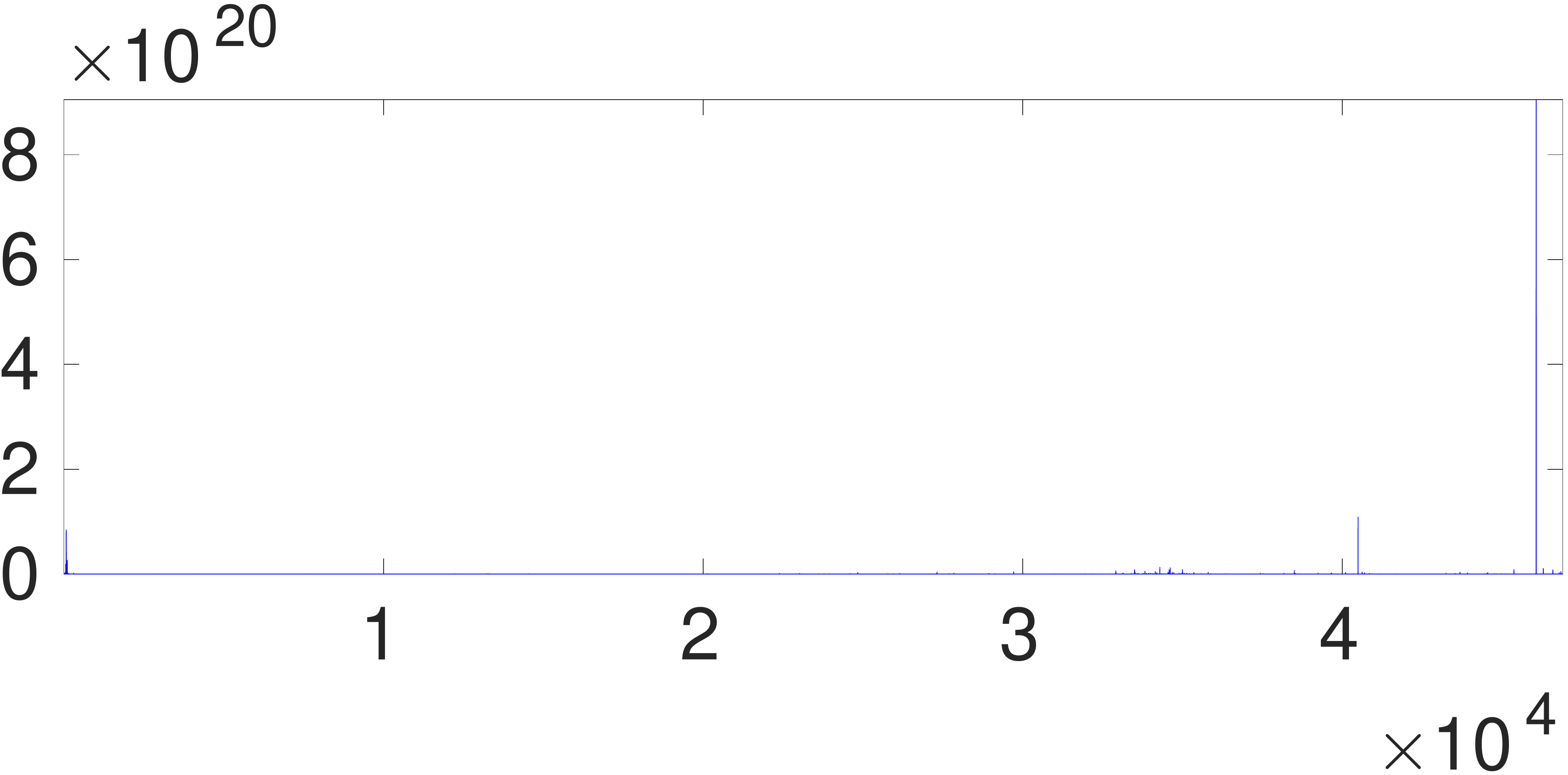}}

\vspace{0.5cm}
\caption{Gradient norm ratio during training on MNIST. Horizontal axis denotes the mini-batch updates. Vertical axis denotes the gradient norm ratio $\max_l\|\frac{\partial E}{\partial \mathbf{V}^{(l)}}\|_F / \min_l\|\frac{\partial E}{\partial \mathbf{V}^{(l)}}\|_F$. The gradient vanishes ($\|\frac{\partial E}{\partial \mathbf{V}^{(l)}}\|_F\approx 0$) for GELU and GELU-BN during training and hence the plots are empty.}
\label{fig:grad_2}

\end{figure}

\begin{figure}[h!]
\centering
\subfloat[Tanh.]{\includegraphics[width=0.4\textwidth]{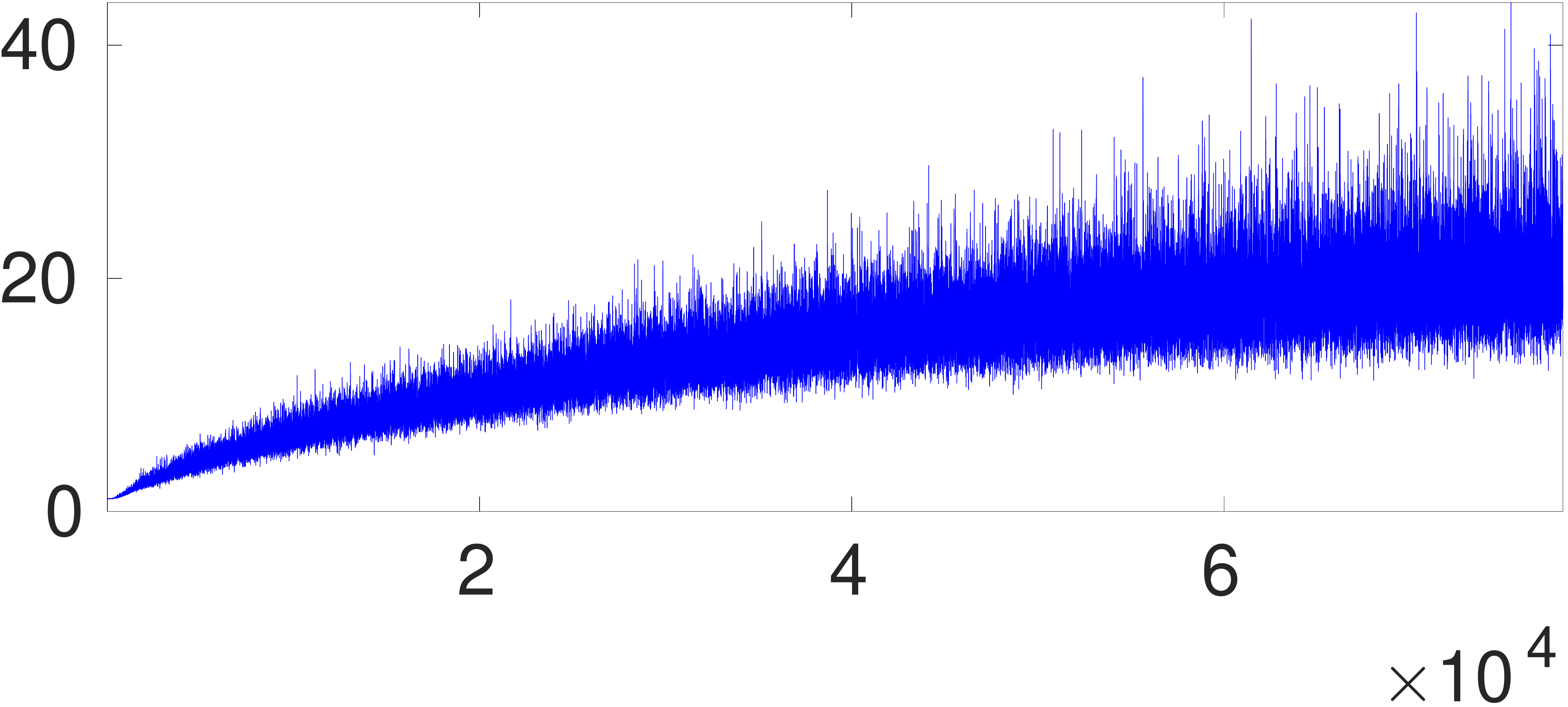}}
\hspace{0.5cm}
\subfloat[Tanh-GPN.]{\includegraphics[width=0.4\textwidth]{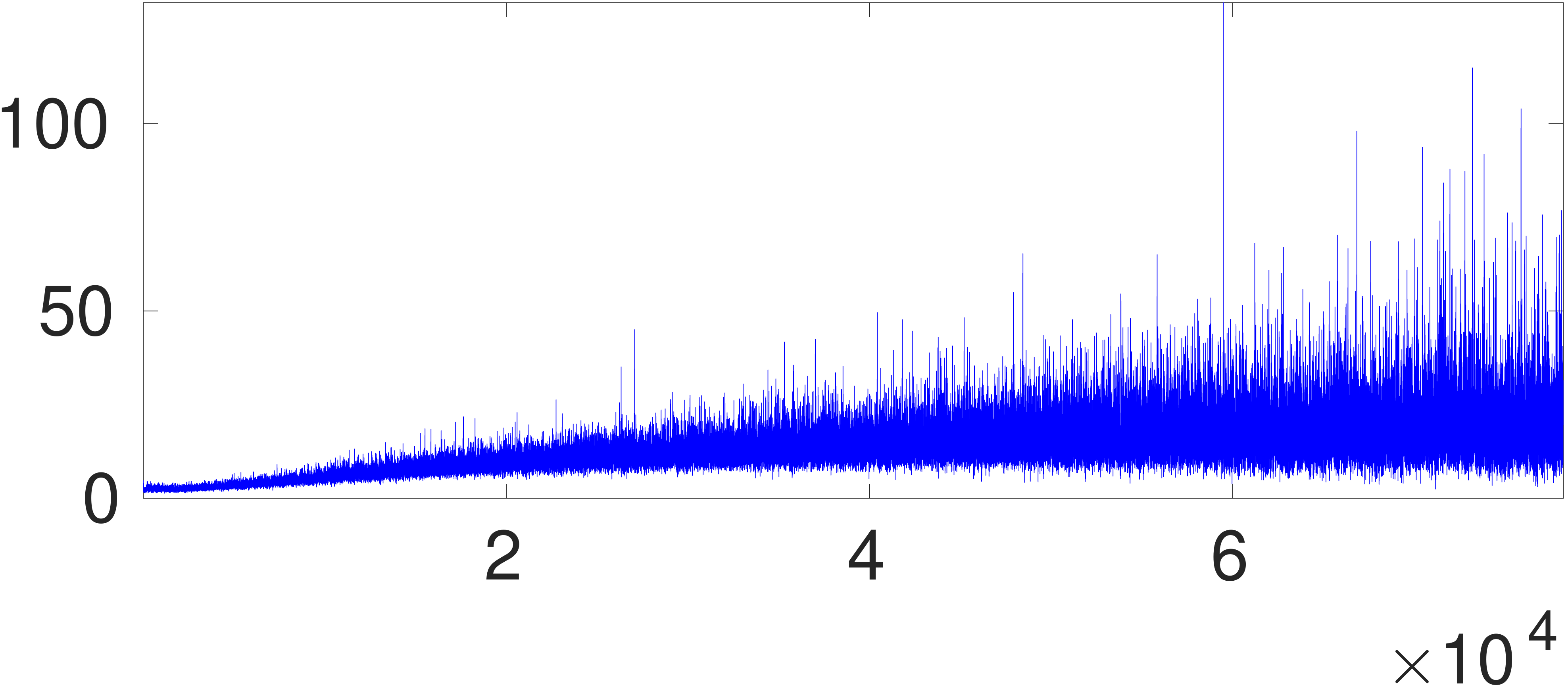}}

\subfloat[Tanh-BN.]{\includegraphics[width=0.4\textwidth]{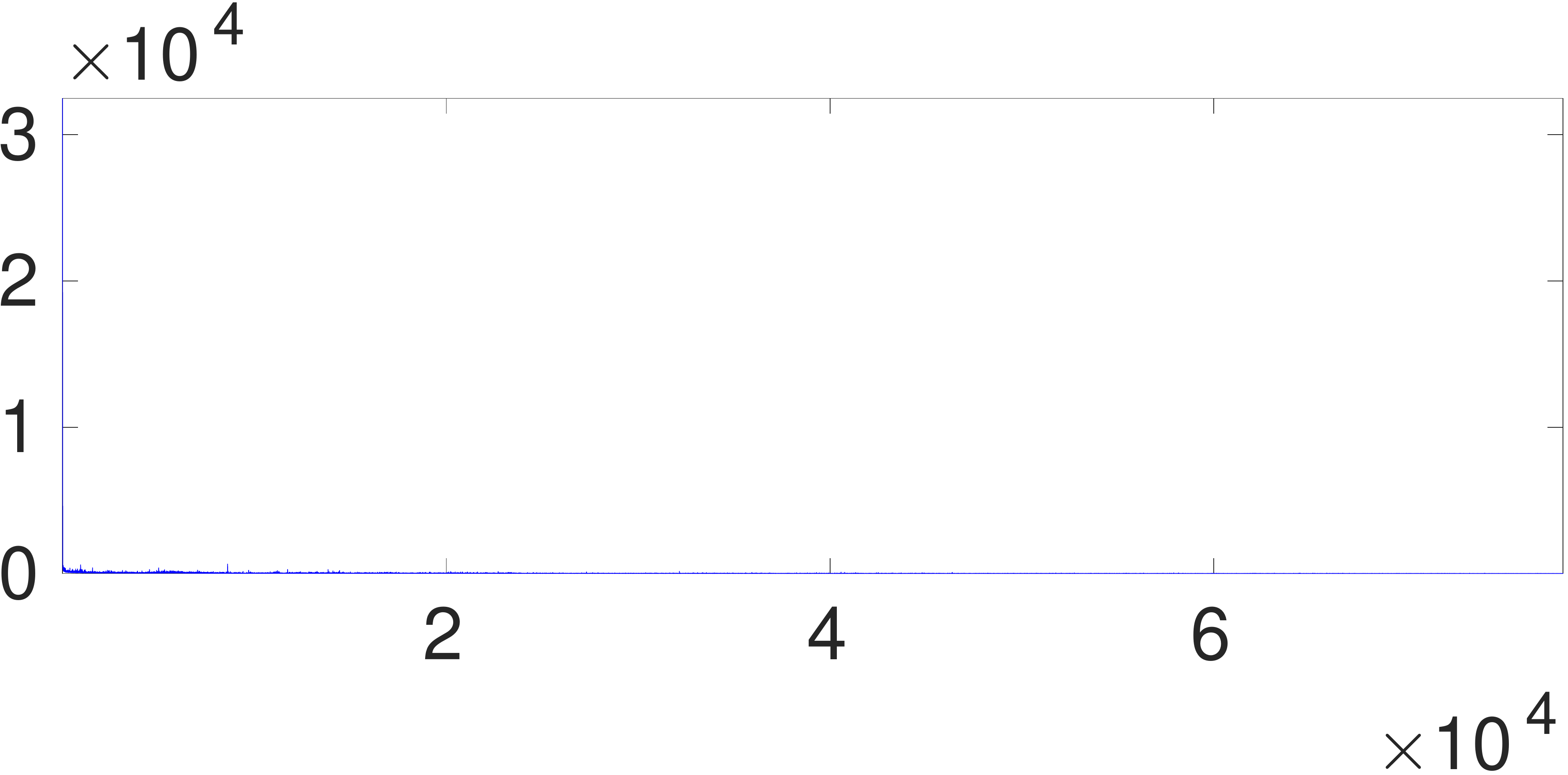}}
\hspace{0.5cm}
\subfloat[Tanh-GPN-BN.]{\includegraphics[width=0.4\textwidth]{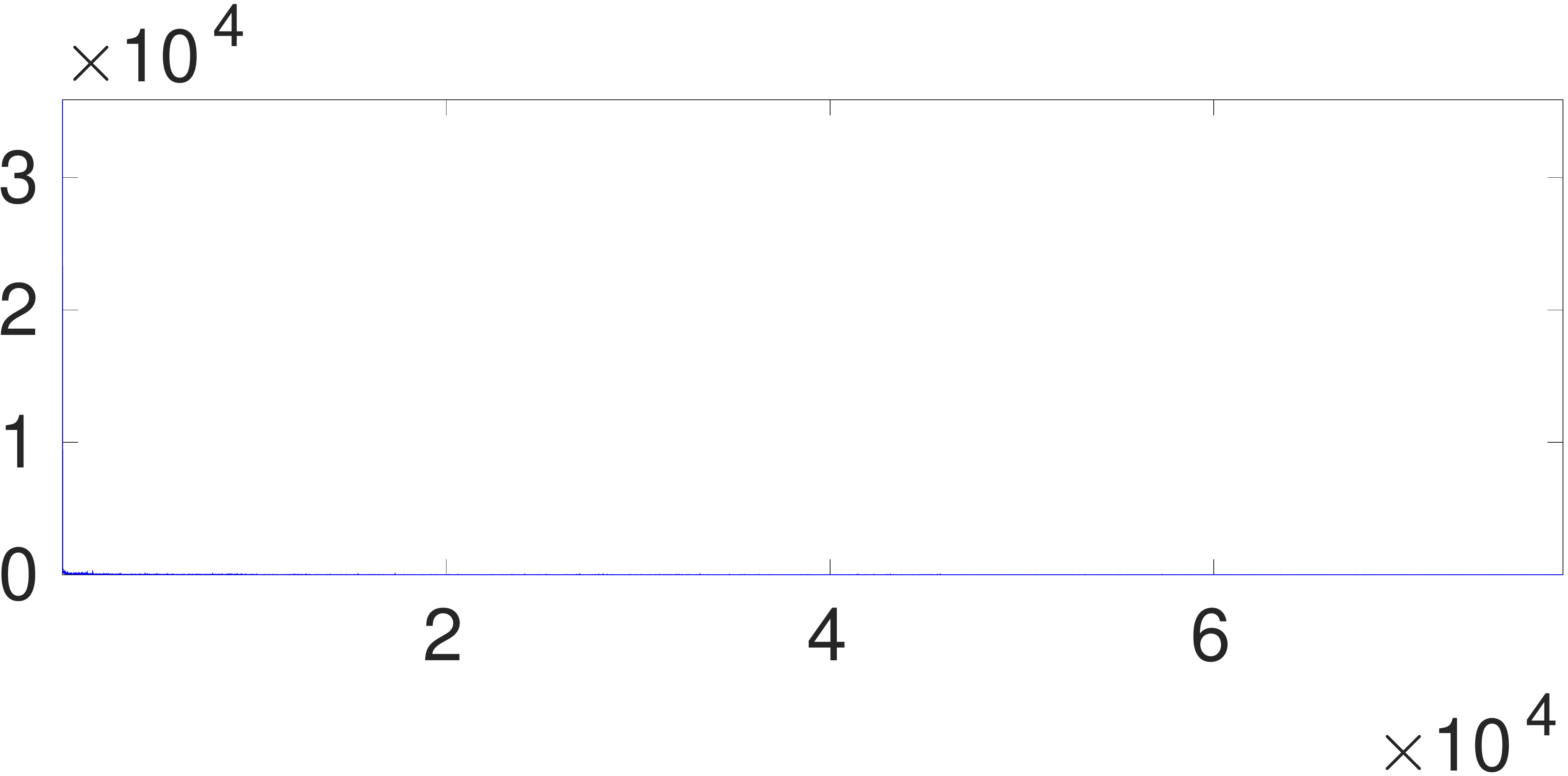}}

\subfloat[ReLU.]{\includegraphics[width=0.4\textwidth]{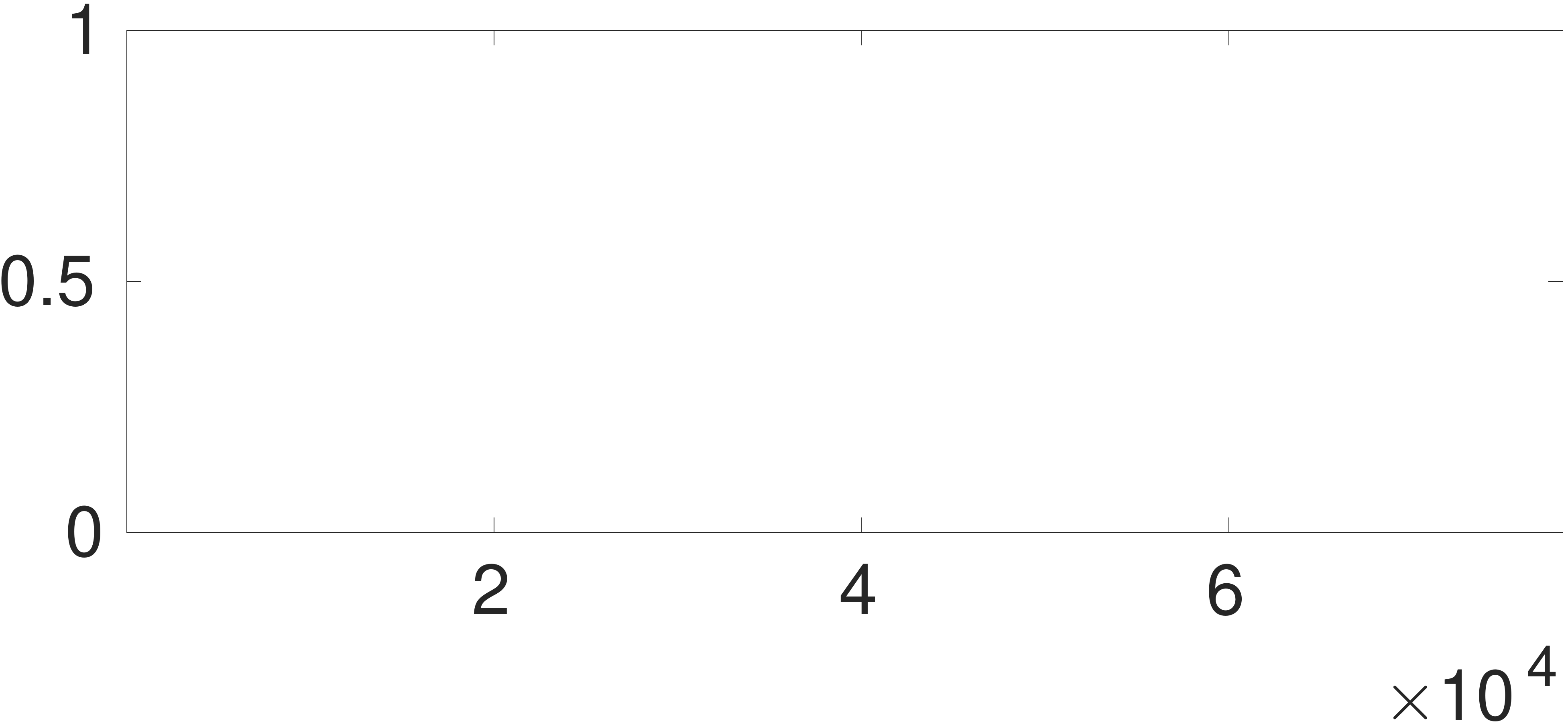}}
\hspace{0.5cm}
\subfloat[ReLU-GPN.]{\includegraphics[width=0.4\textwidth]{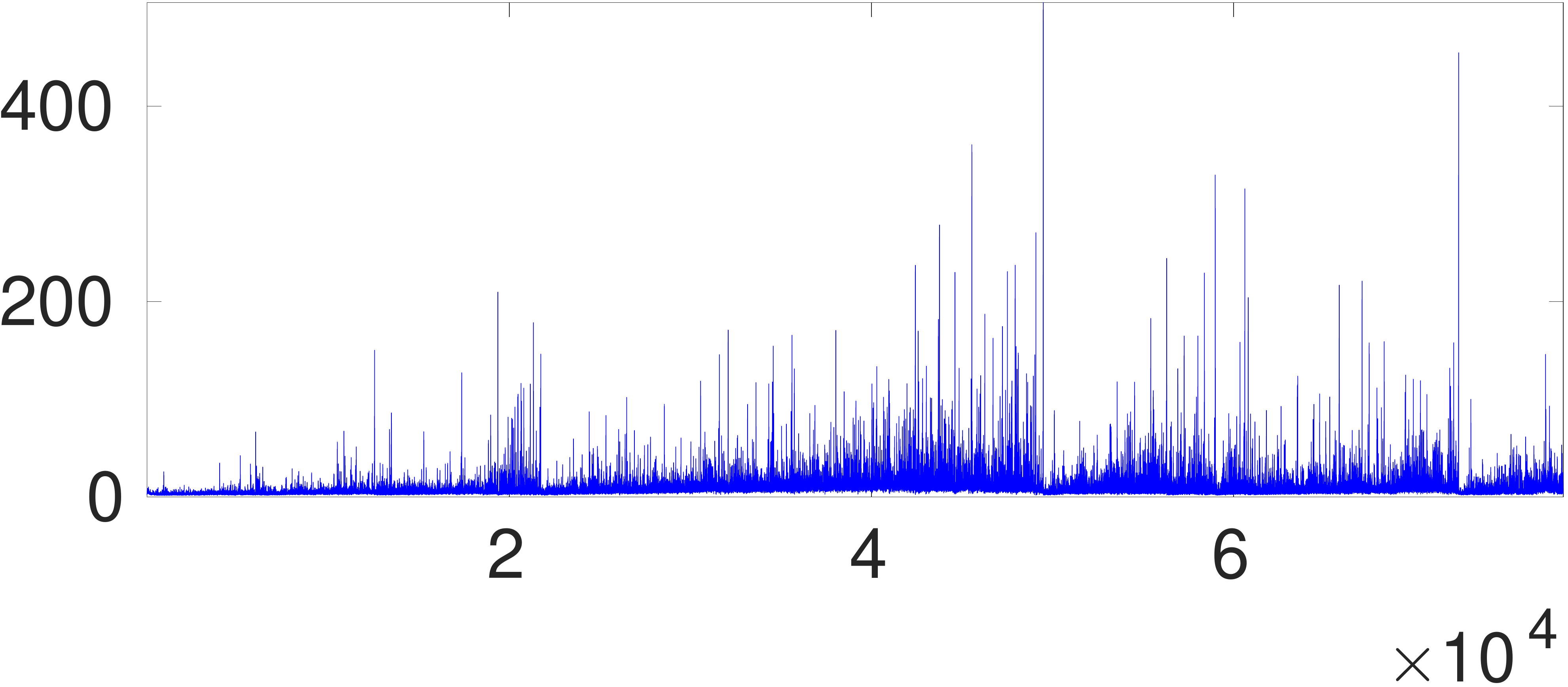}}

\subfloat[ReLU-BN.]{\includegraphics[width=0.4\textwidth]{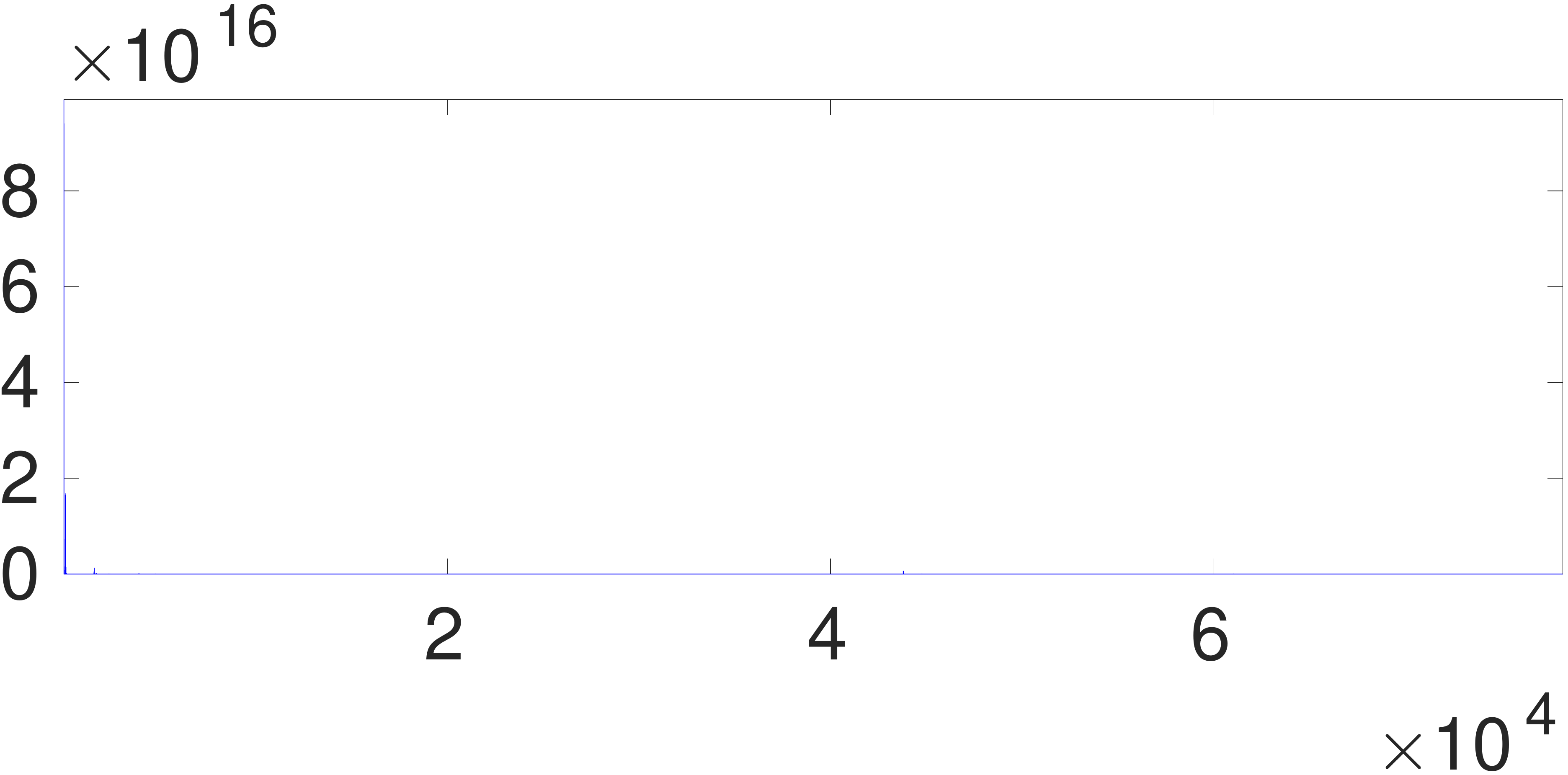}}
\hspace{0.5cm}
\subfloat[ReLU-GPN-BN.]{\includegraphics[width=0.4\textwidth]{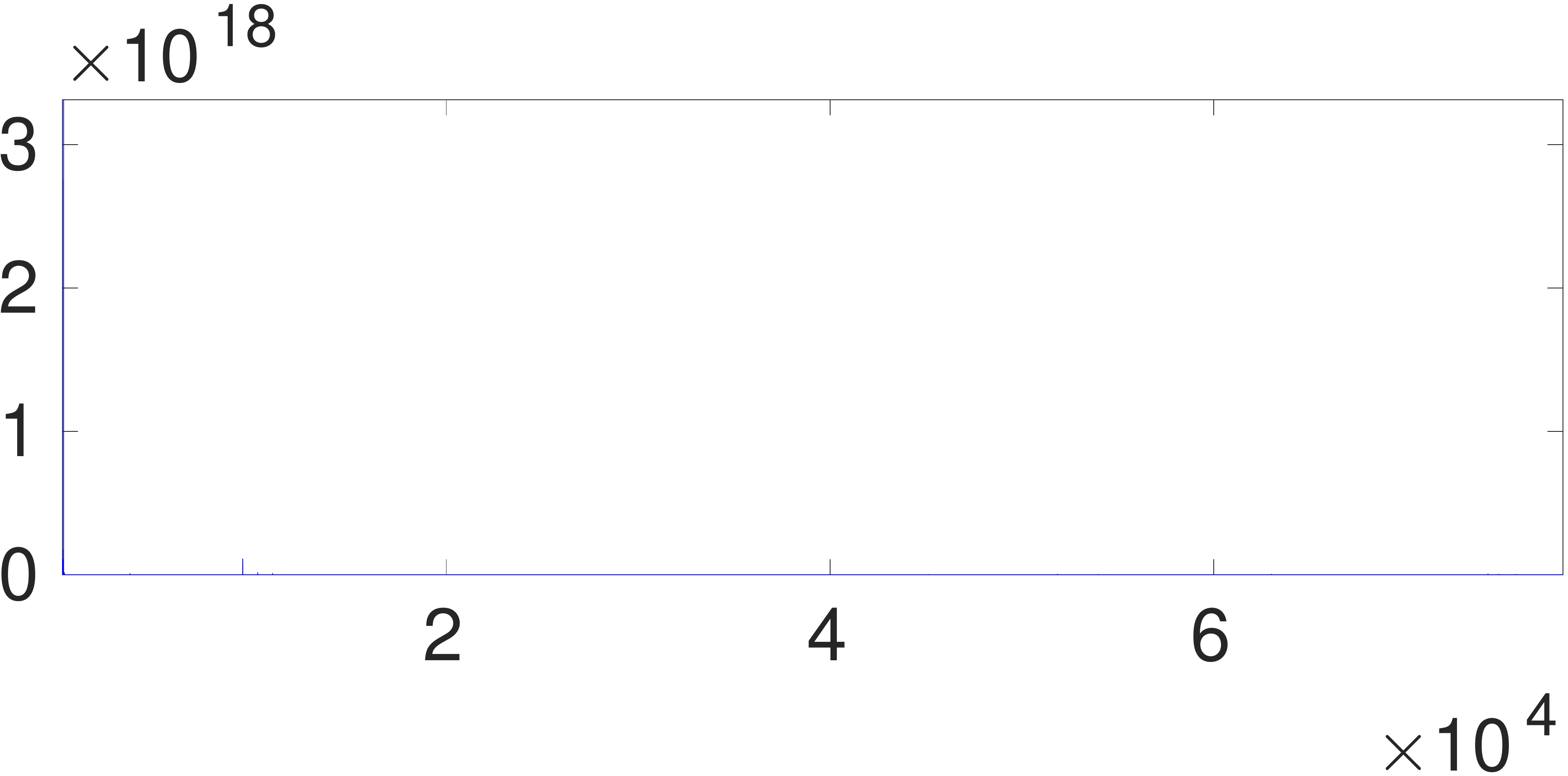}}

\subfloat[LeakyReLU.]{\includegraphics[width=0.4\textwidth]{img/empty_cifar10_grad.pdf}}
\hspace{0.5cm}
\subfloat[LeakyReLU-GPN.]{\includegraphics[width=0.4\textwidth]{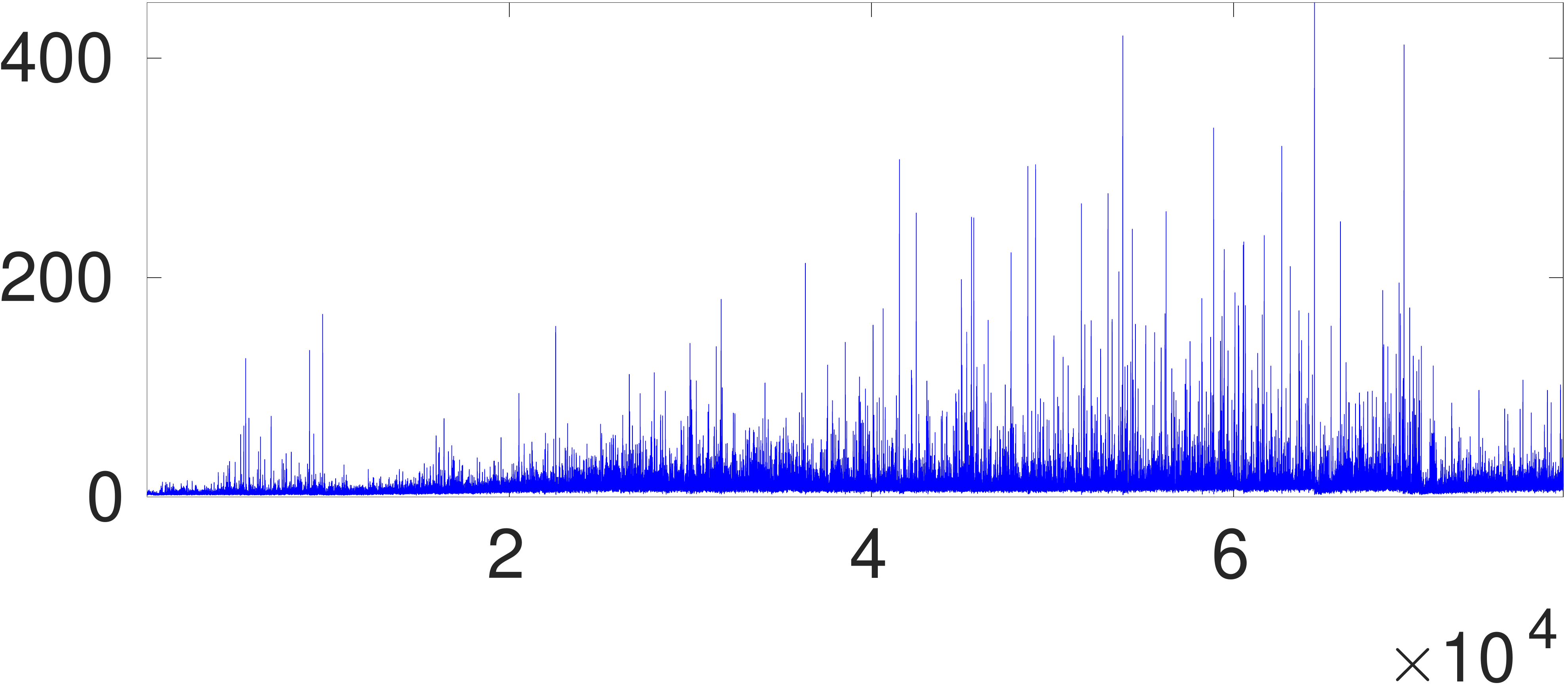}}

\subfloat[LeakyReLU-BN.]{\includegraphics[width=0.4\textwidth]{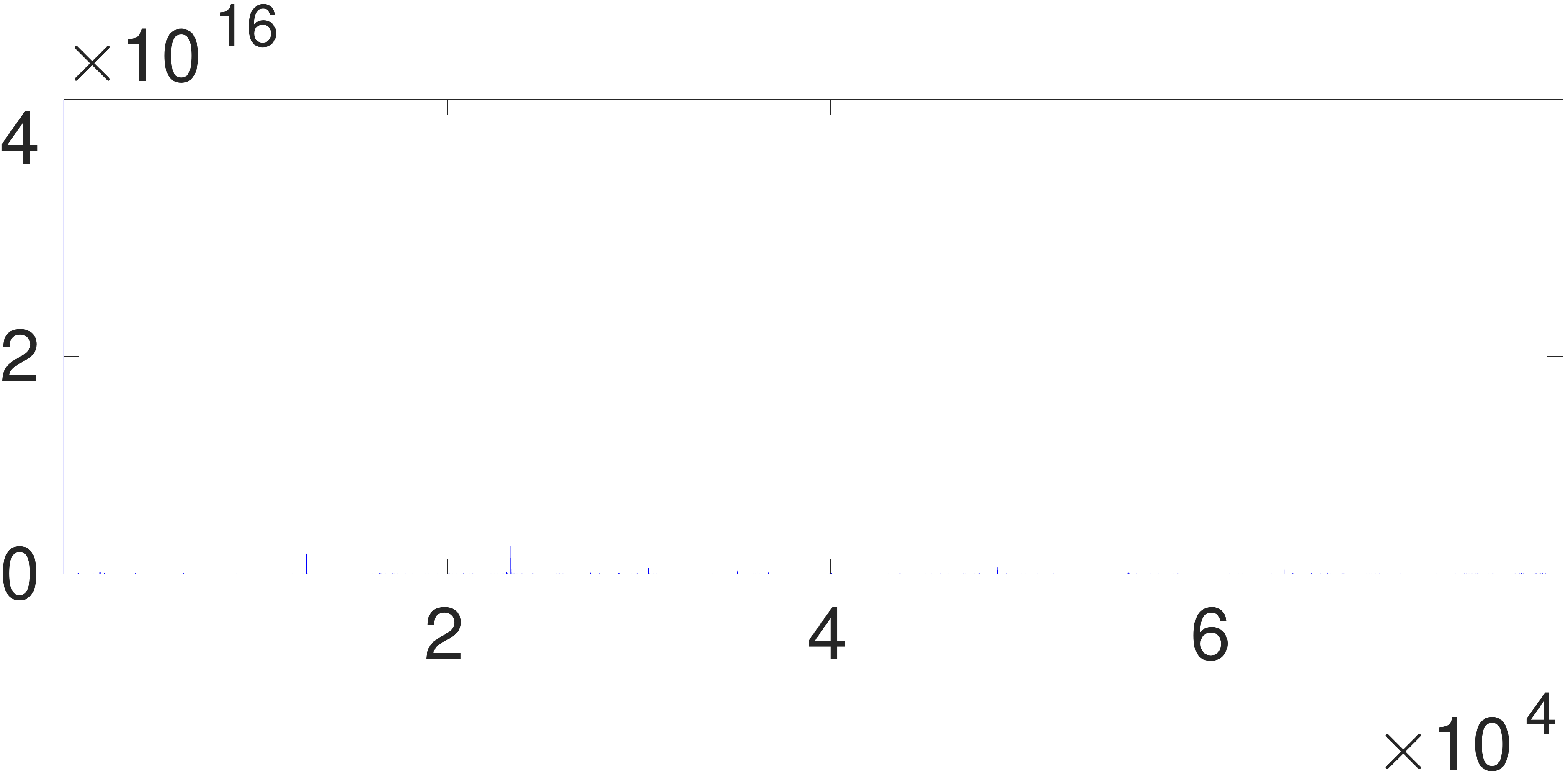}}
\hspace{0.5cm}
\subfloat[LeakyReLU-GPN-BN.]{\includegraphics[width=0.4\textwidth]{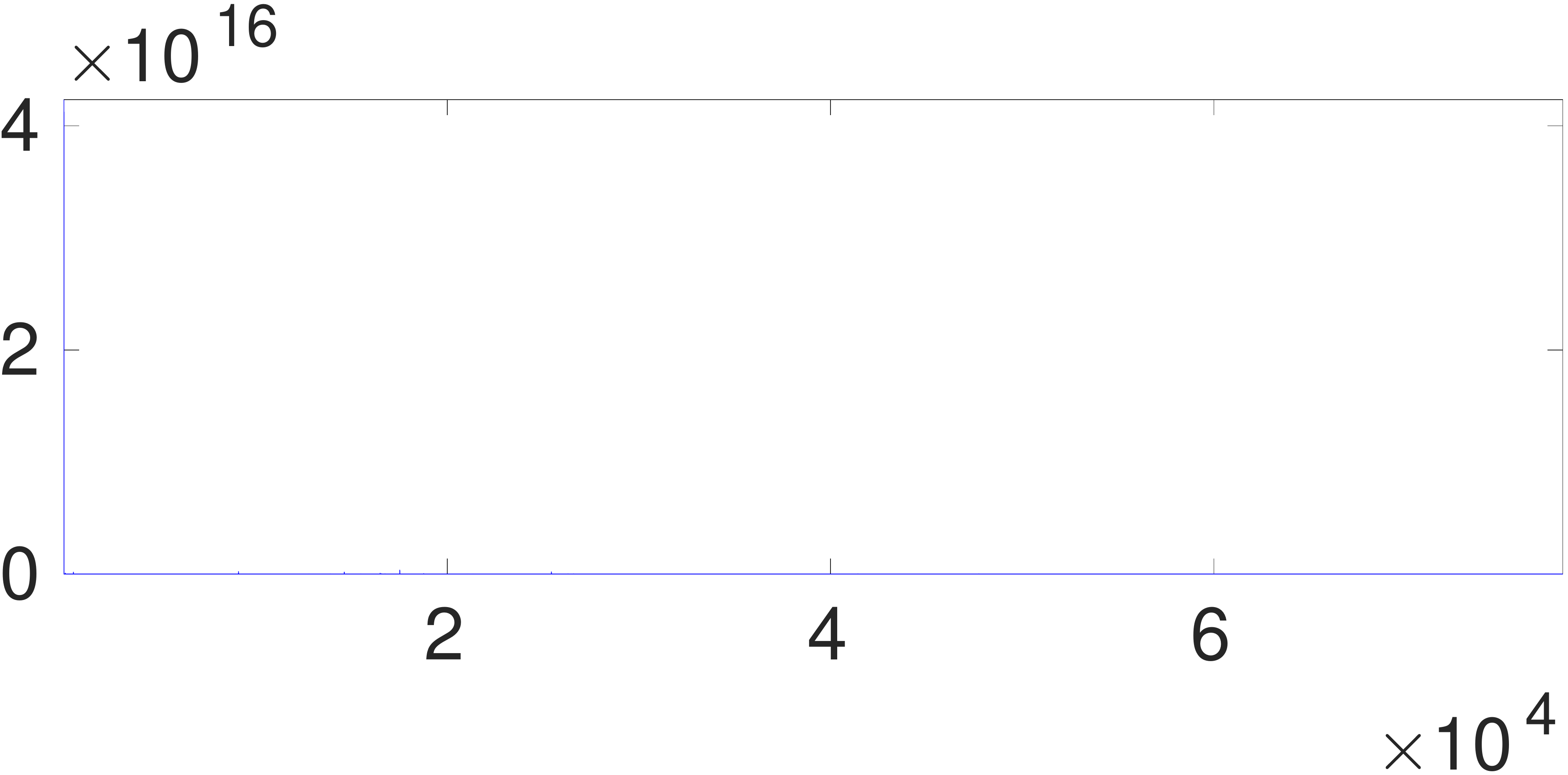}}

\vspace{0.5cm}
\caption{Gradient norm ratio during training on CIFAR-10. Horizontal axis denotes the mini-batch updates. Vertical axis denotes the gradient norm ratio $\max_l\|\frac{\partial E}{\partial \mathbf{V}^{(l)}}\|_F / \min_l\|\frac{\partial E}{\partial \mathbf{V}^{(l)}}\|_F$. The gradient vanishes ($\|\frac{\partial E}{\partial \mathbf{V}^{(l)}}\|_F\approx 0$) for ReLU and LeakyReLU during training and hence the plots are empty.}
\label{fig:grad_3}

\end{figure}

\begin{figure}[h!]
\centering

\subfloat[ELU.]{\includegraphics[width=0.4\textwidth]{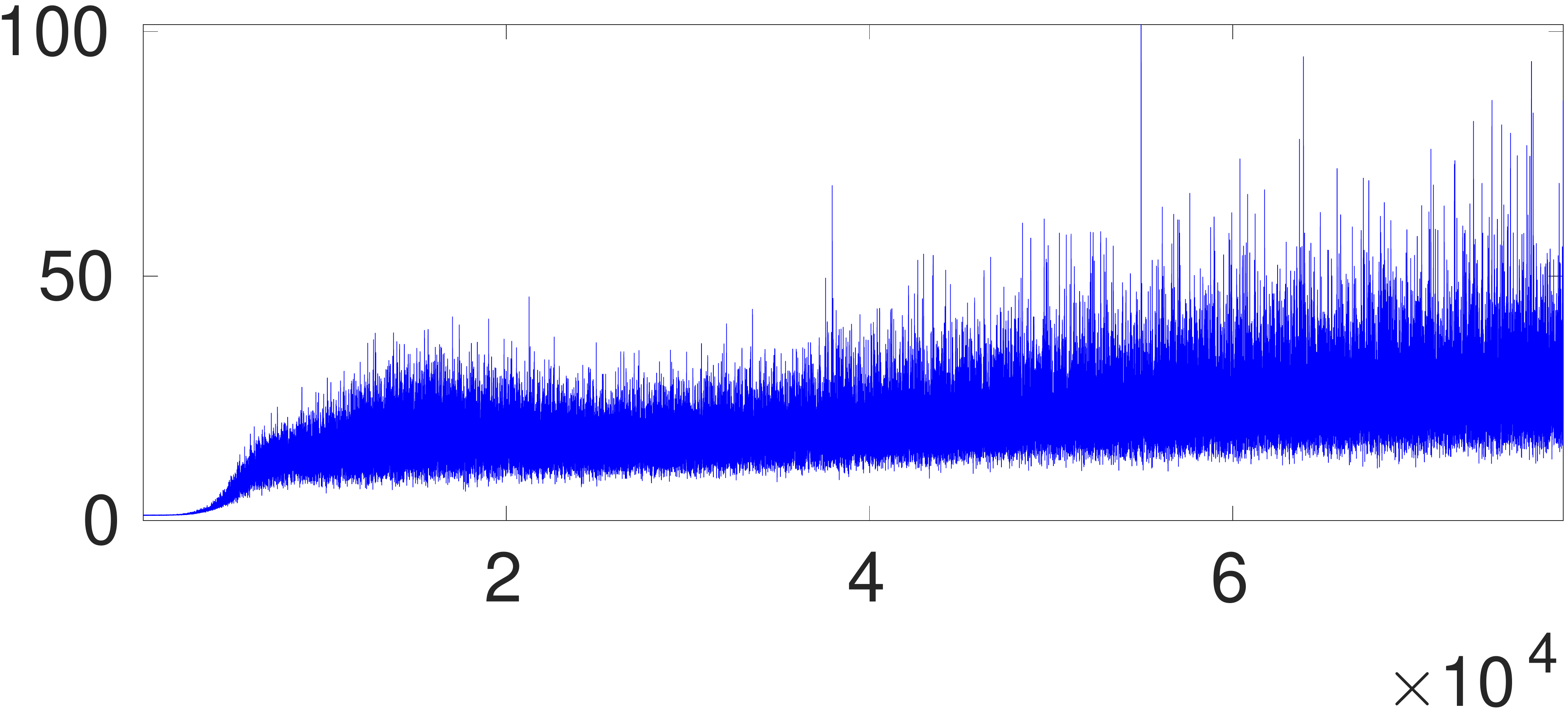}}
\hspace{0.5cm}
\subfloat[ELU-GPN.]{\includegraphics[width=0.4\textwidth]{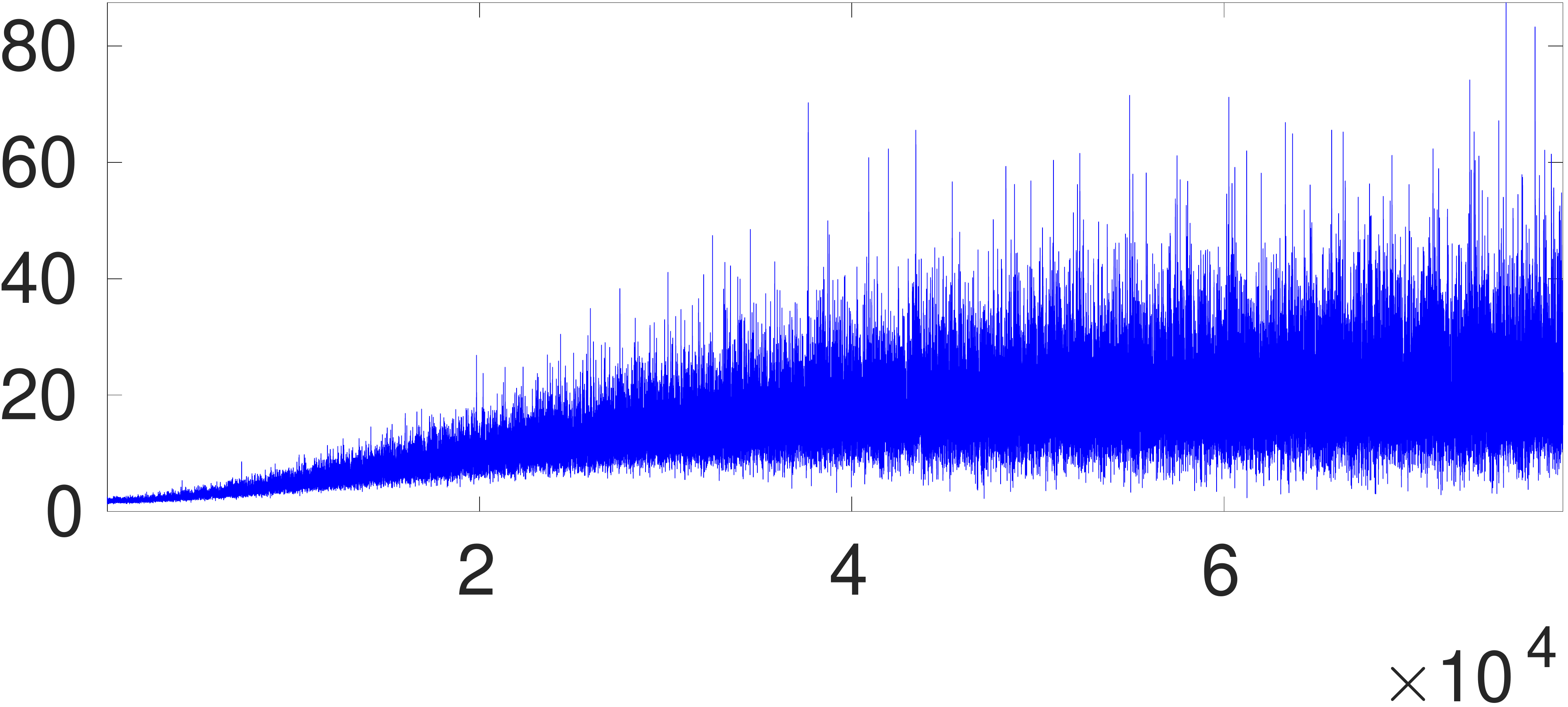}}

\subfloat[ELU-BN.]{\includegraphics[width=0.4\textwidth]{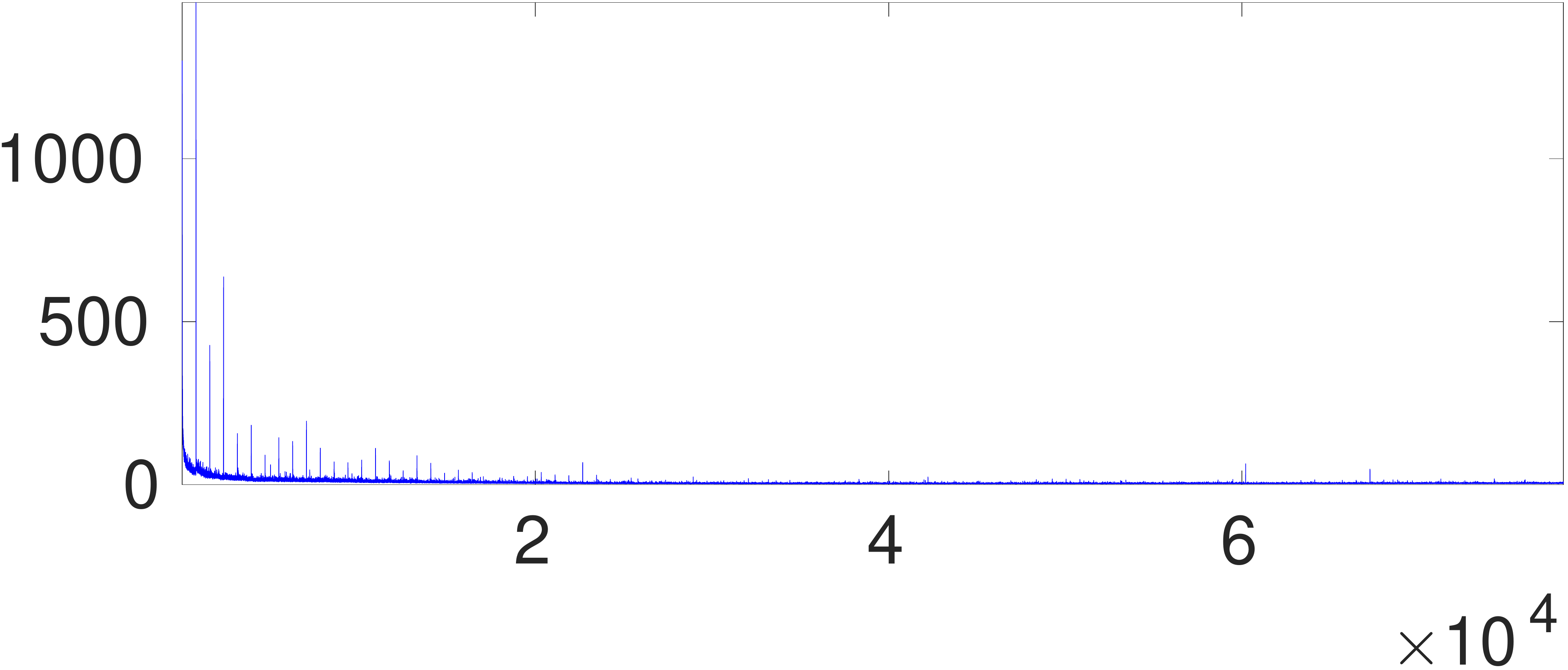}}
\hspace{0.5cm}
\subfloat[ELU-GPN-BN.]{\includegraphics[width=0.4\textwidth]{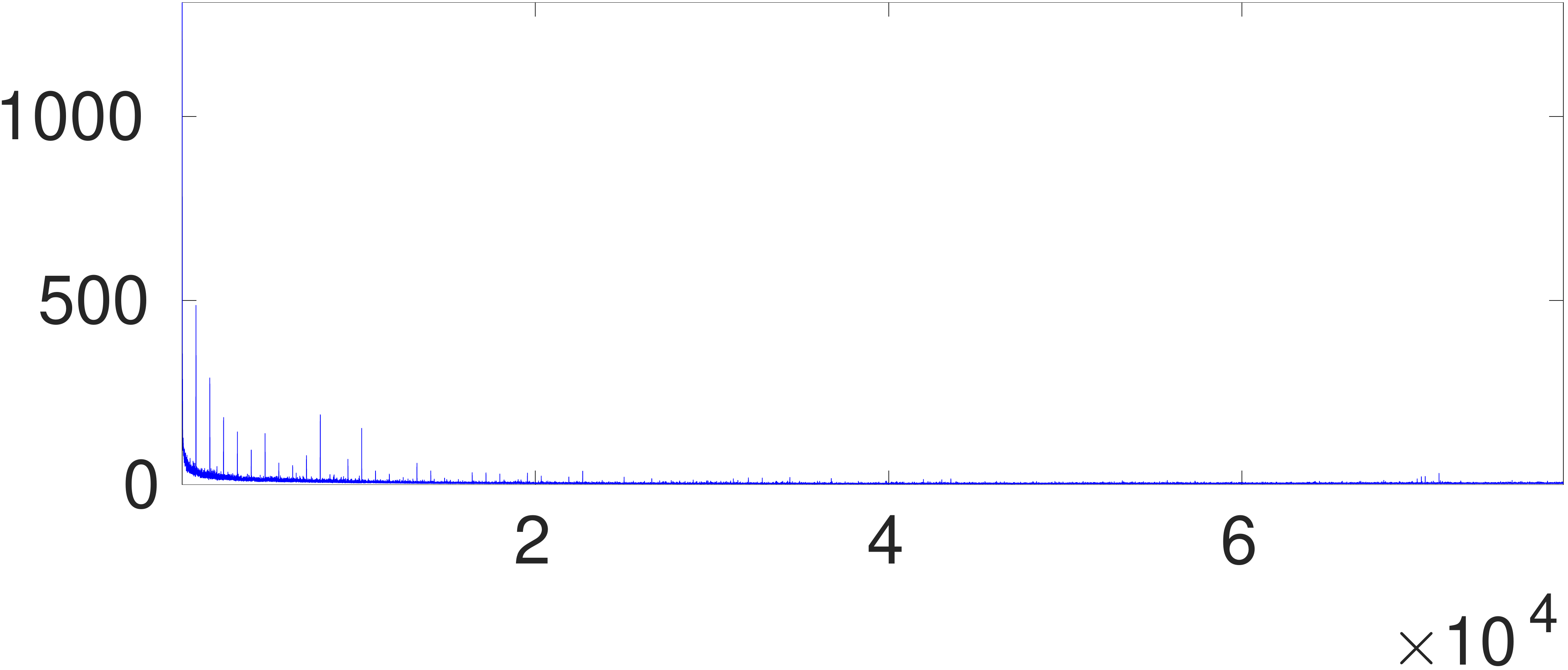}}

\subfloat[SELU.]{\includegraphics[width=0.4\textwidth]{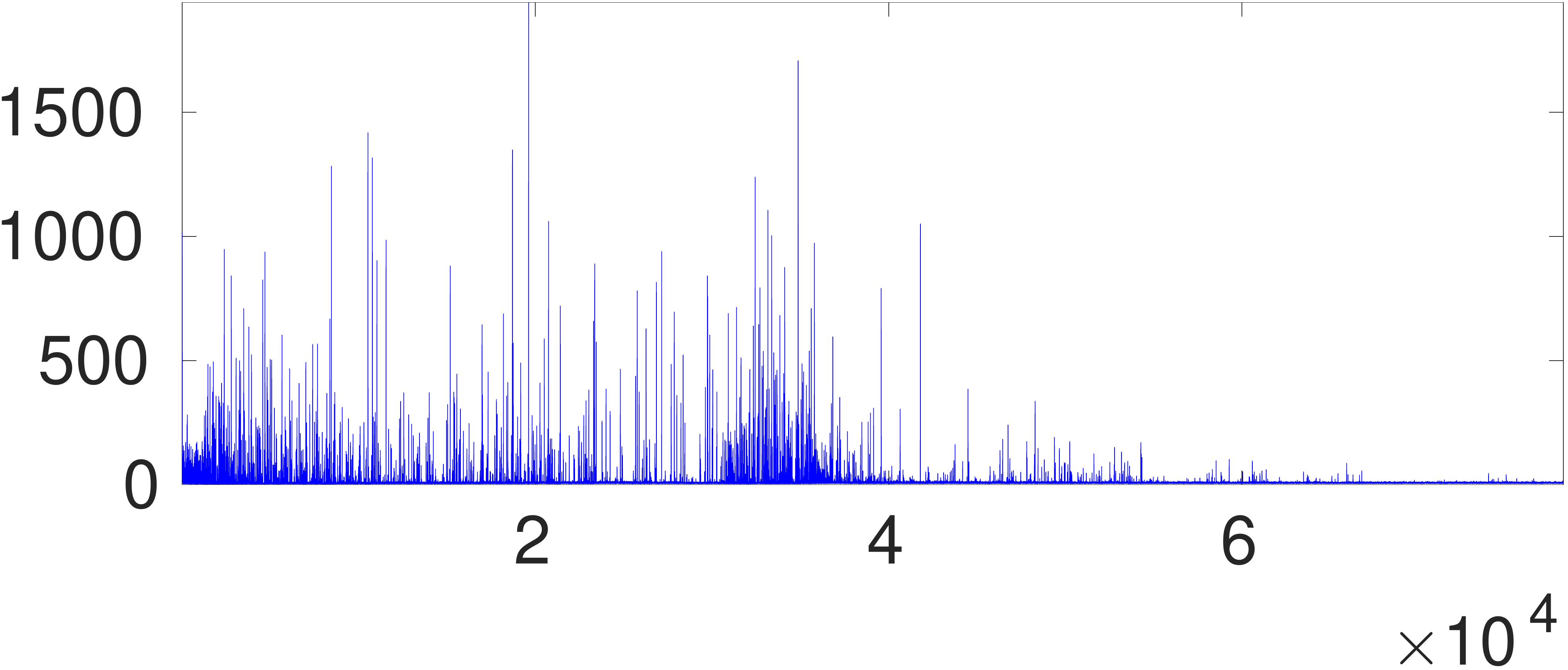}}
\hspace{0.5cm}
\subfloat[SELU-GPN.]{\includegraphics[width=0.4\textwidth]{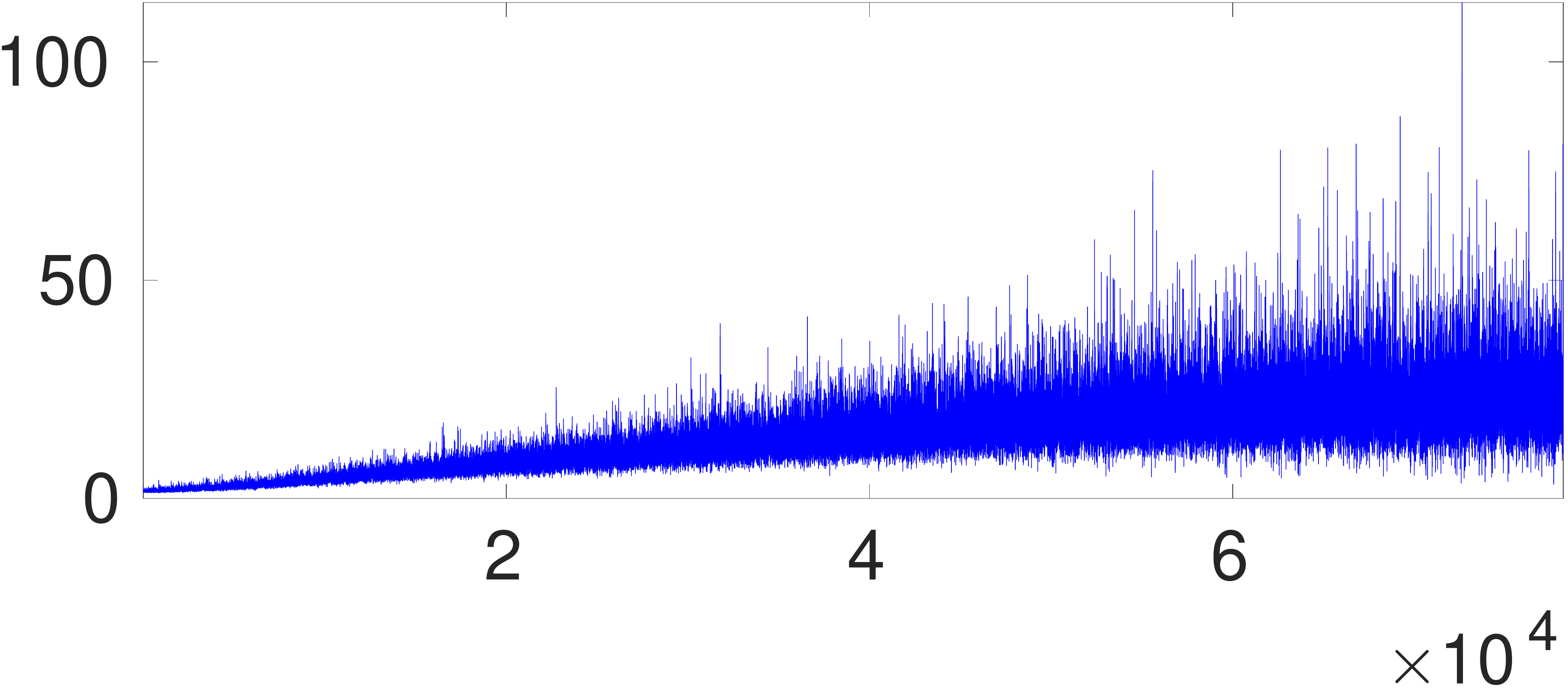}}

\subfloat[SELU-BN.]{\includegraphics[width=0.4\textwidth]{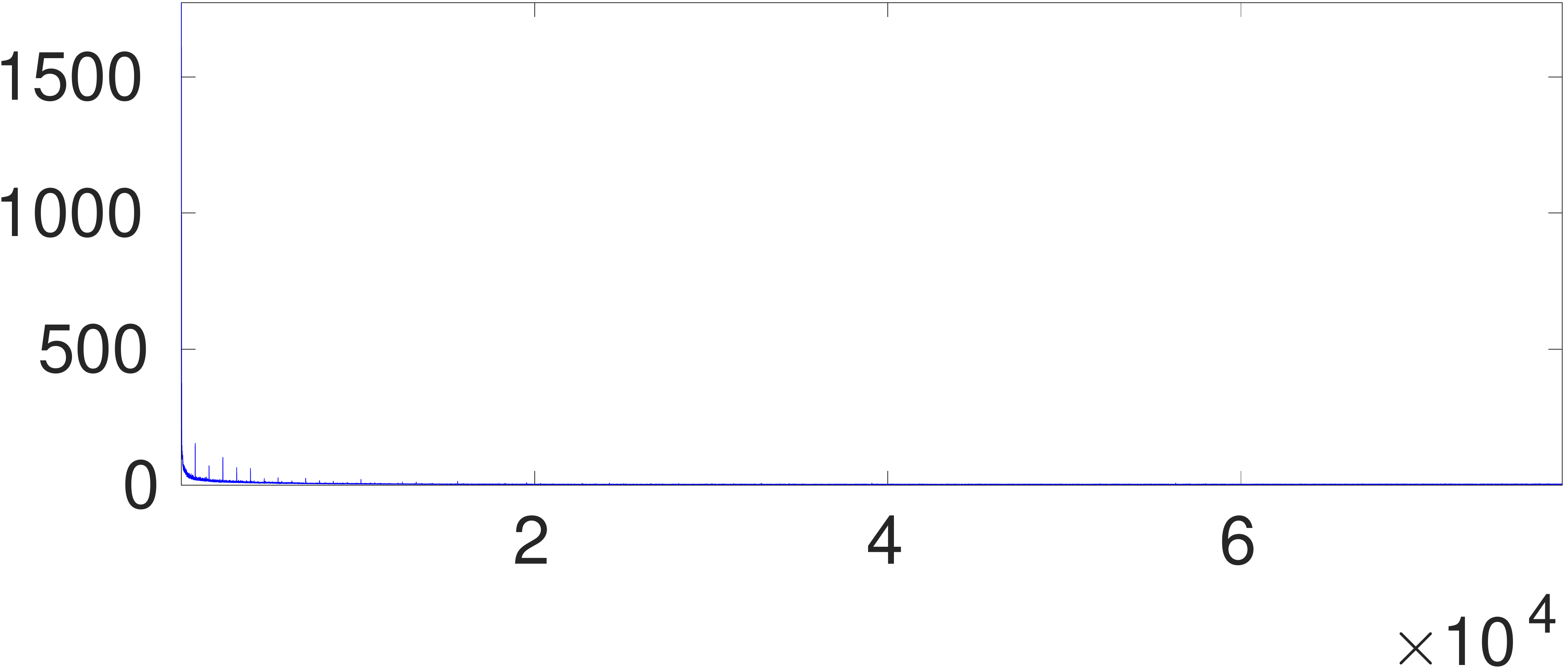}}
\hspace{0.5cm}
\subfloat[SELU-GPN-BN.]{\includegraphics[width=0.4\textwidth]{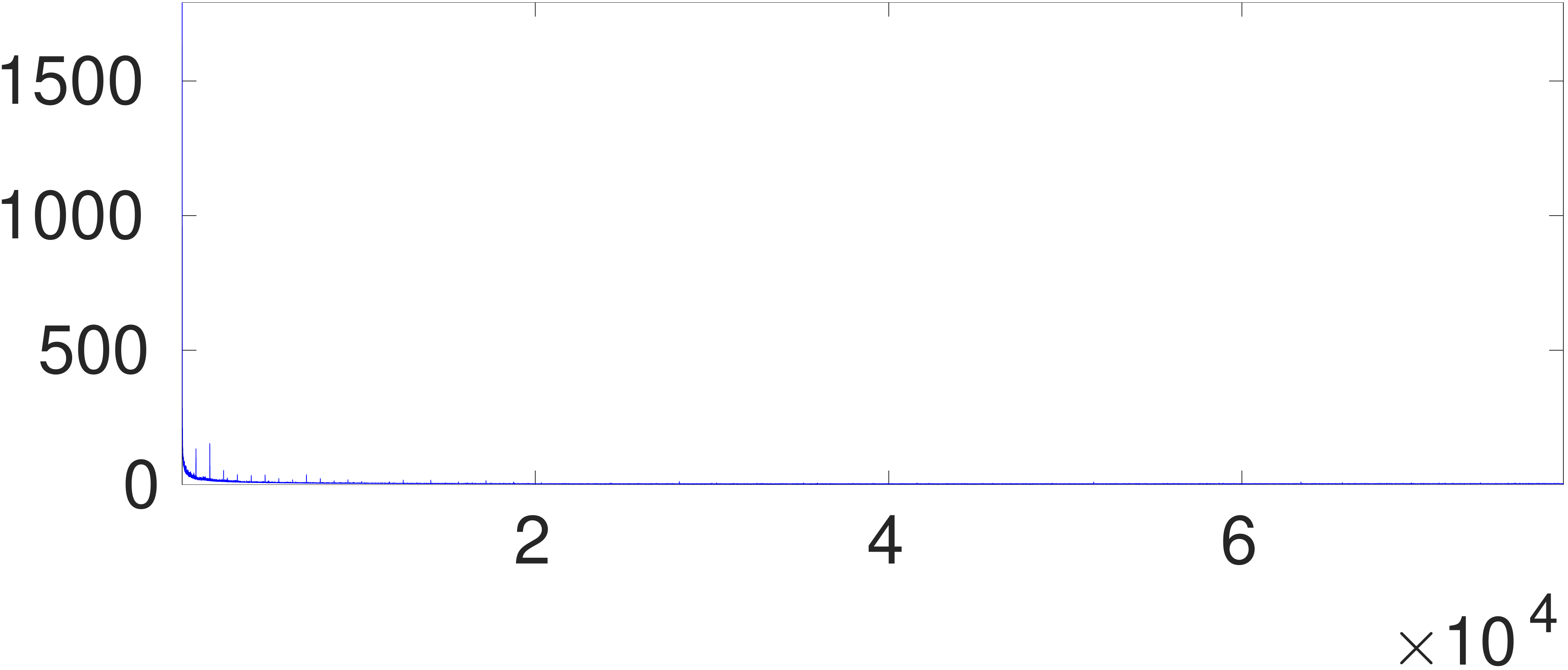}}

\subfloat[GELU-BN.]{\includegraphics[width=0.4\textwidth]{img/empty_cifar10_grad.pdf}}
\hspace{0.5cm}
\subfloat[GELU-GPN.]{\includegraphics[width=0.4\textwidth]{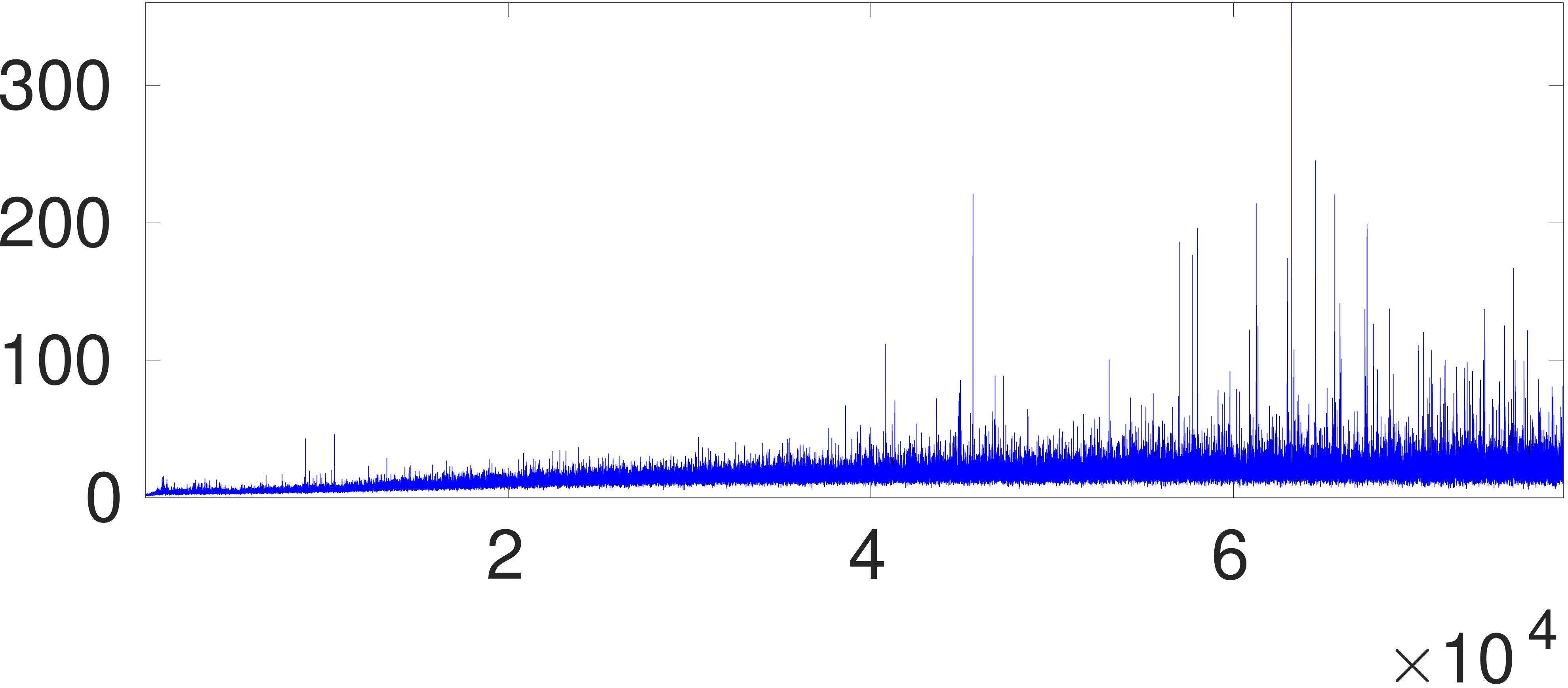}}

\subfloat[GELU-BN.]{\includegraphics[width=0.4\textwidth]{img/empty_cifar10_grad.pdf}}
\hspace{0.5cm}
\subfloat[GELU-GPN-BN.]{\includegraphics[width=0.4\textwidth]{img/empty_cifar10_grad.pdf}}

\vspace{0.5cm}
\caption{Gradient norm ratio during training on CIFAR-10. Horizontal axis denotes the mini-batch updates. Vertical axis denotes the gradient norm ratio $\max_l\|\frac{\partial E}{\partial \mathbf{V}^{(l)}}\|_F / \min_l\|\frac{\partial E}{\partial \mathbf{V}^{(l)}}\|_F$. The gradient vanishes ($\|\frac{\partial E}{\partial \mathbf{V}^{(l)}}\|_F\approx 0$) for GELU, GELU-BN and GELU-GPN-BN during training and hence the plots are empty. For GELU-GPN-BN, both gradient exploding and gradient vanishing are observed.}
\label{fig:grad_4}

\end{figure}
\end{appendices}

\end{document}